\title{
\toptitlebar
{{\center\baselineskip 18pt
                      {\Large\bf Loss landscape Characterization of \\ Neural Networks without Over-Parametrization}}
} 
\bottomtitlebar}
\date{}
\author[1]{Rustem Islamov}
\author[2]{Niccolò Ajroldi}
\author[2,3,4]{Antonio Orvieto}
\author[1]{Aurelien Lucchi}
\affil[1]{University of Basel, Switzerland}
\affil[2]{Max Planck Institute for Intelligent Systems, Germany}
\affil[3]{ELLIS Institute Tübingen, Germany}
\affil[4]{Tübingen AI Center, Germany}
\begin{document}

\maketitle

\begin{abstract}
Optimization methods play a crucial role in modern machine learning, powering the remarkable empirical achievements of deep learning models. These successes are even more remarkable given the complex non-convex nature of the loss landscape of these models. Yet, ensuring the convergence of optimization methods requires specific structural conditions on the objective function that are rarely satisfied in practice. One prominent example is the widely recognized Polyak-{\L}ojasiewicz (PL) inequality, which has gained considerable attention in recent years. However, validating such assumptions for deep neural networks entails substantial and often impractical levels of over-parametrization.
In order to address this limitation, we propose a novel class of functions that can characterize the loss landscape of modern deep models without requiring extensive over-parametrization and can also include saddle points. Crucially, we prove that gradient-based optimizers possess theoretical guarantees of convergence under this assumption.
Finally, we validate the soundness of our new function class through both theoretical analysis and empirical experimentation across a diverse range of deep learning models.
\end{abstract}

\section{Introduction}

The strides in empirical progress achieved by deep neural networks over the past decade have been truly remarkable. Central to the triumph of these techniques lies the effectiveness of optimization methods, which is particularly noteworthy given the non-convex nature of the objective functions under consideration. Worst-case theoretical results point to a pessimistic view since even a degree four polynomial can be NP-hard to optimize~\citep{hillar2013most} and the loss landscape of some neural networks are known to include saddle points or bad local minima~\citep{baldi1989neural, safran2018spurious, zhou2018critical}.

Yet, empirical evidence has shown that gradient-based optimizers -- including \algname{SGD}, \algname{AdaGrad} \citep{duchi2011adagrad} and \algname{Adam} \citep{kingma2017adam} among many others -- can effectively optimize the loss of modern deep-learning-based models. While some have pointed to the ability of gradient-based optimizers to deal with potentially complex landscapes, e.g. escaping saddle points~\citep{jin2017escape, daneshmand2018escaping}, another potential explanation is that the loss landscape itself is less complex than previously assumed~\citep{guilleescuret2023wrong, liu2023aiming}.

Some key factors in this success include the choice of architecture~\citep{ba2016layernorm, li2018visualizing, kohler2019exponential, daneshmand2021batchnorm}, as well as the over-parametrization~ \citep{simonyan2015deep, chizat2018globalconvergence, liu2022losslandscape, liu2023aiming}. In the well-known infinite-width limit~\citep{zhang2016understanding, jacot2018neural, huang2019gpipe}, neural networks are known to exhibit simple landscapes~\citep{liu2020toward}. However, practical networks operate in a finite range, which still leaves a lot of uncertainty regarding the nature of the loss landscape. This is especially important given that the convergence guarantees of gradient-based optimizers are derived by assuming some specific structure on the objective function~\citep{karimi2016linear, vaswani2019fast, sankararaman2020impact}. Consequently, an essential theoretical endeavor involves examining the class of functions that neural networks can represent.

In this work, we present a new class of functions that satisfy a newly proposed \abccond (see Eq.~\eqref{eq:abc}). We theoretically and empirically demonstrate that these functions effectively characterize the loss landscape of neural networks. Furthermore, we derive theoretical convergence guarantees for commonly used gradient-based optimizers under the $\alpha$-$\beta$-condition.

In summary, we make the following contributions:
\begin{enumerate}
    \item We introduce the \abccond and theoretically demonstrate its applicability to a wide range of complex functions, notably those that include local saddle points and local minima.

    \item We empirically validate that the \abccond is a meaningful assumption that captures a wide range of practical functions, including matrix factorization and neural networks (ResNet, LSTM, GNN, Transformer, and other architectures).

    \item We analyze the theoretical convergence of several optimizers under $\alpha$-$\beta$-condition, including vanilla \algname{SGD} (Stochastic Gradient Descent), \algname{SPS}${}_{\max}$  (Stochastic Polyak Stepsize) \citep{loizou2021polyak}, and \algname{NGN} \citep{orvieto2024adaptive} (Non-negative Gauss-Newton). %, and \algname{AdaGrad-norm-max} \citep{ward2019adagradnorm}

    \item We provide empirical and theoretical counter-examples where the weakest assumptions, such as the PL and Aiming conditions, do not hold, but the \abccond does.
\end{enumerate}

\section{Related work}

\begin{table*}[t]
    \centering
    \caption{Summary of existing assumptions on the problem \eqref{eq:problem} and their limitations. Here $\cS$ denotes the set of minimizers of $f$ and $f_i^*\eqdef \argmin_x f_i(x)$. Unlike earlier conditions, the \abccond is specifically designed to capture local minima and saddle points. NN = Neural Network.}
    \label{tab:assumptions}
    \resizebox{\textwidth}{!}{
        \begin{tabular}{ccc}
            \toprule
            \textbf{Condition} & 
            \textbf{Definition} &
            \textbf{Comments} 
            \\ \toprule

             \makecellnew{QCvx \citep{hardt2018gradientdescent}} &
            \makecellnew{$\left<\nabla f(x), x - x^*\right>\ge \theta (f(x)-f(x^*))$ \\
            for some fixed $x^* \in \cS$}&
            \makecellnew{- excludes saddle points and local minima
            } \\
            \midrule

            \makecellnew{Aiming \citep{liu2023aiming}} &
            $\left<\nabla f(x), x - \proj(x,\cS)\right>\ge \theta f(x)$&
            \makecellnew{- excludes saddle points and local minima\\
                         - theoretically holds for NN in the presence of \\ impractical over-parameterization 
                         \citep{liu2023aiming} \\
                         - does not always hold in practice [Fig.~\ref{fig:lstm} a-b]} \\
            \midrule

            \makecellnew{PL~${}^{(a)}$ \citep{polyak1963inequality}} &
            $\|\nabla f(x)\|^2 \ge 2\mu(f(x) - f^*)$ &
             \makecellnew{- excludes saddle points and local minima\\
                         - theoretically holds for NN in the presence of \\ impractical over-parameterization 
                         \citep{liu2022losslandscape}\\
                         - does not always hold in practice [Fig.~\ref{fig:lstm} c-d]
                         }\\

            \midrule

            \makecellnew{\abccond \\ \textbf{[This work]}} &
            \makecellnew{$\<\nabla f_i(x), x-\proj(x,\cS)> \ge \alpha(f_i(x)-f_i(\proj(x,\cS)))$\\ $\hspace{4cm}- \beta(f_i(x) - f_i^*)$} &
             \makecellnew{- might have saddles [Ex.~\ref{ex:example_2}] and  
             local minima [Ex.~\ref{ex:example_10}] \\
             - in practice does not require \\
             over-parameterization [Ex.~\ref{ex:neural_net}] }
                         \\
            %\midrule

            \bottomrule 
        
        \end{tabular}
        }
%\begin{tablenotes}
%      {\scriptsize 
%        \item (a) \textcolor{red}{\citet{rebjock2023fast} demonstrated that  PL, EB, and QG are equivalent in the neighborhood of the minimizer set $\cS$ if the objective function $f$ is sufficiently differentiable.}
%        }
%    \end{tablenotes}  
\end{table*}

\subsection{Function classes in optimization}

Studying the convergence properties of gradient-based optimizers has a long history in the field of optimization and machine learning. Notably, one of the fundamental observations is the linear and sub-linear convergence exhibited by \algname{GD} for {\it strongly convex} (SCvx) and general {\it convex} (Cvx) functions~\citep{nesterov2004introductory}. 
However, most modern Machine Learning models have non-convex loss landscapes, for which the existing convex theory is not applicable. Without assumptions on the loss functions (other than smoothness), one can only obtain weak convergence guarantees to a first-order critical point. This situation has led to the derivation of assumptions that are weaker than convexity but that are sufficient to guarantee convergence of \algname{GD}-based optimizers. 
The list includes {\it error bounds} (EB) \citep{luo1193errorbound}, {\it essential strong convexity} (ESC) \citep{liu2014asynchronous}, weak strong convexity (WSC) \citep{necoara2019linear}, the restricted secant inequality (RSI) \citep{zhang2013gradient}, and the quadratic growth (QG) condition \citep{Anitescu2000quadraticgrowth}. In the neighborhood of the minimizer set $\cS$, EB, PL, and QG are equivalent if the objective is twice differentiable \citep{rebjock2023fast}.
All of them, except QG, are sufficient to guarantee a global linear convergence of \algname{GD}. However, among these less stringent conditions, the Polyak-{\L}ojasiewicz (PL) condition stands out as particularly renowned.
Initially demonstrated by~\citet{polyak1963inequality} to ensure linear convergence, it has recently experienced a resurgence of interest, in part because it accurately characterizes the loss landscape of heavilly over-parametrized neural networks~\citep{liu2022losslandscape}. It was also shown to be one of the weakest assumptions among the other known conditions outlined so far~\citep{karimi2016linear}. A generalized form of the PL condition for non-smooth optimization is the Kurdyka-{\L}ojasiewicz (KL) condition~\citep{kurdyka1998gradients, bolte2008characterizations} which is satisfied for a much larger class of functions \citep{coste2000introduction, van1996geometric} than PL. The KL inequality has been employed to analyze the convergence of the classic proximal-point algorithm~\citep{attouch2009convergence, bolte2017fromerrorbound, li2018calculus} and other optimization methods \citep{attouch2013convergence, lageman2007convergence}.

%\citet{karimi2016linear, bolte2017fromerrorbound, zhang2020newanalysis} study the relation between some of the aforementioned conditions. In particular, \citet{karimi2016linear} show that PL is one of the weakest assumptions that is enough for \algname{GD} to converge linearly to global minima. 

More recently, some new convex-like conditions have appeared in the literature such as {\it star-convex} (StarCvx) \citep{nesterov2006cubic}, {\it quasar-convex} (QCvx) \citep{hardt2018gradientdescent}, and {\it Aiming} \citep{liu2023aiming}. These conditions are relaxations of convexity and include non-convex functions. Within the domain of reinforcement learning, several works~\citep{yuan2022general, fatkhullin2023stochastic} have also considered relaxations of the gradient domination condition, although these analyses are conducted specifically within the context of policy gradient methods, and therefore less relatable to StarCvx, QCvx or the Aiming condition.

%{\color{red} Another line of papers studies the relaxation of weak gradient domination \citep{yuan2022general, fatkhullin2023stochastic}. It can be seen as a modification of PL, and was specifically studied in the context of Policy Gradient method in Reinforcement Learning.} 

We present a summary of some of these conditions in~\Cref{tab:assumptions}. There is no general implication between already existing assumptions such as  QCvx, Aiming, PL, and the $\alpha$-$\beta$-condition. However, as we will later see, the \abccond can more generally characterize the landscape of neural networks without requiring unpractical amounts of over-parametrization. Notably, the \abccond is a condition that applies globally to the loss. However, we will demonstrate that convergence guarantees can still be established for commonly-used gradient-based optimizers, although these guarantees are weaker than those derived under the PL condition, which relies on much stronger assumptions.

\subsection{Limitations of existing conditions}
\label{sec:motivation}

\begin{figure}[t]
    \centering
    \begin{tabular}{cccc}
        \includegraphics[width=0.235\textwidth]{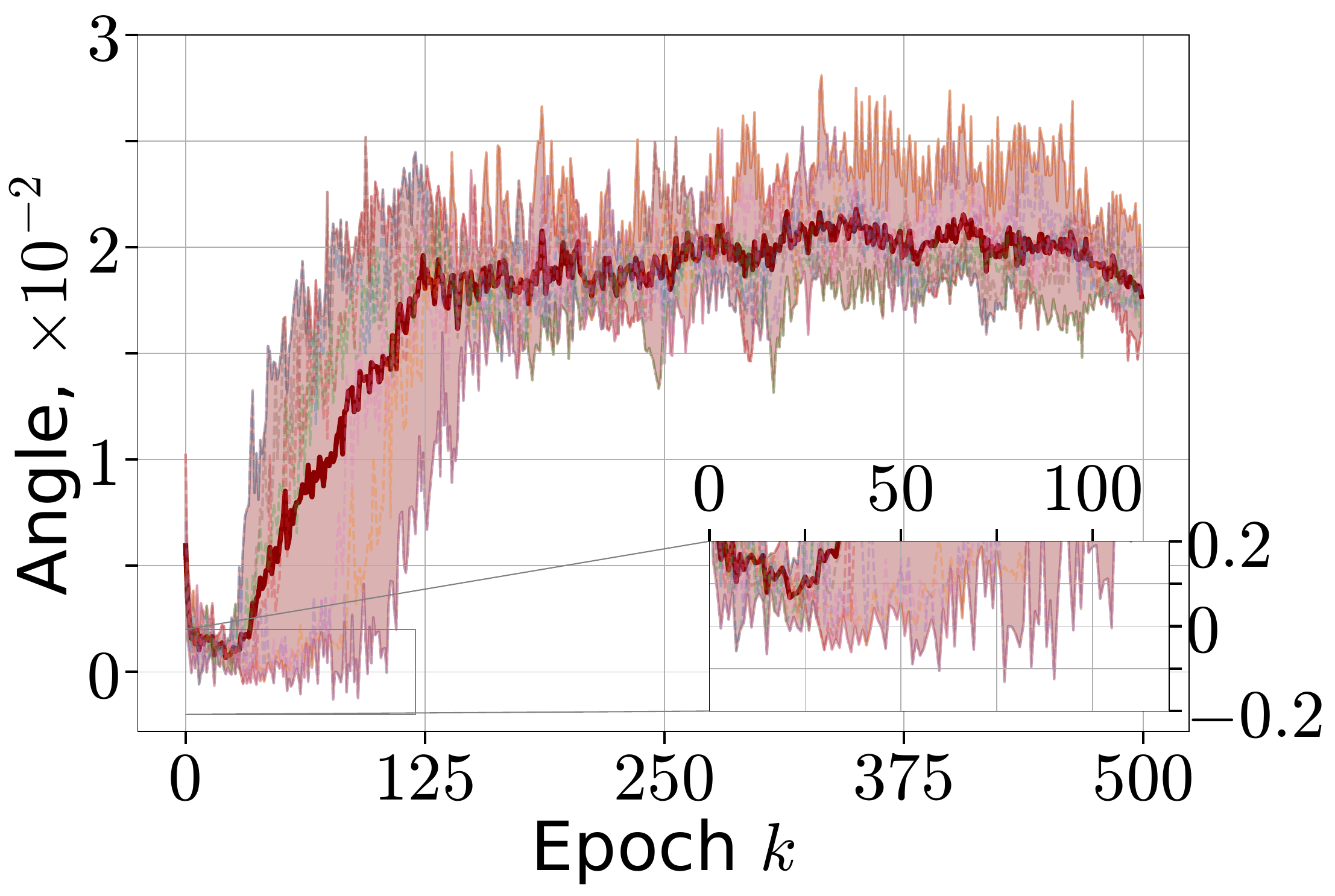} &
        \includegraphics[width=0.22\textwidth]{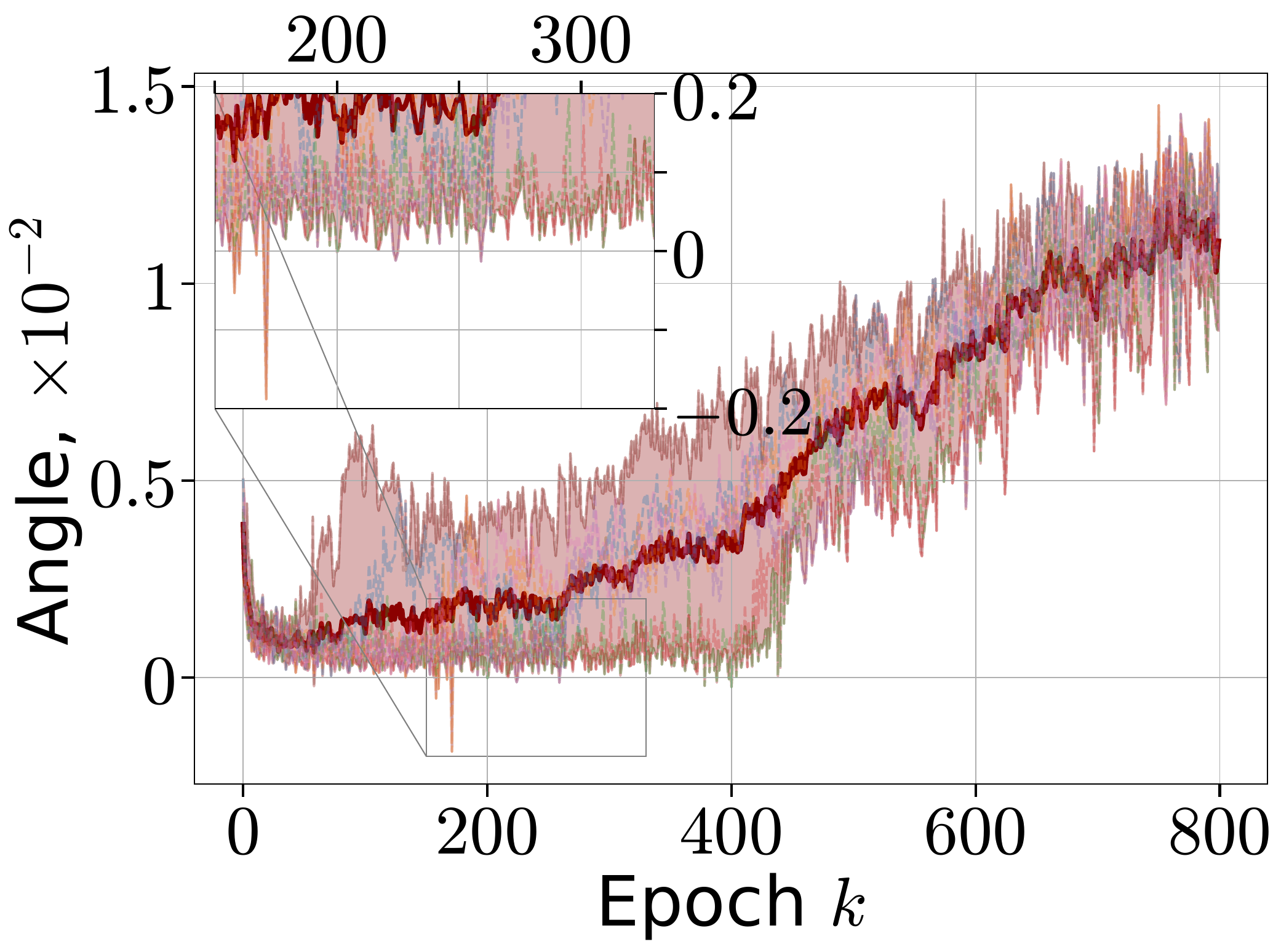} &
        \includegraphics[width=0.22\textwidth]{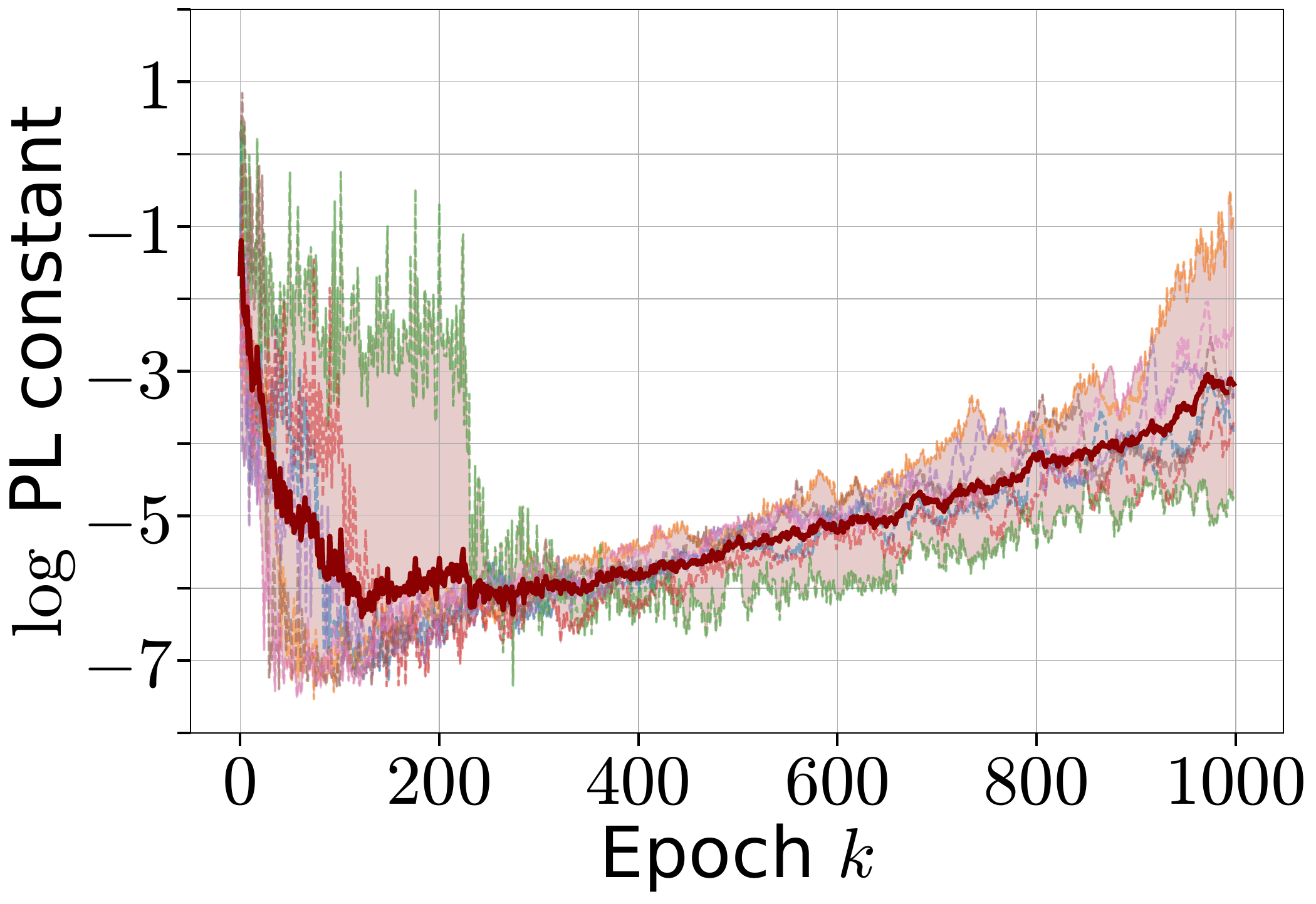} &
        \includegraphics[width=0.22\textwidth]{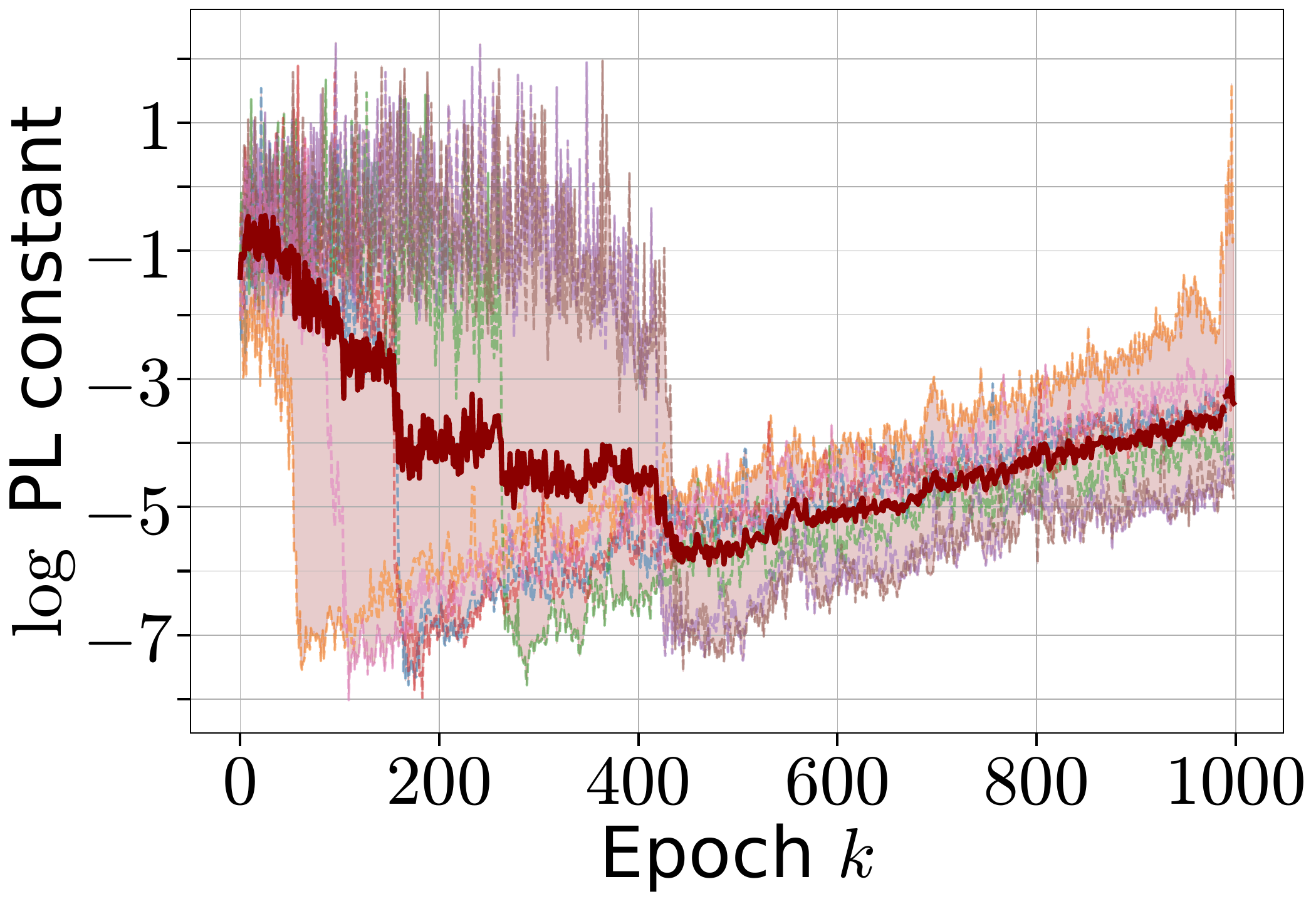}\\
        {\tiny (a) PTB}  &
        {\tiny (b) Wikitext-2} &
        {\tiny (c) PTB} & 
        {\tiny (d) Wikitext-2}
    \end{tabular}
    \caption{Training of $3$ layer LSTM model that shows Aiming condition does not always hold. The term ``Angle'' in the figures refers to the angle $\angle(\nabla f(x^k), x^k-x^K)$, and it should be positive if Aiming holds, while in a-b we observe that it is negative during the first part of the training. Figures c-d demonstrate that possible constant $\mu$ in PL condition should be small which makes theoretical convergence slow.}
    \label{fig:lstm}
\end{figure}

\begin{figure}[t]
    \centering
    \begin{tabular}{cccc}
        \includegraphics[width=0.22\textwidth]{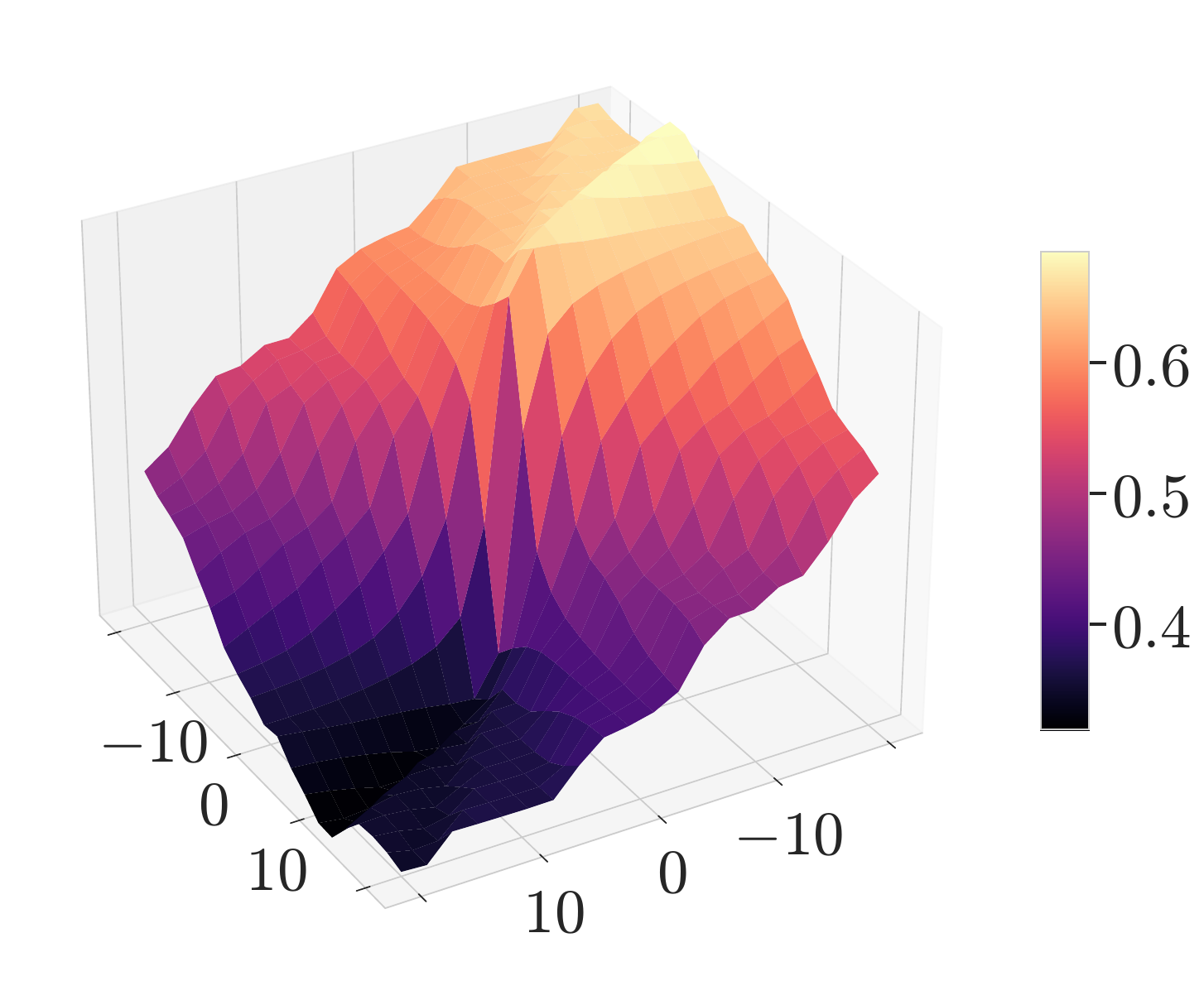} &
        \includegraphics[width=0.235\textwidth]{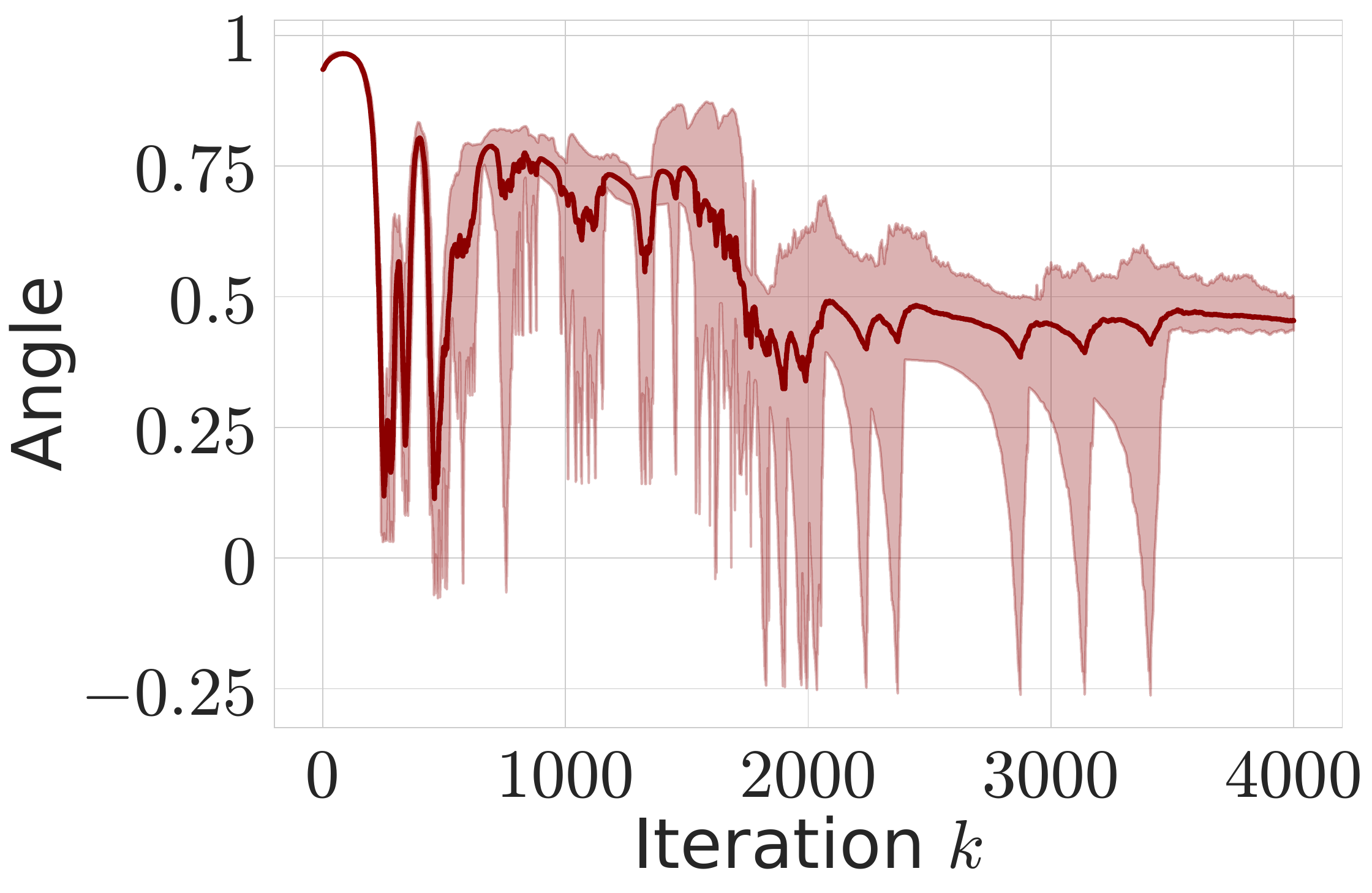} &
        \includegraphics[width=0.22\textwidth]{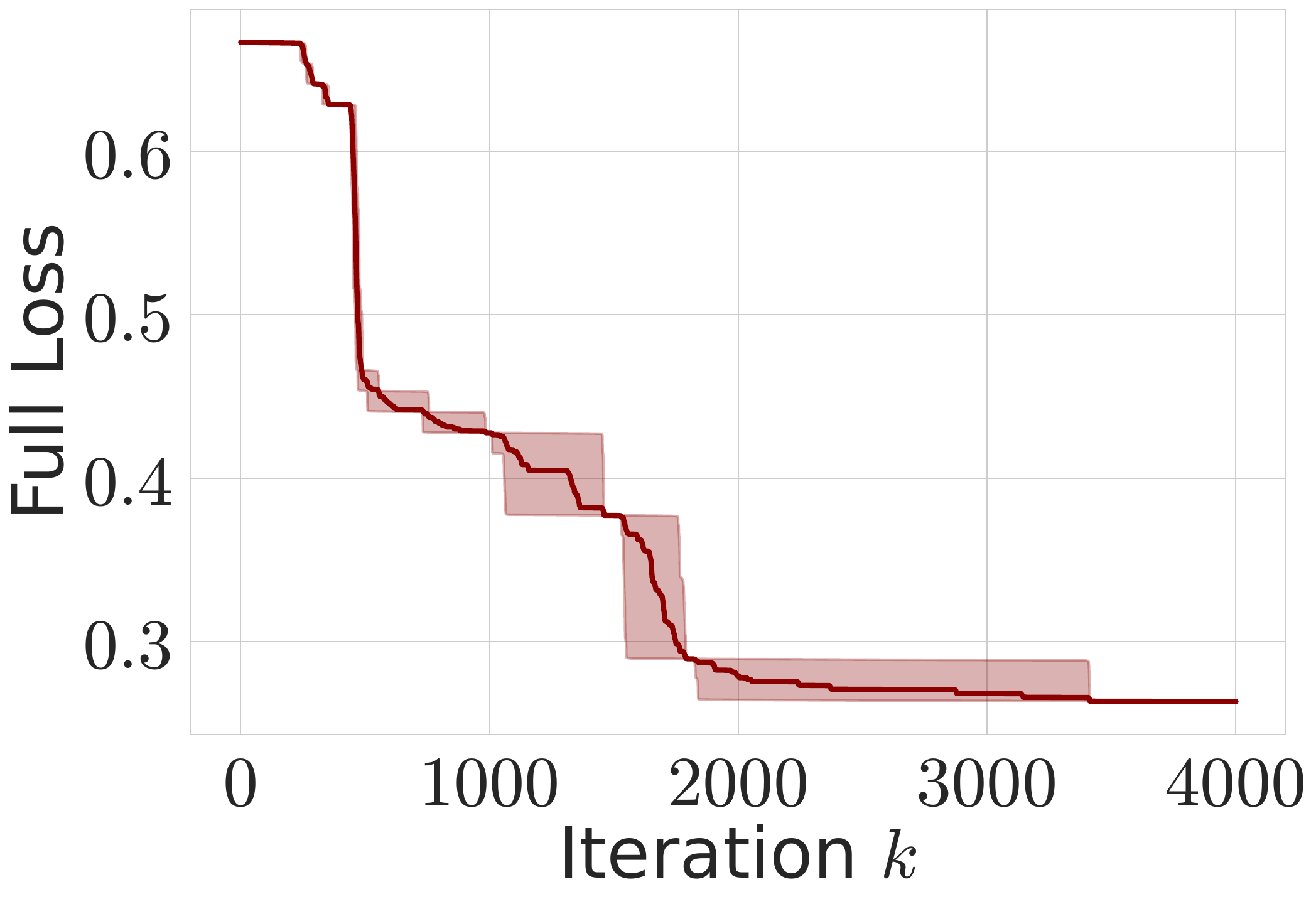} &
        \includegraphics[width=0.225\textwidth]{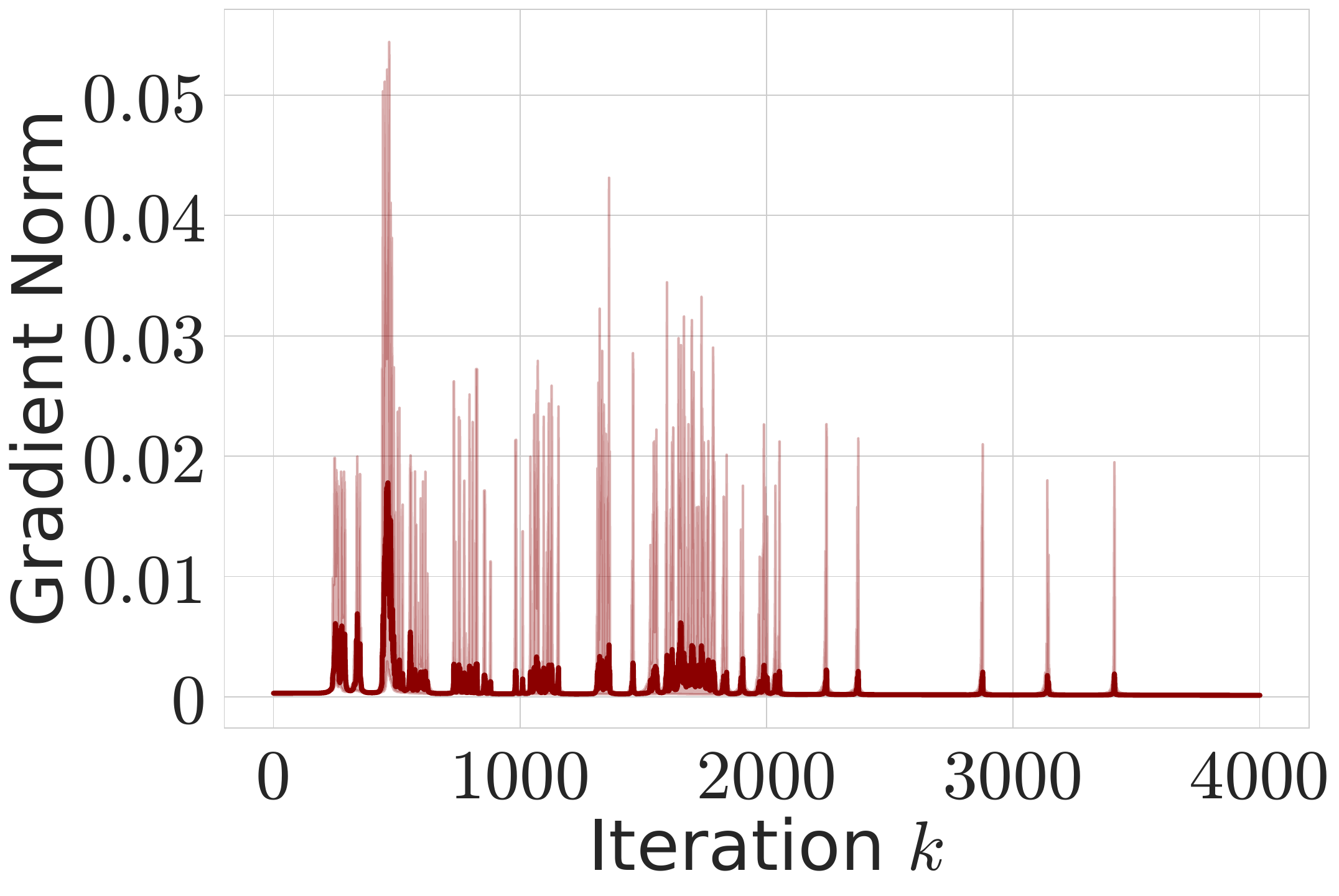}\\
        %{\tiny (a) Surface}  &
        %{\tiny (b) } &
        %{\tiny (c)} & 
        %{\tiny (d)}
    \end{tabular}
    \caption{Training for half-space learning problem with SGD. The term ``Angle'' in the figures refers to the angle $\angle(\nabla f(x^k), x^k-x^K)$.}
    \label{fig:halfspacelearning}
\end{figure}

Next, we discuss the limitations of previous conditions to characterize the loss landscape of complex objective functions such as the ones encountered when training deep neural networks.

\paragraph{Necessity of Over-parameterization.} 
%\textbf{Necessity of Over-parameterization.} 
When considering deep models, the theoretical justification of conditions such as Aiming \citep{liu2023aiming} and PL \citep{liu2022losslandscape} require a significant amount of over-parameterization. This implies that the neural network must be considerably large, often with the minimum layer's width scaling with the size of the dataset $n$. 
However, various studies suggest that this setup may not always accurately model real-world training dynamics~\citep{chizat2019lazytraining, arora2019onexact}. To the best of our knowledge, the weakest requirements on the width of a network are sub-quadratic $\widetilde{\Omega}(n^{3/2})$ \citep{du2019findsglobalminima, song2021subquadratic}, where $n$ is the size of a dataset. This implies that, even for a small dataset such as MNIST, a network should have billions of parameters which is not a realistic setting in practice. In contrast, the \abccond condition does not require such an over-parametrization condition to hold (e.g., see \Cref{ex:neural_net}). In \Cref{sec:experiments} we provide empirical results showing how our condition is affected by over-parameterization.

\paragraph{Necessity of Invexity.} 
%\textbf{Necessity of invexity.} 
One limitation of prior assumptions is their inability to account for functions containing local minima or saddle points. Indeed, many of the weakest conditions, such as QCvx, PL, KL, and Aiming, require that any point where the gradient is zero must be deemed a global minimizer. However, such conditions are not consistent with the loss landscapes observed in practical neural networks. For example, finite-size MLPs can have spurious local points or saddle points \citep{zhou2018critical, venturi2019spurious}. Another known example is the half-space learning problem which is known to have saddle points~\citep{daneshmand2018escaping}. We refer the reader to \Cref{fig:halfspacelearning}-a that illustrates this claim (it showcases the surface of the problem fixing all parameters except first $2$), and also demonstrates that the Aiming and PL conditions fail to hold in such a setting. We present the results in \Cref{fig:halfspacelearning}\footnote{Our implementation is based on the open source repository \url{https://github.com/archana95in/Escaping-saddles-using-Stochastic-Gradient-Descent} from \citep{daneshmand2018escaping} with small changes to track necessary quantities.} where we observe that $(i)$ the angle between the full gradient and direction to the minimizer $\angle(\nabla f(x^k), x^k - x^*)$ can be negative implying that the Aiming condition does not hold in this case (since the angle should remain positive); $(ii)$  the gradient norm can become zero while we did not reach minimum (loss is still large) implying that the PL condition does not hold as well (since the inequality $\|\nabla f(x^k)\|^2\ge 2\mu(f(x^k)-f^*)$ is not true for any positive $\mu$). These observations suggest that the Aiming and PL conditions do not characterize well a landscape in the absence of invexity property. In contrast, we demonstrate in \Cref{ex:example_2} and \Cref{fig:halfspace_learning_additional} that our proposed assumption is preferable in this scenario. 

\paragraph{Lack of Theory.} 
%\textbf{Lack of theory.} 
As previously mentioned, most theoretical works apply to some infinite limit or neural networks of impractical sizes. In contrast, several works \citep{zhou2018sgd, guilleescuret2023wrong, tran2024empirical} have studied the empirical properties of the loss landscape of neural networks during training. They have shown that gradient-based optimization methods do not encounter significant obstacles that impede their progress. However, these studies fall short of providing theoretical explanations for this observed phenomenon.

\paragraph{Lack of Empirical Evidence.} 
%\textbf{Lack of empirical evidence.} 
Several theoretical works~\cite{liu2020toward, liu2023aiming} prove results on the loss landscape of neural networks without supporting their claims using experimental validation on deep learning benchmarks. We demonstrate some practical counter-examples to these conditions proved in prior work. We train LSTM-based model\footnote{Our implementation is based on the  open source repository \url{https://github.com/fhueb/parameter-agnostic-lzlo} from  \citep{hübler2023parameteragnostic} with small changes to track necessary quantities. The detailed experiment description is given in \Cref{sec:check_other_conditions}.} with standard initialization on Wikitext-2 \citep{merity2016pointer} and Penn Treebank (PTB) \citep{mikolov2010recurrent} datasets. In \Cref{fig:lstm} (a-b), we show that the angle between the full gradient and direction to the minimizer $\angle(\nabla f(x^k),x^k-x^*)$ can be negative in the first part of the training. This result implies that the Aiming condition does not hold in this setting (either we do not have enough over-parameterization or the initialization does not lie in the locality region where Aiming holds). Moreover, for the same setting in \Cref{fig:lstm} (c-d)  we plot $2\log(\|\nabla f(x^k)\|) - \nicefrac{2}{\delta}\log(f(x^k) - f(x^K))$\footnote{We use $x^K$ (for a large value of $K$) to approximate the minimizer $x^*.$} to measure the empirical value of PL constant $\log(2\mu)$ (see derivations in \Cref{sec:pl_additional}). We observe that the value of $\mu$ that might satisfy PL condition should be of order $10^{-8}-10^{-7}$ and leads to slow theoretical convergence \citep{karimi2016linear}. These observations contradict with practical results. We defer to \Cref{sec:pl_additional} for a more detailed discussion. In contrast, we demonstrate in \Cref{fig:algoperf}-(g-h) that the proposed \abccond can be verified in this setting.

\section{The proposed $\boldsymbol{\alpha}$-$\boldsymbol{\beta}$-condition}\label{sec:abccond}

\paragraph{Setting.}
%\textbf{Setting.} 
We consider the following Empirical Risk Minimization (ERM) problem that typically appears when training machine learning models:

\begin{equation}\label{eq:problem}
    \min\limits_{x\in \R^d}\left[f(x) \eqdef \frac{1}{n}\sum_{i=1}^nf_i(x)\right]. %{\color{red} \left[f(x) \eqdef \EE_{i}\left[f_i(x)\right]\right]}
\end{equation}
Here $x\in \R^d$ denotes the parameters of a model we aim to train, $d$ is the number of parameters, and $n$ is the number of samples in the training dataset. Each $f_i(x)$ is the loss associated with the $i$-th data point. We denote the minimum of the problem \eqref{eq:problem} by $f^*$ and the minimum of each individual function by $f_i^* \eqdef \min_x f_i(x)$, which we assume to be finite. Besides, the set $\cS$ denotes the set of all minimizers of $f$.

\paragraph{A new class of functions.}
%\textbf{A new class of functions.}
Next, we present a new condition that characterizes the interplay between individual losses $f_i$ and the set of minimizers of the global loss function $f$.

\begin{definition}[\abccond]
\label{asmp:abc}
    Let $\cX\subseteq \R^d$ be a set and consider a function $f: \cX \to \R$ as defined in \eqref{eq:problem}. Then $f$ satisfies the \abccond with positive parameters $\alpha$ and $\beta$ such that $\alpha > \beta$ if for any $x\in \cX$ there exists $x_p\in \proj(x,\cS)$ such that for all $i$,
    \begin{equation}\label{eq:abc}
        \<\nabla f_i(x), x-x_p> \ge \alpha(f_i(x)-f_i(x_p)) - \beta(f_i(x) - f_i^*).
    \end{equation}
\end{definition}
The \abccond recovers several existing assumptions as special cases. For example, the proposed assumption reduces to QCvx around $x^*$ if $\alpha> 0$, $\beta=0$, and $\cS$ is a singleton $\{x^*\}$. Importantly, the \abccond is also applicable when the set $\cS$ contains multiple elements.

One can easily check that the pair of parameters $(\alpha, \beta)$ in \eqref{eq:abc} can not be unique. Indeed, if the assumption is satisfied for some $\alpha$ and $\beta,$ then due to inequality $f_i(x_p) \ge f_i^*$ it will be also satisfied for $(\alpha+\delta, \beta+\delta)$ for any $\delta\ge 0.$

\subsection{Theoretical verification of the $\boldsymbol{\alpha}$-$\boldsymbol{\beta}$-condition}\label{sec:theory_examples}

To demonstrate the significance of \Cref{asmp:abc} as a meaningful condition for describing structural non-convex functions, we provide several examples below that satisfy \eqref{eq:abc}. We do not aim to provide the tightest possible choice of $\alpha$ and $\beta$ such that \Cref{asmp:abc} holds. Instead, this section aims to offer a variety of examples that demonstrate specific desired characteristics when \abccond holds, encompassing a broad range of functions.

The initial example illustrates that $\cS$ could potentially be infinite for the class of functions satisfying \Cref{asmp:abc}.

\begin{restatable}[]{example}{examplefirst}
%\begin{example}
\label{ex:example_1}
    Let $f, f_1, f_2\colon \R^2 \to \R$ be such that
    \begin{equation}
        f = \frac{1}{2}(f_1 + f_2) \quad \text{with} \quad f_1(x,y) = \frac{(x+y)^2}{(x+y)^2+1}, \quad  f_2(x, y) = \frac{(x+y+1)^2}{(x+y+1)^2+1},
    \end{equation}
    then \Cref{asmp:abc} holds with $\alpha\in[\nicefrac{5}{2}, +\infty)$ and 
    %$\beta \in [\nicefrac{3\alpha}{5}, \alpha-1].$
    $\beta \in [\nicefrac{4\alpha}{5},\alpha).$ 
%\end{example}
\end{restatable}
Next, we provide an example where $f$ satisfies \Cref{asmp:abc} even in the presence of saddle points.

\begin{restatable}[]{example}{examplesecond}
%\begin{example}
\label{ex:example_2}
    Let $f, f_1, f_2\colon \R^2 \to \R$ be such that 
    \begin{eqnarray}
        f = \frac{1}{2}(f_1+f_2) \quad \text{with}\quad  f_1(x,y) = 1-e^{-x^2-y^2}, \quad f_2(x, y) = 1 -e^{-(x-2)^2-(y-2)^2},
    \end{eqnarray}
    then \Cref{asmp:abc} holds for some $\alpha$ and $\beta = \alpha-8.$
%\end{example}
\end{restatable}

\begin{remark}
    Examples~\ref{ex:example_1} and~\ref{ex:example_2} can be generalized for any number of functions $n$ and dimension $d$ as follows
    \begin{eqnarray}
        f_i(x) = \frac{\left(\sum_{j=1}^d x_j + a_i\right)^2}{\left(\sum_{j=1}^d x_j + a_i\right)^2 + 1}, \quad f_i(x) = \frac{\sum_{j=1}^d (x_j-b_{ij})^2}{1+\sum_{j=1}^d(x_j-b_{ij})^2},
    \end{eqnarray}
    for some properly chosen $\{a_i\}$ and $\{b_{ij}\}, i\in[n],j\in[d].$
\end{remark}

\begin{restatable}[]{example}{exampleten}\label{ex:example_10}
%\begin{example}
    Let $f, f_1, f_2\colon \R^2 \to \R$ be such that 
    \begin{equation}
        f = \frac{1}{2}(f_1 + f_2) \quad \text{with} \quad f_1(x,y)  = \frac{1+x^2+y^2}{4+x^2+y^2}, \quad  f_2(x,y) = \frac{(x-2.5)^2+(y-2.5)^2}{4+(y-2.5)^2+(y-2.5)^2},
    \end{equation}
    then \Cref{asmp:abc} holds for some $\alpha$ and $\beta=\alpha-1.$
%\end{example}
\end{restatable}

\begin{figure}[t]
    \centering
    \begin{tabular}{ccc}
        \hspace{-8mm}\includegraphics[width=0.32\textwidth]{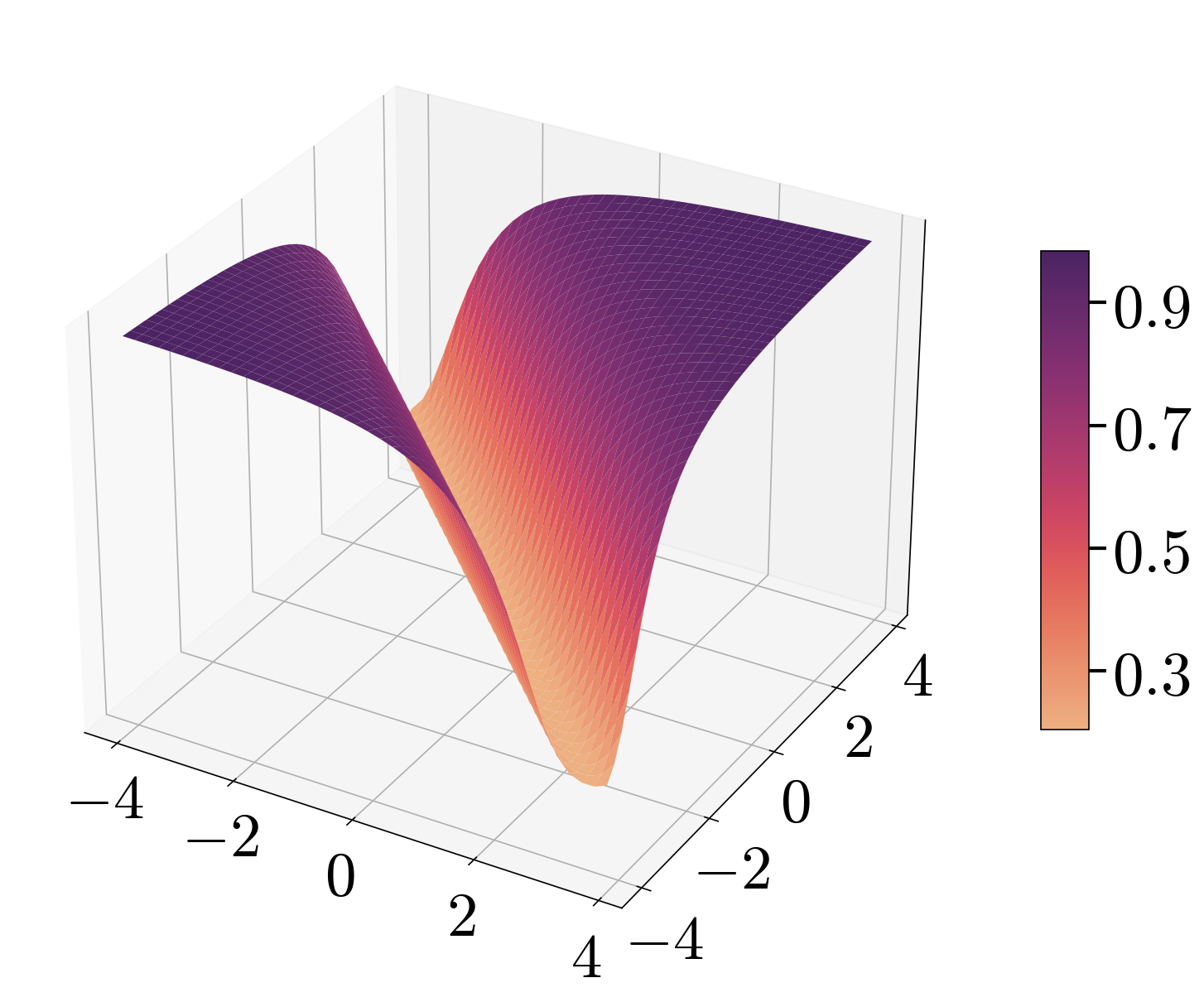} &
        \hspace{-12mm}\includegraphics[width=0.32\textwidth]{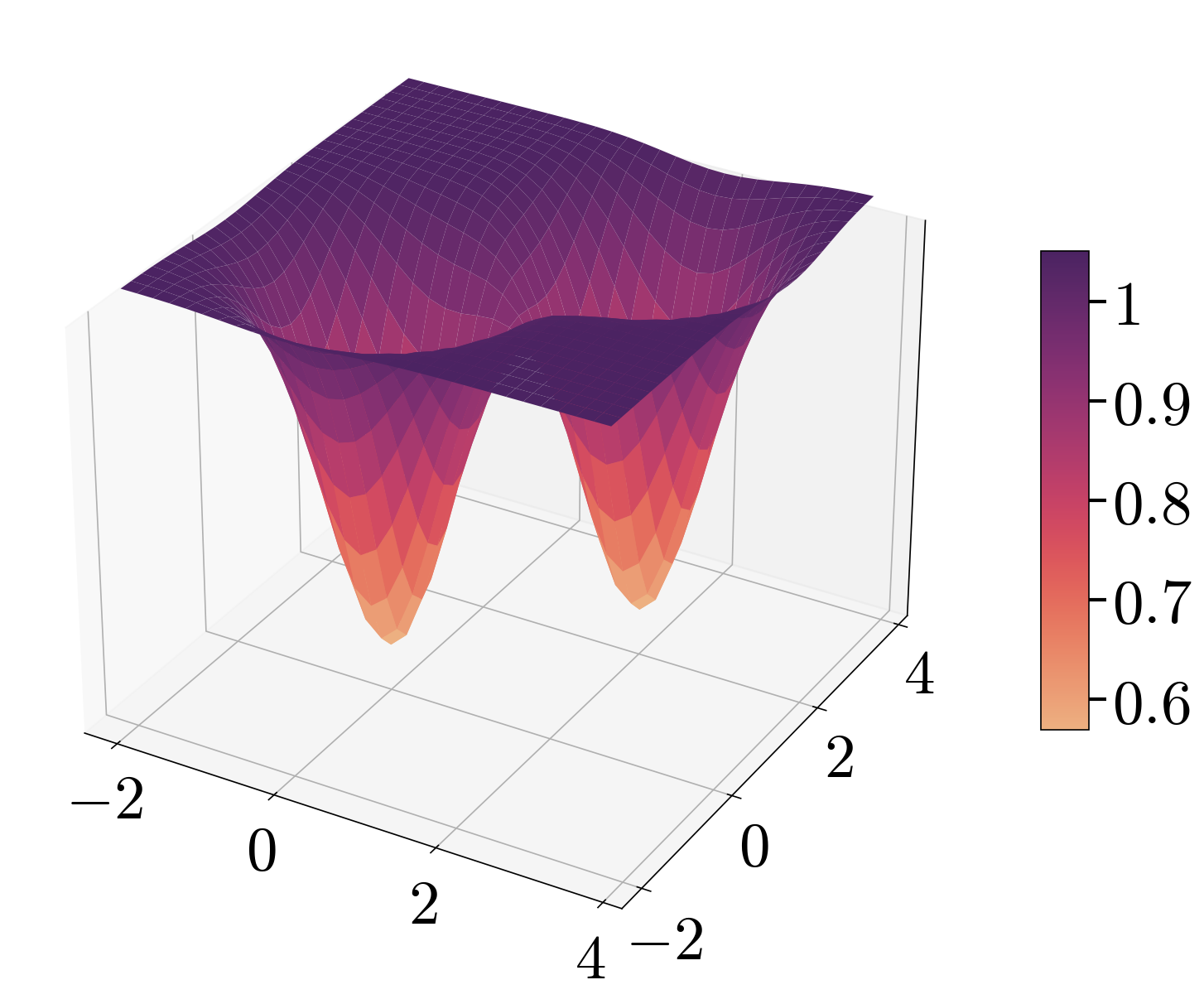}&
        \hspace{-12mm}\includegraphics[width=0.32\textwidth]{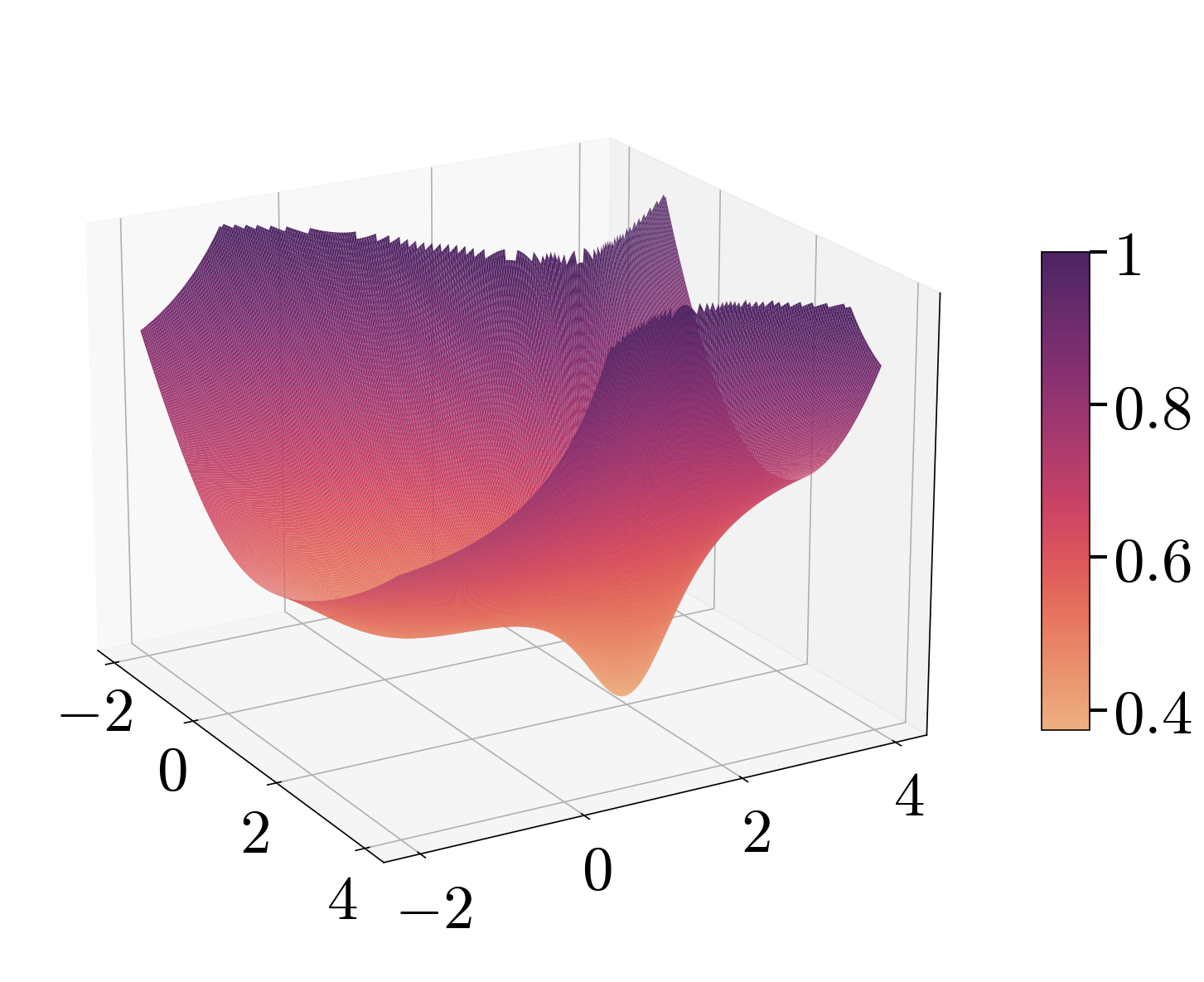}\\
        \hspace{-8mm}\includegraphics[width=0.38\textwidth]{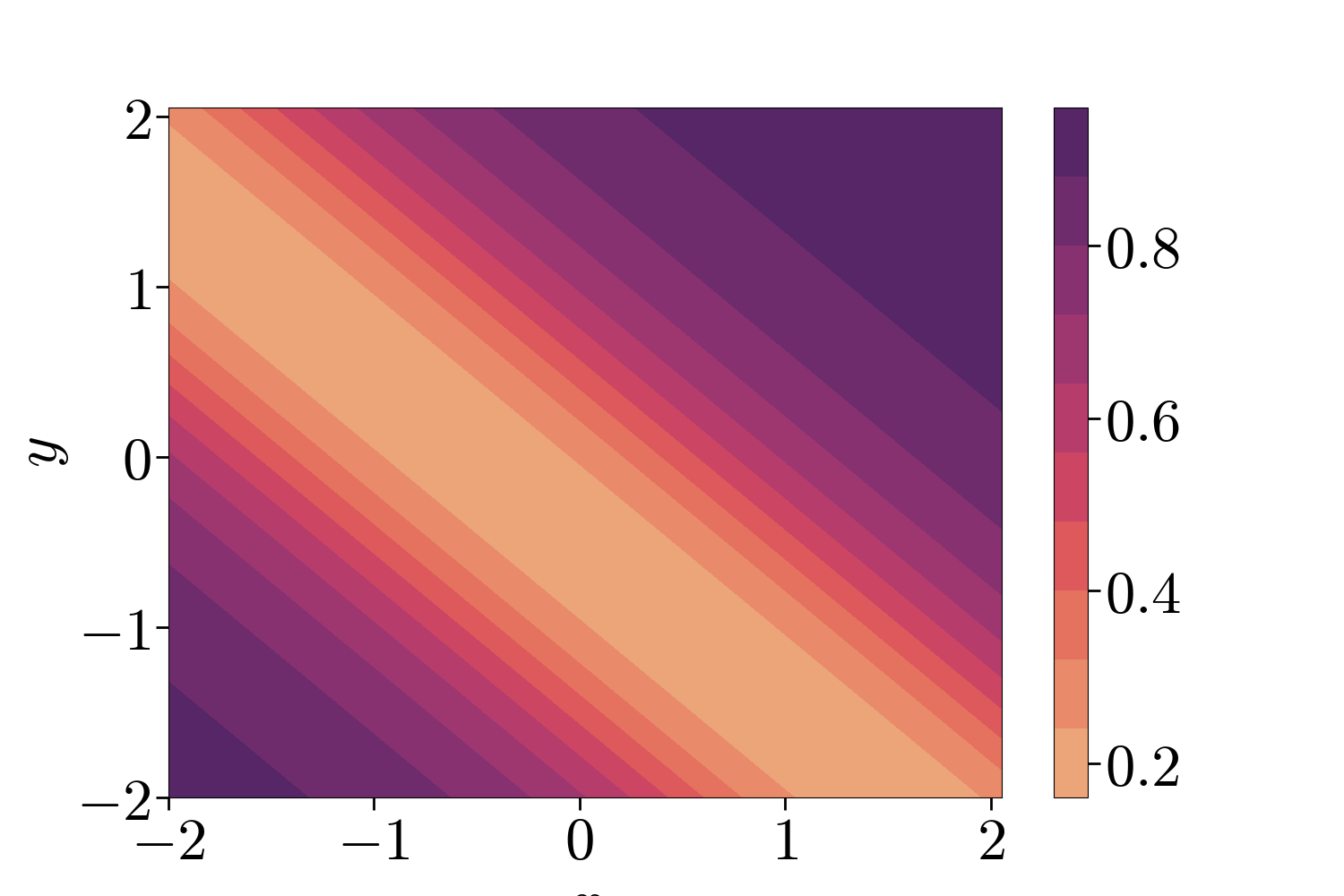} &
        \hspace{-12mm}\includegraphics[width=0.38\textwidth]{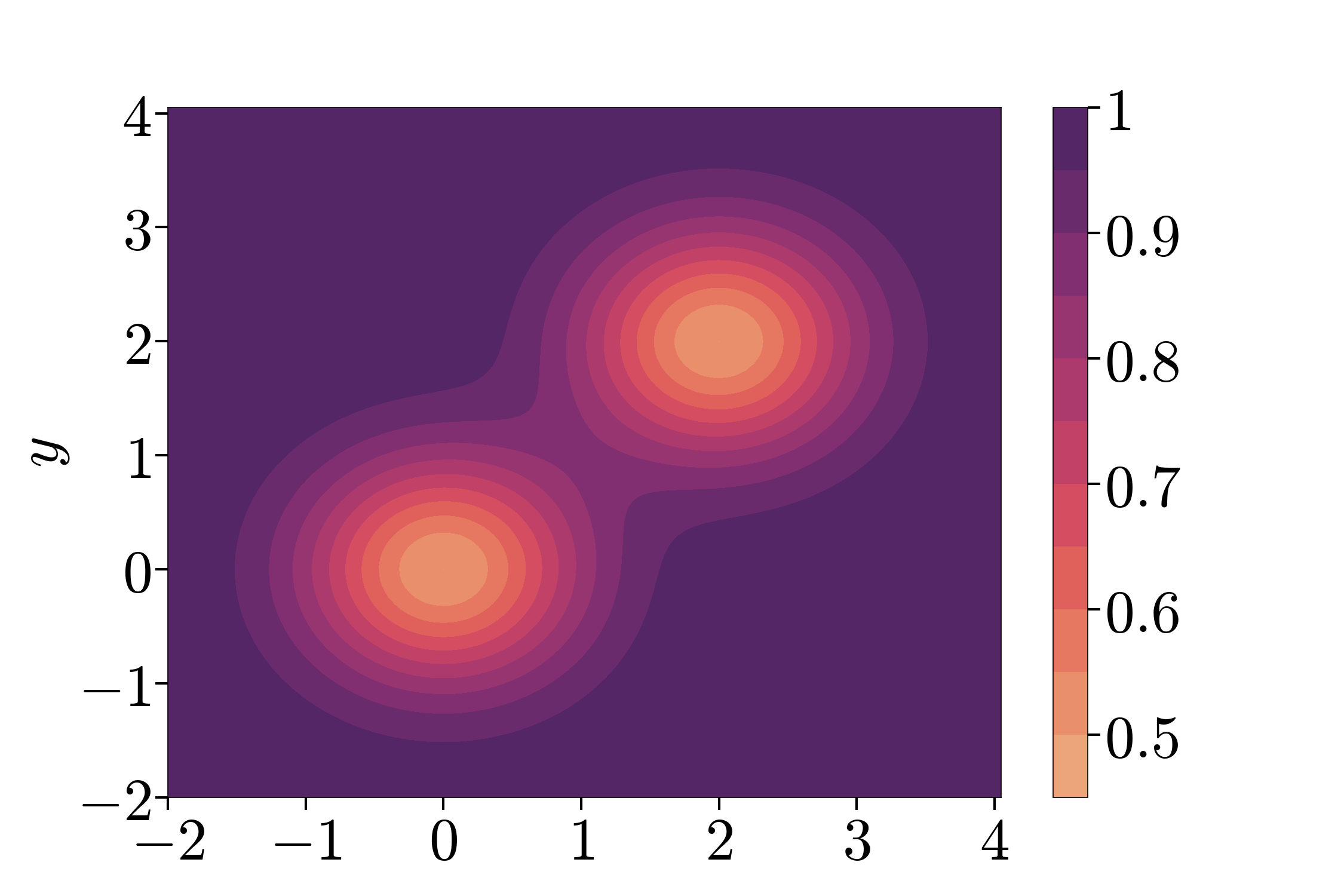}&
        \hspace{-12mm}\includegraphics[width=0.38\textwidth]{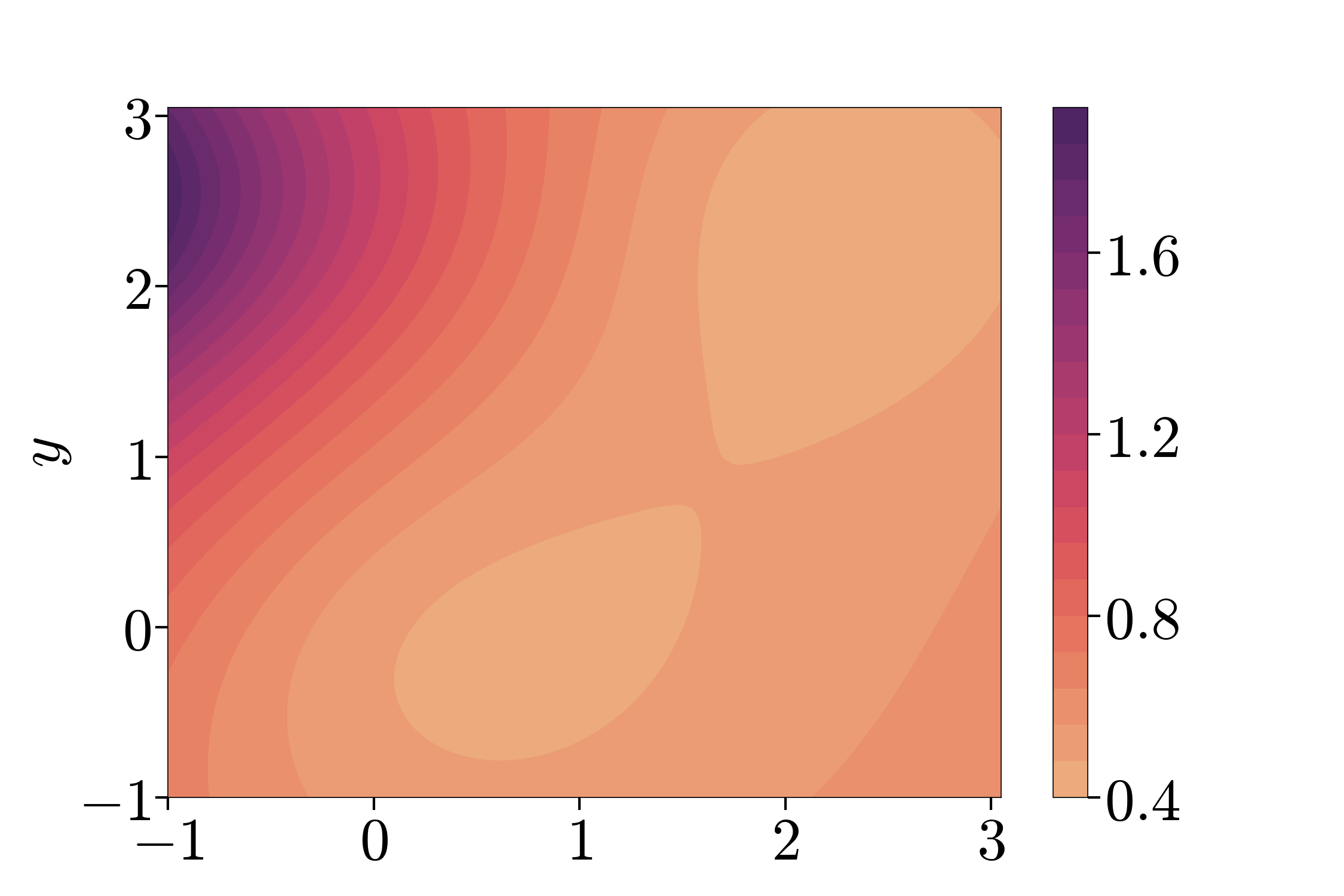}\\
        \hspace{-1.6cm}\Cref{ex:example_1} &
        \hspace{-20mm}\Cref{ex:example_2} &
        \hspace{-20mm}\Cref{ex:example_10}
    \end{tabular}
    \caption{Loss landscape of $f$ that satisfy \Cref{asmp:abc}. The analytical form of $f_i$ is given in \Cref{sec:theory_examples}. These examples demonstrate that the problem \eqref{eq:problem} that satisfies \abccond might have an unbounded set of minimizers $\cS$ (\Cref{ex:example_1}), a saddle point (\Cref{ex:example_2}), and local minima (\Cref{ex:example_10}) in contrast to the PL and Aiming conditions.
    }
    \label{fig:examples}
\end{figure}

The three examples above demonstrate that functions satisfying \Cref{asmp:abc} can potentially be non-convex with an unbounded set of minimizers $\cS$ (\Cref{ex:example_1}) and can have saddle points (\Cref{ex:example_2}) and local minima (\Cref{ex:example_10}). In contrast, the PL and Aiming conditions are not met in cases where a problem exhibits saddle points. For illustration purposes, we plot the loss landscapes of $f$ in Figure~\ref{fig:examples}.

So far, we have presented simple examples to verify \Cref{asmp:abc}. Next, we turn our attention to more practical examples in the field of machine learning. We start with the matrix factorization problem that is known to have saddle points \citep{valavi2020revisiting} but can be shown to be PL after a sufficiently large number of iterations of alternating gradient descent and under a specific random initialization~\citep{ward2023convergence}.

%\begin{restatable}[]{example}{examplematrixfactor}\label{ex:example_matrixfactor}
%\begin{example}
%     Let $f, f_{ij} \colon \R^d \to \R$ be such that 
%    \begin{eqnarray}
%        f = \frac{1}{2nm}\|X - W^TS\|^2_{\rm F} = \frac{1}{2nm}\sum_{i=1}^n\sum_{j=1}^m(X_{ij} - w_i^\top s_j)^2, f_{ij}(W,S) = \frac{1}{2}(X_{ij} - w_i^\top s_j)^2,
 %   \end{eqnarray}
 %   where $X\in\R^{n\times m}, W=(w_i)_{i=1}^n \R^{k \times n}, S = (s_j)_{j=1}^m \R^{k \times m},$ and $\mathrm{rank}(X) = r\ge k.$ Let $\cX$ be any bounded set that contains $\cS$. Then \Cref{asmp:abc} is satisfied with $\alpha = \beta +1$ and for some $\beta > 0$. 
%\end{example}
%\end{restatable}

\begin{restatable}[]{example}{examplematrixfactor}\label{ex:example_matrixfactor}
    Let $f_i, f_{ij}$ be such that 
    \begin{eqnarray}
        f(W, S) = \frac{1}{2nm}\|X-W^\top S\|^2_{\rm F} = \frac{1}{2nm}\sum_{i,j}(X_{ij}-w_i^\top s_j)^2,\quad  f_{ij}(W,S) = \frac{1}{2}(X_{ij} - w_i^\top s_j)^2,
    \end{eqnarray}
    where $X\in \R^{n\times m}, W = (w_i)_{i=1}^n\in \R^{k\times n}, S = (s_j)_{j=1}^m\in\R^{k\times m},$ and ${\rm rank}(X) = r \ge k.$ We assume that $X$ is generated using matrices $W^*$ and $S^*$ with non-zero additive noise that minimize empirical loss, namely,
    %\begin{eqnarray}
        $X = (W^*)^\top S^* + (\varepsilon_{ij})_{i\in[n],j\in[m]} \quad \text{where} \quad W^*, S^* = \argmin_{W,S} f(W,S).$
    %\end{eqnarray}
    Let $\cX$ be any bounded set that contains $\cS.$ Then \Cref{asmp:abc} is satisfied with $\alpha = \beta+1$ and some $\beta > 0.$
%\end{example}
\end{restatable}

\begin{restatable}[]{example}{exampleneuralnet}\label{ex:neural_net}
%\begin{example}
    Consider training a two-layer neural network with a logistic loss 
    \begin{equation}
        f = \frac{1}{n}\sum_{i=1}^n f_i, \quad f_i(W,v) = \phi(y_i \cdot v^\top \sigma(Wx_i)) + \lambda_1\|v\|^2 + \lambda_2\|W\|^2_{\rm F}
    \end{equation} 
    for a classification problem where $\phi(t) \eqdef \log(1+\exp(-t)),$ $W\in\R^{k\times d}, v\in\R^{k},$ $\sigma$ is a ReLU function applied coordinate-wise,  $y_i \in \{-1,+1\}$ is a label and $x_i\in\R^d$ is a feature vector. Let $\cX$ be any bounded set that contains $\cS$. Then the \abccond holds in $\cX$ for some $\alpha \ge 1$ and $\beta = \alpha-1.$
%\end{example}
\end{restatable}
\begin{remark}
    The previous examples can be extended to any positive and convex function $\phi$ (e.g., square loss) with the additional assumption that each individual loss $f_i$ does not have minimizers in $\cS$, i.e. $\nexists (W^*, v^*) \in \cS$ such that $f_i(W^*, v^*) = f_i^*$ for some $i\in [n].$
\end{remark}

We highlight that \Cref{ex:neural_net} is applicable for a bounded set $\cX$ of an {\it arbitrary} size. Moreover, in practice, we typically add $\ell_1$ or $\ell_2$ regularization which can be equivalently written as a constrained optimization problem, and therefore, \Cref{ex:neural_net} holds in this scenario. In comparison, the results from previous works do not hold for an arbitrary bounded set around $\cS$ requiring initialization to be close enough to the solution set \citep{liu2023aiming, liu2022losslandscape}. The proofs for all examples are given in \Cref{sec:examples_proofs}.

\section{Theoretical convergence of algorithms}\label{sec:convergence_algorithms}

We conduct our analysis under the following smoothness assumption that is standard in the optimization literature.

\begin{assumption}\label{asmp:smoothness}
    We assume that each $f_i$ is $L$-smooth, i.e. for all $x,y\in\R^d$ it holds $\|\nabla f_i(x)-\nabla f_i(y)\| \le L\|x-y\|.$
\end{assumption}

The next assumption, which is sometimes called functional dissimilarity \citep{malinovsky2022server}, is standard in the analysis of \algname{SGD} with adaptive stepsizes \citep{loizou2021polyak, gower2021structured, garrigos2024handbook}.

\begin{assumption}\label{asmp:inter_error}
    We assume that the interpolation error $\sigma_{\rm int}^2 \eqdef \EE_i[f^* - f_i^*]$ is finite,
    where the expectation is taken concerning the randomness of indices $i$ for a certain algorithm. 
\end{assumption}

\subsection{Convergence under the $\boldsymbol{\alpha}$-$\boldsymbol{\beta}$-condition}\label{sec:convergence}

Now we demonstrate that the \abccond is sufficient for common optimizers to converge up to a neighbourhood of the set of minimizers $\cS.$ We provide convergence guarantees for \algname{SGD}-based algorithms with fixed and adaptive stepsize (i.e., the update direction is of the form $-\gamma_k\nabla f_{i_k}(x^k)$). In this section, we only present the main statements about convergence while the algorithms' description and the proofs are deferred to \Cref{sec:proofs_algorithms}.

\paragraph{Convergence of SGD.}

We start with the results for vanilla \algname{SGD} with constant stepsize.

\begin{restatable}[]{theorem}{theoremsgd}\label{th:theorem_sgd}
%\begin{theorem}
    Assume that Assumptions~\ref{asmp:abc}-\ref{asmp:inter_error} hold. Then the iterates of \algname{SGD} (Alg. \ref{alg:sgd}) with stepsize $\gamma \le \frac{\alpha-\beta}{2L}$ satisfy 
    \begin{eqnarray}\label{eq:theorem_sgd}
    \min_{0 \le k < K}\E{f(x^k) - f^*} \le \frac{\E{{\rm dist}(x^0,\cS)^2}}{K}\frac{1}{\gamma(\alpha-\beta)} 
    + \frac{2L\gamma}{\alpha-\beta}\sigma^2_{\rm int}
    + %\underbrace{
    \frac{2\beta}{\alpha-\beta}\sigma^2_{\rm int}
    %}_{\searrow, \text{overparam.} \nearrow}.
\end{eqnarray}
%\end{theorem}
\end{restatable}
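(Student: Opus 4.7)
The plan is to run a standard one-step distance-to-minimizer analysis, but with the projection point chosen so that the $\alpha$-$\beta$-condition applies, and then to telescope. Let $i_k$ denote the index sampled at step $k$, so $x^{k+1} = x^k - \gamma \nabla f_{i_k}(x^k)$. Pick $x_p^k \in \mathrm{Proj}(x^k,\cS)$ as the specific projection that validates \eqref{eq:abc} for all $i$, and observe the crucial geometric fact ${\rm dist}(x^{k+1},\cS)^2 \le \|x^{k+1}-x_p^k\|^2$, while ${\rm dist}(x^k,\cS)^2 = \|x^k-x_p^k\|^2$.

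Next, I would expand
\[
\|x^{k+1}-x_p^k\|^2 \;=\; \|x^k-x_p^k\|^2 \;-\; 2\gamma\<\nabla f_{i_k}(x^k), x^k-x_p^k> \;+\; \gamma^2\|\nabla f_{i_k}(x^k)\|^2
\]
and attack the two new terms separately. For the inner product I use \Cref{asmp:abc} to obtain the lower bound $\alpha(f_{i_k}(x^k)-f_{i_k}(x_p^k)) - \beta(f_{i_k}(x^k)-f_{i_k}^*)$; for the squared gradient norm I use the standard consequence of $L$-smoothness, namely $\|\nabla f_{i_k}(x^k)\|^2 \le 2L(f_{i_k}(x^k)-f_{i_k}^*)$. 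Writing $A = f_{i_k}(x^k)-f_{i_k}^*$ and $B = f_{i_k}(x_p^k)-f_{i_k}^*\ge 0$, so that $f_{i_k}(x^k)-f_{i_k}(x_p^k)=A-B$, collecting terms yields
\[
\|x^{k+1}-x_p^k\|^2 \;\le\; \|x^k-x_p^k\|^2 \;-\; \bigl(2\gamma(\alpha-\beta)-2L\gamma^2\bigr)A \;+\; 2\gamma\alpha B.
\]

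Now I take conditional expectation over $i_k$. Since $x_p^k\in\cS$, we have $\Ek{f_{i_k}(x_p^k)}=f(x_p^k)=f^*$, and by \Cref{asmp:inter_error}, $\Ek{f_{i_k}^*}=f^*-\sigma_{\rm int}^2$. Therefore $\Ek{A}=f(x^k)-f^*+\sigma_{\rm int}^2$ and $\Ek{B}=\sigma_{\rm int}^2$. Substituting and taking total expectation produces the one-step recursion
\[
\E{{\rm dist}(x^{k+1},\cS)^2} \;\le\; \E{{\rm dist}(x^k,\cS)^2} \;-\; \bigl(2\gamma(\alpha-\beta)-2L\gamma^2\bigr)\E{f(x^k)-f^*} \;+\; \bigl(2\gamma\beta+2L\gamma^2\bigr)\sigma_{\rm int}^2,
\]
where the coefficient $2\gamma\beta+2L\gamma^2$ comes from the identity $2\gamma\alpha - (2\gamma(\alpha-\beta)-2L\gamma^2) = 2\gamma\beta+2L\gamma^2$ applied to the $\sigma_{\rm int}^2$ contribution.

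Finally, I rearrange, sum from $k=0$ to $K-1$, use $\min_k \E{f(x^k)-f^*} \le \tfrac{1}{K}\sum_k \E{f(x^k)-f^*}$, and drop the nonnegative $\E{{\rm dist}(x^K,\cS)^2}$ on the right. The stepsize condition $\gamma \le \frac{\alpha-\beta}{2L}$ gives $2L\gamma^2 \le \gamma(\alpha-\beta)$, so the decrease coefficient $2\gamma(\alpha-\beta)-2L\gamma^2$ is bounded below by $\gamma(\alpha-\beta)$; dividing by this quantity and distributing the noise term into its $2\beta/(\alpha-\beta)$ and $2L\gamma/(\alpha-\beta)$ parts yields exactly \eqref{eq:theorem_sgd}. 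The only genuinely delicate step is the first one: because the $\alpha$-$\beta$-condition only guarantees a \emph{particular} projection $x_p^k$ satisfying \eqref{eq:abc} for all $i$ simultaneously, one must commit to that $x_p^k$ throughout the one-step analysis before invoking ${\rm dist}(x^{k+1},\cS)^2 \le \|x^{k+1}-x_p^k\|^2$; the rest is algebraic bookkeeping.
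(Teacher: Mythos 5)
Your proposal is correct and follows essentially the same route as the paper's proof: the same one-step expansion of $\|x^{k+1}-x_p^k\|^2$, the $\alpha$-$\beta$-condition for the inner product, the smoothness bound $\|\nabla f_{i_k}(x^k)\|^2 \le 2L(f_{i_k}(x^k)-f_{i_k}^*)$, and telescoping; the only difference is cosmetic bookkeeping (you track $A=f_{i_k}(x^k)-f_{i_k}^*$ whereas the paper regroups around $f_{i_k}(x^k)-f_{i_k}(x_p^k)$, yielding the identical noise coefficient $2\gamma\beta+2L\gamma^2$). Your closing remark about committing to the specific $x_p^k$ from Definition~\ref{asmp:abc} before bounding ${\rm dist}(x^{k+1},\cS)^2\le\|x^{k+1}-x_p^k\|^2$ is exactly the point the paper's proof also relies on.
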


\begin{table*}[t]
    \centering
    \caption{Summary of how the non-vanishing term $\beta\sigma^2_{\rm int}$ (as appearing e.g. in Eq.~\eqref{eq:theorem_sgd}) increases ($\nearrow$) or decreases $(\searrow)$ as a function of specific quantities of interest.}
    \label{tab:betasigma}
    %\resizebox{\textwidth}{!}{
        \begin{tabular}{cccc}
            \toprule
            &
            {\bf Model's width} $\boldsymbol{\nearrow}$ & 
            {\bf Model's depth} $\boldsymbol{\nearrow}$ &
            {\bf Batch-size} $\boldsymbol{\nearrow}$ \\
            \toprule

            Change in $\beta\sigma^2_{\rm int}$ &
            $\searrow$ &
            $\searrow$ &
            $\searrow$ \\

            \bottomrule 
        
        \end{tabular}
       % }
%\begin{tablenotes}
%      {\scriptsize 
%        }
%    \end{tablenotes}  
\end{table*}

Theorem~\ref{th:theorem_sgd} shows that under the \abccond, \algname{SGD} converges with a rate $\cO(K^{-1/2})$, the same rate obtained by SGD for convex functions~\citep{garrigos2024handbook}) up to a ball of size $\cO(\beta\sigma_{\rm int}^2)$. We argue that the non-vanishing term $\cO(\beta\sigma_{\rm int}^2)$ must appear in the convergence rate for non-convex optimization for several reasons:  
$(i)$ This term arises directly from the use of the $\alpha$-$\beta$ condition in the analysis, without resorting to additional upper bounds or approximations. It reflects the potential existence of local minima that the $\alpha$-$\beta$ condition is designed to model; see \Cref{sec:theory_examples} for specific examples. In the worst-case scenario, if \algname{SGD} is initialized near local minima and uses a sufficiently small step size, it may fail to converge to the exact minimizer and can become trapped in suboptimal minima. This sub-optimality is modeled in the upper bound by the stepsize-independent quantity $\mathcal{O}(\beta\sigma_{\mathrm{int}}^2)$ since we provide convergence guarantees for the function value sub-optimality rather than the squared gradient norm, which is more typical in the non-convex setting. 
$(ii)$ We also observe that the last term in \eqref{eq:theorem_sgd} shrinks as a model becomes more over-parameterized (which is consistent with prior works such as~\cite{liu2022losslandscape} that require large amounts of over-parametrization); see Sections \ref{sec:mlp}, \ref{sec:cnn}, and \ref{sec:resnet18_resnet34} for experimental validation of this claim. Further empirical observations are summarized in \Cref{tab:betasigma} and will be discussed in Section~\ref{sec:experiments}. Theoretically, if a model is sufficiently over-parameterized such that the interpolation condition $f_i^*=f^*$ holds, then the non-vanishing term is not present in the bound. $(iii)$ The presence of a non-vanishing error term in the rate with the $\alpha$-$\beta$ condition is consistent with empirical observation as it is frequently observed when training neural networks. We for instance refer the reader to Figure $8.1$ in the seminal reference \citep{goodfellow2016deeplearning}. This is also observed during the training of language models where the loss is significantly larger than $0$; see plots in \Cref{fig:language_models_additional} corresponding to language modeling. This phenomenon suggests that reaching a critical point, which is a global minimizer, is not commonly observed practically. $(iv)$ Finally, we note a potential similarity with prior works that propose other conditions to describe the loss landscape of deep neural networks (e.g. gradient confusion \citep{sankararaman2020impact}), and also obtain a non-vanishing term in the convergence rate (see Theorem $3.2$ in \citep{sankararaman2020impact}).

\paragraph{Convergence of \algname{SPS}${}_{\max}$.}

Next, we consider the \algname{SPS}${}_{\max}$ algorithm. \algname{SPS}${}_{\max}$ stepsize is given by $\gamma_k \eqdef \min\left\{\frac{f_{i_k}(x^k)-f_{i_k}^*}{c\|\nabla f_{i_k}(x^k)\|^2}, \gamma_{\rm b}\right\}$ where $c$ and $\gamma_{\rm b}$ are the stepsize hyperparameters.

\begin{restatable}[]{theorem}{theorempolyak}\label{th:theorem_polyak}
%\begin{theorem}
    Assume that Assumptions~\ref{asmp:abc}-\ref{asmp:inter_error} hold. Then the iterates of \algname{SPS}${}_{\max}$ (Alg. \ref{alg:sps}) with a stepsize hyperparameter $c > \frac{1}{2(\alpha-\beta)}$ satisfy
    \begin{eqnarray*}
        \min\limits_{0\le k< K}\E{f(x^k) - f^*} &\le& \frac{c_1}{K}\E{{\rm dist}(x^0,\cS)^2}
        + 2\alpha c_1\gamma_{\rm b}\sigma^2_{\rm int},
    \end{eqnarray*}
    where $\gamma_{\min} \eqdef \min\{\nicefrac{1}{2cL}, \gamma_b\}$ and $c_1 \eqdef \frac{c}{\gamma_{\min}(2(\alpha-\beta)c-1)}.$
%\end{theorem}
\end{restatable}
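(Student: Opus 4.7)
The plan is to mimic the classical SPS analysis (as in \citet{loizou2021polyak}) but replace the quasi-convexity/convexity step by the \abccond. I would start from the one-step expansion
\begin{equation*}
\|x^{k+1}-x_p^k\|^2 = \|x^k - x_p^k\|^2 - 2\gamma_k \<\nabla f_{i_k}(x^k), x^k - x_p^k> + \gamma_k^2\|\nabla f_{i_k}(x^k)\|^2,
\end{equation*}
where $x_p^k \in \proj(x^k,\cS)$ is the projection point provided by \Cref{asmp:abc}. Then I would apply the \abccond to the inner product and rewrite $\alpha(f_{i_k}(x^k)-f_{i_k}(x_p^k)) - \beta(f_{i_k}(x^k)-f_{i_k}^*)$ as $(\alpha-\beta)(f_{i_k}(x^k) - f_{i_k}^*) - \alpha(f_{i_k}(x_p^k) - f_{i_k}^*)$, which isolates the non-negative sub-optimality gap from the "noise" term $f_{i_k}(x_p^k) - f_{i_k}^*$ that will eventually yield $\sigma_{\rm int}^2$.

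Next, I would handle the quadratic term using the defining property of the \algname{SPS}${}_{\max}$ step size, namely $\gamma_k\|\nabla f_{i_k}(x^k)\|^2 \le \frac{1}{c}(f_{i_k}(x^k)-f_{i_k}^*)$, which gives $\gamma_k^2\|\nabla f_{i_k}(x^k)\|^2 \le \frac{\gamma_k}{c}(f_{i_k}(x^k)-f_{i_k}^*)$. Combined with the previous step, the recursion becomes
\begin{equation*}
\|x^{k+1}-x_p^k\|^2 \le \|x^k - x_p^k\|^2 - \gamma_k\bigl(2(\alpha-\beta)-\tfrac{1}{c}\bigr)(f_{i_k}(x^k)-f_{i_k}^*) + 2\alpha\gamma_k(f_{i_k}(x_p^k)-f_{i_k}^*).
\end{equation*}
The hypothesis $c>\frac{1}{2(\alpha-\beta)}$ makes the coefficient $2(\alpha-\beta)-1/c$ strictly positive; on the first (negative-contribution) term I would use $\gamma_k \ge \gamma_{\min}$ (available because $L$-smoothness gives $\|\nabla f_{i_k}(x^k)\|^2 \le 2L(f_{i_k}(x^k)-f_{i_k}^*)$, so the first branch of the min is at least $1/(2cL)$), while on the last (positive-contribution) noise term I would simply upper bound $\gamma_k \le \gamma_b$. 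Signs are the delicate point here: this asymmetric use of the bounds on $\gamma_k$ is what lets both terms move in the right direction.

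Now I would take conditional expectation with respect to $i_k$. Using $\EE_i[f_i(x^k)-f_i^*] = f(x^k) - f^* + \sigma_{\rm int}^2$ and $\EE_i[f_i(x_p^k) - f_i^*] = \sigma_{\rm int}^2$ (since $x_p^k\in\cS$), and dropping the beneficial $-\gamma_{\min}(2(\alpha-\beta)-1/c)\sigma_{\rm int}^2$ for brevity, I get
\begin{equation*}
\Ek{\|x^{k+1}-x_p^k\|^2} \le \|x^k - x_p^k\|^2 - \gamma_{\min}\bigl(2(\alpha-\beta)-\tfrac{1}{c}\bigr)(f(x^k)-f^*) + 2\alpha\gamma_b\sigma_{\rm int}^2.
\end{equation*}
The projection inequality $\operatorname{dist}(x^{k+1},\cS)^2 \le \|x^{k+1}-x_p^k\|^2$ converts the left-hand side into $\operatorname{dist}(x^{k+1},\cS)^2$. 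Rearranging, noting that $\gamma_{\min}(2(\alpha-\beta)-1/c)=1/c_1$, telescoping from $k=0$ to $K-1$, taking total expectation, and using that the minimum is at most the average yields precisely the claimed bound with the constant $c_1 = \frac{c}{\gamma_{\min}(2(\alpha-\beta)c-1)}$ and the non-vanishing term $2\alpha c_1 \gamma_b\sigma_{\rm int}^2$.

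The main obstacle I anticipate is the bookkeeping around the random step size $\gamma_k$: because it depends on $i_k$, the two bounds $\gamma_k\ge\gamma_{\min}$ and $\gamma_k\le\gamma_b$ must be applied to terms of the correct sign before taking expectations, and one must make sure the condition $c>\tfrac{1}{2(\alpha-\beta)}$ is used exactly where positivity of the descent coefficient is needed. Once the inequality is cast in the form of a telescoping recursion on $\operatorname{dist}(x^k,\cS)^2$, the remainder is mechanical.
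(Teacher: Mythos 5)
Your proposal is correct and follows essentially the same route as the paper's proof: the same one-step expansion at $x_p^k$, the same use of the SPS stepsize bounds $\gamma_k^2\|\nabla f_{i_k}(x^k)\|^2\le \frac{\gamma_k}{c}(f_{i_k}(x^k)-f_{i_k}^*)$ and $\gamma_{\min}\le\gamma_k\le\gamma_{\rm b}$ applied asymmetrically according to the sign of each term, the same dropping of the beneficial $-\gamma_{\min}(2(\alpha-\beta)-\nicefrac{1}{c})\sigma^2_{\rm int}$ contribution, and the same telescoping. The only difference is cosmetic (you regroup the $\alpha$-$\beta$ terms before rather than after taking expectations), so nothing further is needed.
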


In the convex case, i.e. \abccond holds with $\alpha=1, \beta=0$, we recover the rate of \citet{loizou2021polyak}.

\paragraph{Convergence of \algname{NGN}.}

Finally, we turn to the analysis of \algname{NGN}. This algorithm is proposed for minimizing positive functions $f_i$ which is typically the case for many practical choices. Its stepsize $\gamma_k \eqdef \frac{\gamma}{1+\frac{\gamma}{2f_{i_k}(x^k)}\|\nabla f_{i_k}(x^k)\|^2}$ where $\gamma$ is the stepsize hyperparameter.  \algname{NGN} stepsize differs from that of \algname{SPS}${}_{\max}$ by replacing $\min$ operator by softer harmonic averaging of \algname{SPS} stepsize and a constant $\gamma$. In addition to the already mentioned assumptions, we make a mild assumption that the positivity error $\sigma^2_{\rm pos} \eqdef \E{f_i^*}$ is finite.

\begin{restatable}[]{theorem}{theoremngn}\label{th:theorem_ngn}
%\begin{theorem}
     Assume that Assumptions \ref{asmp:abc} with $\alpha\ge \beta+1$ and \ref{asmp:smoothness}-\ref{asmp:inter_error} hold. Assume that each function $f_i$ is positive and $\sigma^2_{\rm pos} < \infty$. Then the iterates of \algname{NGN} (Alg. \ref{alg:ngn}) with a stepsize parameter $\gamma > 0$ satisfy 
    \begin{eqnarray}\label{eq:theorem_ngn}
        \min_{0 \le k \le K-1}\E{f(x^k)-f^*} 
        &\le& \frac{\E{{\rm dist}(x^0,\cS)^2}}{2\gamma K}\frac{(1+2\gamma L)^2}{c_2} 
        + \frac{3L\gamma\alpha(1+\gamma L)\sigma^2_{\rm int}}{c_2}\notag\\
        && \;+\; 
        \frac{\gamma L}{a}\max\left\{2\gamma L-1,0\right\}\sigma^2_{\rm pos}
        + \frac{2\beta\sigma^2_{\rm int}}{c_2},
    \end{eqnarray}
    where $c_2 \eqdef 2\gamma L(\alpha-\beta-1)+\alpha-\beta.$
%\end{theorem} 
\end{restatable}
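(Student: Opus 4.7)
The plan is to run a Lyapunov-style argument on $\|x^{k+1}-x_p^k\|^2$, where $x_p^k\in\proj(x^k,\cS)$ is a projection witnessing the \abccond at $x^k$, and then telescope. Expanding the NGN update $x^{k+1}=x^k-\gamma_k\nabla f_{i_k}(x^k)$ gives
$$\|x^{k+1}-x_p^k\|^2 = \|x^k-x_p^k\|^2 - 2\gamma_k\<\nabla f_{i_k}(x^k),x^k-x_p^k> + \gamma_k^2\|\nabla f_{i_k}(x^k)\|^2.$$
The crucial observation is that the defining identity $1/\gamma_k = 1/\gamma + \|\nabla f_{i_k}(x^k)\|^2/(2f_{i_k}(x^k))$ rearranges to
$$\gamma_k^2\|\nabla f_{i_k}(x^k)\|^2 = 2\gamma_k f_{i_k}(x^k) - \tfrac{2}{\gamma}\gamma_k^2 f_{i_k}(x^k),$$
converting the squared-gradient noise term into function values. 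This is what lets us handle the random, sample-dependent stepsize without pulling $\gamma_k$ out of an expectation.

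Next I apply the \abccond to the inner product and split $f_{i_k}(x^k) = (f_{i_k}(x^k)-f_{i_k}(x_p^k)) + (f_{i_k}(x_p^k) - f_{i_k}^*) + f_{i_k}^*$. After collecting terms, the coefficient of the nonnegative quantity $f_{i_k}(x^k)-f_{i_k}(x_p^k)$ becomes proportional to $\alpha-\beta-1$, which explains both the hypothesis $\alpha\ge\beta+1$ and the eventual factor $c_2 = 2\gamma L(\alpha-\beta-1)+\alpha-\beta$ once the two occurrences of $\gamma_k$ are replaced by deterministic bounds. For this I sandwich $\gamma_k$ using smoothness: since each $f_i$ is $L$-smooth and nonnegative, $\|\nabla f_i(x)\|^2\le 2L f_i(x)$, which yields $\gamma/(1+\gamma L)\le\gamma_k\le\gamma$. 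Using the lower bound on descent terms and the upper bound on noise terms, together with a smoothness descent inequality applied at both $x^k$ and the projection, produces the mixed factors $(1+2\gamma L)^2$ on the initial distance and $(1+\gamma L)$ on the interpolation-error term.

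Taking conditional expectation in the sample $i_k$ converts $\EE_i[f_i(x^k)-f_i(x_p^k)] = f(x^k)-f^*$ and $\EE_i[f_i(x_p^k)-f_i^*] = \sigma_{\rm int}^2$, while the residual $\EE_i[f_i^*]$ gives the $\sigma_{\rm pos}^2$ contribution. Telescoping from $k=0$ to $K-1$, using $\|x^{k+1}-x_p^{k+1}\|^2\le\|x^{k+1}-x_p^k\|^2$ by the definition of projection, dividing by $K$ and by the coefficient of the descent term, and finally passing to the minimum over $k$ then yields the stated four-term bound.

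The main obstacle is engineering the coefficient of $f(x^k)-f^*$ to equal exactly $c_2$. A naive bookkeeping either collapses this coefficient to zero at the boundary case $\alpha=\beta+1$ or degrades the stochastic-noise prefactor; the NGN identity must therefore be combined precisely with the smoothness-based sandwich on $\gamma_k$ at both $x^k$ and $x_p^k$ to keep $c_2$ strictly positive for all admissible $(\alpha,\beta)$. Distinguishing the regimes $2\gamma L\le 1$ and $2\gamma L>1$ is what produces the $\max\{2\gamma L-1,0\}$ factor on $\sigma_{\rm pos}^2$: in the small-stepsize regime, a smoothness descent inequality fully absorbs this contribution, while in the large-stepsize regime it must be retained.
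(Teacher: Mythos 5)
Your overall architecture (expand $\|x^{k+1}-x_p^k\|^2$, apply the \abccond, control $\gamma_k^2\|\nabla f_{i_k}(x^k)\|^2$ via the \algname{NGN} stepsize identity plus smoothness, telescope against ${\rm dist}(\cdot,\cS)^2$) matches the paper's, and your identity $\gamma_k^2\|\nabla f_{i_k}(x^k)\|^2 = 2\gamma_k f_{i_k}(x^k)(1-\gamma_k/\gamma)$ is a legitimate substitute for the imported Lemma on $\gamma_k^2\|\nabla f_{i_k}(x^k)\|^2$. However, there is a genuine gap at the one step that is actually delicate here: you assert that $f_{i_k}(x^k)-f_{i_k}(x_p^k)$ is a ``nonnegative quantity'' and propose to replace the random $\gamma_k$ multiplying it by its deterministic sandwich bounds. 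This quantity is \emph{not} pointwise nonnegative: $x_p^k$ minimizes the average $f$, not the individual $f_{i_k}$ (indeed $\sigma^2_{\rm int}=\EE[f^*-f_i^*]>0$ exactly quantifies the failure of $x_p^k$ to minimize each $f_i$). Only its conditional expectation $\Ek{f_{i_k}(x^k)-f_{i_k}(x_p^k)}=f(x^k)-f^*$ is nonnegative. Consequently, on the event where $f_{i_k}(x^k)<f_{i_k}(x_p^k)$ the inequality $-C\gamma_k\bigl(f_{i_k}(x^k)-f_{i_k}(x_p^k)\bigr)\le -C\gamma_{\min}\bigl(f_{i_k}(x^k)-f_{i_k}(x_p^k)\bigr)$ reverses, so you cannot pull a deterministic stepsize bound through this term before taking the expectation, and the descent coefficient you engineer is not justified.

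The paper resolves exactly this correlation between the random stepsize and the random function values by writing $\gamma_k=\rho+\epsilon_k$ with $\rho$ a \emph{deterministic} constant: the $\rho$ part multiplies $f_{i_k}(x^k)-f_{i_k}(x_p^k)$ (so its expectation is $\rho\,(f(x^k)-f^*)$ with no sign issue), while $\rho$ is chosen, using $\gamma_k\ge \gamma/(1+\gamma L)$, so that the leftover $\epsilon_k$-terms multiply only the pointwise-nonnegative quantities $f_{i_k}(x^k)-f_{i_k}^*$ and $f_{i_k}(x_p^k)-f_{i_k}^*$ with coefficients of controlled sign; this is also where the hypothesis $\alpha\ge\beta+1$ enters (to guarantee $\rho\ge 0$) and where $c_2=2\gamma L(\alpha-\beta-1)+\alpha-\beta$ emerges as $2\alpha\rho = 2\gamma c_2/\bigl((1+\gamma L)(1+2\gamma L)\bigr)$. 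Note also that $c_2$ does not vanish at $\alpha=\beta+1$, whereas a coefficient ``proportional to $\alpha-\beta-1$'' as you describe would; your own remark about the boundary case signals that your bookkeeping has not actually produced the claimed constant. To repair the proof you need to add this decoupling step (or an equivalent one); the rest of your outline then goes through.
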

One of the main properties of \algname{NGN} is its robustness to the choice of stepsize $\gamma.$ \Cref{th:theorem_ngn} can be seen as an extension of this feature from the set of convex functions originally analyzed in \citep{orvieto2024adaptive} to the class of structured non-convex satisfying $\alpha$-$\beta$-condition. %Finally, we obtain convergence guarantees for \algname{AdaGrad-norm}.

Comparing the results of Theorems \ref{th:theorem_polyak} and \ref{th:theorem_ngn} we highlight several important differences. $(i)$ There is no restriction on the stepsize parameter $\gamma$ for \algname{NGN}. Conversely, \algname{SPS}${}_{\max}$ requires $c$ to be lower bounded. $(ii)$ Both algorithms converge to a neighborhood of the solution with a fixed stepsize hyperparameter. However, the neighborhood size of \algname{SPS}${}_{\max}$ is not controllable by the stepsize hyperparameter and remains constant even in the convex setting when $\beta=0.$ In contrast, \algname{NGN} converges to a ball whose size can be made smaller by choosing a small stepsize parameter, and the ``non-vanishing'' term disappears in the convex setting $\beta=0.$

We note that our goal was not to achieve the tightest convergence guarantees for each algorithm, but rather to underscore the versatility of the \abccond in deriving convergence guarantees for \algname{SGD}-type algorithms, both for constant or adaptive stepsizes. In addition to the results of this section, we demonstrate the convergence guarantees for \algname{SGD}, \algname{SPS}${}_{\max}$, and \algname{NGN} with decreasing with $k$ stepsizes in \Cref{sec:proofs_algorithms}. Besides, in \Cref{sec:adagrad_norm_max} we present a convergence of a slightly modified version of \algname{Adagrad-norm} method \citep{ward2019adagradnorm} under $\alpha$-$\beta$-condition.

\section{Experimental validation of the $\boldsymbol{\alpha}$-$\boldsymbol{\beta}$-condition}
\label{sec:experiments}

In this section, we provide extensive numerical results supporting that the \abccond does hold in many practical applications for various tasks, model architectures, and datasets. The detailed experimental setting is described in \Cref{sec:additional_experiments}. 

In all cases, we approximate $\proj(x^k, \cS)$ as the last iterate $x^K$ in a run. After finding such an approximation, we start a second training run with the same random seed to measure all necessary quantities.  To guarantee that the second training trajectory follows the same path as the first run, we disable non-deterministic CUDA operations while training on a GPU. For each task, we demonstrate possible values of pairs of $(\alpha, \beta)$ that work across all runs (might differ from one experiment to another) with different random seeds and satisfy $\alpha \ge \beta +0.1$.
%\niccolo{I adjusted this, to clarify how we design the experiments.}

\subsection{MLP architecture}\label{sec:mlp}

First,  we test MLP neural networks with $3$ fully connected layers on Fashion-MNIST \citep{xiao2017fashionmnist} dataset. We fix the second layer of the network to be a square matrix and vary its dimension layer to investigate the effect of over-parameterization on $\alpha$-$\beta$-condition. We test it for dimensions $\{32, 128, 2049, 4096\}$, and for each case, we run experiments for $4$ different random seeds. In \Cref{fig:mlp} we demonstrate possible values of pairs of $(\alpha, \beta)$ that work across all $4$ runs. We observe that minimum possible values of $\alpha$ and $\beta$ increase from small size to medium, and then tend to decrease again as the model becomes more over-parameterized. We defer more experimental results for MLP to \Cref{sec:mlp_additional} to showcase this phenomenon. This observation leads to the fact that the neighborhood of convergence $\cO(\beta\sigma^2_{\rm int})$ of \algname{SGD} eventually becomes smaller with the size of the model as we expect (since it becomes more over-parameterized).

\begin{figure}[t]
    \centering
    \begin{tabular}{cccc}
        \includegraphics[width=0.21\textwidth]{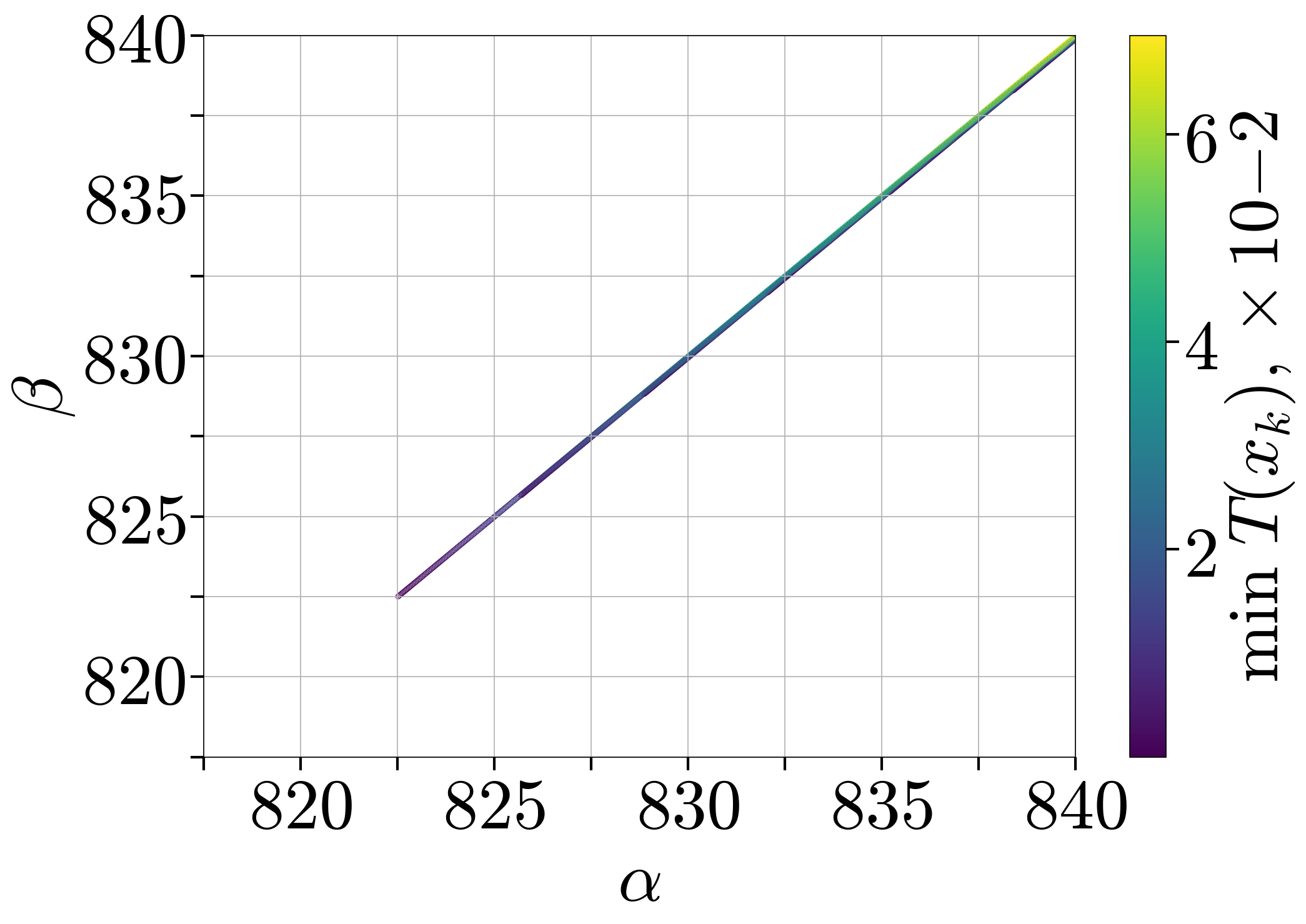} &
        \includegraphics[width=0.22\textwidth]{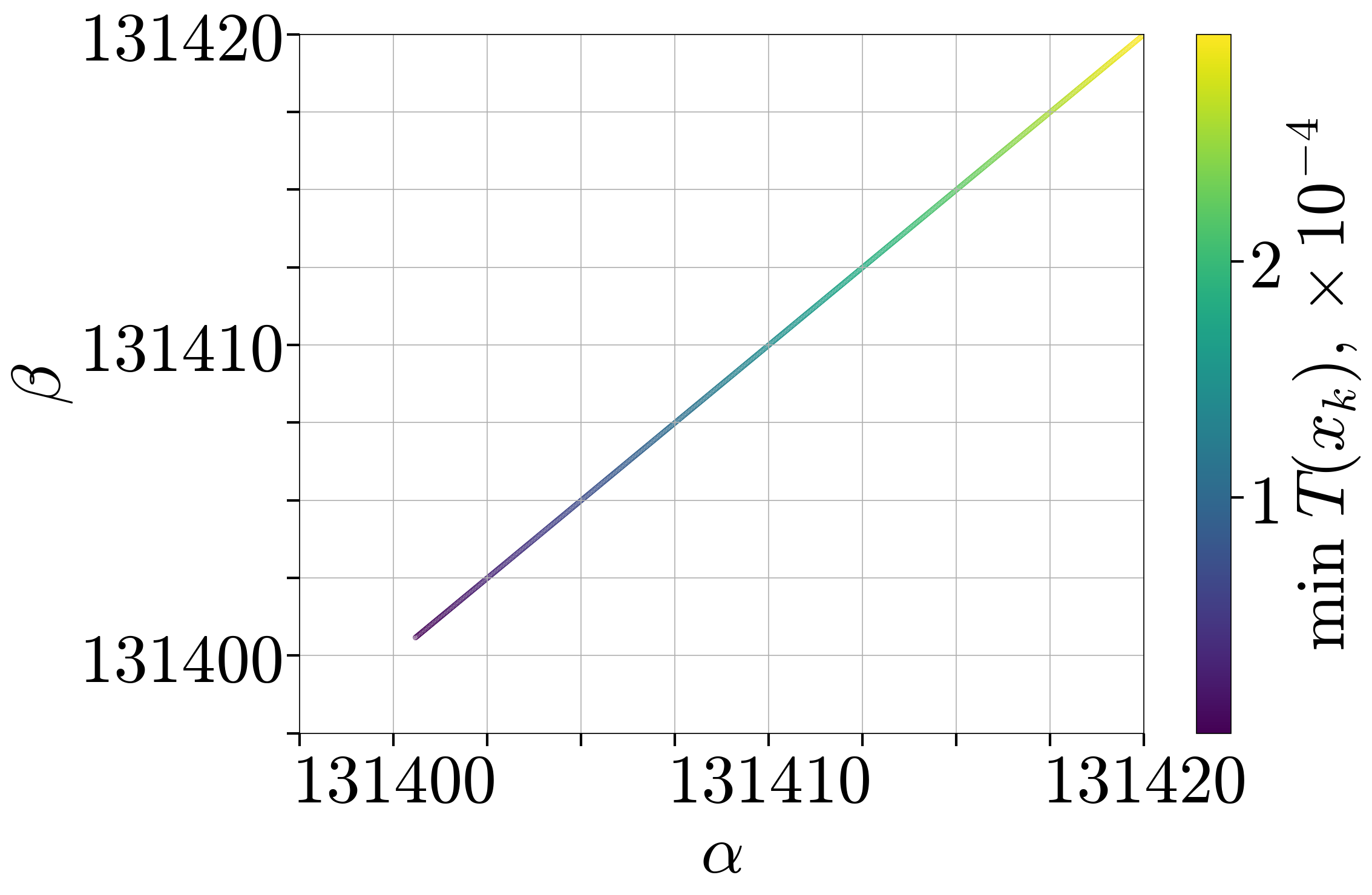} &
        \includegraphics[width=0.22\textwidth]{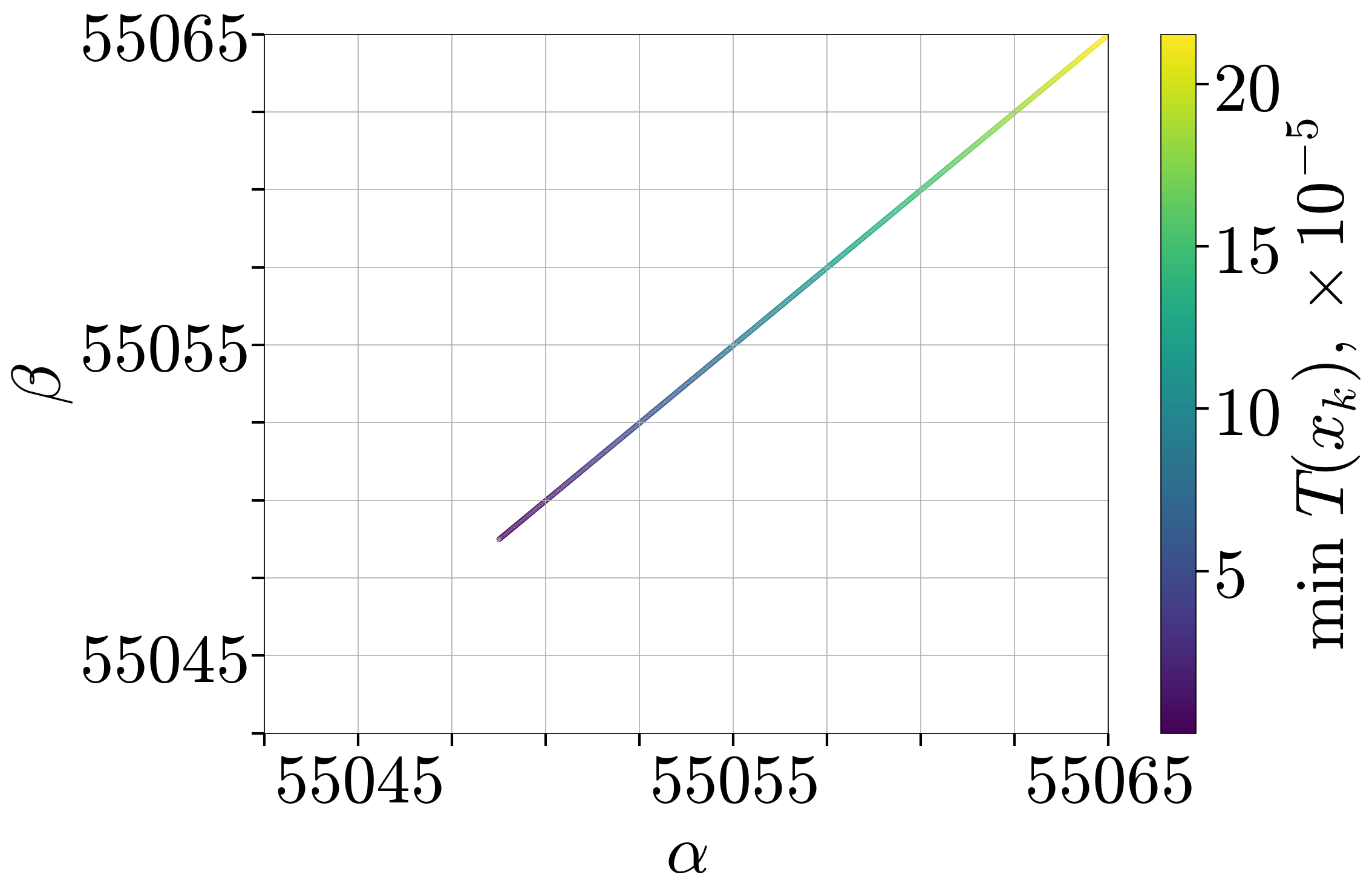} &
        \includegraphics[width=0.22\textwidth]{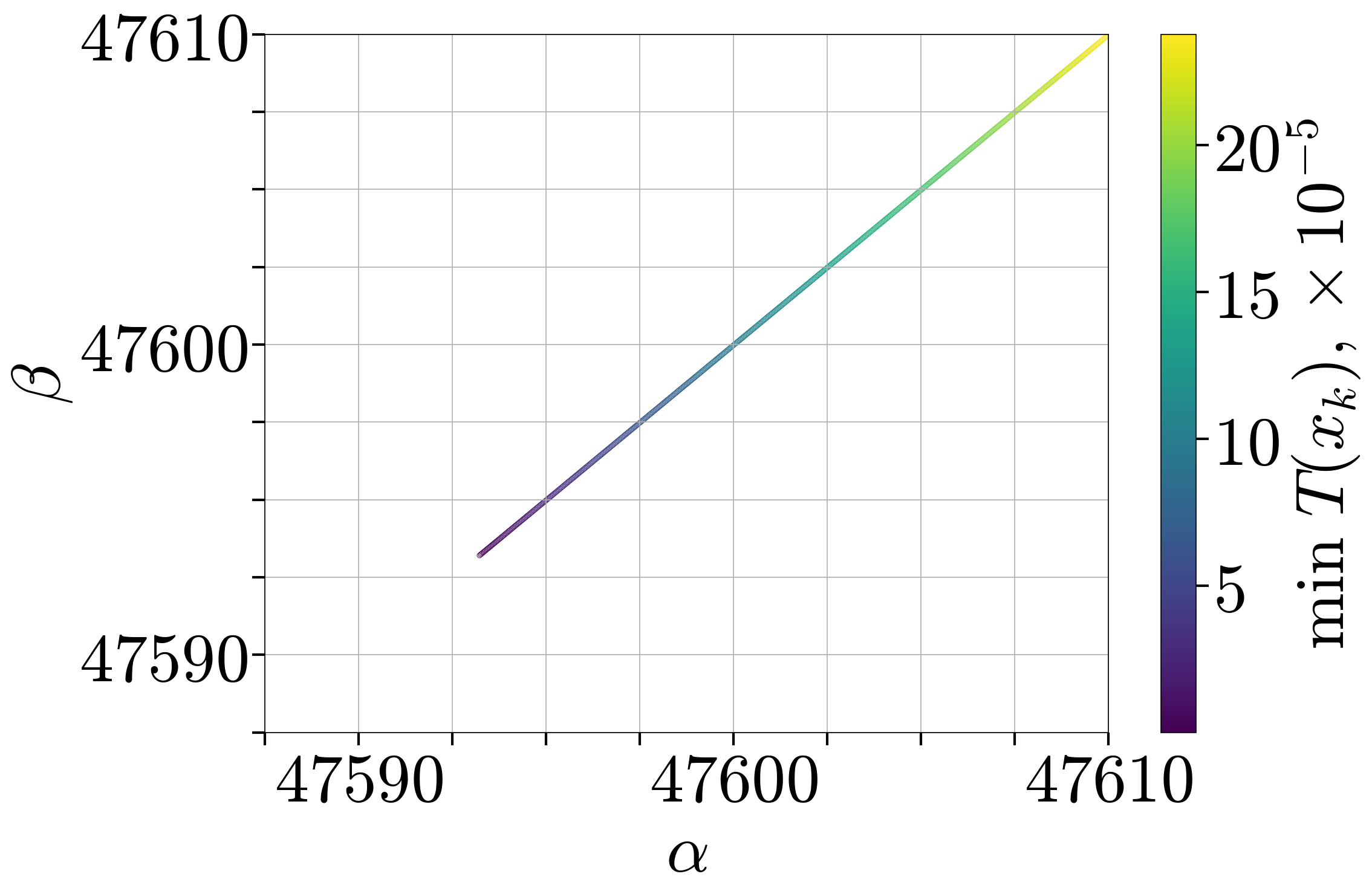}\\
        {\tiny (a) $2^{\rm nd}$ layer size $32$}  &
        {\tiny (b) $2^{\rm nd}$ layer size $128$} &
        {\tiny (c) $2^{\rm nd}$ layer size $2048$} & 
        {\tiny (d) $2^{\rm nd}$ layer size $4096$}
    \end{tabular}
    \caption{\abccond in the training of 3 layer MLP model on Fashion-MNIST dataset varying the size of the second layer. Here $T(x_k) = \<\nabla f_{i_k}(x^k), x^k-x^K> - \alpha(f_{i_k}(x^k) - f_{i_k}(x^K)) - \beta f_{i_k}(x^k)$ assuming that $f_i^*=0.$ Minimum is taken across all runs and iterations for given pair of $(\alpha, \beta)$.}
    \label{fig:mlp}
\end{figure}

\subsection{CNN architecture}\label{sec:cnn}

In our next experiment, we test convolutional neural networks with $2$ convolution layers and $1$ fully connected layer on CIFAR10 dataset \citep{krizhevsky2009cifardataset}. We vary the number of convolutions in the second convolution layer to investigate the effect of over-parameterization on $\alpha$-$\beta$-condition. We test it for $\{32, 128, 512, 2048\}$ number of convolutions in the second layer, and for each case, we run experiments for $4$ different random seeds. In \Cref{fig:cnn}, we observe that the smallest possible values of $(\alpha, \beta)$ increase till $64$ convolutions, and then decrease back. Second, the difference $\alpha-\beta$ for possible choice of $\alpha$ and $\beta$ decreases from \Cref{fig:cnn}-a to \Cref{fig:cnn}-b, but then it increases again. 

\begin{figure}[t]
    \centering
    \begin{tabular}{cccc}
        \includegraphics[width=0.225\textwidth]{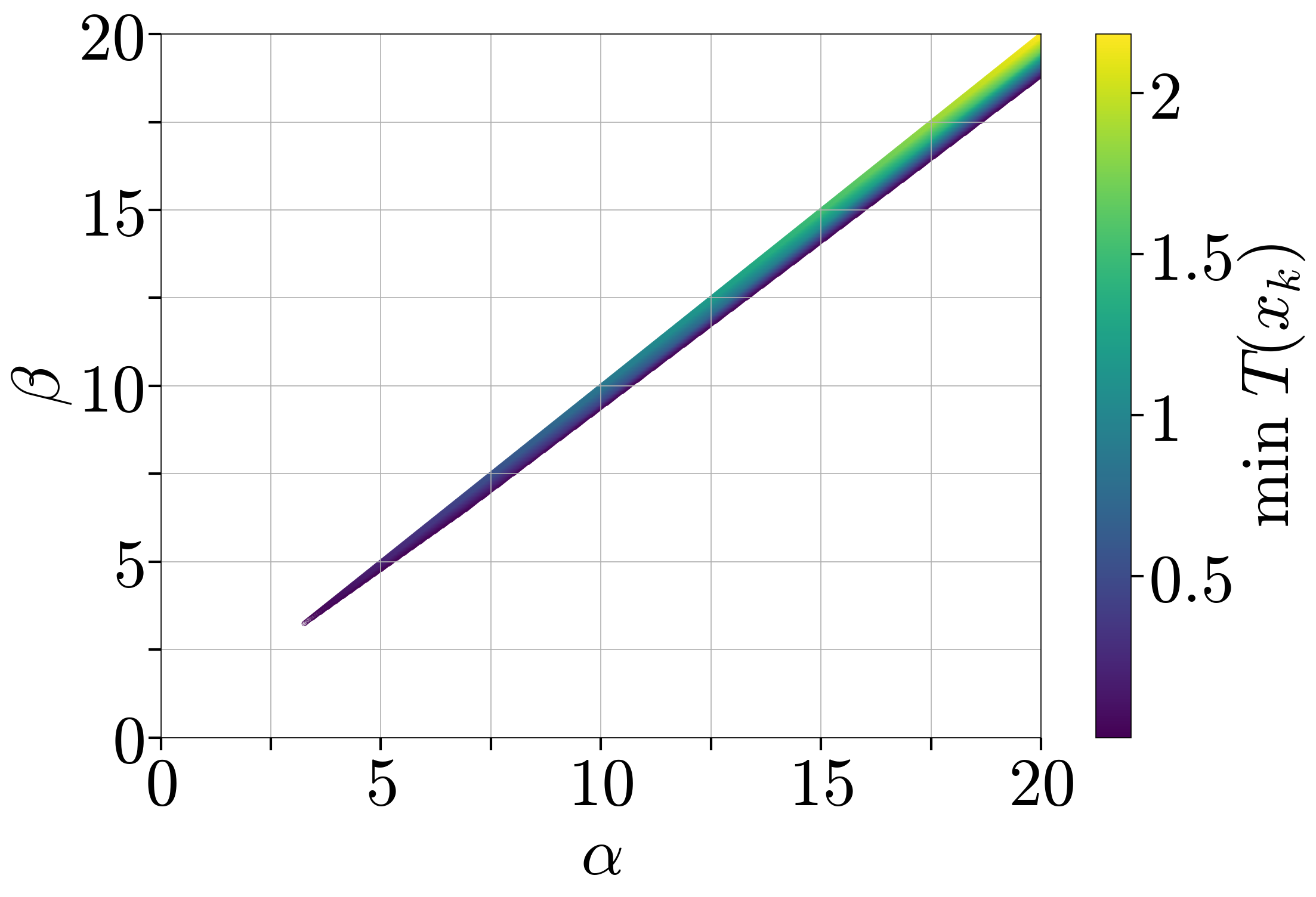} &
        \includegraphics[width=0.225\textwidth]{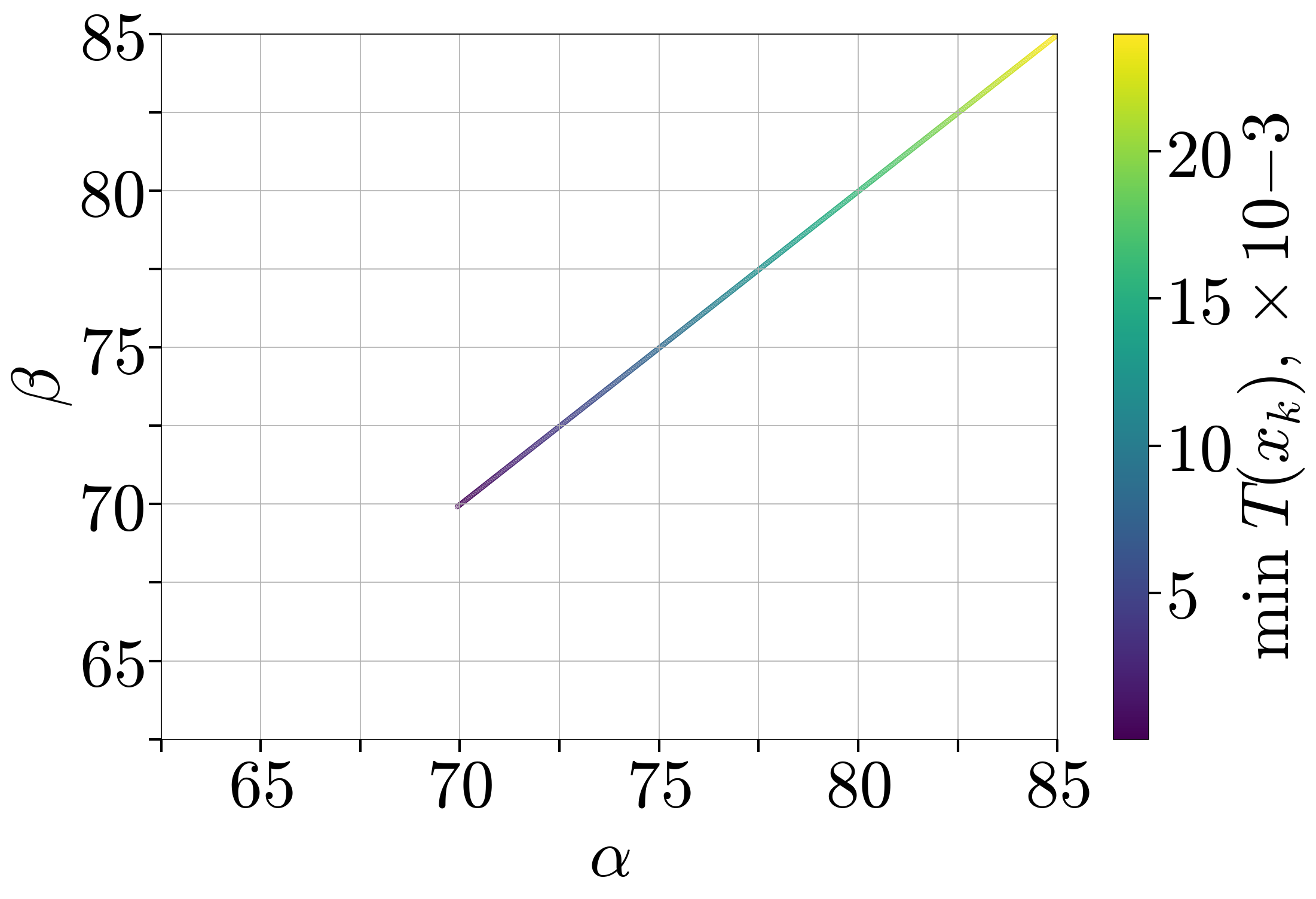} &
        \includegraphics[width=0.22\textwidth]{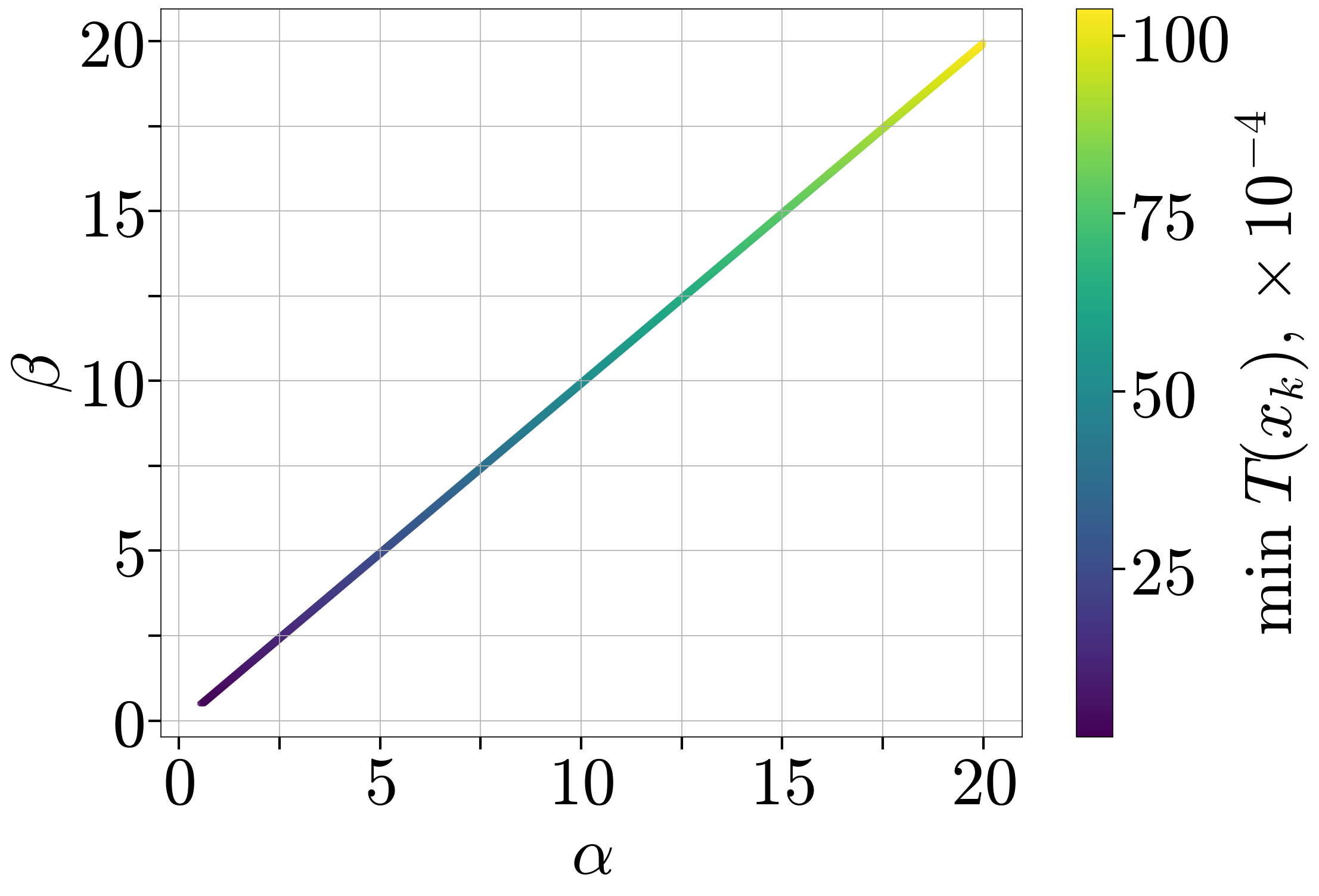} &
        \includegraphics[width=0.22\textwidth]{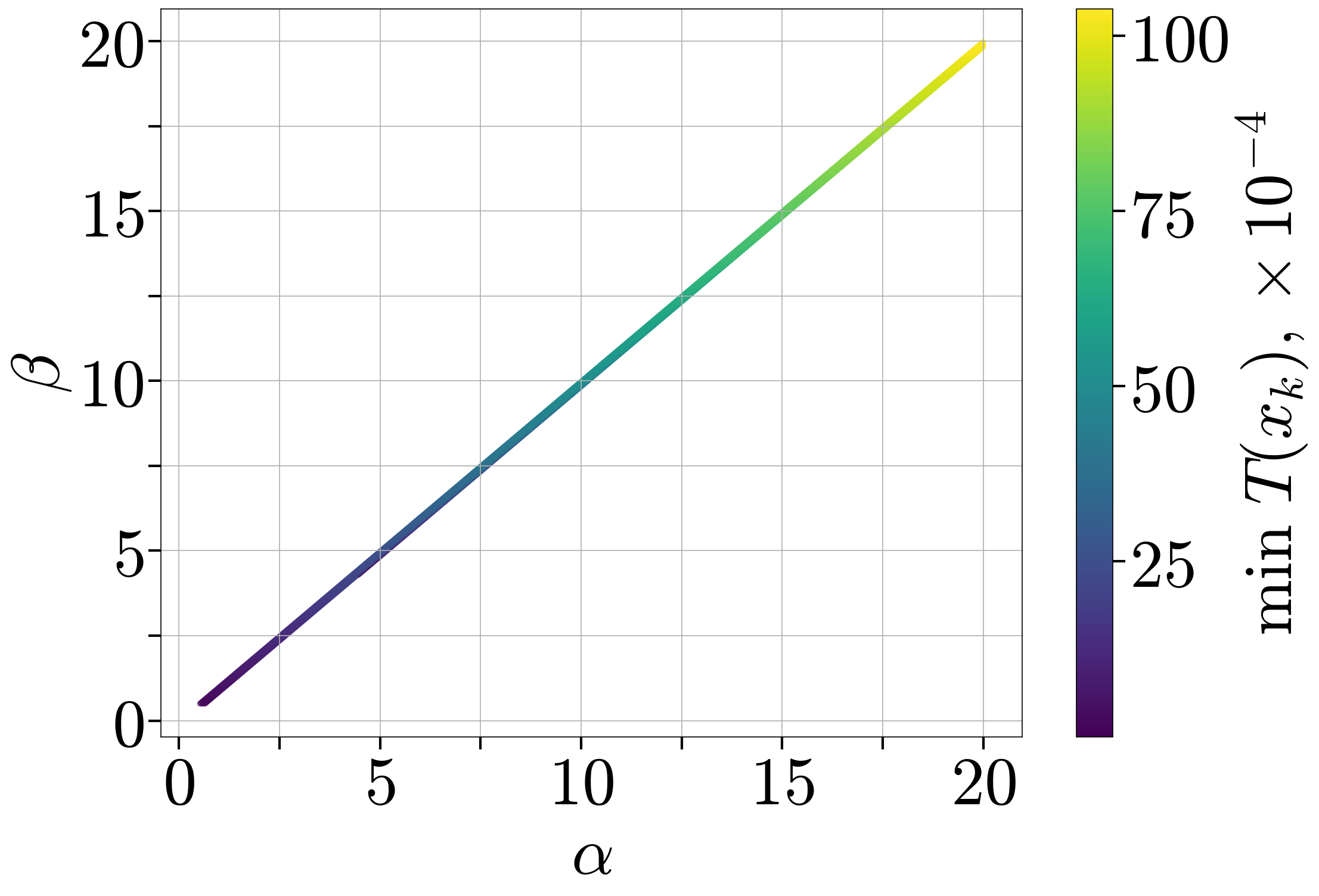}\\
        {\tiny (a) $\#$ Convolutions $32$}  &
        {\tiny (b) $\#$ Convolutions $64$} &
        {\tiny (c) $\#$ Convolutions $512$} & 
        {\tiny (d) $\#$ Convolutions $2048$}
    \end{tabular}
    \caption{\abccond in the training of CNN model on CIFAR10 dataset varying the number of convolutions in the second layer. Here $T(x_k) = \<\nabla f_{i_k}(x^k), x^k-x^K> - \alpha(f_{i_k}(x^k) - f_{i_k}(x^K)) - \beta f_{i_k}(x^k)$ assuming that $f_i^*=0.$ Minimum is taken across all runs and iterations for a given pair of $(\alpha, \beta)$.}
    \label{fig:cnn}
\end{figure}

\subsection{Resnet architecture}

Next, we switch to the Resnet architecture \citep{he2015resnet} with batch sizes in $\{64, 128, 256, 512\}$ trained on CIFAR100 \citep{krizhevsky2009cifardataset}. For each batch size, we run experiments for $4$ different random seeds. In \Cref{fig:resnet}, we plot the possible choice of pairs $(\alpha, \beta)$ that works across all runs. We observe that \abccond holds in all cases. Besides, there is a tendency for the minimum possible choice of $\alpha$ and $\beta$ to decrease with batch size. Moreover, for larger batches, the difference between $\alpha$ and $\beta$ also increases. From \Cref{th:theorem_sgd}, this result suggests that we can use bigger stepsizes with larger batches.

\begin{figure}[t]
    \centering
    \begin{tabular}{cccc}
        \includegraphics[width=0.245\textwidth]{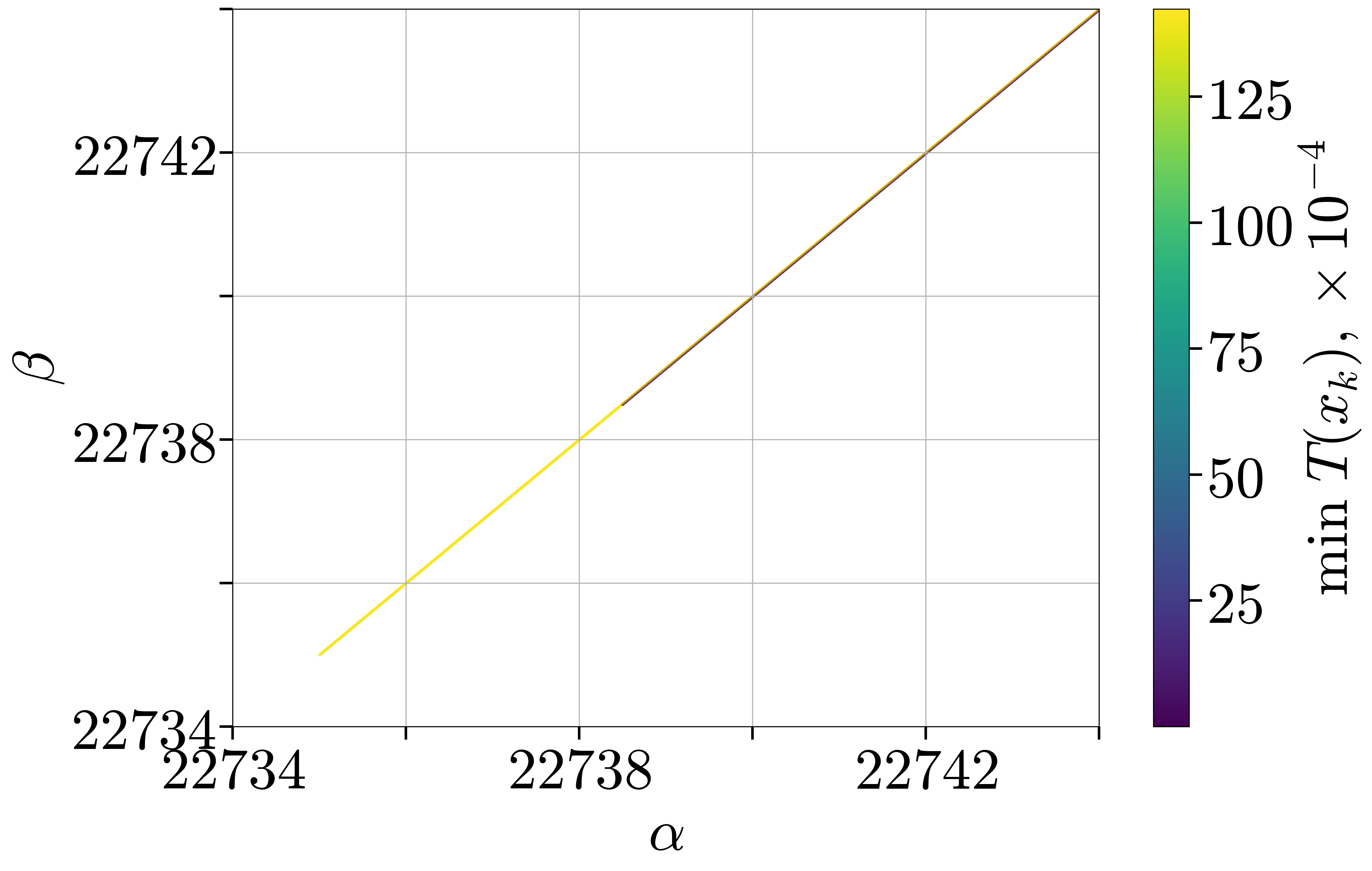} &
        \includegraphics[width=0.23\textwidth]{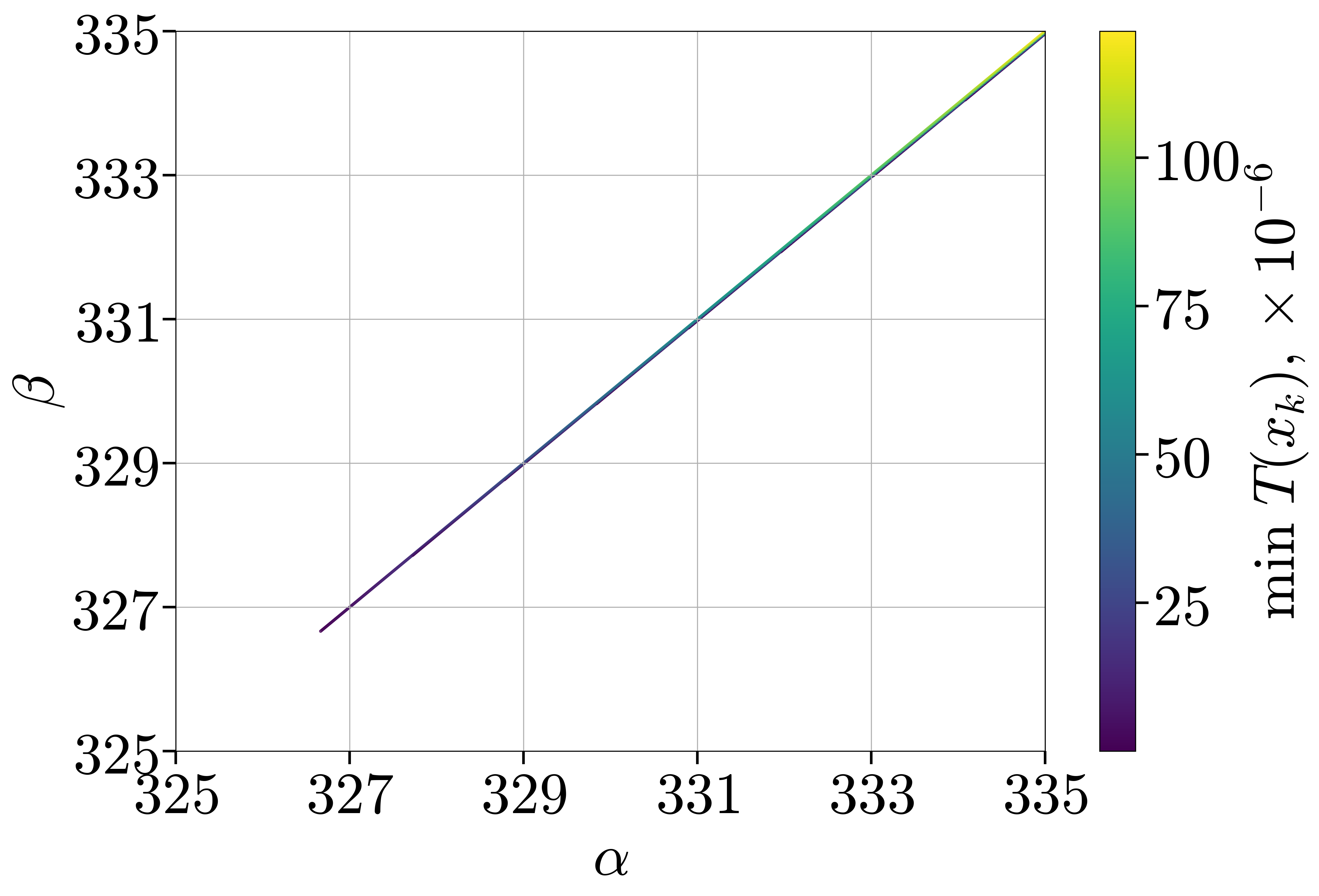} &
        \includegraphics[width=0.215\textwidth]{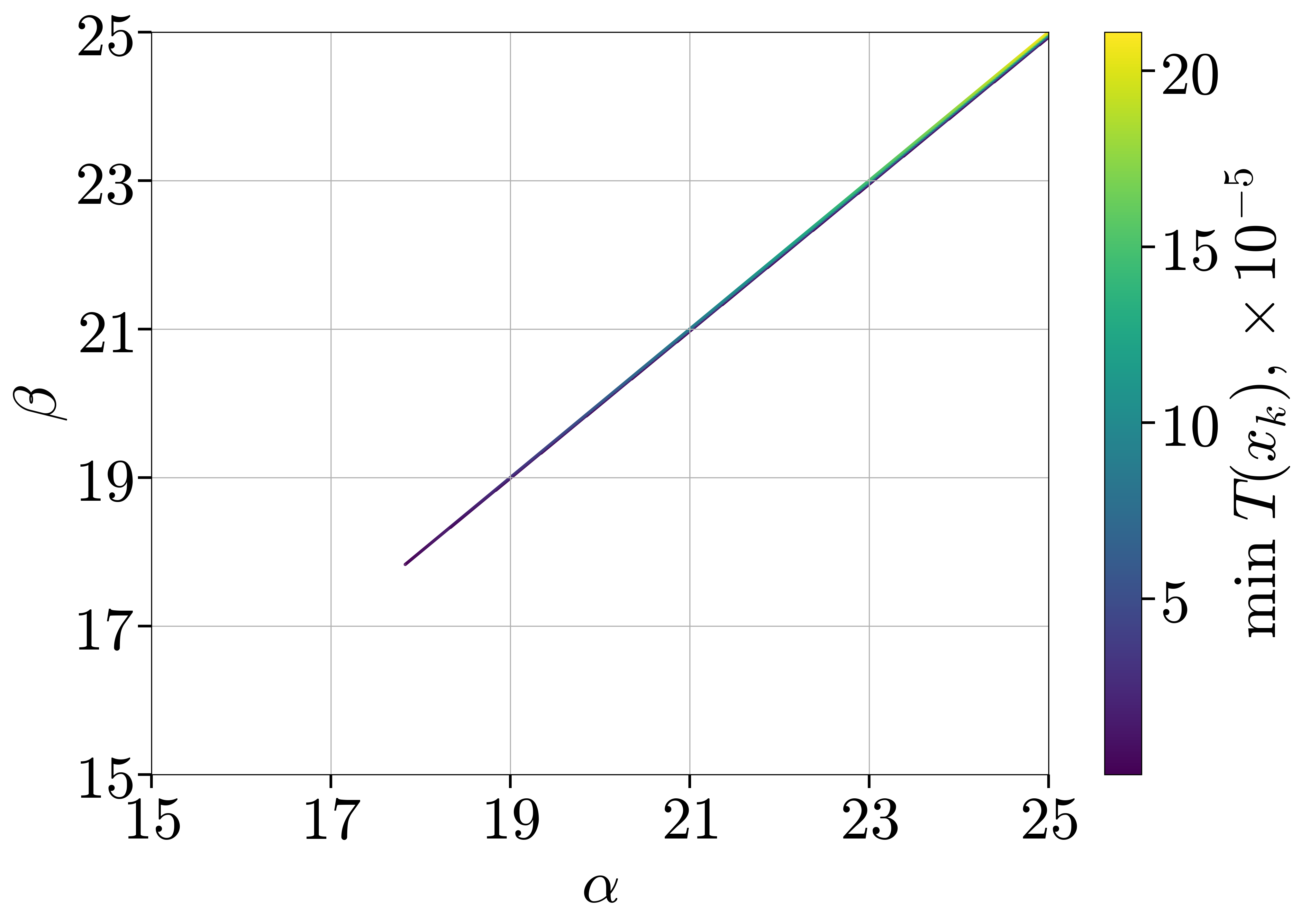} &
        \includegraphics[width=0.215\textwidth]{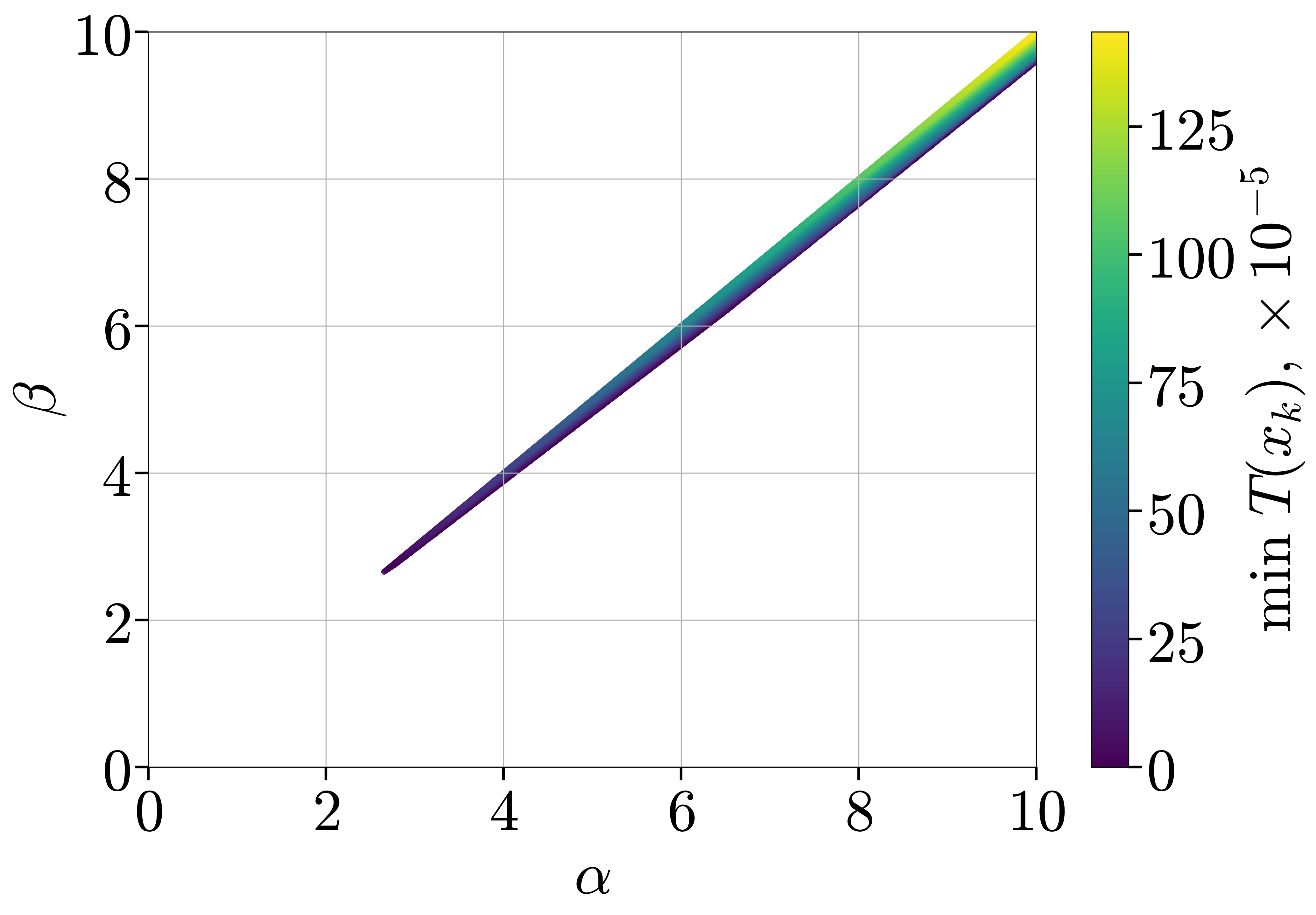}\\
        {\tiny (a) Batch size $64$}  &
        {\tiny (b) Batch size $128$} &
        {\tiny (c) Batch size $256$} & 
        {\tiny (d) Batch size $512$}
    \end{tabular}
    \caption{\abccond in the training of Resnet9 model on CIFAR100 dataset varying the batch size. Here $T(x_k) = \<\nabla f_{i_k}(x^k), x^k-x^K> - \alpha(f_{i_k}(x^k) - f_{i_k}(x^K)) - \beta f_{i_k}(x^k)$  assuming that $f_i^*=0.$ Minimum is taken across all runs and iterations for a given pair of $(\alpha, \beta)$.}
    \label{fig:resnet}
\end{figure}

\subsection{Verification of $\boldsymbol{\alpha}$-$\beta$-condition by different optimizers}

Now we turn to another interesting question: how does the choice of an optimizer affect the practical verification of the $\alpha$-$\beta$-condition? To explore this question, we train Resnet9 model with \algname{SGD}, \algname{SGDM}, and \algname{Adam}. We report the results in \Cref{fig:sgdm_adam_resnet9} varying the batch size used in the training. Comparing the values of $\alpha$ and $\beta$ for \algname{SGD} (from \Cref{fig:resnet}), \algname{SGDM}, and \algname{Adam}, we observe that the loss landscape explored by the \algname{Adam} optimizer achieves smaller values of $\alpha$ and $\beta$. Moreover, the values of $\alpha$ and $\beta$ found by \algname{SGDM} are typically smaller than those found by \algname{SGD}. This result may shed light on why momentum (from a comparison of \algname{SGDM} against \algname{SGD}) and adaptive stepsize (from a comparison of \algname{SGDM} against \algname{Adam}) are typically beneficial in practice: these more advanced algorithms explore better part of a loss landscape from the $\alpha$-$\beta$-condition point of view.

\begin{figure*}[t]
    \centering
    \begin{tabular}{cccc}
        \includegraphics[width=0.23\textwidth]{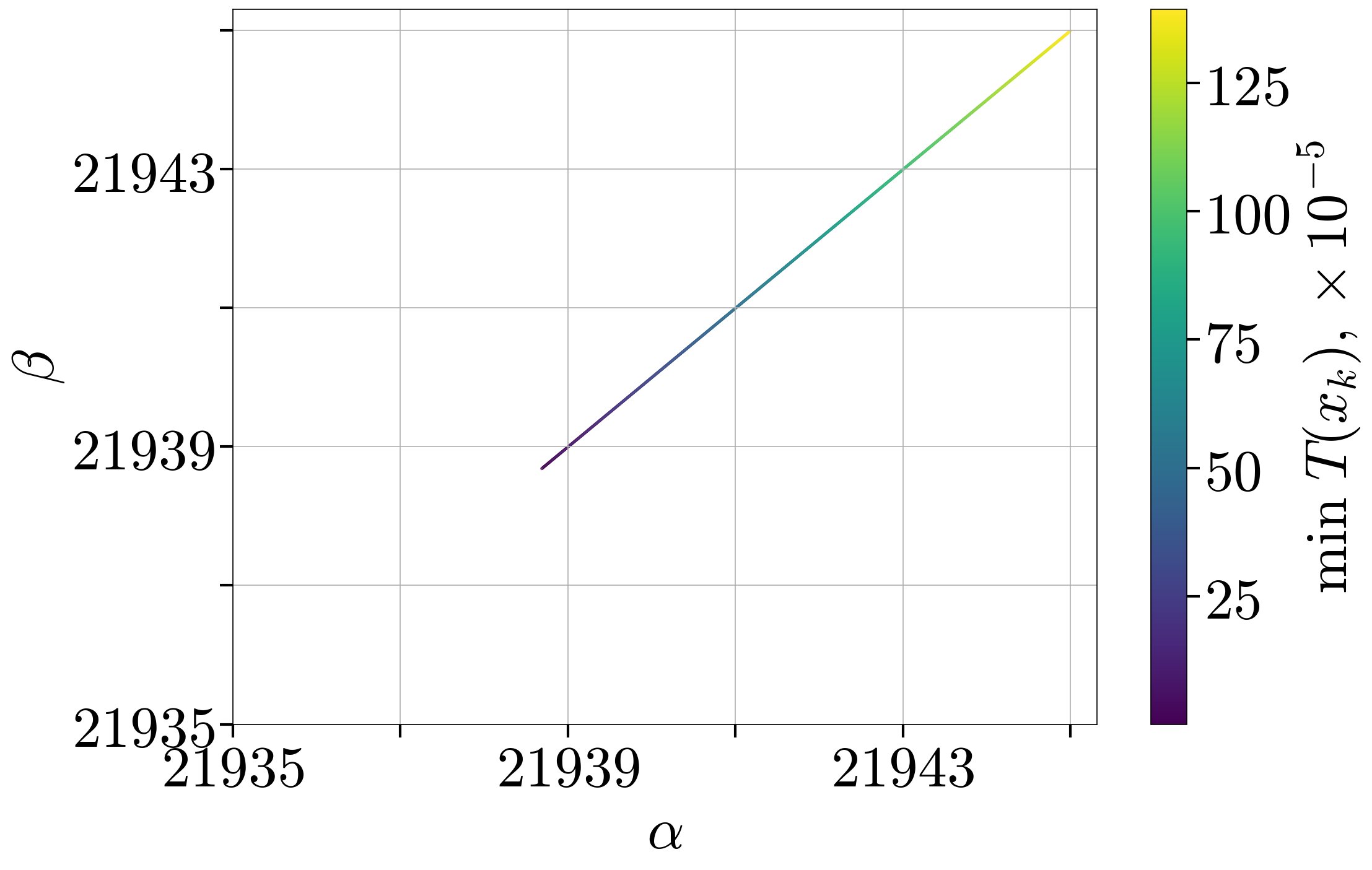} &
        \includegraphics[width=0.21\textwidth]{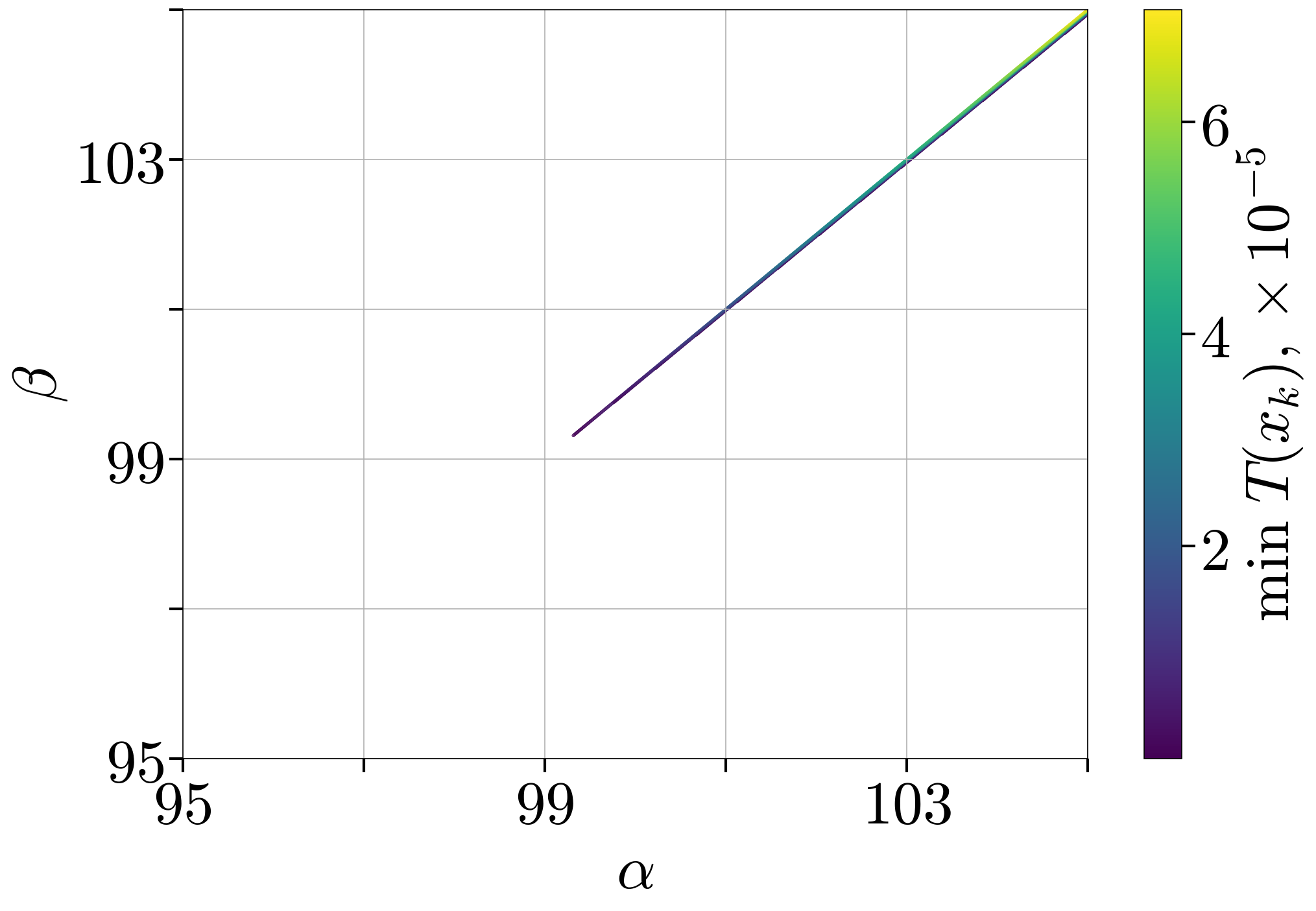} &
        \includegraphics[width=0.21\textwidth]{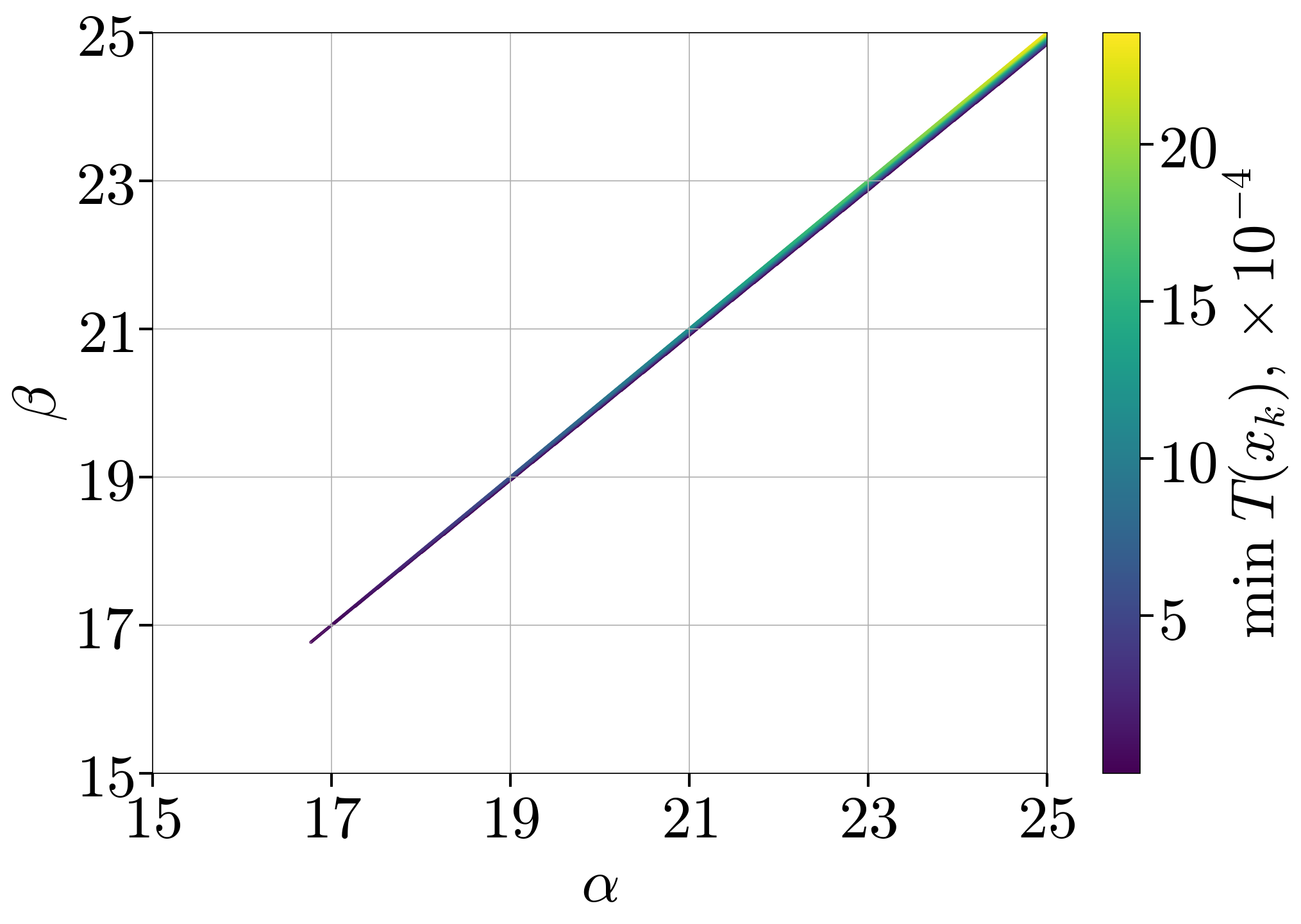} &
        \includegraphics[width=0.21\textwidth]{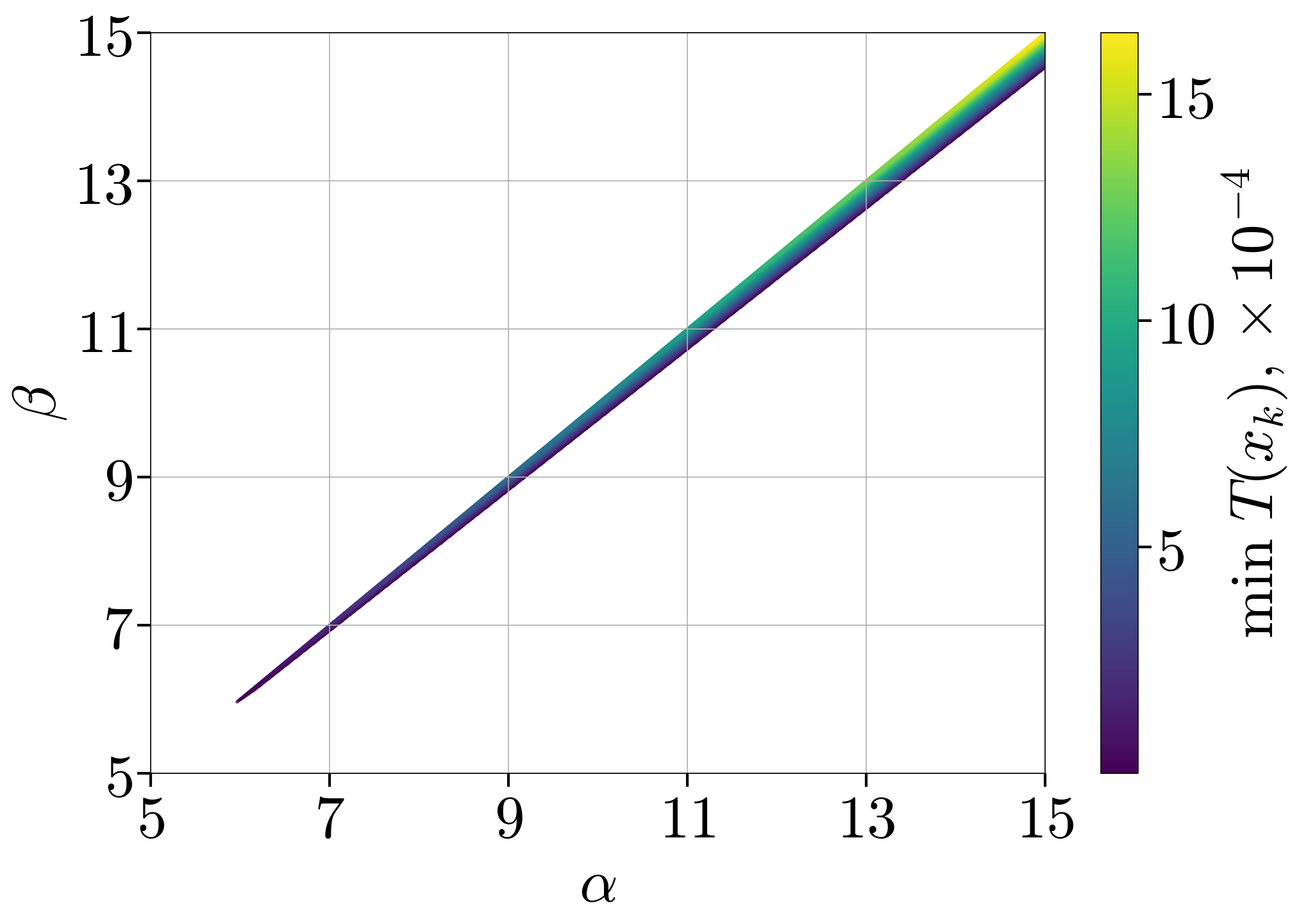}\\
        {\tiny (a) batch size 64}  &
        {\tiny (b) batch size 128} &
        {\tiny (c) batch size 256} & 
        {\tiny (d) batch size 512}\\
        \includegraphics[width=0.22\textwidth]{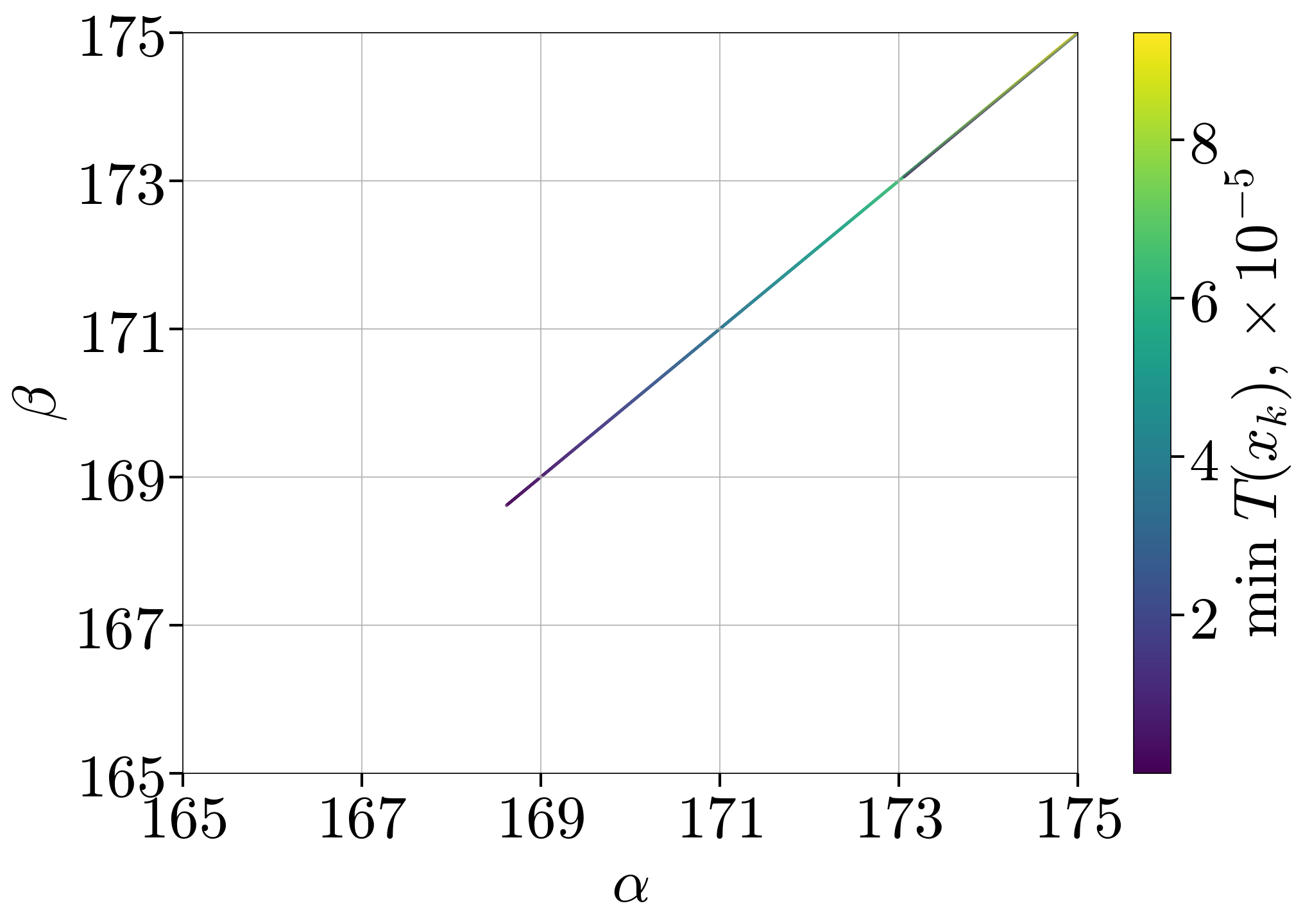} &
        \includegraphics[width=0.22\textwidth]{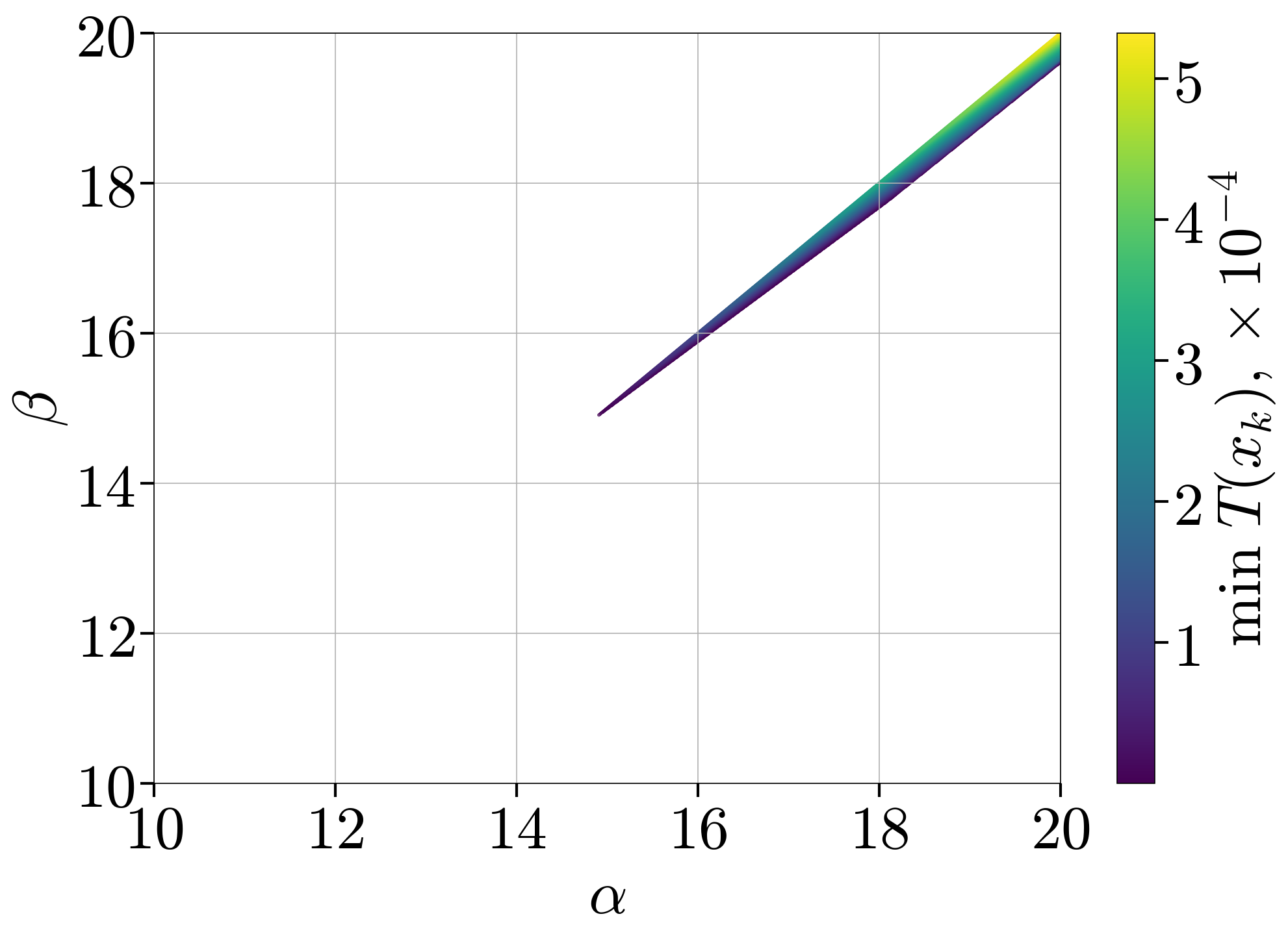} &
        \includegraphics[width=0.22\textwidth]{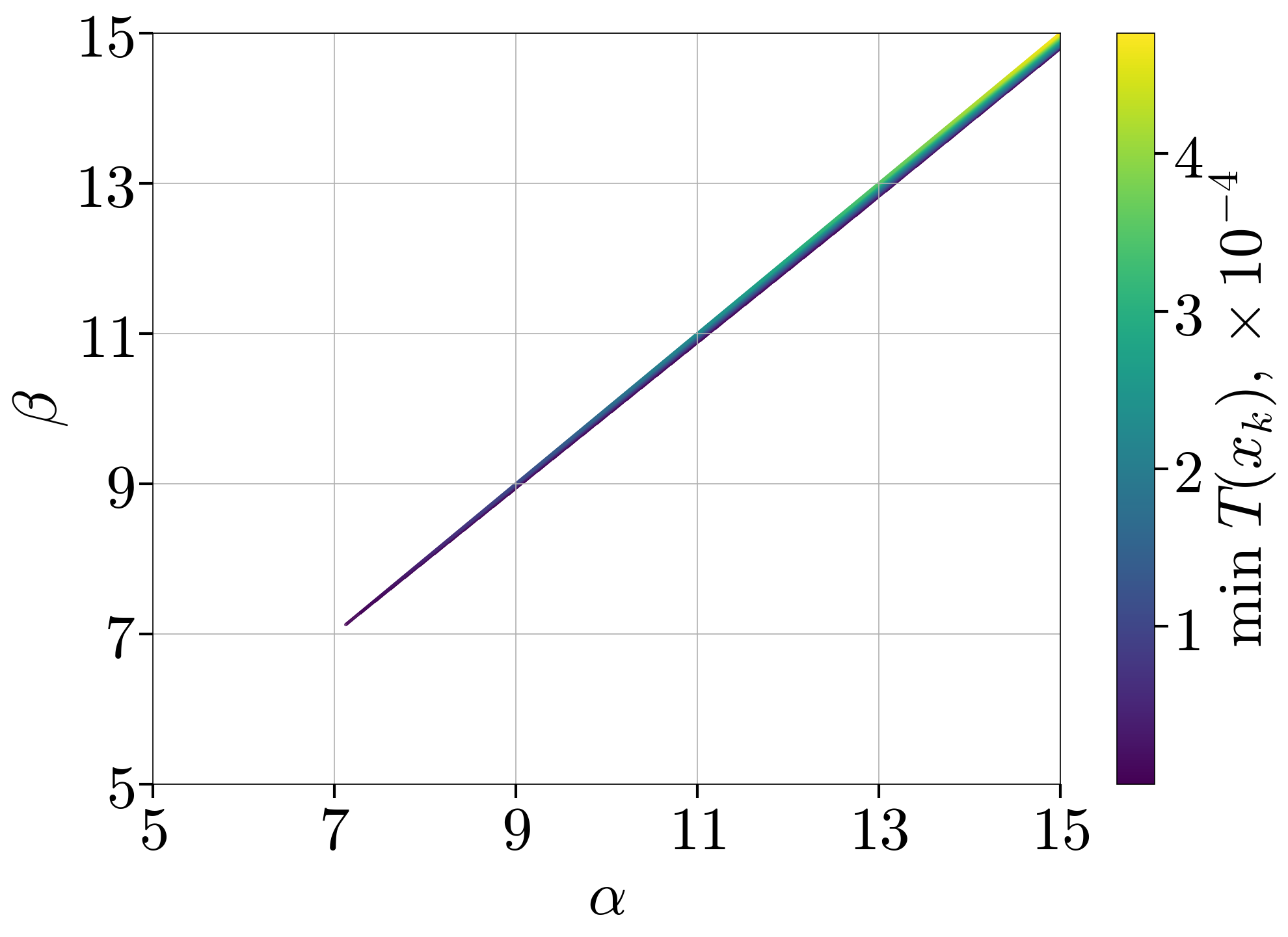} &
        \includegraphics[width=0.22\textwidth]{Plots/resnet9_cifar100_abccondition_Adam_512_1000epochs_150.png}\\
        {\tiny (a) batch size 64}  &
        {\tiny (b) batch size 128} &
        {\tiny (c) batch size 256} & 
        {\tiny (d) batch size 512} 
    \end{tabular}
    \caption{Training of ResNet9 model on CIFAR100 dataset varying the batch size with SGDM (stepsize $0.01$ and momentum $0.9$ with OneCycle learning rate scheduler) and Adam (stepsize $0.0001$ with default momentum parameters with OneCycle learning rate scheduler) optimizers. Here $T(x_k) = \<\nabla f_{i_k}(x^k), x^k-x^K> - \alpha(f_{i_k}(x^k) - f_{i_k}(x^K)) - \beta f_{i_k}(x^k)$ assuming that $f_i^*=0.$ Minimum is taken across all runs and iterations for a given pair of $(\alpha, \beta)$. We plot values of  $\alpha$,$\beta$ in $\alpha$-$\beta$-condition for SGDM (\textbf{first row}) and Adam (\textbf{second row}).%, and the average stochastic loss along with minimum/maximum fluctuations across $3$ runs (\textbf{third row}).
    }
    \label{fig:sgdm_adam_resnet9}
\end{figure*}

\subsection{Increasing the depth of Resnet architecture}\label{sec:resnet18_resnet34}

In our next experiment, we aim to investigate how the $\alpha$-$\beta$-condition behaves increasing the depth of a model. To do so, in addition to training Resnet9 model, we test Resnet18 and Resnet34 models on CIFAR100 dataset. The results in \Cref{fig:resnet18_resnet34} suggest that the values of $\alpha$ and $\beta$ tend to decrease with the depth of a model. These observations align with those from \Cref{sec:mlp} and \Cref{sec:cnn} as the constants $\alpha$ and $\beta$ decrease as the depth of the model increases.

\begin{figure*}[t]
    \centering
    \begin{tabular}{ccc}
        \includegraphics[width=0.3\textwidth]{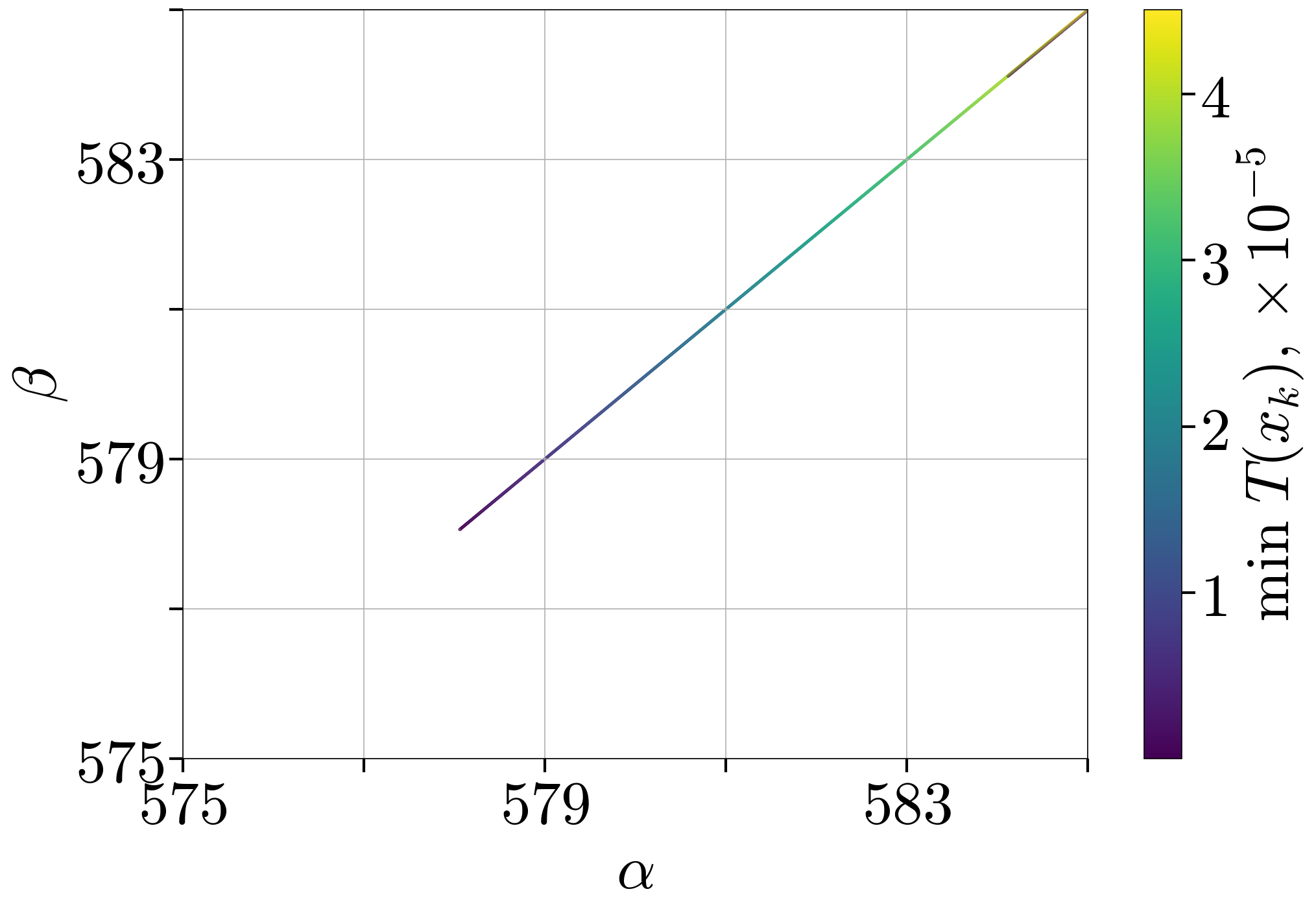} &
        \includegraphics[width=0.3\textwidth]{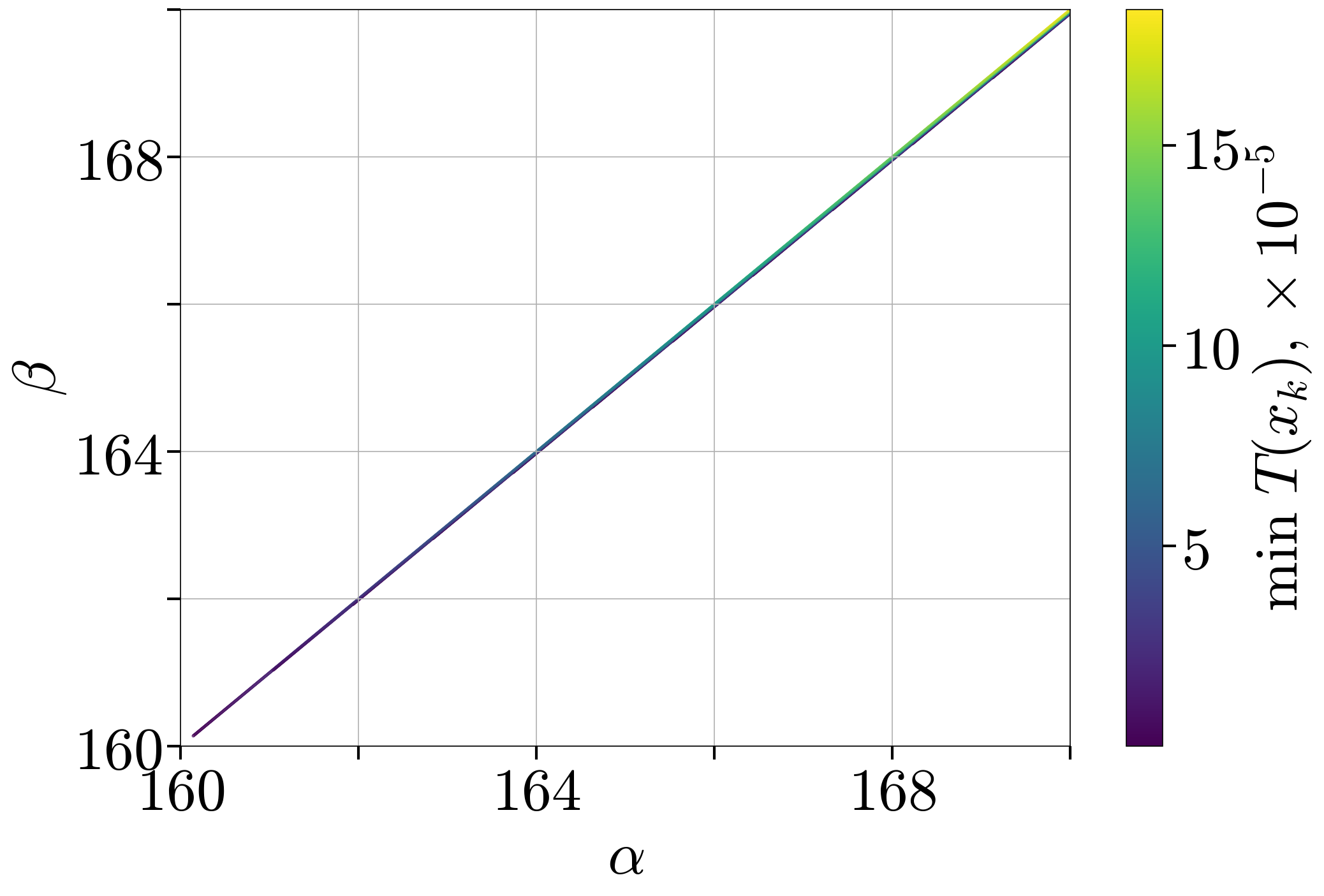} &
        \includegraphics[width=0.3\textwidth]{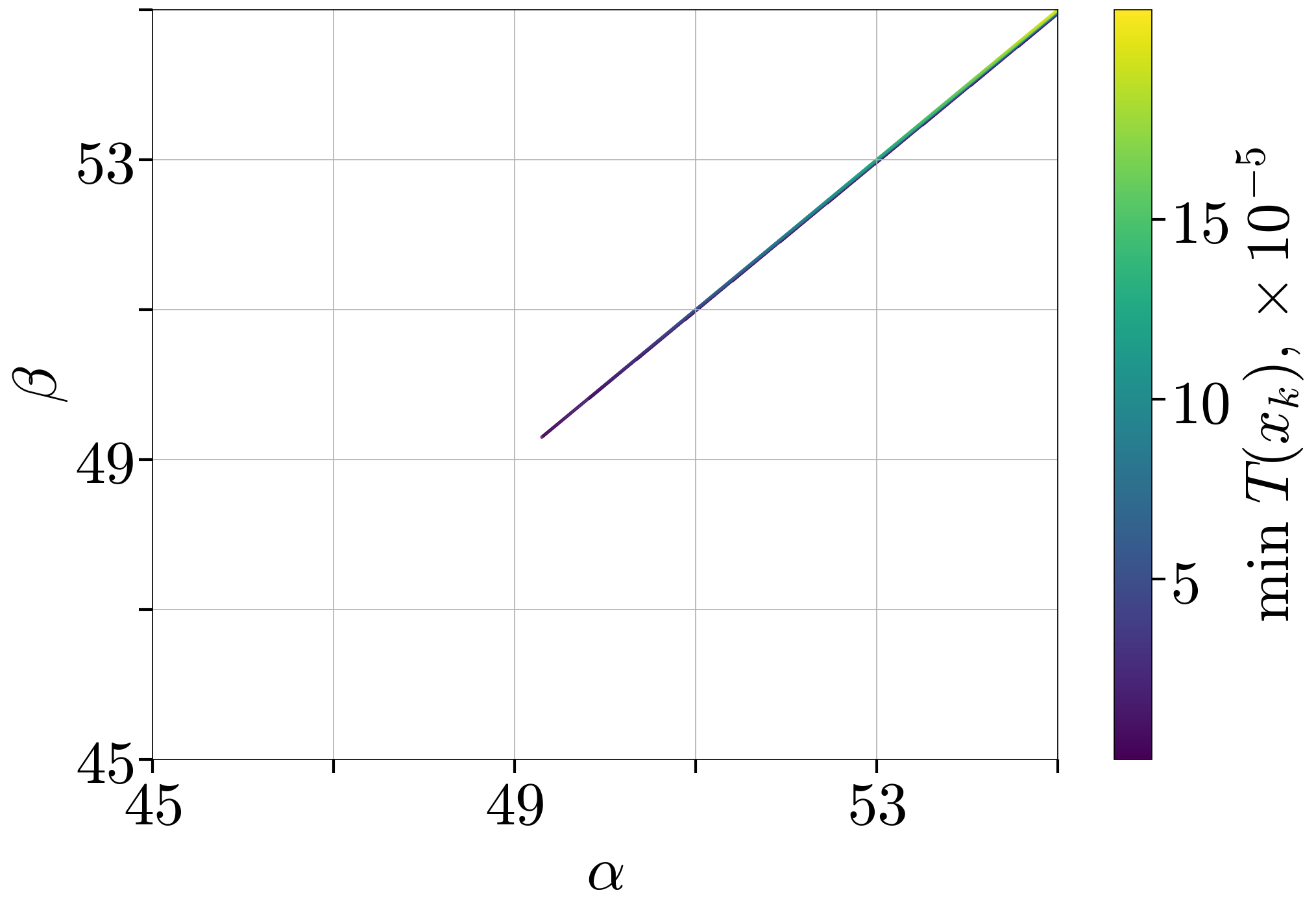} \\
        {\tiny (a) batch size 64}  &
        {\tiny (b) batch size 128} &
        {\tiny (c) batch size 256} \\
        \includegraphics[width=0.3\textwidth]{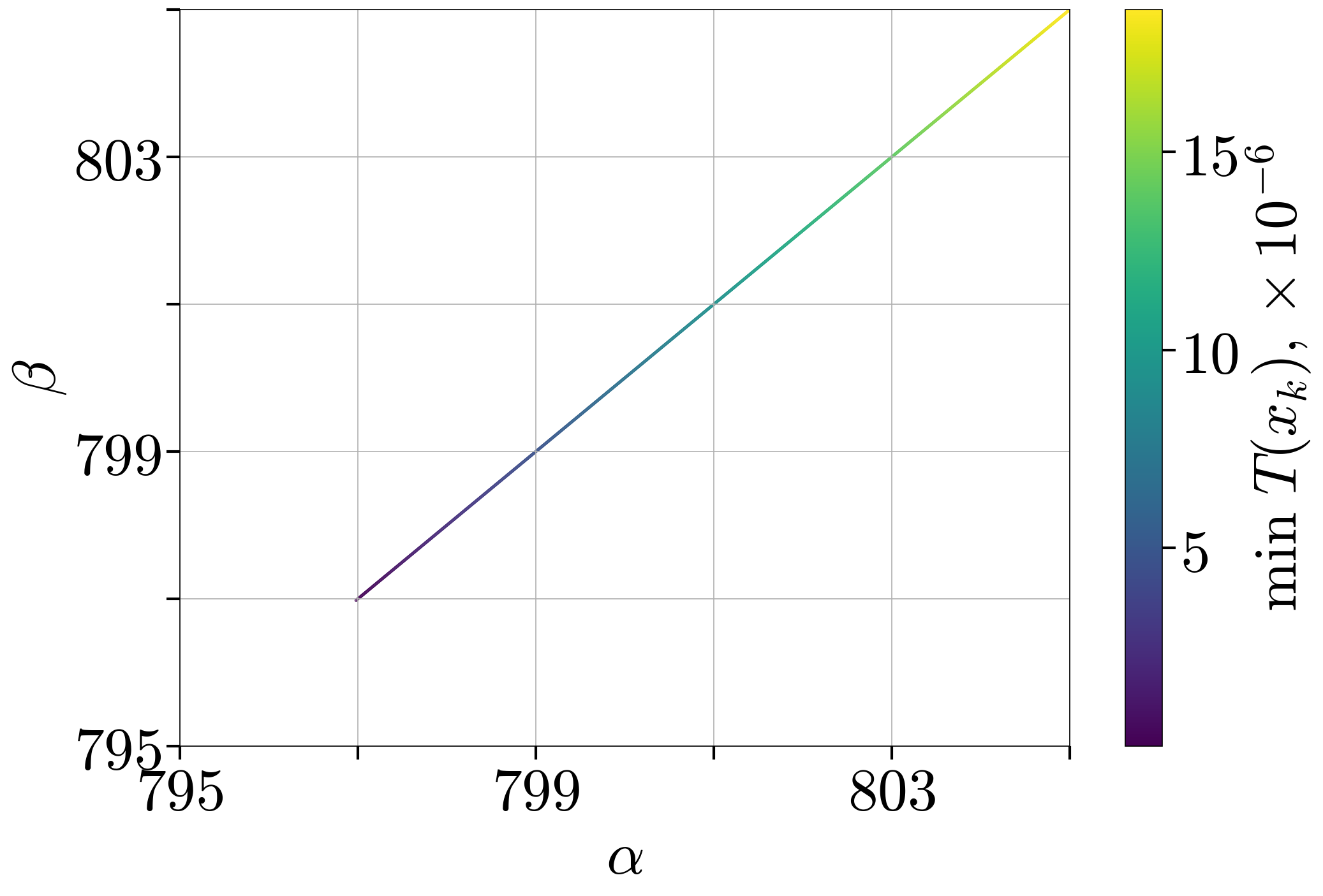} &
        \includegraphics[width=0.3\textwidth]{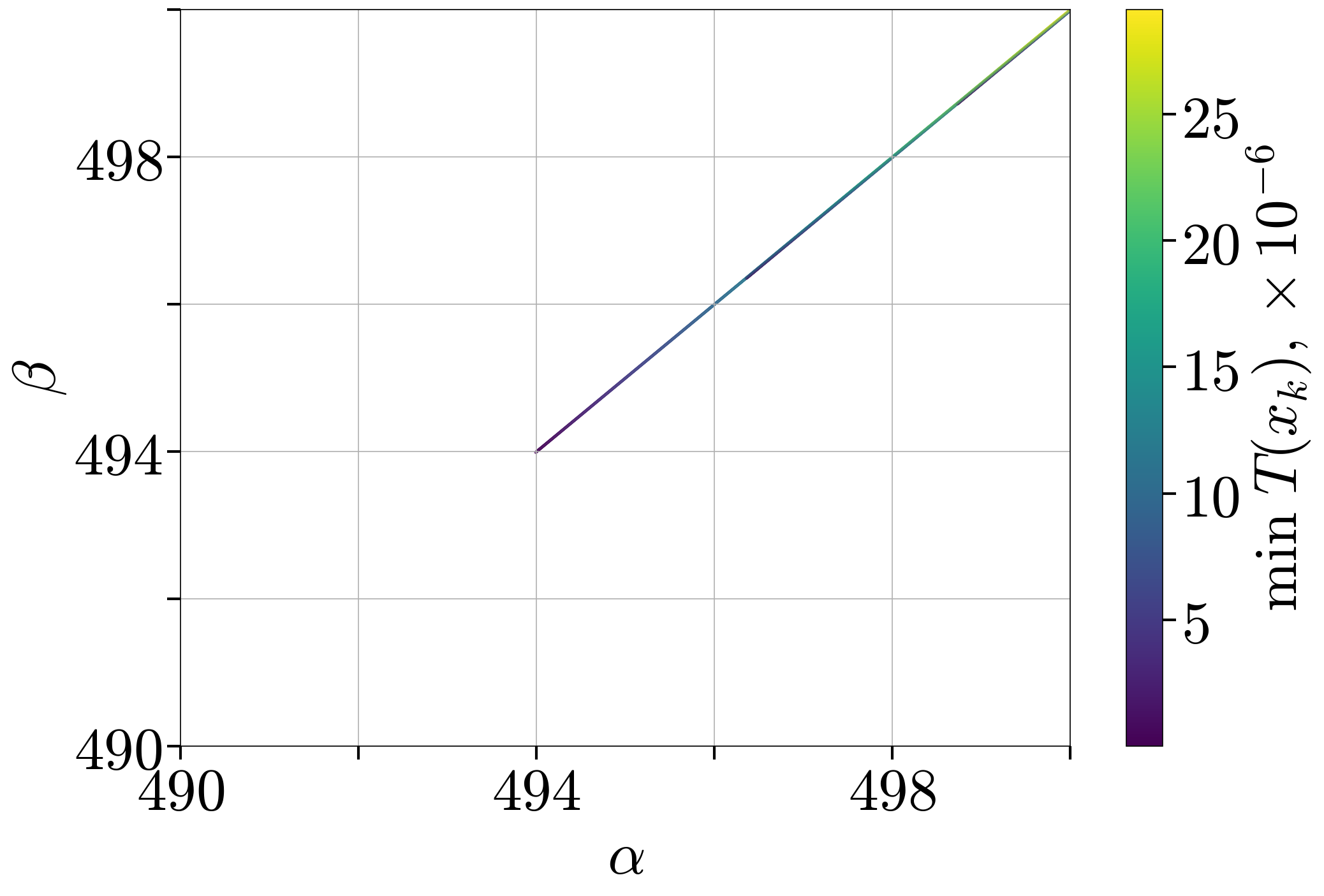} &
        \includegraphics[width=0.3\textwidth]{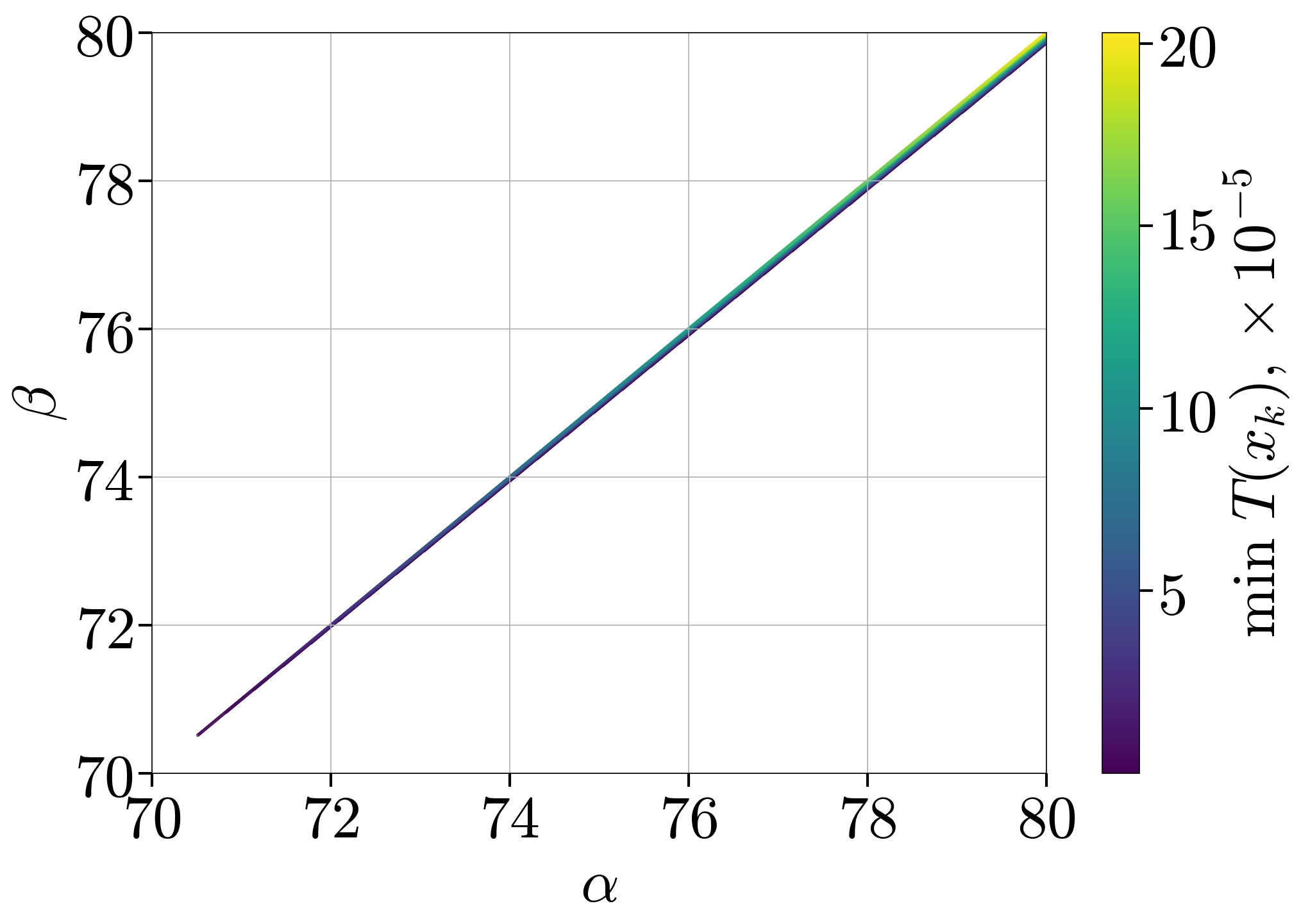} \\
        {\tiny (a) batch size 64}  &
        {\tiny (b) batch size 128} &
        {\tiny (c) batch size 256} 
    \end{tabular}
    \caption{Training of ResNet18 and Resnet34 model on CIFAR100 dataset varying the batch size SGD (stepsize $0.01$ and momentum $0.9$ with OneCycle learning rate scheduler). Here $T(x_k) = \<\nabla f_{i_k}(x^k), x^k-x^K> - \alpha(f_{i_k}(x^k) - f_{i_k}(x^K)) - \beta f_{i_k}(x^k)$ assuming that $f_i^*=0.$ Minimum is taken across all runs and iterations for a given pair of $(\alpha, \beta)$.
    }
    \label{fig:resnet18_resnet34}
\end{figure*}

\subsection{Training of AlgoPerf workloads and transformers for language modeling}

We are now interested in assessing the validity of $\alpha$-$\beta$-condition on modern Deep Learning architectures. Thereby, we consider tasks from the AlgoPerf benchmarking suite \citep{dahl2023benchmarking}.
% We now consider the training of modern Deep Learning architectures from AlgoPerf competition \citep{dahl2023benchmarking}. 
We consider four workloads from the competition: (i) DLRMsmall model \citep{naumov2019deep} on Criteo 1TB dataset \citep{criteo2014dataset}; (ii) U-Net model \citep{unet_2015} on FastMRI dataset \citep{fastMRI_dataset} (iii) GNN model \citep{battaglia2018relational} on OGBG dataset \citep{hu2020open}; (iv) Transformer-big \citep{vaswani2017attention} on WMT dataset \citep{bojar2017findings}. To train the models, we use \algname{NAdamW} optimizer \citep{dozat2016incorporating}\footnote{Our implementation is based on open source AlgoPerf code \url{https://github.com/mlcommons/algorithmic-efficiency} with minimal changes to track necessary quantities.}, which achieves state-of-the-art performances on the current version of the benchmark. The hyperparameters of the optimizer are chosen to reach the validation target threshold set by the original competition.  Moreover, we consider the pretraining of a decoder-only transformer architecture \citep{vaswani2017attention} for causal language modeling. We conduct our evaluation on two publicly available Pythia models \citep{biderman2023pythia}, of sizes $70$M and $160$M, trained on 1.25B and 2.5B tokens respectively. For this study, we use the SlimPajama~\citep{shen2023slimpajama} dataset. Following the original Pythia recipes, we fix a sequence length of 2048 and train the language model to predict the next token in a self-supervised fashion. We refer to section \Cref{sec:algoperf_additional,sec:additional_experiments} for additional details.

The results are presented in \Cref{fig:algoperf}. We observe that \abccond holds for a wide range of values of $\alpha$ and $\beta$ which demonstrates that our condition can be seen as a good characterization of the training of modern large models as well. One can notice that the possible values of $\alpha$ and $\beta$ for AlgoPerf workloads are higher than those for smaller models discussed in previous sections. This difference is attributable to smaller models interpolating the training data more effectively, resulting in significantly lower training losses compared to those observed in AlgoPerf experiments (see \Cref{sec:additional_experiments} for detailed results). However, we highlight that the convergence guarantees of the optimizers depend on a term $\cO(\beta\sigma^2_{\rm int})$ which is stable across all experiments we provide.

\begin{figure}[t]
    \centering
    \begin{tabular}{cccc}
        \includegraphics[width=0.22\textwidth]{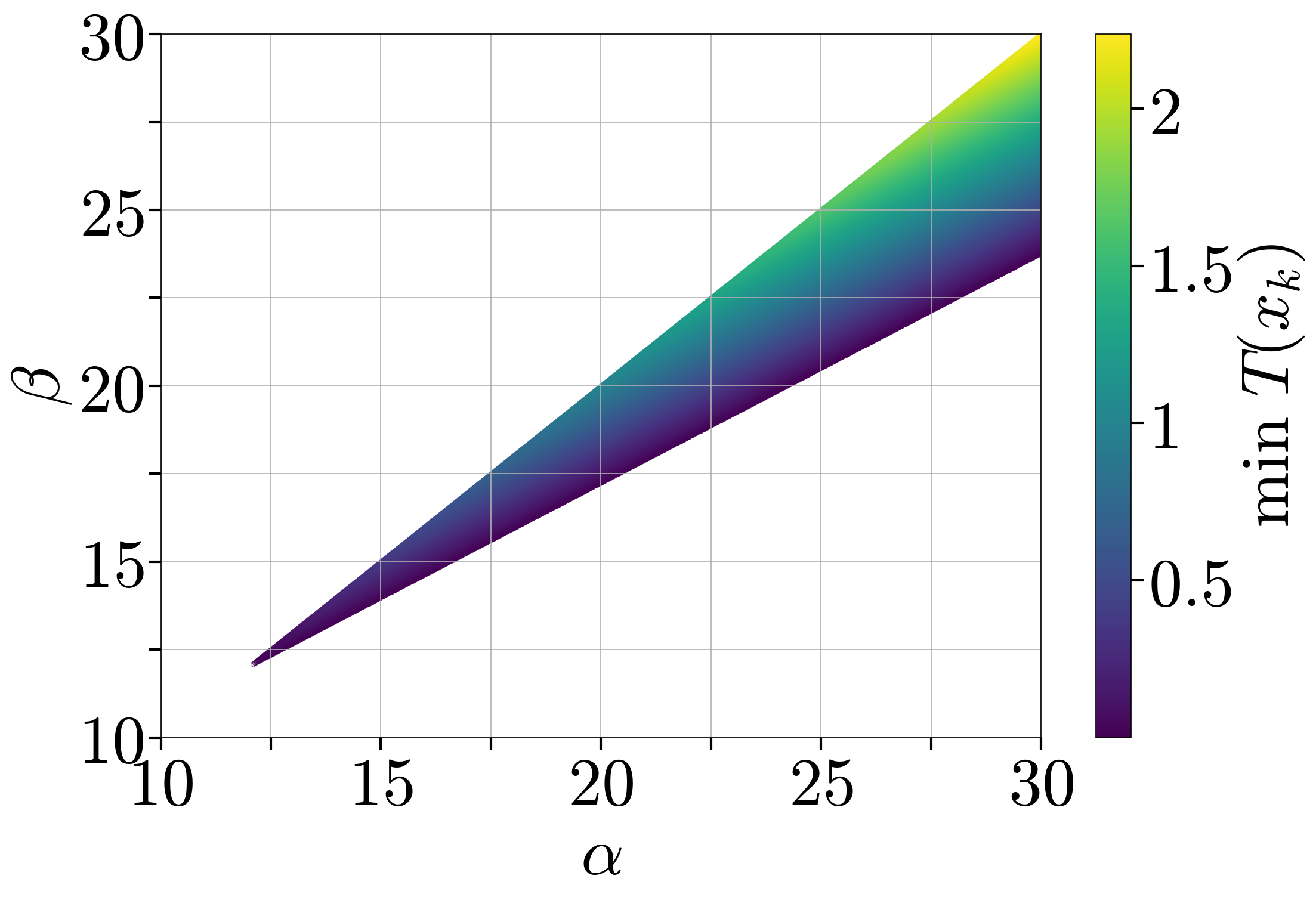} &
       \includegraphics[width=0.22\textwidth]{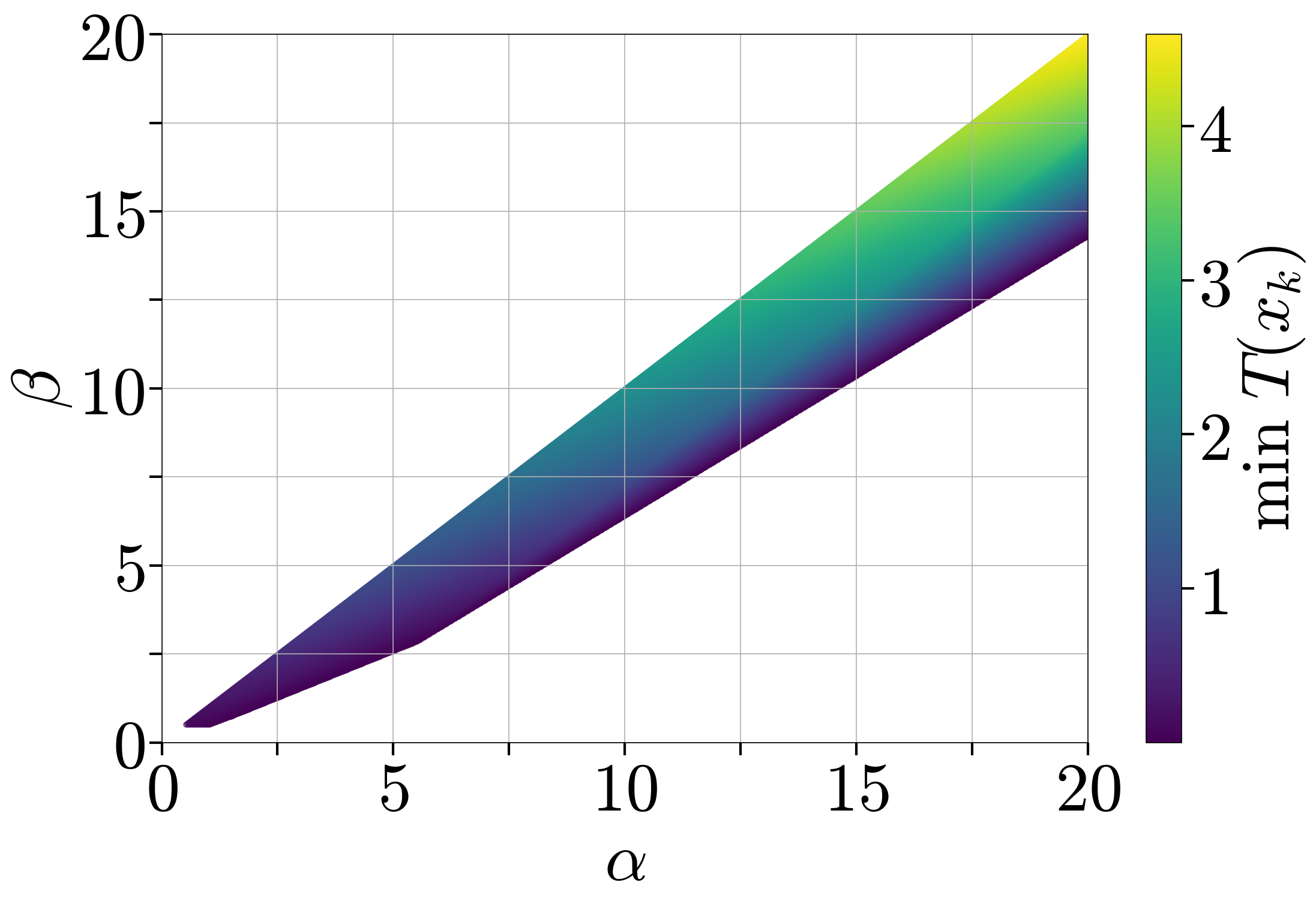} &
        \includegraphics[width=0.22\textwidth]{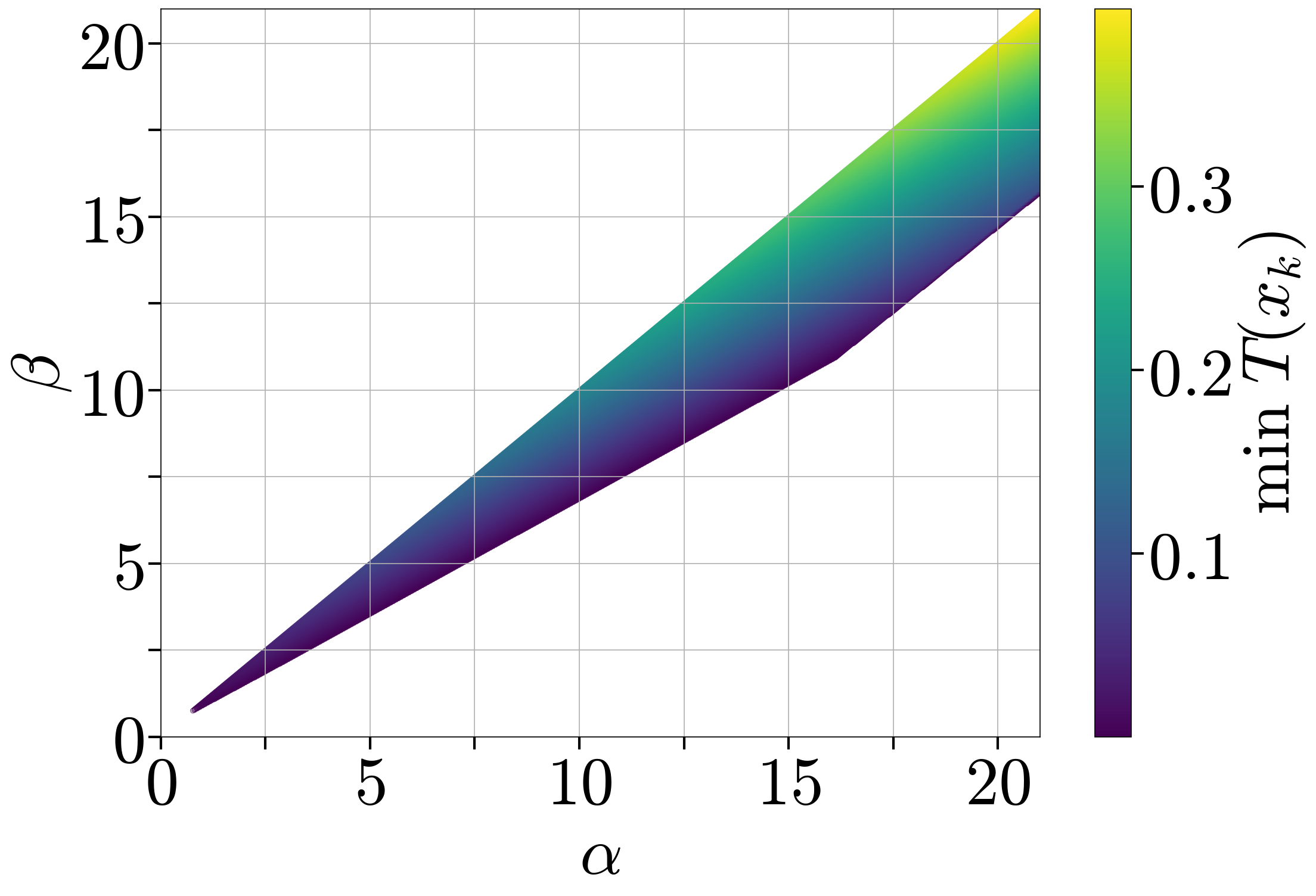} &
        \includegraphics[width=0.22\textwidth]{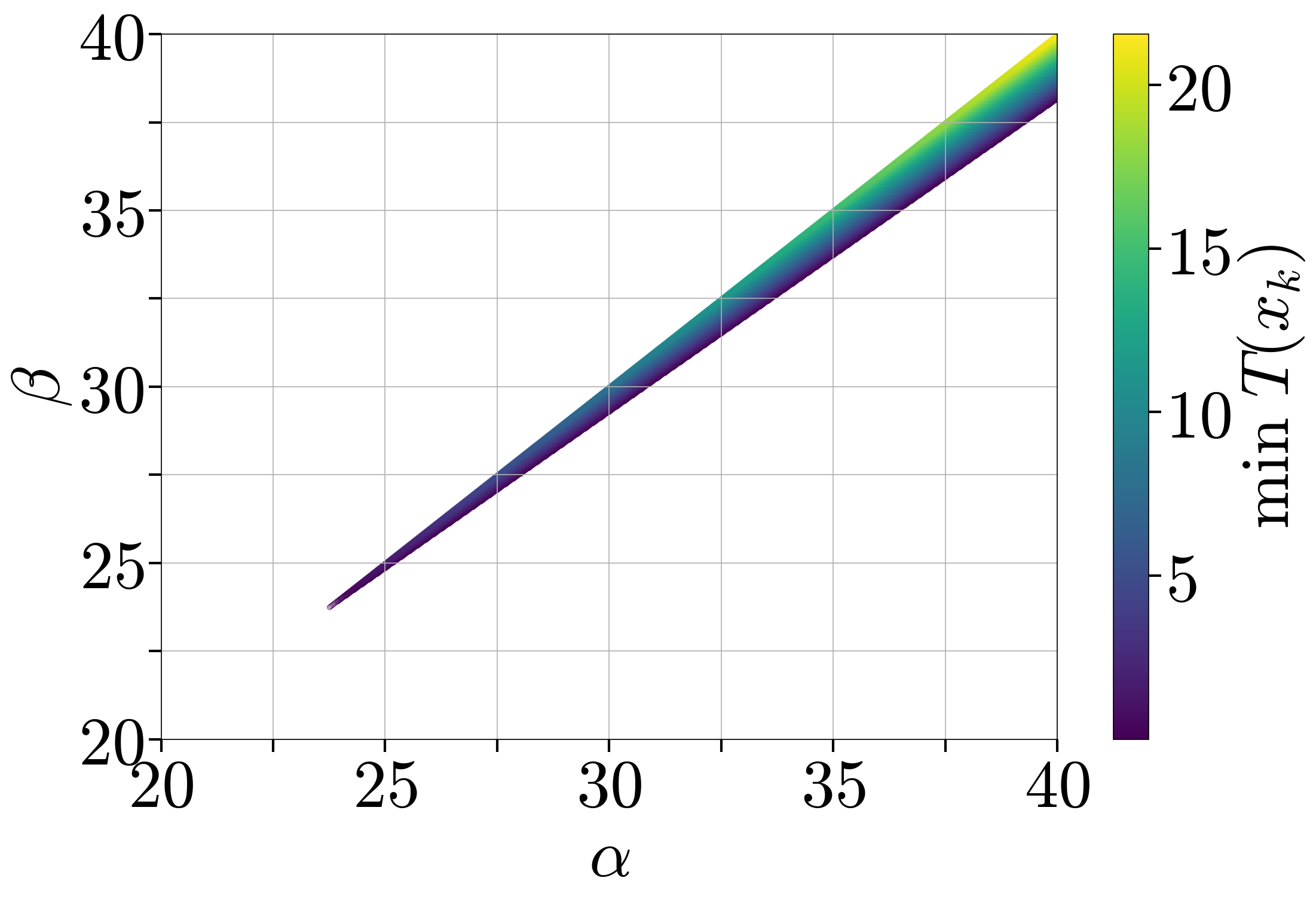}
        \\
        {\tiny  \makecellnew{(a) Criteo 1TB \\ DLRMsmall}}  &
        {\tiny \makecellnew{(b) FastMRI \\ U-Net}} &
        {\tiny \makecellnew{(c) OGBG \\ GNN}} &
        {\tiny \makecellnew{(d) WMT \\ Transformer}} \\
        \includegraphics[width=0.21\textwidth]{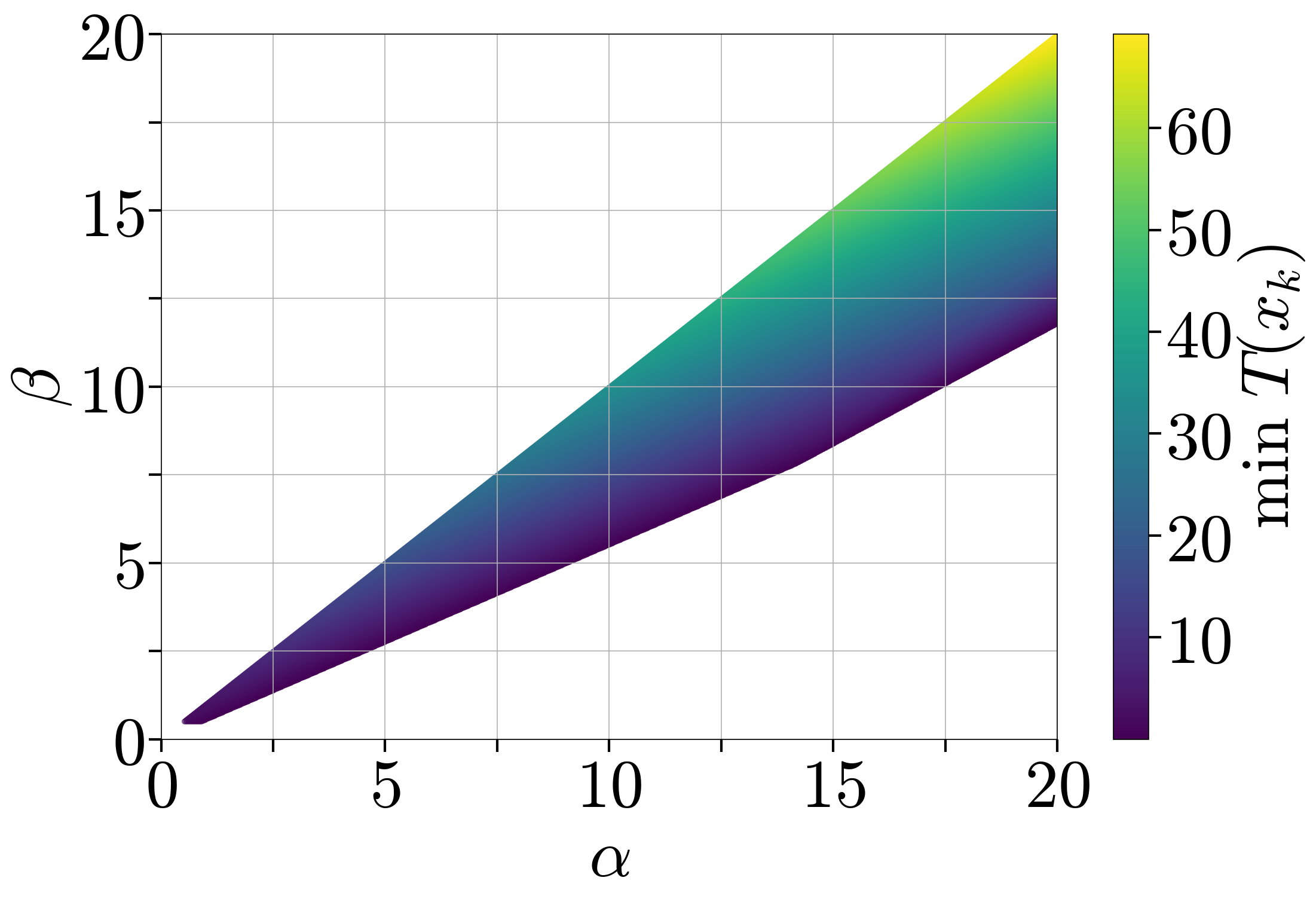} &
        \includegraphics[width=0.21\textwidth]{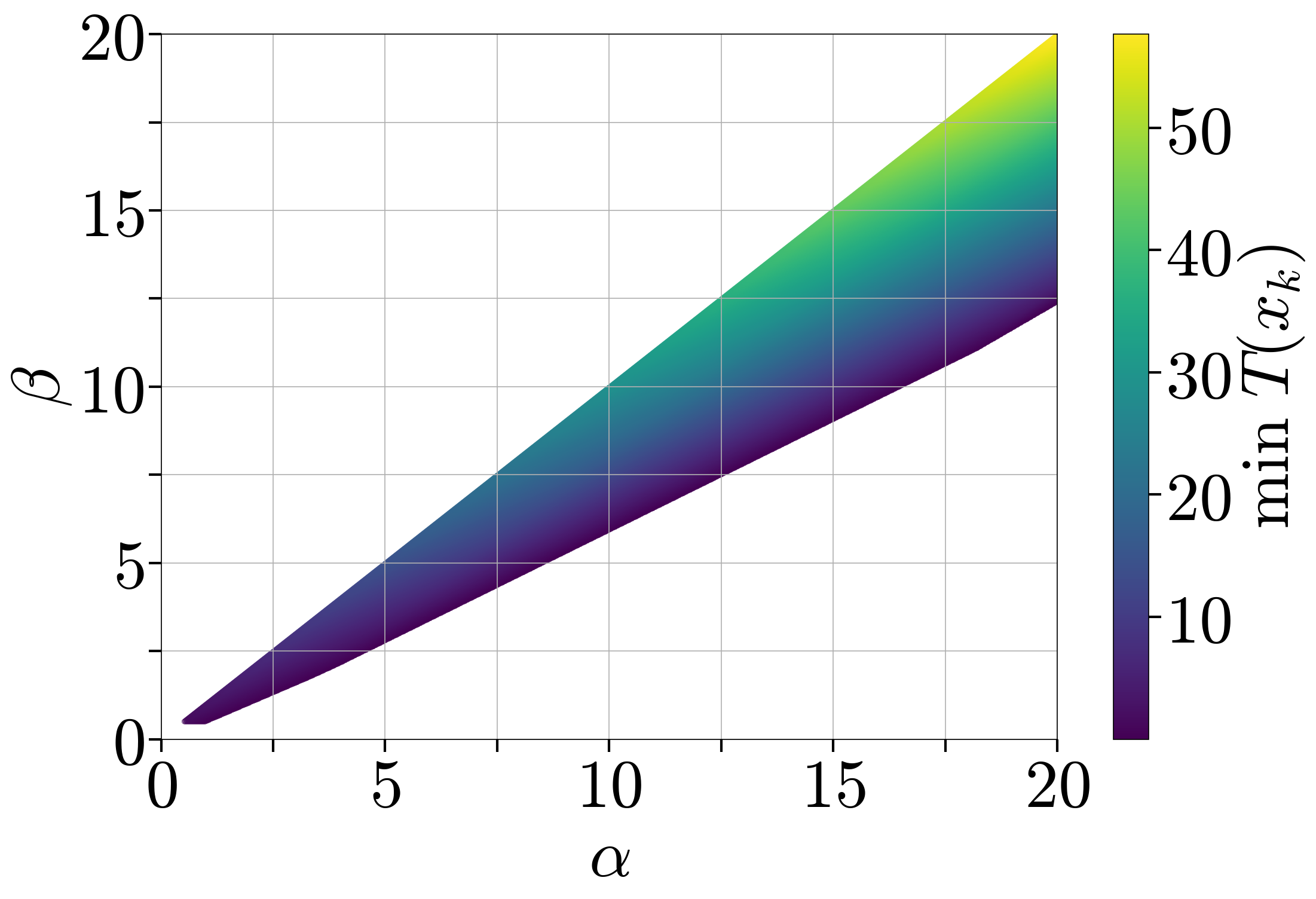} & 
        \includegraphics[width=0.21\textwidth]{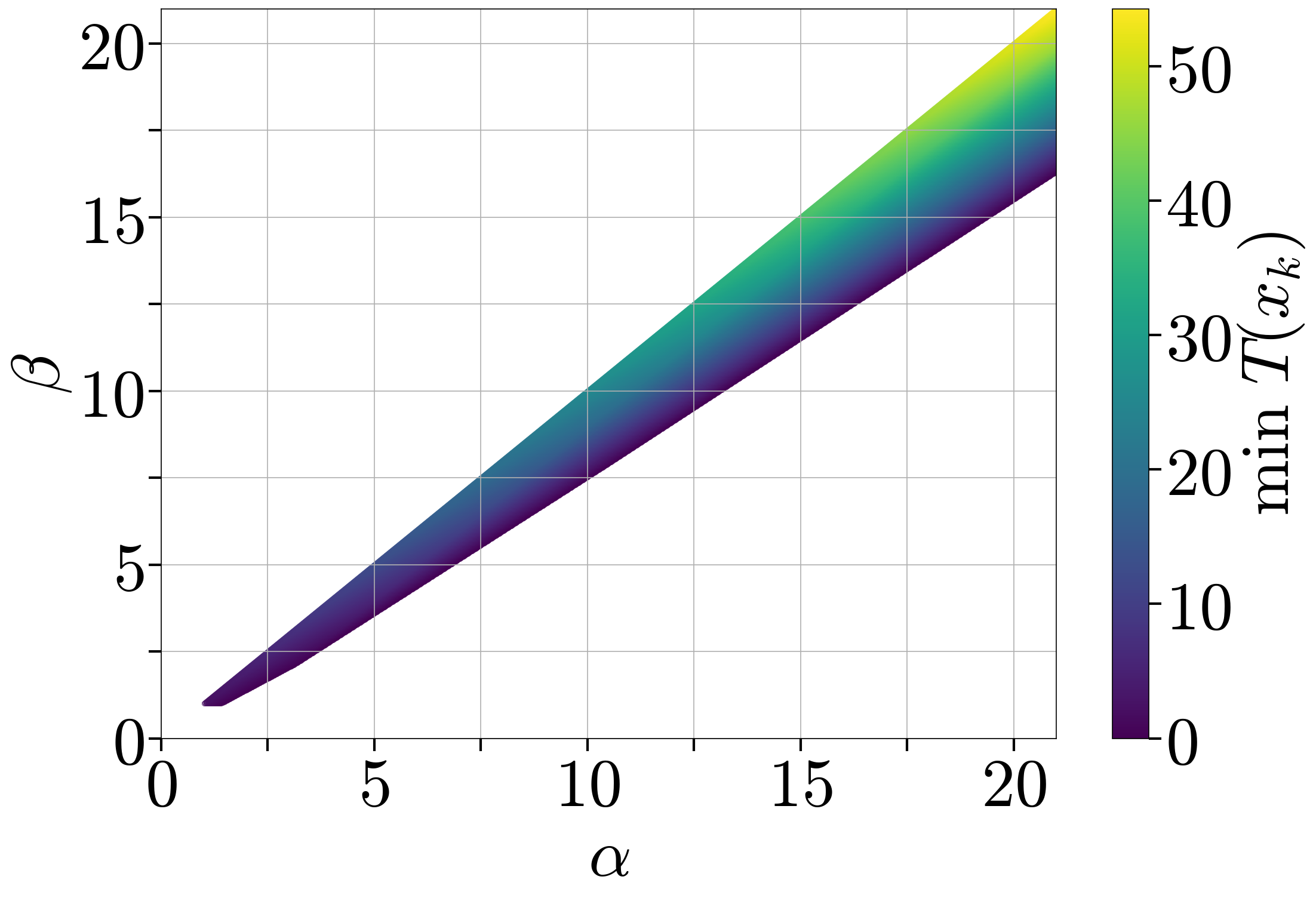} &
        \includegraphics[width=0.21\textwidth]{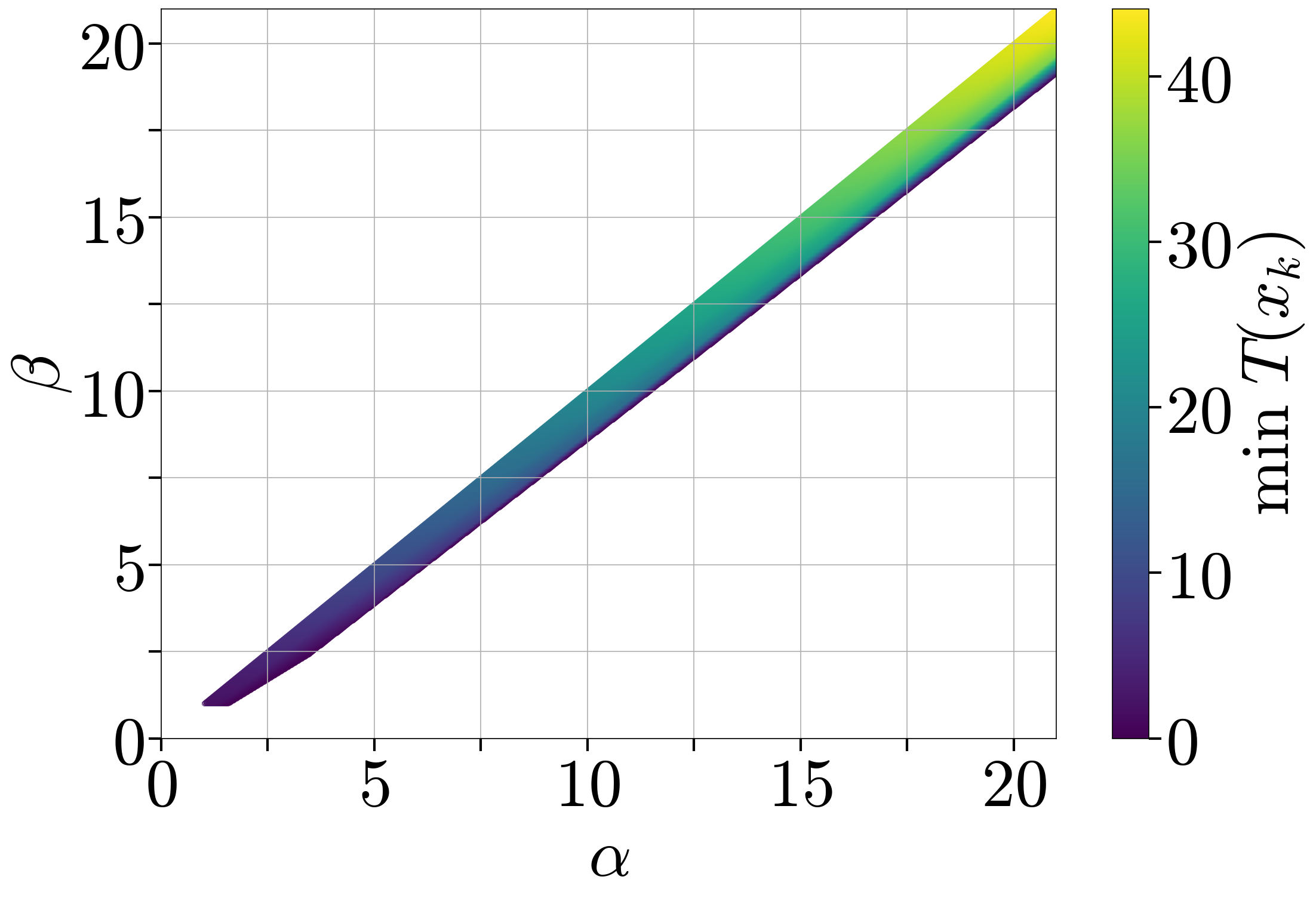} \\
        {\tiny \makecellnew{(g) PTB \\ LSTM}} &
        {\tiny \makecellnew{(h) Wikitext-2 \\ LSTM}} &
        {\tiny \makecellnew{(e) Slim-Pajama \\ Pythia $70$M}}  &
        {\tiny \makecellnew{(f) Slim-Pajama \\ Pythia $160$M}} 
    \end{tabular}
    \caption{\abccond in the training of some large models from AlgoPerf, 3-layer LSTM, and Transformers for language modeling. Here $T(x_k) = \<\nabla f_{i_k}(x^k), x^k-x^K> - \alpha(f_{i_k}(x^k) - f_{i_k}(x^K)) - \beta f_{i_k}(x^k)$  assuming that $f_i^*=0.$ Minimum is taken across all runs and iterations for a given pair of $(\alpha, \beta)$.}
    \label{fig:algoperf}
\end{figure}

%\subsection{Transformers for Language Modelling}
%We finally consider the pretraining of a decoder-only transformer architecture \citep{vaswani2017attention} for causal language modeling. We conduct our evaluation on two publicly available Pythia models \citep{biderman2023pythia}, of sizes 70B and 160B, trained on 1.25B and 2.5B tokens respectively. For this study, we use \citep{shen2023slimpajama} dataset (we refer to section \Cref{sec:additional_experiments} for additional details). Following the original Pythia recipes, we fix a sequence length of 2048 and train the language model to predict the next token in a self-supervised fashion. Figure \Cref{fig:algoperf} highlights once again the validity of \abccond across a wide range of values.

\section{Conclusion, potential extensions, and limitations.}
In this work, we introduce a new class of functions that more accurately characterize loss landscapes of neural networks. In particular, we provide several examples that satisfy the proposed condition, including $2$-layer ReLU-neural networks. Additionally, we prove that several optimization algorithms converge under our condition. Finally, we provide extensive empirical verification showing that the proposed \abccond holds along the optimization trajectories of various large deep learning models.

It is also possible to further expand convergence guarantees upon the ones presented in \Cref{sec:convergence_algorithms}, for instance, by considering momentum \citep{polyak1964momentum} which is widely used in practice or to generalized smoothness \citep{zhang2019gradient}. However, we defer the exploration of other possible extensions to future research endeavors. 
 
One of the limitations of this work is the empirical validation of the \abccond on neural networks. We are only able to verify this condition along the trajectories of specific optimizers. Even when performing checks with many random seeds, we cannot fully observe the entire loss landscape. Additionally, while our theoretical examples demonstrate that the proposed condition holds, we do not provide the most precise theoretical values of $\alpha$ and $\beta$ that satisfy \Cref{asmp:abc}. Therefore, in future work, we aim to obtain stronger theoretical guarantees demonstrating that the \abccond holds for neural networks with more precise values of $\alpha$ and $\beta$. We also intend to explore how the \abccond varies when changing the architecture or the number of parameters (i.e., theoretical exploration of over-parameterization).

\section*{Acknowledgement}

Rustem Islamov and Aurelien Lucchi acknowledge the financial support of the Swiss National Foundation, SNF grant No 207392. Antonio Orvieto acknowledges the financial support of the Hector Foundation. The authors thank the anonymous reviewers for their valuable comments and suggestions on improving the paper.

\clearpage
\bibliography{references}
\bibliographystyle{plainnat}

\appendix
\onecolumn
{
  \tableofcontents
}

\newpage

\section{Additional explanation on PL assumption}\label{sec:pl_additional}

\begin{figure}[t]
    \centering
    \begin{tabular}{cccc}
        \includegraphics[width=0.235\textwidth]{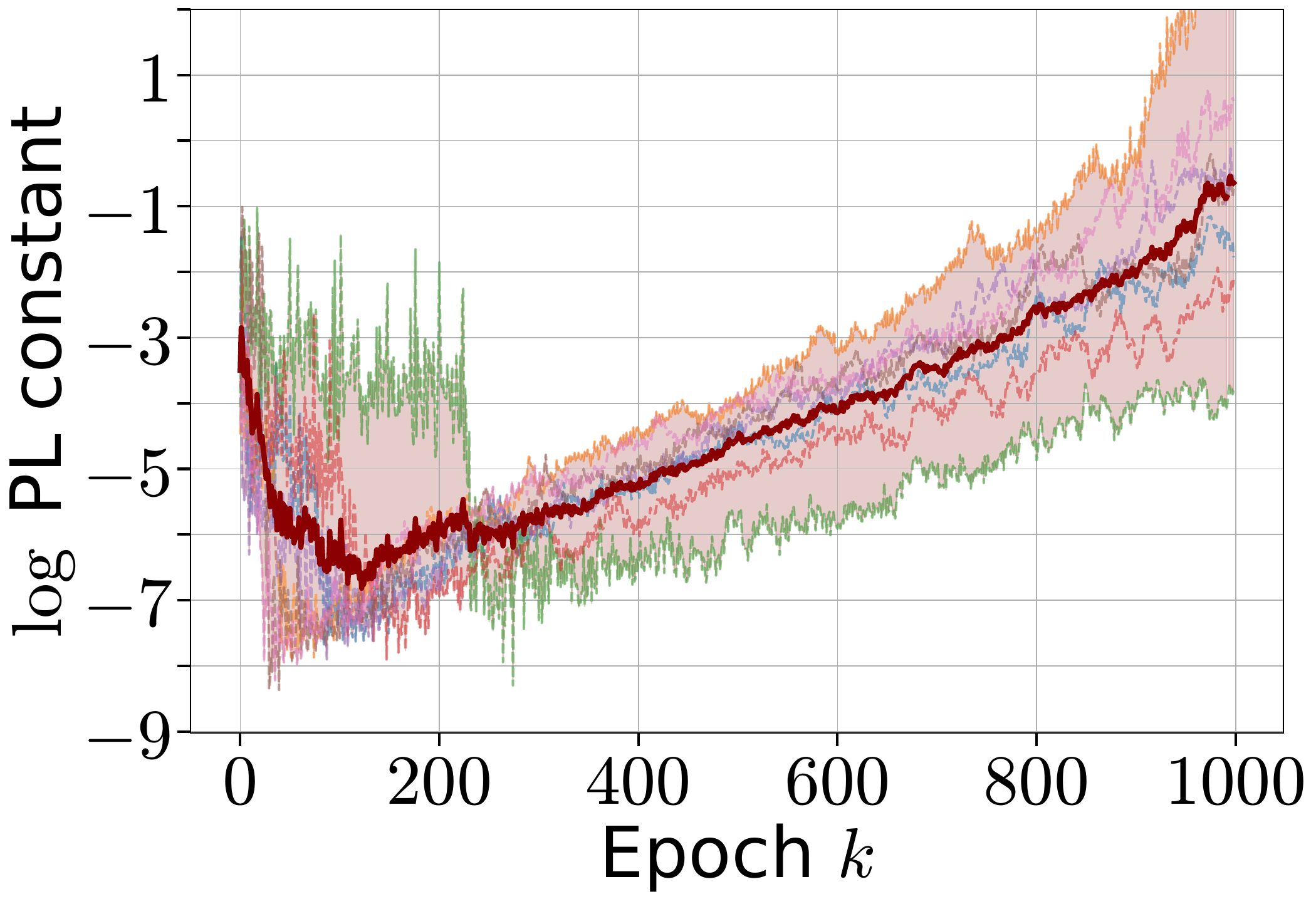} &
        \includegraphics[width=0.22\textwidth]{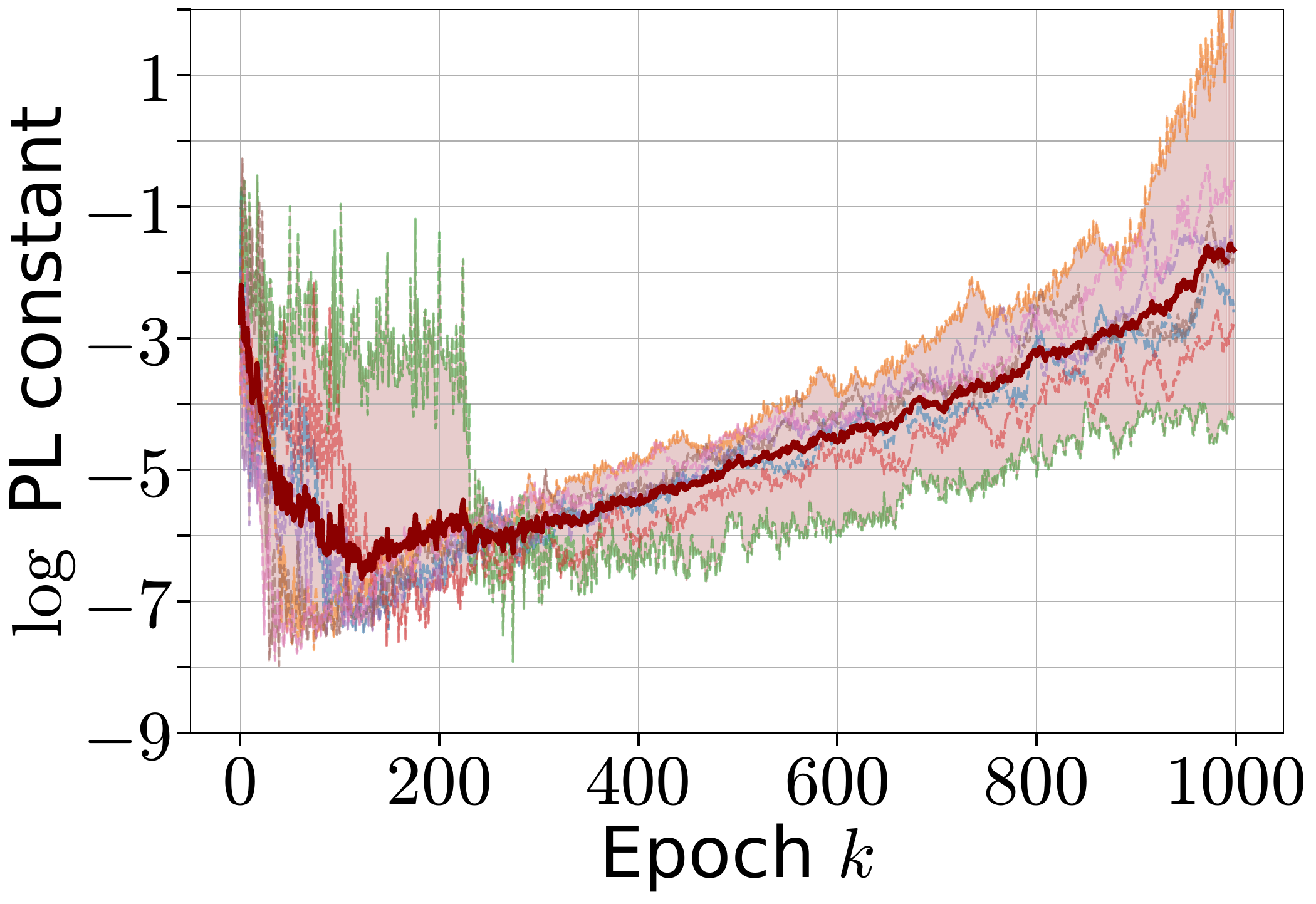} &
        \includegraphics[width=0.22\textwidth]{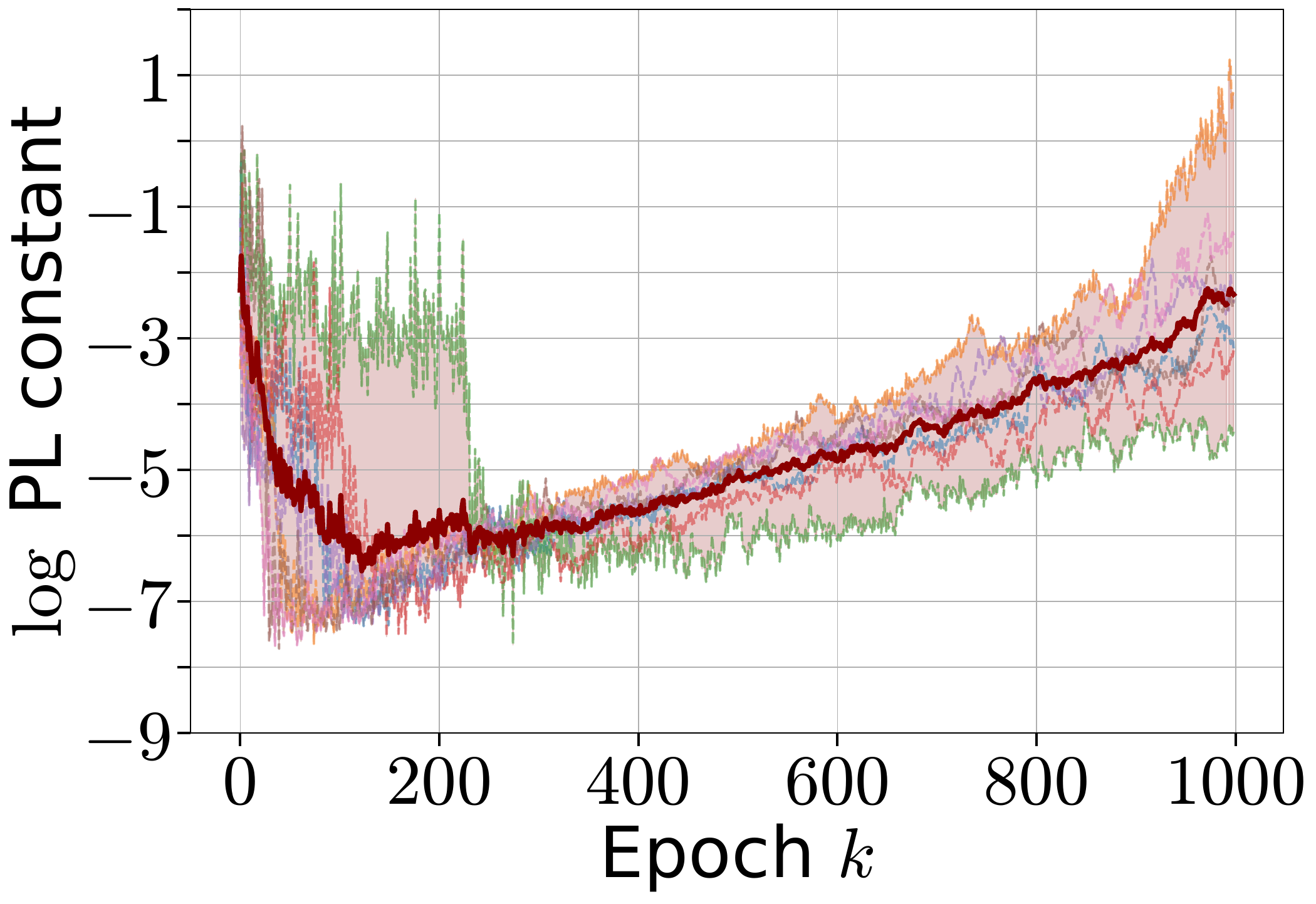} &
        \includegraphics[width=0.22\textwidth]{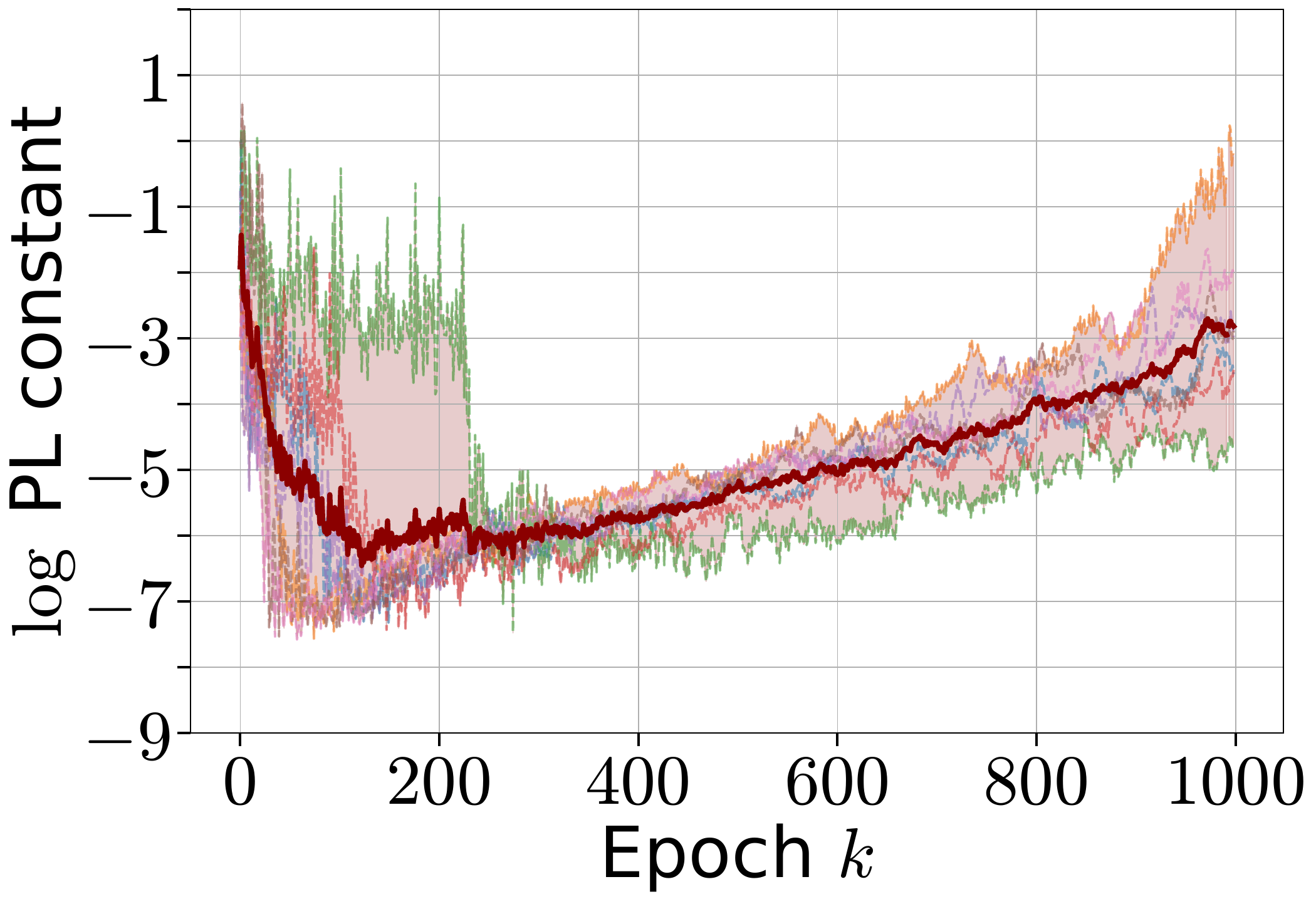}\\
        {\tiny (a) PTB, $\delta=1$}  &
        {\tiny (b) PTB, $\delta=1.25$} &
        {\tiny (c) PTB, $\delta=1.5$} & 
        {\tiny (d) PTB, $\delta=1.75$}\\
        \includegraphics[width=0.235\textwidth]{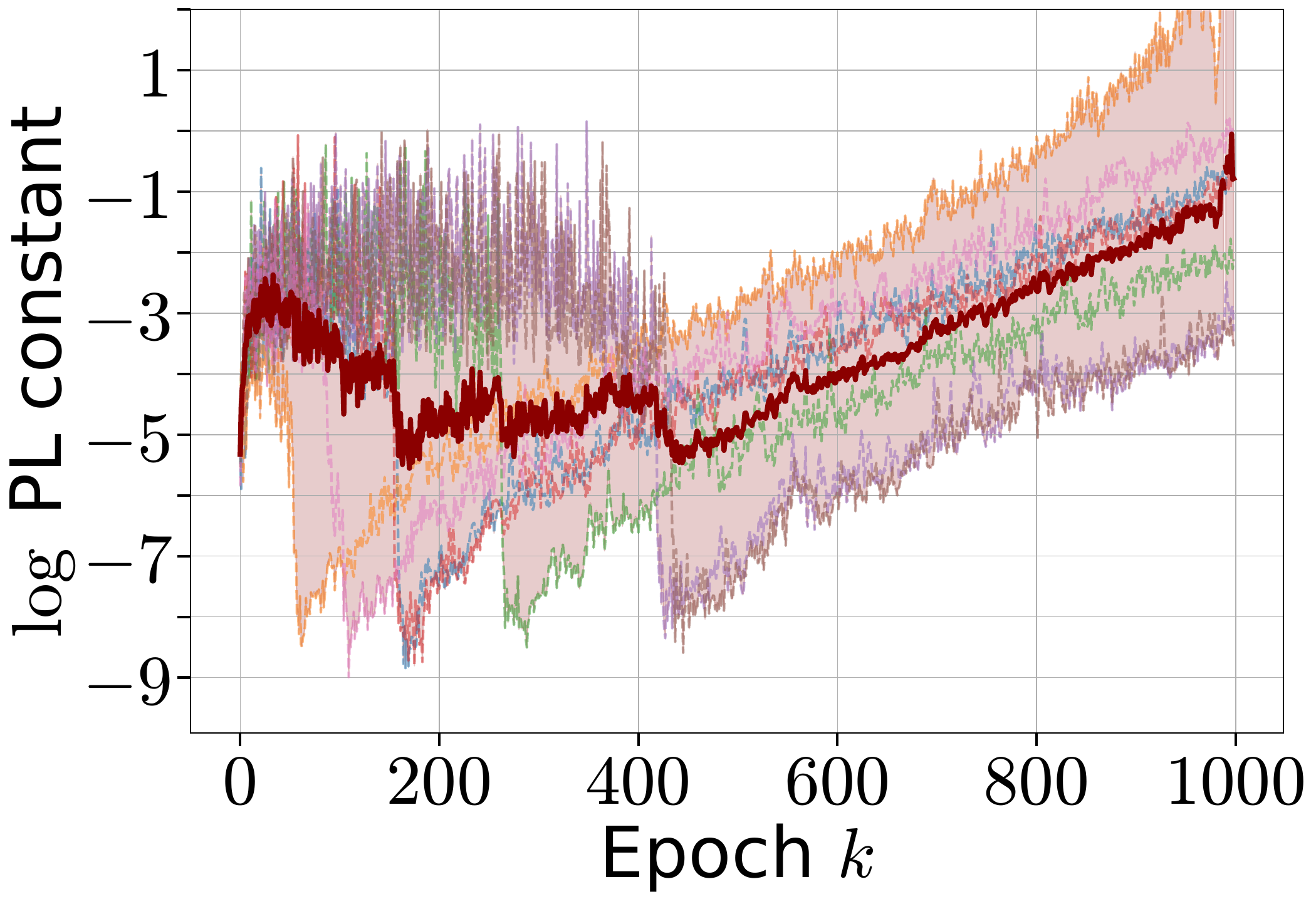} &
        \includegraphics[width=0.22\textwidth]{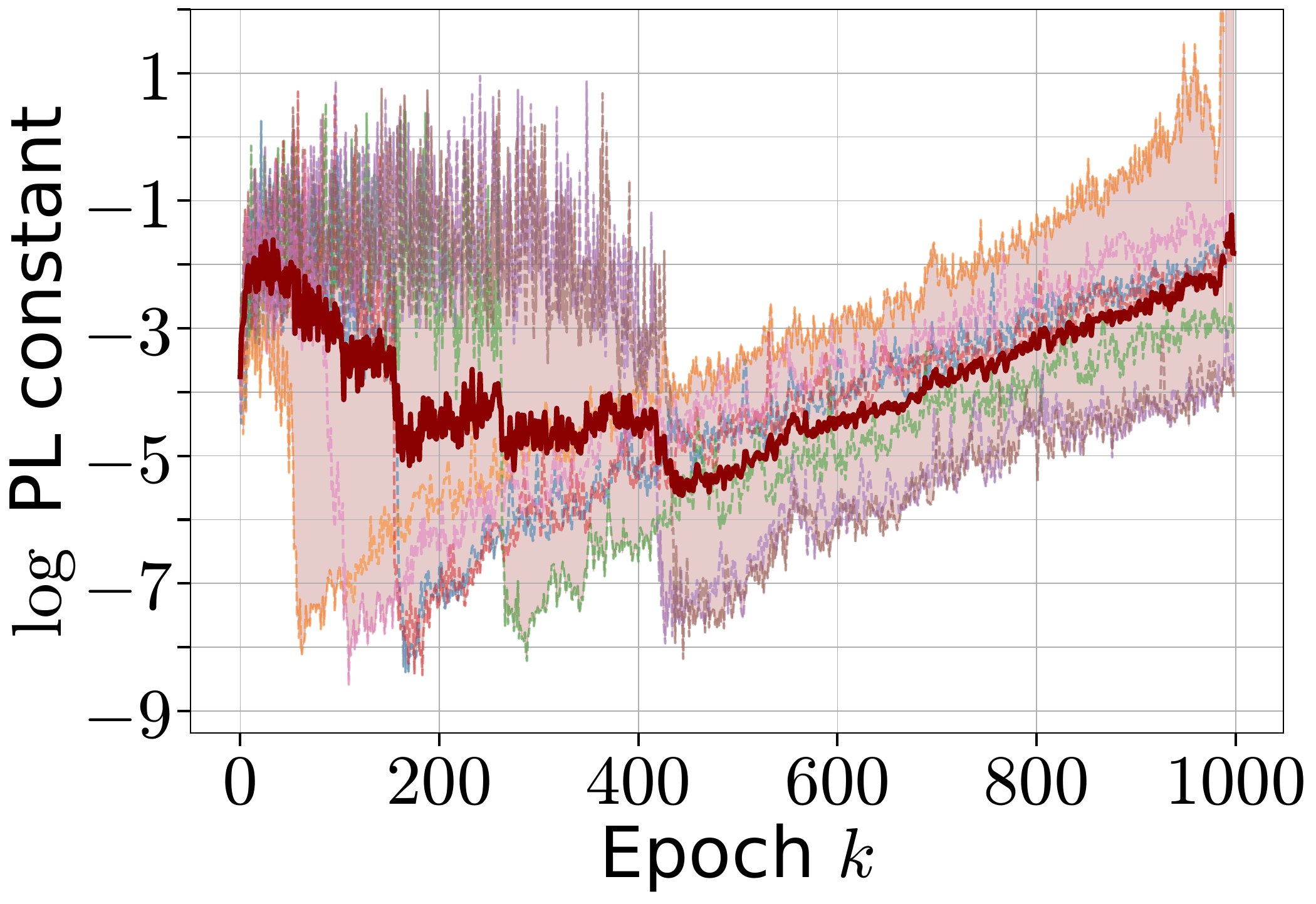} &
        \includegraphics[width=0.22\textwidth]{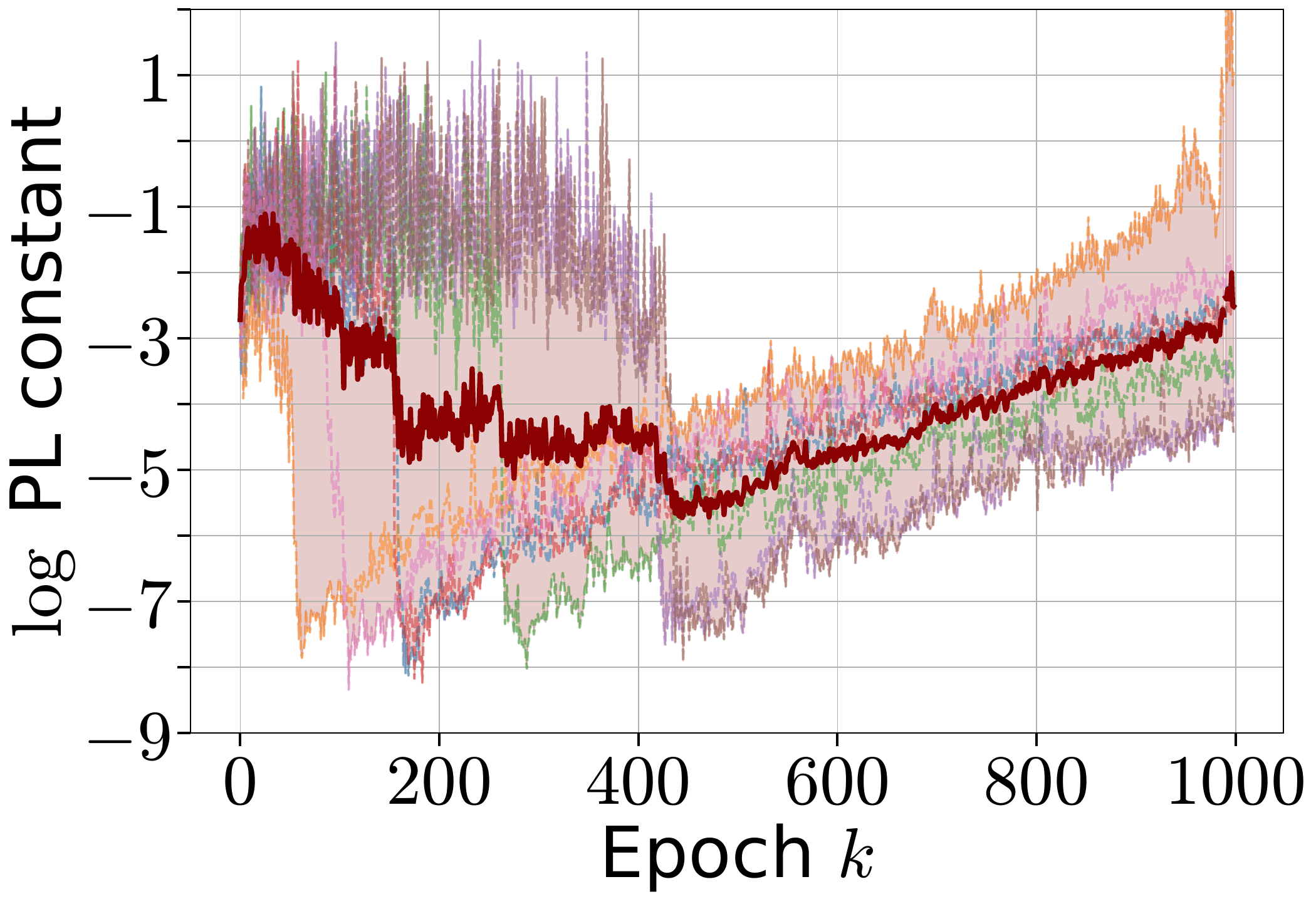} &
        \includegraphics[width=0.22\textwidth]{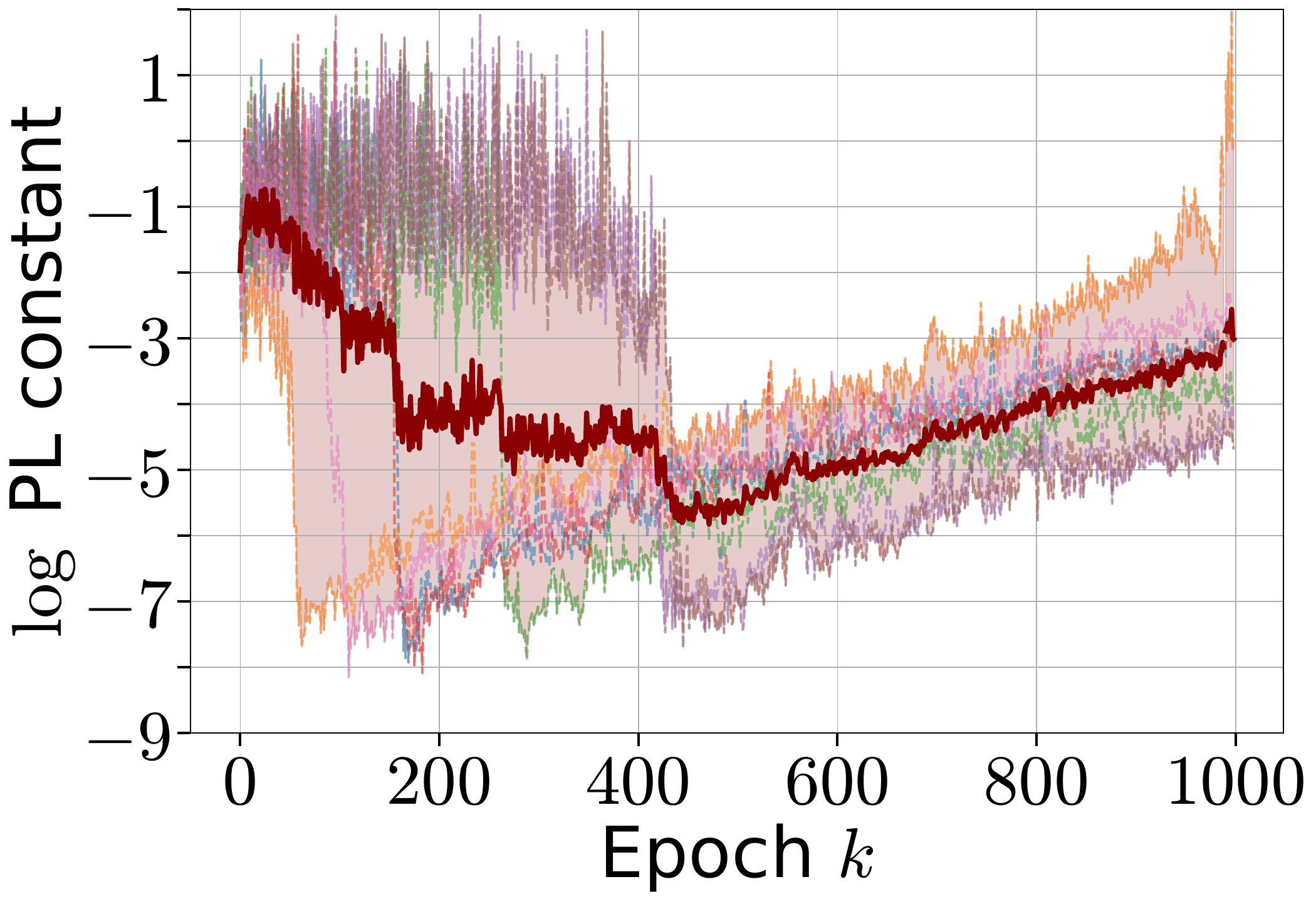}\\
        {\tiny (a) Wikitext-2, $\delta=1$}  &
        {\tiny (b) Wikitext-2, $\delta=1.25$} &
        {\tiny (c) Wikitext-2, $\delta=1.5$} & 
        {\tiny (d) Wikitext-2, $\delta=1.75$}\\
    \end{tabular}
    \caption{Training of $3$ layer LSTM model that shows Aiming condition does not always hold. We plot possible values of PL constant for different powers $\delta$ in \eqref{eq:pl_condition}. We observe that possible values of $\mu$ are of order $10^{-9}-10^{-7}$ which implies slow theoretical convergence and contradicts practical observations.}
    \label{fig:lstm_additional}
\end{figure}

Let us consider the relaxation of PL condition for some $\delta \in [1,2]$ and $\mu > 0$ (Assumption 3 in \citep{fatkhullin2022sharp})
\begin{eqnarray}\label{eq:pl_condition}
    \|\nabla f(x)\|^\delta \ge (2\mu)^{\delta/2}(f(x)-f^*) \quad \text{ for all } x\in\R^d.
\end{eqnarray}
Taking logarithms of both sides we continue
\begin{eqnarray}\label{eq:af13rsdfsef}
    2\log(\|\nabla f(x)\|) -  \frac{2}{\delta}\log(f(x)-f^*)\ge \log(2\mu) .
\end{eqnarray}
To satisfy PL condition for some $\delta$, we need to find $\mu$ that satisfies \eqref{eq:af13rsdfsef} for all iterations. In \Cref{fig:lstm_additional} we plot LHS of \eqref{eq:af13rsdfsef}. We observe that the values of $\mu$ that satisfy \eqref{eq:af13rsdfsef} for all iterations should be of order $10^{-9}-10^{-7}$ that leads to slow theoretical convergence \citep{karimi2016linear}. Therefore, we claim that PL condition does not hold globally for neural networks. Nevertheless, it might hold locally around the solution (the value of $\mu$ closer to the end of the training are large enough).

\section{Additional examples}\label{sec:additional_examples}

In this section, we list additional examples that satisfy \Cref{asmp:abc} and might have non-optimal stationary points. The functions of this form are typically used to simulate non-convex optimization problems and test the performance of algorithms on small or synthetic datasets \citep{islamov2023asgrad, makarenko2023adaptive, khirirat2023clip21}.

\begin{restatable}[]{example}{examplethird}
%\begin{example}
\label{ex:example_7}
    Let $f_1, f_2\colon \R^2 \to \R$ be such that
    \begin{eqnarray}
        f_1(x,y) = \frac{x^2+y^2}{1+x^2+y^2}, \quad f_2(x,y) = \frac{(x-1
)^2+(y-1)^2}{1+(x-1)^2 + (y-1)^2},
    \end{eqnarray}
    then \Cref{asmp:abc} holds with $\alpha \ge \frac{2}{\min_i\min_{(z,t)\in\cS}f_i(z,t)} \approx 41.369325$ and $\beta=\alpha-1$. Besides, \Cref{asmp:abc} does not hold with $\beta=0$ not satisfy \Cref{asmp:abc} with $\beta=0$.
%\end{example}
\end{restatable}

\section{Missing proofs}

\subsection{Proofs of examples satisfying \cref{asmp:abc}}\label{sec:examples_proofs}

We highlight that we only show that \abccond holds for some $\alpha, \beta$ without giving all possible values of them.

\subsubsection{Proof of \cref{ex:example_1}}

%\aurelien{check proofs}

\examplefirst*
\begin{proof}
Let $(z,t) = \proj((x,y),\cS)$. One can show that the set of minimizers of $f$ in this example is $\cS = \{(x,y)\colon y=-x-\nicefrac{1}{2}\}$. Moreover, $f_1^* = 0$ for $x=-y.$ We will show that \abccond holds for $i=1.$ For $i=2$ it can be shown in similar way with change of variables. We have
\begin{eqnarray*}
    \partial_x f_1(x,y) = \partial_y f_1(x,y) = \frac{2(x+y)}{(1+(x+y)^2)^2}.
\end{eqnarray*}
Therefore, we need to show that 
\begin{eqnarray*}
    \frac{2(x+y)}{(1+(x+y)^2)}(x-z + y-t) \ge (\alpha-\beta)\frac{(x+y)^2}{1+(x+y)^2} - \alpha f_i(z,t).
\end{eqnarray*}
Note that $f_i(z,t) = \frac{(z+t)^2}{1+(z+t)^2} = \frac{(-\nicefrac{1}{2})^2}{1+(-\nicefrac{1}{2})^2} = \frac{1}{5}.$ Moreover, we can compute the projection operator onto $\cS$. After simple derivations, the projection can be expressed as
\begin{eqnarray*}
    \proj((x,y),\cS) = \frac{1}{2}\begin{pmatrix}
        x-y -\nicefrac{1}{2}\\
        -x+y-\nicefrac{1}{2}
    \end{pmatrix}.
\end{eqnarray*}
Therefore, we need to satisfy 
\begin{eqnarray*}
    \frac{2(x+y)}{(1+(x+y)^2)^2}(x+y+\nicefrac{1}{2}) \ge (\alpha-\beta)\frac{(x+y)^2}{1+(x+y)^2} - \frac{\alpha}{5}.
\end{eqnarray*}
This is equivalent to 
\begin{eqnarray}
    2(x+y)(x+y+\nicefrac{1}{2}) \ge (\alpha-\beta)(x+y)^2(1+(x+y)^2) - \frac{\alpha}{5}(1+(x+y)^2)^2\notag \\
    \Leftrightarrow 2(x+y)^2 
    +\frac{\alpha}{5}(1+2(x+y)^2+(x+y)^4)
    \ge (\alpha-\beta)((x+y)^2 + (x+y)^4) 
    - (x+y).\label{eq:3dfw3r0sdf}
\end{eqnarray}
We should satisfy the above for all values of $x+y.$ First, we need the coefficient next to $(x+y)^4$ in LHS to be larger or equal of the corresponding coefficient in RHS. This gives us $\frac{\alpha}{5}\ge \alpha-\beta.$ This shows that $\beta$ can not be zero. Using $2ab \le a^2+b^2$, \eqref{eq:3dfw3r0sdf} is satisfied as long as we have 
\begin{eqnarray*}
    2(x+y)^2 
    +\frac{\alpha}{5}(1+2(x+y)^2+(x+y)^4)
    \ge (\alpha-\beta)((x+y)^2 + (x+y)^4)
    + \frac{1}{2}(x+y)^2 + \frac{1}{2}.
\end{eqnarray*}
Therefore, we should satisfy
\begin{eqnarray*}
    \begin{cases}
        \text{coefficient next to } (x+y)^4: \frac{\alpha}{5} \ge \alpha-\beta\\
        \text{coefficient next to } (x+y)^2: 2 + \frac{2\alpha}{5} \ge \alpha-\beta + \frac{1}{2}\\
        \text{coefficient next to } 1:  \frac{\alpha}{5} \ge \frac{1}{2} 
    \end{cases}
\end{eqnarray*}
Note that if the first inequality holds, then the second one is true as well. Therefore, from the first and third inequalities we get that the solution of the system of inequalities is $\alpha \ge \frac{5}{2}$ and $\beta \in [\nicefrac{4\alpha}{5}, \alpha].$
\end{proof}

\subsubsection{Proof of \cref{ex:example_2}}

\examplesecond*
\begin{proof}
Let $(z,t) = \proj((x,y), \cS)$. Again, we show that \abccond holds for $f_1$; for $f_2$ it can be proved with the same arguments with change of variables. Note that $f_i^*=0$ and $f^* > 0.$ We also observe that $\cS$ contains two points in total, and both of them of the form $(t,t)$ where for one $t$ is close to $0$ (it is around $t\approx 0.00067$) and for second $t$ is close to $2$ (it is around $t\approx 1.99932$). Besides, note that $c \eqdef \min_i\min_{(t,t)\in\cS} f_i(t,t) \in (0,1)$ and $c\approx 9\cdot 10^{-7}$, which means that we are close to interpolation. 

We have in this case
\begin{eqnarray*}
    \partial_x f_1(x,y) = 2x\exp(-x^2-y^2), \quad \partial_y f_1(x,y) = 2y\exp(-x^2-y^2).
\end{eqnarray*}
Therefore, we need to satisfy 
\begin{eqnarray}\label{eq:bdfve09312}
    \exp(-x^2-y^2)(2x(x-t) + 2y(y-t)) \ge 
    (\alpha-\beta)(1-\exp(-x^2-y^2)) - \alpha f_1(z,t).
\end{eqnarray}
Note that if $\beta=0$ and $x,y \to +\infty,$ then the inequality is obviously can not be satisfied for all $x,y\in\R$. Indeed, assume $\beta=0$, then since $f_1(z,t) < 1$ and LHS goes to $0$ while RHS to $\alpha - \alpha f_1(z,t) > 0$.

Rearranging terms in \eqref{eq:bdfve09312}, we should satisfy
\begin{eqnarray*}
    \exp(-x^2-y^2)(2x^2+2y^2+ \alpha-\beta) + \alpha f_1(z,t) \ge \exp(-x^2-y^2) (2xt+2yt)
    +
    (\alpha-\beta).
\end{eqnarray*}
Using $2ab \le a^2+b^2$, the above is satisfied if the following inequality holds
\begin{eqnarray*}
    &&\exp(-x^2-y^2)(2x^2+2y^2 + \alpha-\beta) + \alpha c \ge
    \exp(-x^2-y^2)(x^2+y^2 + 2t^2)
    +
    (\alpha-\beta)\\
    \Leftrightarrow && \exp(-x^2-y^2)(x^2+y^2 + \alpha-\beta) + \alpha c \ge
    \exp(-x^2-y^2)\cdot 2t^2
    +
    (\alpha-\beta).
\end{eqnarray*}
Since $t \in (0,2)$ we can choose $\alpha-\beta= 8$, so that $x^2+y^2 -2t^2 +\alpha-\beta \ge x^2+y^2 \ge 0.$ Finally, we choose $\alpha \ge \frac{8}{c},$ so that $\alpha c \ge \alpha-\beta = 8.$ We can estimate that $\alpha \gtrsim 72\cdot 10^{7}.$ 
\end{proof}

\subsubsection{Proof of \Cref{ex:example_10}}

\exampleten*

\begin{proof}
    Let $(z,t) = \proj((x,y),\cS).$ For this example, we have $f_1^*= \frac{1}{4}$ (achieved at $(0,0)$), $f_2^* = 0$ (achieved at $(2.5,2.5)$), and $f^* \approx = 0.408$ (achieved at $(2.471,2.471)$). Besides, we have $f(0,0)=0.587963$ and $f(2.5,2.5)=0.409091.$ Besides, $c \eqdef \min_i\min_{(z,t)\in\cS} f_i(z,t) = f_2(2.471,2.471) = 0.00167918.$ Let us show that \abccond holds for $f_1;$ for $f_2$ it can be shown similarly. We have
    \begin{equation*}
        \partial_x f_1(x,y) = \frac{6x}{(4+x^2+y^2)^2}, \quad \partial_y f_1(x,y) = \frac{6y}{(4+x^2+y^2)^2}.
    \end{equation*}
    Therefore, we need to satisfy for some $\alpha$ and $\beta$
    \begin{eqnarray*}
        &&\frac{6}{(4+x^2+y^2)^2}(x(x-z) + y(y-t)) \ge (\alpha - \beta)\frac{1+x^2+y^2}{4+x^2+y^2} - f_1(z,t)\\
        \Leftrightarrow && 6x^2 + 6y^2 - 6xz - 6yt \ge (\alpha-\beta)(1+x^2+y^2)(4+x^2+y^2) - \alpha f_1(z,t)(4+x^2+y^2)^2\\
        \Leftrightarrow && 6(x^2 + y^2) + \alpha f_1(z,t)(16 + 8(x^2+y^2) + (x^2+y^2)^2) \ge 6xz + 6yt\\
        &&\hspace{7cm}  + (\alpha-\beta)(4+5(x^2+y^2) + (x^2+y^2)^2).
    \end{eqnarray*}
    The above is satisfied for all $x,y\in\R$ and $\alpha-\beta=1$ if we have 
    \begin{eqnarray*}
        &&  6(x^2 + y^2) + \alpha f_1(z,t)(16 + 8(x^2+y^2) + (x^2+y^2)^2) \ge 3(x^2+y^2) + 3z^2 + 3t^2\\
        &&\hspace{7cm}  + (4+5(x^2+y^2) + (x^2+y^2)^2).
    \end{eqnarray*}
    To satisfy the above for all $x,y\in\R$ we should have 
    \begin{eqnarray*}
        \begin{cases}
            16\alpha f_1(z,t) &\ge 4 + 3(z^2 + t^2)\\
            6 + 8\alpha f_1(z,t) &\ge 3 + 5\\
            \alpha f_1(z,t) &\ge 1,
        \end{cases}
    \end{eqnarray*}
    where $z=t=2.471,$ and $f_1(z,t) = 0.0016718.$ We can satisfy \abccond with $\alpha \gtrsim 1512.4586.$
\end{proof}

\subsubsection{Proof of \cref{ex:example_7}}

\examplethird*
\begin{proof}
Let $(z,t) = \proj((x,y),\cS).$ We show that \abccond holds for $f_1$; for $f_2$ the proof is similar with change of variables. In this case, $f_i^*=0$, while $f^* > 0$ ($f^* \approx 0.316988$ at points $(0.159375, 0.159375)$ and $(0.840625, 0.840625)$). Besides, we have $c \eqdef \min_i \min_{(z,t)\in\cS} f_i(z,t) \approx 0.048345 > 0$. Moreover, We have
\begin{eqnarray*}
    \partial_x f_1(x,y) = \frac{2x}{(1+x^2+y^2)^2}, \quad \partial_y f_1(x,y) = \frac{2y}{(1+x^2+y^2)^2}.
\end{eqnarray*}
Therefore, we need to satisfy for some $\alpha$ and $\beta$
\begin{eqnarray*}
    && \frac{1}{(1+x^2+y^2)^2}(2x(x-z) + 2y(y-t)) \ge (\alpha-\beta)\frac{x^2+y^2}{1+x^2+y^2} - \alpha f_1(z,t)\\
    \Leftrightarrow && 2x^2 +2y^2 -2xz -2yt \ge (\alpha-\beta)(x^2+y^2)(1+x^2+y^2) - \alpha f_1(z,t)(1+x^2+y^2)^2\\
    \Leftrightarrow && 2x^2 +2y^2
    + \alpha f_1(z,t)(1+2(x^2+y^2)+(x^2+y^2)^2)\ge 2xz +2yt +\\
    && \hspace{8cm} (\alpha-\beta)(x^2+y^2)(1+x^2+y^2).
\end{eqnarray*}
Note that if we take $\beta=0$ and $x,y\to +\infty,$ then LHS grows as $\alpha f_1(z,t)(x^2+y^2)^2$ and RHS grows as $\alpha(x^2+y^2)^2$, therefore in the limit the RHS is larger. This implies that $\beta\neq 0.$ Using $2ab \le a^2+b^2$ the above is satisfied if we have
\begin{eqnarray}
    &&(2+2\alpha f_1(z,t))(x^2 + y^2) + \alpha f_1(z,t)(x^2+y^2)^2
    + \alpha f_1(z,t) \ge (x^2 +y^2)(1+\alpha-\beta) + \notag\\
   && \hspace{8cm} 
   z^2 + t^2 + (\alpha-\beta)(x^2+y^2)^2.\label{eq:329fe9jr23n}
\end{eqnarray}
Let us take $\alpha-\beta=1$ and $\alpha \ge \frac{2}{\min_i\min_{(z,t)\in\cS}f_i(z,t)} \approx 41.369325.$ With this choice, we have
\begin{eqnarray*}
    && z^2+t^2 \le 2 \le \alpha f_1(z,t),\\
    && 1+\alpha-\beta = 2 \le 2 +2\alpha f_1(z,t),\\
    && \alpha-\beta = 1 < 2 \le \alpha f_1(z,t).
\end{eqnarray*}
Therefore, LHS is always larger than RHS in \eqref{eq:329fe9jr23n}.
\end{proof}

\subsubsection{Proof of \cref{ex:example_matrixfactor}}

\examplematrixfactor*

%\begin{example}
%    Let $f_i, f_{ij}$ be such that 
%    \begin{eqnarray}
%        f(W, S) = \frac{1}{2nm}\|X-W^\top S\|^2_{\rm F} = \frac{1}{2nm}\sum_{i,j}(X_{ij}-w_i^\top s_j)^2, f_{ij}(W,S) = \frac{1}{2}(X_{ij} - w_i^\top s_j)^2,
%    \end{eqnarray}
%    where $X\in \R^{n\times m}, W = (w_i)_{i=1}^n\in \R^{k\times n}, S = (s_j)_{j=1}^m\in\R^{k\times m},$ and ${\rm rank}(X) = r \ge k.$ We assume that $X$ is generated using matrices $W^*$ and $S^*$ with non-zero additive noise that minimize empirical loss, namely,
%    \begin{eqnarray}
%        X = (W^*)^\top S^* + (\varepsilon)_{i\in[n],j\in[m]} \quad \text{where} \quad W^*, S^* = \argmin_{W,S} f(W,S).
%    \end{eqnarray}
%    Let $\cX$ be any bounded set that contains $\cS.$ Then \Cref{asmp:abc} is satisfied with $\alpha = \beta+1$ and some $\beta > 0.$
%\end{example}
\begin{proof}
    Since $k \le r$, then the matrix factorization problem has a unique solution which can be found from SVD decomposition of $X$ \citep{valavi2020revisiting}.

    We have
    \begin{eqnarray*}
        \nabla_{w_i} f_{ij}(W,S) = (w_i^\top s_j - X_{ij})s_j, \quad \nabla_{s_j} f_{ij}(W,S) = (w_i^\top s_j - X_{ij})w_i.
    \end{eqnarray*}
    Therefore, we need to show that 
    \begin{eqnarray*}
        &&\left<(w_i^\top s_j - X_{ij})s_j, w_i-w_i^*\right> + \left<(w_i^\top s_j - X_{ij})w_i, s_j-s_j^*\right> 
        \ge \frac{\alpha-\beta}{2}(X_{ij} - w_i^\top s_j)^2\\ && \quad \;-\; \frac{\alpha}{2}(X_{ij} - (w_i^*)^\top s_j^*)^2,
    \end{eqnarray*}
    since $f_{ij}^* = 0.$

    Since $f_{ij}$ is convex w.r.t. $w_i$, the above holds if we have
    \begin{eqnarray*}
        &&\frac{1}{2}(w_i^\top s_j-X_{ij})^2 - \frac{1}{2}(s_j^\top w_i^*-X_{ij})^2 + \frac{1}{2}(w_i^\top s_j-X_{ij})^2 - \frac{1}{2}(w_i^\top s_j^*-X_{ij})^2
        \ge \\ && \quad \frac{\alpha-\beta}{2}(X_{ij} - w_i^\top s_j)^2
        - \frac{\alpha}{2}(X_{ij} - (w_i^*)^\top s_j^*)^2.
    \end{eqnarray*}
    Let us take $\alpha=\beta+1$, then we can simplify the above as follows
    \begin{eqnarray}\label{eq:d23rmvon23r}
        &&\frac{1}{2}(w_i^\top s_j-X_{ij})^2  
        + \frac{\alpha}{2}(X_{ij} - (w_i^*)^\top s_j^*)^2
        \ge \frac{1}{2}(s_j^\top w_i^*-X_{ij})^2
        + \frac{1}{2}(w_i^\top s_j^*-X_{ij})^2
    \end{eqnarray}
    Since we consider $\cX$ to be bounded, then there exist $r \ge 0$ such that $\|s_j\|, \|w_i\| \le r$ for all $i$ and $j$. Therefore, the RHS in \eqref{eq:d23rmvon23r} is bounded by some constant $C$. From the data generation, we have that $c = \min_{ij} (X_{ij} - (w_i^*)\top s_j^*)^2 = \min_{ij} \varepsilon_{ij}^2 > 0.$ Therefore, we can take $\alpha \ge \frac{2C}{c}$ to verify \eqref{eq:d23rmvon23r}.

\end{proof}

\subsubsection{Proof of \cref{ex:neural_net}}

\exampleneuralnet*
\begin{proof}
    Let $(Z, z) \eqdef \proj((W,v), \cS)$. We have the following derivations for gradients
    \begin{eqnarray*}
        &\R^{k\times d}\ni \nabla_W f_i(W, v) = \phi^\prime(y_iv^\top \sigma(W x_i))\cdot y_i (v\circ \mathbb{1}_{Wx_i \ge 0})x_i^\top +  2\lambda_2W,\\
        &\R^{k} \ni \nabla_v f_i(W, v) =\phi^\prime(y_i v^\top \sigma(Wx_i))\cdot y_i \sigma(Wx_i) = \phi^\prime(y_i v^\top \sigma(Wx_i))\cdot y_i (Wx_i)\circ\mathbb{1}_{Wx_i\ge 0}\\
        &\hspace{8.5cm} +2\lambda_1v\\
        & \hspace{7cm} = \phi^\prime(y_i v^\top \sigma(Wx_i))\cdot y_i (Wx_i)\circ e_w
        + 2\lambda_1v,
    \end{eqnarray*}
    where we denote $e_w \eqdef \mathbb{1}_{Wx_i\ge 0} \in\R^k$. Besides, we denote $e_z \eqdef \mathbb{1}_{Zx_i\ge 0} \in\R^k$. Note that the optimal value of $f_i^* > 0$ because of the L2 regularization. 

    Note that we have the following relations
    \begin{eqnarray}\label{eq:g42sdf23r}
        v^\top \sigma(Wx_i) = \left<v, (Wx_i) \circ \mathbb{1}_{Wx_i \ge 0}\right> = \left<v\circ \mathbb{1}_{Wx_i\ge 0}, Wx_i\right> = \<v\circ e_w, Wx_i>.
    \end{eqnarray}

       Note that $\cS$ is bounded because of the L2 regularization. Since we assume that the interpolation does not hold, then $f$ is always strictly larger than $f_i^*$ in $\cX$, and due to continuity there exists $c \eqdef \min_{i\in[n]} \min_{(Z,z)\in\cS} f_i(Z,z) > 0.$
    
    We need to show that there exist some $\alpha$ and $\beta$ such that
    \begin{align}
        &
        \left<\nabla_W f_i(W,v), W-Z\right>_{\rm F} 
        + \left<\nabla_v f_i(W,v), v-z\right>\notag \\ 
        & \qquad \ge  \alpha(f_i(W,v) - f_i(Z,z)) - \beta (f_i(W,v)-f_i^*)\notag
        \\
        \Leftrightarrow &\;
        \phi^\prime(y_i (v\circ e_w)^\top  Wx_i)y_i\left(\left<(v\circ e_w)x_i^\top, W-Z\right>
        + \left<(Wx_i)\circ e_w, v-z\right>\right) + 2\lambda_1\<v, v-z>\notag\\
        &\hspace{10.5cm}+ 2\lambda_2\<W, W-Z>\notag
        \\
        & \qquad \ge  \alpha\left[\phi(y_i(v\circ e_w)^\top Wx_i) + \lambda_1\|v\|^2 + \lambda_2\|W\|^2_{\rm F} - \phi(y_i (z\circ e_z)^\top Zx_i) - \lambda_1\|z\|^2 - \lambda_2\|Z\|^2_{\rm F})\right]\notag\\
        &\qquad - \beta\left[\phi(y_i (v\circ e_w)^\top Wx_i) +\lambda_1\|v\|^2 + \lambda_2\|W\|_{\rm F}^2 - f_i^*\right] \notag 
        \\
        \Leftrightarrow &\;
        \phi^\prime(y_i (v\circ e_w)^\top Wx_i) \left([y_i(v\circ e_w)^\top Wx_i - y_i(v\circ e_w)^\top Zx_i]
        + [y_i(v\circ e_w)^\top Wx_i - y_i(z\circ e_w)^\top Wx_i]\right)\notag 
        \\
        & \hspace{10.5cm} + 2\lambda_1\<v,v-z> + 2\lambda_2\<W,W-Z>\notag\\
        & \qquad \ge  \alpha\left[\phi(y_i(v\circ e_w)^\top Wx_i) + \lambda_1\|v\|^2 + \lambda_2\|W\|^2_{\rm F} - \phi(y_i (z\circ e_z)^\top Zx_i) - \lambda_1\|z\|^2 - \lambda_2\|Z\|_{\rm F}^2\right]\notag\\
        &\qquad - \beta\left[\phi(y_i (v\circ e_w)^\top Wx_i) + \lambda_1\|v\|^2 + \lambda_2\|W\|^2 - f_i^*\right] ,\label{eq:34123rewe}
    \end{align}

    where we use \eqref{eq:g42sdf23r}. Since $\phi$ is convex, then we have $\phi^\prime(x)(x-y) \ge \phi(x) - \phi(y).$ Therefore, using \eqref{eq:g42sdf23r} again, we get that \eqref{eq:34123rewe} is satisfied if we have
    \begin{eqnarray*}
        &&2\phi(y_i(v\circ e_w)^\top Wx_i) - \phi(y_i(v\circ e_w)^\top Zx_i) - \phi(y_i(z \circ e_w)^\top Wx_i)
        + \lambda_1\<v,v-z>
        + \lambda_2\<W,W-Z>\\
        &&\qquad \ge  \alpha\left[\phi(y_i(v\circ e_w)^\top Wx_i) + \lambda_1\|v\|^2 + \lambda_2\|W\|_{\rm F}^2 - \phi(y_i (z\circ e_z)^\top Zx_i) - \lambda_1\|z\|^2 - \lambda_2\|Z\|_{\rm F}^2\right]\\
        && \hspace{6.5cm}- \beta\left[\phi(y_i (v\circ e_w)^\top Wx_i) +  \lambda_1\|v\|^2 + \lambda_2\|W\|_{\rm F}^2 - f_i^*\right].
    \end{eqnarray*}

    Now we take $\alpha=\beta+1$ and simplify the above as follows
    \begin{eqnarray}\label{eq:235fdwf23r}
        &&\phi(y_i(v\circ e_w)^\top Wx_i) 
        + \alpha \phi(y_i (z\circ e_w)^\top Zx_i))
        + \lambda_1(2\<v,v-z> - \|v\|^2 - \|z\|^2)\notag\\
        && \hspace{5.5cm} +\lambda_2(2\<W,W-Z> - \|W\|^2_{\rm F} - \|Z\|_{\rm F}^2) 
        + (\alpha-1)(\lambda_1\|z\|^2 + \lambda_2\|Z\|^2_{\rm F})\notag \\ 
        &&\;\ge  \phi(y_i(v\circ e_w)^\top Zx_i) 
        + \phi(y_i(z\circ e_w)^\top Wx_i)
        + (\alpha-1)f_i^*.
    \end{eqnarray}
    The above is satisfied if we have
    \begin{eqnarray}\label{eq:235fdwf23er}
        &&\phi(y_i(v\circ e_w)^\top Wx_i) 
        + \alpha c
        + \lambda_1\|v-z\|^2 +\lambda_2\|W-Z\|^2_{\rm F} 
        + (\alpha-1)(\|z\|^2 + \|Z\|^2_{\rm F})\notag \\ 
        &&\;\ge  \phi(y_i(v\circ e_w)^\top Zx_i) 
        + \phi(y_i(z\circ e_w)^\top Wx_i)
        + (\alpha-1)f_i^*,
    \end{eqnarray}
    because we also have $\phi(y_i(z\circ e_z)^\top Zx_i) \ge c$ for all $i$ and $(Z,z) \in \cS$. Since $(W,v) \in \cX$, there exist constants $R, r \ge 0$ such that $\|v\|\le r$ and $\|W\|\le R$, and  RHS in \eqref{eq:235fdwf23er} is bounded by 
    \begin{eqnarray*}
        2\max_{i\in[n]}\log(1+\exp(Rr\|x_i\|)) \ge
        \phi(y_i(v\circ e_w)^\top Zx_i) 
        + \phi(y_i(z\circ e_w)^\top Wx_i).
    \end{eqnarray*}
    Therefore, we can take $\alpha\ge  \left\{\frac{2\max_{i\in[n]}\log(1+\exp(Rr\|x_i\|))}{c-f_i^*}, 1\right\}$ and $\beta = \alpha - 1.$
\end{proof}

\begin{remark}
    The result suggest that the choice $\beta=0$ is possible only if constants $r$ and $R$ are small, i.e. locally around $\cS$ only. In order to satisfy \abccond in larger set $\cX$, one needs to choose $\beta > 0.$
\end{remark}

\subsection{Convergence of optimization algorithms under $\boldsymbol{\alpha}$-$\boldsymbol{\beta}$-condition}\label{sec:proofs_algorithms}

\subsubsection{Convergence of \algname{SGD}}

\paragraph{Constant stepsize.} In this section, we present the proof of convergence of \algname{SGD} with constant stepsize under \abccond for completeness of the presentation.

\begin{algorithm}[!t]
    \caption{\algname{SGD} with constant stepsize}
    \label{alg:sgd}
    \begin{algorithmic}[1]
        \State \textbf{Input:} Stepsize $\gamma$
        \For{$k = 0,1,2,\dots, K-1$}
        \State Sample $i_k \sim {\rm Unif}[n]$ and compute $\nabla f_{i_k}(x^k)$
        \State Update model
        $$x^{k+1} = x^k - \gamma \nabla f_{i_k}(x^k)$$
        \EndFor
    \end{algorithmic}	
\end{algorithm}

\theoremsgd*
\begin{proof}
Let $x_p^k \in \proj(x^k, \cS)$ satisfies \Cref{asmp:abc}. Using smoothness We have 
\begin{eqnarray*}
    \Ek{{\rm dist}(x^{k+1},\cS)^2} &\le& \Ek{\|x^{k+1}-x_p^k\|^2}\\
    &=& {\rm dist}(x^k, \cS)^2 
    - 2 \gamma\Ek{\left<\nabla f_{i_k}(x^k), x^k-x_p^k\right>} 
    + \gamma^2\Ek{\|\nabla f_{i_k}(x^k)\|^2}\\
    &\overset{(i)}{\le}& {\rm dist}(x^k, \cS)^2 
    - 2\alpha \gamma\Ek{f_{i_k}(x^k)-f_{i_k}(x_p^k)} 
    + 2\beta \gamma \Ek{f_{i_k}(x^k)-f_{i_k}^*}\\
    && \;+\; 2L\gamma^2\Ek{f_{i_k}(x^k) - f_{i_k}^*}\\
    &=& {\rm dist}(x^k, \cS)^2 - 2\alpha \gamma\Ek{f_{i_k}(x^k)-f_{i_k}(x_p^k)}\\
    &&\;+\; 2\gamma(\beta + L\gamma)\Ek{(f_{i_k}(x^k) - f_{i_k}(x_p^k)) + 
    (f_{i_k}(x_p^k)-f_{i_k}^*)},
\end{eqnarray*}
where $(i)$ holds because of the \abccond and smoothness. Now we need to choose a stepsize $\gamma \le \frac{\alpha-\beta}{2L}$ to get 
\begin{eqnarray*}
    \Ek{{\rm dist}(x^{k+1},\cS)^2} &\le& {\rm dist}(x^k, \cS)^2 - \gamma(\alpha-\beta)\Ek{f_{i_k}(x^k)-f_{i_k}(x_p^k)}\\
    &&\;+\; 2\gamma(\beta+L\gamma)\Ek{f_{i_k}(x_p^k)-f_{i_k}^*}.
\end{eqnarray*}
Taking full expectation, noting that $x_p^k$ is independent of the randomness of $i_k$, and performing simple derivations,  we get 
\begin{eqnarray}
    \min_{0 \le k < K}\E{f(x^k) - f^*} \le \frac{\E{{\rm dist}(x^0,\cS)^2}}{K}\frac{1}{\gamma(\alpha-\beta)} 
    + \frac{2L\gamma}{\alpha-\beta}\sigma^2_{\rm int}
    + \frac{2\beta}{\alpha-\beta}\sigma^2_{\rm int}.
\end{eqnarray}
\end{proof}

\paragraph{Decreasing stepsize.} Now we present the results with decreasing stepsize. 
\begin{theorem}\label{th:theorem_sgd_decreasing}
    Assume that Assumptions~\ref{asmp:abc}-\ref{asmp:inter_error} hold. Then the iterates of \algname{SGD} with decreasing stepsize $\gamma_k = \frac{\gamma_0}{\sqrt{k+1}}$ where $\gamma_0 \le \frac{\alpha-\beta}{2L}$ satisfy 
    \begin{eqnarray}\label{eq:theorem_sgd_decreasing}
    \min_{0 \le k < K}\E{f(x^k) - f^*} &\le& \frac{5\E{{\rm dist}(x^{0},\cS)^2}}{4(\alpha-\beta)\gamma_0\sqrt{K}}  
    + \frac{5\gamma_0L\sigma^2_{\rm int}}{\alpha-\beta}\frac{ \log(K+1)}{\sqrt{K}}
    + \frac{2\beta}{\alpha-\beta}\sigma^2_{\rm int}\\
    &=& \wtilde\cO\left(\frac{1}{\sqrt{K}} + \beta\sigma^2_{\rm int}\right).
\end{eqnarray}
\end{theorem}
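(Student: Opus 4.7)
The plan is to adapt the proof of \Cref{th:theorem_sgd} so as to permit iteration-dependent stepsizes, and then aggregate via a weighted average rather than telescope at a fixed rate. The crucial observation is that in the constant-stepsize argument, the bound $\gamma \le \frac{\alpha-\beta}{2L}$ was only used in a single absorption step, to combine the $-2\alpha\gamma$ inner-product term with the quadratic $L\gamma^2$ variance term. Since $\gamma_k = \gamma_0/\sqrt{k+1} \le \gamma_0 \le \frac{\alpha-\beta}{2L}$ for every $k$, this absorption remains valid at each iteration.

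First, I will replay the one-step derivation from the proof of \Cref{th:theorem_sgd} with $\gamma_k$ in place of $\gamma$: expand $\|x^{k+1}-x_p^k\|^2$, invoke \Cref{asmp:abc} to bound the cross term, invoke $L$-smoothness to bound $\E{\|\nabla f_{i_k}(x^k)\|^2} \le 2L\E{f_{i_k}(x^k)-f_{i_k}^*}$, and split $f_{i_k}(x^k)-f_{i_k}^* = (f_{i_k}(x^k)-f_{i_k}(x_p^k)) + (f_{i_k}(x_p^k)-f_{i_k}^*)$. After taking full expectation and using \Cref{asmp:inter_error}, this yields
\begin{equation*}
\E{{\rm dist}(x^{k+1},\cS)^2} \le \E{{\rm dist}(x^{k},\cS)^2} - \gamma_k(\alpha-\beta)\E{f(x^k)-f^*} + 2\gamma_k(\beta + L\gamma_k)\sigma_{\rm int}^2.
\end{equation*}

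Next, I will rearrange so that $\gamma_k(\alpha-\beta)\E{f(x^k)-f^*}$ sits on the left, telescope the ${\rm dist}$ terms over $k=0,\dots,K-1$, and apply the standard weighted-min inequality
\begin{equation*}
\min_{0\le k<K}\E{f(x^k)-f^*} \le \frac{\sum_{k=0}^{K-1}\gamma_k\E{f(x^k)-f^*}}{\sum_{k=0}^{K-1}\gamma_k}.
\end{equation*}
This produces a three-term upper bound involving $\E{{\rm dist}(x^0,\cS)^2}/\sum_k\gamma_k$, the constant $\frac{2\beta}{\alpha-\beta}\sigma_{\rm int}^2$ (which survives unchanged because $\sum_k\gamma_k$ appears in both numerator and denominator of that contribution), and a term proportional to $(\sum_k\gamma_k^2)/\sum_k\gamma_k$.

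The last step is to plug in $\gamma_k = \gamma_0/\sqrt{k+1}$ and use elementary integral-comparison bounds that hold uniformly in $K\ge 1$, namely $\sum_{k=0}^{K-1}\frac{1}{\sqrt{k+1}} \ge \frac{4}{5}\sqrt{K}$ and $\sum_{k=0}^{K-1}\frac{1}{k+1} \le 2\log(K+1)$, giving
\begin{equation*}
\frac{1}{\sum_k\gamma_k}\le \frac{5}{4\gamma_0\sqrt{K}}, \qquad \frac{\sum_k\gamma_k^2}{\sum_k\gamma_k}\le \frac{5\gamma_0\log(K+1)}{2\sqrt{K}}.
\end{equation*}
Inserting these into the three-term bound yields the stated constants $\frac{5}{4}$ and $5$ in \eqref{eq:theorem_sgd_decreasing}.

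The argument is essentially routine; the only mildly technical point is establishing the clean harmonic-sum constants $\frac{4}{5}$ and $2$ so that the bound holds for every $K \ge 1$ rather than only asymptotically. Everything else is a direct transcription of the constant-stepsize proof, with $\gamma$ replaced by $\gamma_k$ inside the sums.
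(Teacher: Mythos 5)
Your proposal is correct and follows essentially the same route as the paper's proof: replay the constant-stepsize one-step inequality with $\gamma_k$ (valid since $\gamma_k\le\gamma_0\le\frac{\alpha-\beta}{2L}$), telescope, apply the weighted-min inequality, and then use the bounds $\sum_{k=0}^{K-1}\gamma_k\ge\frac{4\gamma_0}{5}\sqrt{K}$ and $\sum_{k=0}^{K-1}\gamma_k^2\le 2\gamma_0^2\log(K+1)$, which the paper imports from Theorem 5.7 of \citet{garrigos2024handbook} and you verify directly; the resulting constants match.
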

\begin{proof}
    Similarly to the proof of constant stepsize \algname{SGD} we can obtain
    \begin{eqnarray*}
    \Ek{{\rm dist}(x^{k+1},\cS)^2} &\le& {\rm dist}(x^k, \cS)^2 - \gamma_k(\alpha-\beta)\Ek{f_{i_k}(x^k)-f_{i_k}(x_p^k)}\\
    &&\;+\; 2\gamma_k(\beta+L\gamma_k)\Ek{f_{i_k}(x_p^k)-f_{i_k}^*},
\end{eqnarray*}
since $\gamma_k \le \gamma_0\le \frac{\alpha-\beta}{2L}$ for all $k.$ Now we follow standard proof techniques \citep{garrigos2024handbook} for decreasing stepsize. Taking full expectation and dividing both sides by $\alpha-\beta$ we get
\begin{eqnarray*}
    \gamma_k\E{f(x^k) - f^*} \le \frac{\E{{\rm dist}(x^{k},\cS)^2} - \E{{\rm dist}(x^{k+1},\cS)^2}}{\alpha-\beta} + \frac{2\beta}{\alpha-\beta}\gamma_k\sigma^2_{\rm int} + \frac{2L}{\alpha-\beta}\gamma_k^2\sigma^2_{\rm int}.
\end{eqnarray*}
Summing the above from iteration $0$ to $K-1$ leads to 
\begin{eqnarray*}
    \sum_{k=0}^{K-1}\gamma_k\E{f(x^k) - f^*} \le \frac{\E{{\rm dist}(x^{0},\cS)^2}}{\alpha-\beta} + \frac{2\beta}{\alpha-\beta}\sigma^2_{\rm int}\sum_{k=0}^{K-1}\gamma_k + \frac{2L}{\alpha-\beta}\sigma^2_{\rm int}\sum_{k=0}^{K-1}\gamma_k^2.
\end{eqnarray*}
Therefore, we get
\begin{eqnarray*}
    \min_{0\ge k < K}\E{f(x^k) - f^*} \le \frac{\E{{\rm dist}(x^{0},\cS)^2}}{(\alpha-\beta)\sum_{k=0}^{K-1}\gamma_k} + \frac{2\beta}{\alpha-\beta}\sigma^2_{\rm int} + \frac{2L}{\alpha-\beta}\sigma^2_{\rm int}\frac{\sum_{k=0}^{K-1}\gamma_k^2}{\sum_{k=0}^{K_1}\gamma_k}.
\end{eqnarray*}
Using the results of Theorem 5.7 \citep{garrigos2024handbook} we get 
\begin{eqnarray*}
    \sum_{k=0}^{K-1} \gamma_k^2\le 2\gamma_0^2 \log(K+1), \quad \sum_{k=0}^{K-1}\gamma_k \ge \frac{4\gamma_0}{5}\sqrt{K}.
\end{eqnarray*}
Therefore, the final rate we get is 
\begin{eqnarray*}
    \min_{0\ge k < K}\E{f(x^k) - f^*} &\le& \frac{\E{{\rm dist}(x^{0},\cS)^2}}{(\alpha-\beta)\frac{4}{5}\gamma_0\sqrt{K}} 
    + \frac{2\beta}{\alpha-\beta}\sigma^2_{\rm int} 
    + \frac{2L}{\alpha-\beta}\sigma^2_{\rm int}\frac{2\gamma_0^2 \log(K+1)}{\frac{4\gamma_0}{5}\sqrt{K}}\\
    &=& \frac{5\E{{\rm dist}(x^{0},\cS)^2}}{4(\alpha-\beta)\gamma_0\sqrt{K}}  
    + \frac{5\gamma_0L\sigma^2_{\rm int}}{\alpha-\beta}\frac{ \log(K+1)}{\sqrt{K}}
    + \frac{2\beta}{\alpha-\beta}\sigma^2_{\rm int}.
\end{eqnarray*}

\end{proof}

\subsubsection{Convergence of \algname{SGD} with Polyak Stepsize}

\paragraph{Constant stepsize parameter.} In this section, we present the proof of convergence of \algname{SGD} with Polyak stepsize (with constant stepsize parameters) under \abccond for completeness of the presentation.

\begin{algorithm}[!t]
    \caption{\algname{SPS}${}_{\max}$: Stochastic Polyak Stepsize}
    \label{alg:sps}
    \begin{algorithmic}[1]
        \State \textbf{Input:} Stepsize parameter $c$  and stepsize upper bound $\gamma_{\rm b}$
        \For{$k = 0,1,2,\dots, K-1$}
        \State Sample $i_k \sim {\rm Unif}[n]$ and compute $\nabla f_{i_k}(x^k)$
        \State Compute Polyak stepsize 
        $$\gamma_k \eqdef \min\left\{\frac{f_{i_k}(x^k) - f_{i_k}^*}{c\|\nabla f_{i_k}(x^k)\|^2}, \gamma_{\rm b}\right\}$$
        \State Update model
        $$x^{k+1} = x^k - \gamma_k \nabla f_{i_k}(x^k)$$
        \EndFor
    \end{algorithmic}	
\end{algorithm}

\begin{lemma}[Lemma from \citep{loizou2021polyak}]\label{lem:lemma1_spsmax}
The \algname{SPS}${}_{\max}$ stepsize satisfy 
\begin{eqnarray}
    \gamma_k^2\|\nabla f_{i_k}(x^k)\|^2 \le \frac{\gamma_k}{c}(f_{i_k}(x^k) - f_{i_k}^*).
\end{eqnarray}
\end{lemma}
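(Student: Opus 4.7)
The plan is to prove the bound by a direct two-case analysis on which branch of the $\min$ in the definition of $\gamma_k$ is active. Since $\gamma_k = \min\{\gamma_k^{\rm sps}, \gamma_{\rm b}\}$ with $\gamma_k^{\rm sps} \eqdef \frac{f_{i_k}(x^k)-f_{i_k}^*}{c\|\nabla f_{i_k}(x^k)\|^2}$, the inequality should hold with equality in the first case and with strict inequality in the second. There is no real obstacle here; the only thing to verify is that the $\gamma_k/c$ scaling works out in both branches.

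First I would handle the case $\gamma_k = \gamma_k^{\rm sps}$. By the very definition of $\gamma_k^{\rm sps}$, multiplying both sides of $\gamma_k = \frac{f_{i_k}(x^k)-f_{i_k}^*}{c\|\nabla f_{i_k}(x^k)\|^2}$ by $c\gamma_k\|\nabla f_{i_k}(x^k)\|^2$ immediately gives the desired identity $\gamma_k^2\|\nabla f_{i_k}(x^k)\|^2 = \frac{\gamma_k}{c}(f_{i_k}(x^k)-f_{i_k}^*)$. (A minor edge case to note is that if $\nabla f_{i_k}(x^k)=0$ then both sides are trivially zero.)

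Next I would handle the case $\gamma_k = \gamma_{\rm b}$. In that branch we have $\gamma_{\rm b} \le \gamma_k^{\rm sps}$, which rearranges to $c\gamma_{\rm b}\|\nabla f_{i_k}(x^k)\|^2 \le f_{i_k}(x^k)-f_{i_k}^*$. Multiplying both sides by the positive quantity $\gamma_{\rm b}/c = \gamma_k/c$ yields $\gamma_k^2\|\nabla f_{i_k}(x^k)\|^2 \le \frac{\gamma_k}{c}(f_{i_k}(x^k)-f_{i_k}^*)$, as required.

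Combining the two cases gives the claim. The hardest step, if one can call it that, is just being careful about signs and the degenerate situation $\|\nabla f_{i_k}(x^k)\|=0$, which can be treated separately or absorbed by interpreting $\gamma_k^{\rm sps}=+\infty$ so that the $\gamma_{\rm b}$ branch is selected and both sides of the target inequality collapse to zero.
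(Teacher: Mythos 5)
Your proof is correct; the paper itself does not prove this lemma but imports it from \citet{loizou2021polyak}, where the argument is exactly the observation you make. In fact the case split is unnecessary: since $\gamma_k \le \frac{f_{i_k}(x^k)-f_{i_k}^*}{c\|\nabla f_{i_k}(x^k)\|^2}$ always holds by definition of the $\min$, multiplying both sides by $\gamma_k\|\nabla f_{i_k}(x^k)\|^2 \ge 0$ gives the claim in one line, with your treatment of the degenerate case $\nabla f_{i_k}(x^k)=0$ handled the same way.
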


\begin{lemma}[Lemma from \citep{loizou2021polyak}]\label{lem:lemma2_spsmax}
Assume each $f_i$ is $L$-smooth, then \algname{SPS}${}_{\max}$ stepsize satisfy 
\begin{eqnarray}
    \gamma_{\min} \eqdef \min\left\{\frac{1}{2cL}, \gamma_{\rm b}\right\} \le \gamma_k \le \gamma_{\rm b}.
\end{eqnarray}
\end{lemma}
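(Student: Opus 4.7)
The plan is to establish the two inequalities in the statement separately, with the upper bound being essentially definitional and the lower bound relying on a standard consequence of smoothness. The upper bound $\gamma_k \le \gamma_{\rm b}$ is immediate from the definition $\gamma_k = \min\{\cdot, \gamma_{\rm b}\}$, so no work is required there.

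For the lower bound, I would invoke the well-known inequality that for any $L$-smooth function $f_i$ bounded below by $f_i^*$, one has
$$\|\nabla f_i(x)\|^2 \le 2L(f_i(x) - f_i^*).$$
This follows from applying the descent lemma at $y = x - \tfrac{1}{L}\nabla f_i(x)$ and using $f_i(y) \ge f_i^*$ on the left-hand side. Applied to the iterate $x^k$ and the sampled index $i_k$, this yields
$$\frac{f_{i_k}(x^k) - f_{i_k}^*}{c\,\|\nabla f_{i_k}(x^k)\|^2} \;\ge\; \frac{1}{2cL},$$
and inserting this into the min that defines $\gamma_k$ gives $\gamma_k \ge \min\{1/(2cL),\gamma_{\rm b}\} = \gamma_{\min}$, which is exactly the claimed lower bound.

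The only subtle case is when $\nabla f_{i_k}(x^k) = 0$, where the ratio defining the SPS stepsize is undefined. Smoothness (combined with $f_i$ being bounded below) forces $f_{i_k}(x^k) = f_{i_k}^*$ in this situation, so any reasonable convention (taking the ratio to be $+\infty$ or skipping the update) yields $\gamma_k = \gamma_{\rm b} \ge \gamma_{\min}$, and the bound is preserved. I do not anticipate a substantive obstacle here: the lemma is a routine consequence of $L$-smoothness once the inequality $\|\nabla f_i(x)\|^2 \le 2L(f_i(x)-f_i^*)$ is in place, and the only real care point is the treatment of the degenerate zero-gradient case together with the interaction of the two arguments of the $\min$.
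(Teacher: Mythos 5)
Your proof is correct and is essentially the standard argument from \citet{loizou2021polyak}, which the paper simply cites rather than reproving: the upper bound is definitional, and the lower bound follows from the smoothness inequality $\|\nabla f_i(x)\|^2 \le 2L(f_i(x)-f_i^*)$ applied inside the $\min$. Your handling of the degenerate zero-gradient case is a reasonable extra care point that the original reference also glosses over.
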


Now we present the proof of \Cref{th:theorem_polyak} with constant stepsize parameters.

\theorempolyak*
\begin{proof}
    Let $x_p^k \in \proj(x^k,\cS)$ that satisfies \Cref{asmp:abc}, then we have
    \begin{eqnarray*}
        \Ek{{\rm dist}(x^{k+1},\cS)^2} &\le& \Ek{\|x^{k+1}-x_p^k\|^2}\\
        &=& \|x^k-x_p^k\|^2 
        - 2\Ek{\gamma_k\left<\nabla f_{i_k}(x^k), x^k - x_p^k\right>}
        + \Ek{\gamma_k^2\|\nabla f_{i_k}(x^k)\|^2}\\
        &\overset{(i)}{\le}& {\rm dist}(x^k,\cS)^2 - 2\alpha\Ek{\gamma_k(f_{i_k}(x^k) - f_{i_k}(x_p^k))}\\
        && \;+\; 
        2\beta\Ek{\gamma_k(f_{i_k}(x^k) - f_{i_k}^*)}
        + \frac{1}{c}\Ek{\gamma_k(f_{i_k}(x^k) - f_{i_k}^*)}\\
        &=& {\rm dist}(x^k,\cS)^2 
        - 2\alpha\Ek{\gamma_k([f_{i_k}(x^k) - f_{i_k}^*] - [f_{i_k}(x_p^k) - f_{i_k}^*])}\\
        && \;+\; 
        2\beta\Ek{\gamma_k(f_{i_k}(x^k) - f_{i_k}^*)}
        + \frac{1}{c}\Ek{\gamma_k(f_{i_k}(x^k) - f_{i_k}^*)}\\
        &=& {\rm dist}(x^k,\cS)^2 
        - \Ek{\gamma_k\left(2\alpha - 2\beta - \frac{1}{c}\right)(f_{i_k}(x^k)-f_{i_k}^*)}\\
        && \;+\; 2\alpha\Ek{\gamma_k(f_{i_k}(x_p^k) -f_{i_k}^*)},
    \end{eqnarray*}
    where $(i)$ follows from \Cref{lem:lemma1_spsmax} and \abccond. Now, since $c > \frac{1}{2(\alpha-\beta)}$, we get that $2\alpha-2\beta-\nicefrac{1}{c} > 0.$ Therefore, using \Cref{lem:lemma2_spsmax} and the fact that $f_{i_k}^* \le f_{i_k}(x_p^k)$ and $f_{i_k}^* \le f_{i_k}(x^k)$ we get 
    \begin{eqnarray*}
        \Ek{{\rm dist}(x^{k+1},\cS)^2} &\le& 
        {\rm dist}(x^k,\cS)^2 
        - \gamma_{\min}\left(2\alpha - 2\beta - \frac{1}{c}\right)\Ek{f_{i_k}(x^k) - f_{i_k}^*}\\
        && \;+\; 2\alpha\gamma_{\rm b}\Ek{(f_{i_k}(x_p^k) -f_{i_k}^*)}\\
        &=& {\rm dist}(x^k,\cS)^2 
        - \gamma_{\min}\left(2\alpha - 2\beta - \frac{1}{c}\right)\Ek{f_{i_k}(x^k) - f_{i_k}(x_p^k)}\\
        && \;-\; 
        \gamma_{\min}\left(2\alpha - 2\beta - \frac{1}{c}\right)\Ek{f_{i_k}(x_p^k) - f_{i_k}^*}
        + 2\alpha\gamma_{\rm b}\Ek{(f_{i_k}(x_p^k) -f_{i_k}^*)}\\
        &\le& {\rm dist}(x^k,\cS)^2 - \gamma_{\min}\left(2\alpha - 2\beta - \frac{1}{c}\right)\Ek{f_{i_k}(x^k) - f_{i_k}(x_p^k)}\\
        && \;+\; 
        2\alpha\gamma_{\rm b}\Ek{(f_{i_k}(x_p^k) -f_{i_k}^*)}.
    \end{eqnarray*}
    Therefore, noticing that $\Ek{f_{i_k}(x_p^k)} = f^*$ since $x_p^k$ is independent of $i_k$, we have 
    \begin{eqnarray*}
        \gamma_{\min}\left(2\alpha - 2\beta - \frac{1}{c}\right)\Ek{f(x^k) - f^*} 
        &\le& 
        {\rm dist}(x^k,\cS)^2
        - \Ek{{\rm dist}(x^{k+1},\cS)^2}
        + 2\alpha\gamma_b\sigma^2_{\rm int}.
    \end{eqnarray*}
    Dividing both sides by $\gamma_{\min}(2\alpha-2\beta-\nicefrac{1}{c})$ and taking full expectation, we get
    \begin{eqnarray*}
        \E{f(x^k) - f^*} &\le& \frac{c}{\gamma_{\min}(2(\alpha-\beta)c-1)}\left(\E{{\rm dist}(x^k,\cS)^2} - \E{{\rm dist}(x^{k+1},\cS)^2}\right)\\
        && \;+\; \frac{2\alpha c\gamma_{\rm b}}{\gamma_{\min}(2(\alpha-\beta)c-1)}\sigma^2_{\rm int}.
    \end{eqnarray*}
    Averaging for $k \in \{0, \dots, K-1\}$ we get
    \begin{eqnarray*}
        \min\limits_{0\le k< K}\E{f(x^k) - f^*} &\le& \frac{c}{\gamma_{\min}(2(\alpha-\beta)c-1)K}\E{{\rm dist}(x^0,\cS)^2}\\
        && \;+\;  \frac{2\alpha c\gamma_{\rm b}}{\gamma_{\min}(2(\alpha-\beta)c-1)}\sigma^2_{\rm int},
    \end{eqnarray*}
    that finalizes the proof.
\end{proof}

\paragraph{Decreasing stepsize parameter.} Now we switch to the analysis of \algname{SPS}${}_{\max}$ with decreasing stepsize parameters. We consider the stepsize \algname{DecSPS} introduced in \citep{orvieto2022dynamics}
\[
\gamma_k = \frac{1}{c_k}\min\left\{\frac{f_{i_k}(x^k) - f_{i_k}^*}{\|\nabla f_{i_k}(x^k)\|^2}, c_{k-1}\gamma_{k-1}\right\}
\]
with $c_{-1} = c_0$ and $\gamma_{-1} = \gamma_b > 0$ to get 
\[
\gamma_0 = \frac{1}{c_0}\min\left\{\frac{f_{i_0}(x^0) - f_{i_0}^*}{\|\nabla f_{i_0}(x^0)\|^2}, c_{0}\gamma_{b}\right\}.
\]

\begin{lemma}[Lemma 1 from \citep{orvieto2022dynamics}]\label{lem:lemma1_orvieto} Let \Cref{asmp:smoothness} holds. Let $\{c_k\}_{k\ge 0}$ be any non-decreasing positive sequence of real numbers. Then we have
\begin{eqnarray}\label{eq:lemma1_orvieto}
    \min\left\{\frac{1}{2c_kL},\frac{c_0\gamma_b}{c_k}\right\} \le \gamma_k \le \frac{c_0\gamma_b}{c_k}, \quad \text{ and } \quad \gamma_{k} \le \gamma_{k-1}.
\end{eqnarray}
\end{lemma}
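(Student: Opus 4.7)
The plan is to establish all three claims by a single induction on $k$, treating the upper bound, the monotonicity, and the lower bound in turn. The recursive structure of the stepsize, together with the monotonicity of $\{c_k\}$, makes the induction essentially mechanical.

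For the upper bound $\gamma_k \le c_0\gamma_b/c_k$, I would start from the base case: by definition $\gamma_0 = \tfrac{1}{c_0}\min\{\cdot,\, c_0\gamma_b\} \le \gamma_b = c_0\gamma_b/c_0$. For the inductive step, I would drop the first argument of the $\min$ in the definition of $\gamma_k$ to obtain $\gamma_k \le \tfrac{c_{k-1}\gamma_{k-1}}{c_k}$, and then invoke the inductive hypothesis $c_{k-1}\gamma_{k-1}\le c_0\gamma_b$ to conclude. The monotonicity $\gamma_k \le \gamma_{k-1}$ falls out of the same line: $\gamma_k \le c_{k-1}\gamma_{k-1}/c_k \le \gamma_{k-1}$ since $c_k \ge c_{k-1}$ by assumption.

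For the lower bound, the key preliminary step is the standard consequence of $L$-smoothness for a function bounded below by $f_i^*$, namely
\begin{equation*}
\|\nabla f_i(x)\|^2 \le 2L\bigl(f_i(x)-f_i^*\bigr)\qquad \text{for all } x\in\R^d,
\end{equation*}
which implies $(f_{i_k}(x^k)-f_{i_k}^*)/\|\nabla f_{i_k}(x^k)\|^2 \ge 1/(2L)$ whenever the gradient is nonzero (and the bound is vacuous otherwise, since then $f_{i_k}(x^k)=f_{i_k}^*$ and $\gamma_k$ can be treated separately via continuity or by convention). Given this, I would induct on the quantity $c_{k-1}\gamma_{k-1} \ge \min\{1/(2L),\, c_0\gamma_b\}$: the base case $k=0$ holds since $c_{-1}\gamma_{-1}=c_0\gamma_b$, and the inductive step follows directly by multiplying the induction hypothesis $\gamma_{k-1}\ge \min\{1/(2c_{k-1}L),\, c_0\gamma_b/c_{k-1}\}$ by $c_{k-1}$. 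Plugging this into the defining $\min$ then yields $\gamma_k \ge \tfrac{1}{c_k}\min\{1/(2L),\, c_0\gamma_b\} = \min\{1/(2c_k L),\, c_0\gamma_b/c_k\}$.

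There is essentially no main obstacle here; the only subtle point is checking that the smoothness-based inequality $\|\nabla f_i(x)\|^2 \le 2L(f_i(x)-f_i^*)$ actually applies pointwise (it does, since each $f_i$ is $L$-smooth on all of $\R^d$ and bounded below by $f_i^*$), and handling the degenerate case $\nabla f_{i_k}(x^k)=0$, where the first term in the $\min$ is defined as $+\infty$ by convention so that $\gamma_k = c_{k-1}\gamma_{k-1}/c_k$ and the same bounds go through verbatim.
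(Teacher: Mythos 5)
The paper does not prove this lemma at all: it is imported verbatim as Lemma~1 of \citet{orvieto2022dynamics}, so there is no in-paper argument to compare against. Your proof is correct and supplies exactly the standard argument behind that cited result: the double induction on $c_k\gamma_k$ (showing $c_k\gamma_k\le c_0\gamma_b$ for the upper bound and $c_k\gamma_k\ge\min\{1/(2L),c_0\gamma_b\}$ for the lower bound, the latter via the smoothness inequality $\|\nabla f_i(x)\|^2\le 2L(f_i(x)-f_i^*)$), with monotonicity falling out of $\gamma_k\le c_{k-1}\gamma_{k-1}/c_k\le\gamma_{k-1}$. One small inaccuracy: in the degenerate case $\nabla f_{i_k}(x^k)=0$ you assert that $f_{i_k}(x^k)=f_{i_k}^*$, which need not hold for a nonconvex $f_{i_k}$ (the point could be a saddle or local maximum of $f_{i_k}$); however, this does not affect the proof, since your subsequent handling via the convention that the Polyak ratio is $+\infty$ when the gradient vanishes — so that $\gamma_k=c_{k-1}\gamma_{k-1}/c_k$ and both bounds go through unchanged — is the correct and sufficient resolution.
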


\begin{theorem}
    Assume that Assumptions~\ref{asmp:abc}-\ref{asmp:inter_error} hold. Let $\{c_k\}_{k\ge 0}$ be any positive non-decreasing sequence such that $c_k \ge \frac{1}{\alpha-\beta}.$ Let Then iterates of \algname{DecSPS} satisfy 
    \begin{eqnarray}
        \min_{0\le k < K}\E{f(x^k) - f^*}  &\le& \frac{2\wtilde{L}D^2c_{K-1}}{K(\alpha-\beta)}
        + \frac{2\beta}{\alpha-\beta}\sigma^2_{\rm int}
        + \frac{1}{K}\sum_{k=0}^{K-1}\frac{\sigma^2_{\rm int}}{(\alpha-\beta)c_k},
    \end{eqnarray}
    where $D^2 \eqdef \max\limits_{0\le k \le K}{\rm dist}(x^k,\cS)^2, \wtilde L \eqdef \max\left\{L, \frac{1}{2c_0\gamma_b}\right\}.$
\end{theorem}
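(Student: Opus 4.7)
The plan is to adapt the architecture of the proofs of Theorem~\ref{th:theorem_sgd} and Theorem~\ref{th:theorem_polyak} to the decreasing-stepsize regime, using the two-sided control on $\gamma_k$ from Lemma~\ref{lem:lemma1_orvieto}. First I would fix $x_p^k\in \proj(x^k,\cS)$ satisfying the \abccond, expand $\|x^{k+1}-x_p^k\|^2$, and substitute (i) the \abccond into the cross term and (ii) the \algname{DecSPS} property $\gamma_k^2\|\nabla f_{i_k}(x^k)\|^2 \le \gamma_k(f_{i_k}(x^k)-f_{i_k}^*)/c_k$ (the direct per-$k$ analog of Lemma~\ref{lem:lemma1_spsmax}) into the gradient-squared term, yielding
\[
\|x^{k+1}-x_p^k\|^2 \le \|x^k-x_p^k\|^2 - 2\alpha\gamma_k(f_{i_k}(x^k)-f_{i_k}(x_p^k)) + (2\beta + 1/c_k)\gamma_k(f_{i_k}(x^k)-f_{i_k}^*).
\]
Exactly as in the \algname{SGD} proof I would then split $f_{i_k}(x^k)-f_{i_k}^* = (f_{i_k}(x^k)-f_{i_k}(x_p^k)) + (f_{i_k}(x_p^k)-f_{i_k}^*)$ and use the assumption $c_k \ge 1/(\alpha-\beta)$ to guarantee $2\alpha-2\beta-1/c_k \ge \alpha-\beta > 0$.

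The main obstacle is that, unlike in the constant-stepsize \algname{SPS} proof, the surviving term $\gamma_k(f_{i_k}(x^k)-f_{i_k}(x_p^k))$ has a sign-indefinite factor and $\gamma_k$ is correlated with $i_k$, so the deterministic lower bound $\gamma_k \ge 1/(2c_k\wtilde{L})$ from Lemma~\ref{lem:lemma1_orvieto} cannot be applied directly inside the expectation. The resolution is to split this term once more as $\gamma_k(f_{i_k}(x^k)-f_{i_k}^*) - \gamma_k(f_{i_k}(x_p^k)-f_{i_k}^*)$: each summand now multiplies a non-negative function-value gap, so $\gamma_k \ge 1/(2c_k\wtilde{L})$ may be invoked on the first (contributing the useful progress $\propto f(x^k)-f^*$) and $\gamma_k \le c_0\gamma_b/c_k$ on the second. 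The same upper bound also handles the positive noise piece $(2\beta+1/c_k)\gamma_k(f_{i_k}(x_p^k)-f_{i_k}^*)$. Taking full expectation and using $\E{f_{i_k}(x^k)-f_{i_k}^*} = f(x^k)-f^* + \sigma_{\rm int}^2$ together with $\E{f_{i_k}(x_p^k)-f_{i_k}^*} = \sigma_{\rm int}^2$ produces a per-iterate inequality of the form
\[
\tfrac{\alpha-\beta}{2c_k\wtilde{L}}(f(x^k)-f^*) \le \tilde{d}_k - \tilde{d}_{k+1} + N_k\,\sigma_{\rm int}^2,
\]
where $\tilde{d}_k \eqdef \E{{\rm dist}(x^k,\cS)^2}$ and $N_k$ collects the noise contributions.

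After multiplying through by $2c_k\wtilde{L}/(\alpha-\beta)$, the noise coefficient becomes a sum of four pieces whose combination simplifies using the identity $2\wtilde{L}c_0\gamma_b \le 1$ (immediate from $\wtilde{L} = \max\{L,1/(2c_0\gamma_b)\}$). Concretely, the negative contribution $-1$ produced by $-\tfrac{\alpha-\beta}{2c_k\wtilde{L}}\sigma_{\rm int}^2$ cancels the $2\wtilde{L}c_0\gamma_b \le 1$ term coming from the projection-side noise, and the remaining two pieces collapse to exactly $\frac{2\beta}{\alpha-\beta} + \frac{1}{(\alpha-\beta)c_k}$. This cancellation is the key algebraic step that makes the coefficients on $\sigma_{\rm int}^2$ match the announced form.

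Finally, summing from $k=0$ to $K-1$ leaves a weighted ``telescoping'' sum $\sum_{k=0}^{K-1} c_k(\tilde{d}_k - \tilde{d}_{k+1})$. I would handle this by Abel summation: rewriting as $c_0\tilde{d}_0 + \sum_{k=1}^{K-1}(c_k - c_{k-1})\tilde{d}_k - c_{K-1}\tilde{d}_K$ and using monotonicity of $\{c_k\}$ together with $\tilde{d}_k \le D^2$ to bound the sum by $c_{K-1}D^2$. Dividing by $K$ and applying $\min_k \le \tfrac{1}{K}\sum_k$ yields the announced bound. As is standard in this literature, the $D^2$ appearing in the statement is the pathwise maximum, so the conclusion is really to be interpreted with $\E{D^2}$ in the distance term.
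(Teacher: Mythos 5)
Your setup follows the paper up to the intermediate recursion
\begin{equation*}
{\rm dist}(x^{k+1},\cS)^2 \le {\rm dist}(x^k,\cS)^2 - \gamma_k\Bigl(2\alpha-2\beta-\tfrac{1}{c_k}\Bigr)\bigl(f_{i_k}(x^k)-f_{i_k}^*\bigr) + 2\alpha\gamma_k\bigl(f_{i_k}(x_p^k)-f_{i_k}^*\bigr),
\end{equation*}
but from there your route breaks on a concrete algebraic error. You propose to replace the random stepsize by its deterministic sandwich from Lemma~\ref{lem:lemma1_orvieto} --- the lower bound $\gamma_k \ge \tfrac{1}{2c_k\wtilde L}$ on the progress term and the upper bound $\gamma_k \le \tfrac{c_0\gamma_b}{c_k}$ on the noise term --- and then rescale by $\tfrac{2c_k\wtilde L}{\alpha-\beta}$, claiming the coefficients collapse via the identity $2\wtilde L c_0\gamma_b \le 1$. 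That identity is false: since $\wtilde L = \max\{L, \tfrac{1}{2c_0\gamma_b}\}$, one has $2\wtilde L c_0\gamma_b = \max\{2Lc_0\gamma_b,\,1\} \ge 1$, with equality only when $2Lc_0\gamma_b\le 1$. Carrying your steps through honestly, the noise coefficient after rescaling is $\tfrac{4\alpha\wtilde L c_0\gamma_b}{\alpha-\beta}-1$, which is bounded below by $\tfrac{\alpha+\beta}{\alpha-\beta}$ and grows without bound with $L$; it cannot be reduced to the stated $\tfrac{2\beta}{\alpha-\beta}+\tfrac{1}{(\alpha-\beta)c_k}$. The structural reason is that applying the lower bound to the progress term and the upper bound to the noise term charges you the ratio of the two bounds, namely $2\wtilde L c_0\gamma_b$, as a multiplicative loss on the noise --- so at best you would prove a strictly weaker theorem with an $\alpha\wtilde L c_0\gamma_b$--dependent neighbourhood, not the stated one.

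The paper sidesteps this entirely by dividing the recursion by $\gamma_k$ \emph{before} any bounding. This makes the noise term $\bigl(2\beta+\nicefrac{1}{c_k}\bigr)\bigl(f_{i_k}(x_p^k)-f_{i_k}^*\bigr)$ completely stepsize--free, so its expectation is exactly $\bigl(2\beta+\nicefrac{1}{c_k}\bigr)\sigma^2_{\rm int}$ and, after dividing by $\alpha-\beta$, matches the theorem verbatim; the bound $\nicefrac{1}{\gamma_k}\le 2c_k\wtilde L$ is then invoked only once, on the telescoped distance term $\tfrac{D^2}{(\alpha-\beta)\gamma_{K-1}}\le \tfrac{2\wtilde L D^2 c_{K-1}}{\alpha-\beta}$, where it costs nothing on the non--vanishing neighbourhood. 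The weighted telescoping is handled there by the monotonicity $\gamma_{k+1}\le\gamma_k$ (your Abel--summation treatment of $\sum_k c_k(\tilde d_k-\tilde d_{k+1})$ is the analogous and equally valid device, and your closing remark that $D^2$ is a pathwise maximum is consistent with the paper's own remark on the bounded--iterates assumption). To repair your proof, replace the sandwich argument by this division step; everything else in your outline then goes through.
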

\begin{proof}
    From the definition of the stepsize we have 
    \begin{eqnarray*}
        \gamma_k \le \frac{1}{c_k} \frac{f_{i_k}(x^k) - f_{i_k}^*}{\|\nabla f_{i_k}(x^k)\|^2}.
    \end{eqnarray*}
    Therefore, we have 
    \begin{eqnarray*}
        \gamma_k^2\|\nabla f_{i_k}(x^k)\|^2 \le \frac{\gamma_k}{c_k}(f_{i_k}(x^k) - f_{i_k}^*).
    \end{eqnarray*}
    Let $x_p^k = \proj(x^k, \cS)$ which satisfies \Cref{asmp:abc}. Now we have 
    \begin{eqnarray*}
        {\rm dist}(x^{k+1}, \cS)^2 &\le& \|x^{k+1} - x_p^k\|^2\\
        &\le& {\rm dist}(x^k,\cS)^2 
        - 2\gamma_k\left<\nabla f_{i_k}(x^k), x^k-x_p^k\right>
        + \frac{\gamma_k}{c_k}(f_{i_k}(x^k) - f_{i_k}^*).
    \end{eqnarray*}
    Using \abccond we get
    \begin{eqnarray*}
        {\rm dist}(x^{k+1}, \cS)^2 &\le& \|x^{k+1} - x_p^k\|^2\\
        &\le& {\rm dist}(x^k,\cS)^2 
        - 2\alpha \gamma_k(f_{i_k}(x^k)-f_{i_k}(x_p^k)) + 2\beta\gamma_k(f_{i_k}(x^k)-f_{i_k}^*)\\
        && \;+\; \frac{\gamma_k}{c_k}(f_{i_k}(x^k) - f_{i_k}^*)\\
        &=& {\rm dist}(x^k,\cS)^2 - \gamma_k\left(2\alpha - 2\beta - \nicefrac{1}{c_k}\right)(f_{i_k}(x^k) - f_{i_k}(x_p^k))\\
        && \;+\; \gamma_k(2\beta + \nicefrac{1}{c_k})(f_{i_k}(x_p^k) - f_{i_k}^*).
    \end{eqnarray*}
    Let us divide both sides by $\gamma_k > 0$
     \begin{eqnarray*}
        \frac{{\rm dist}(x^{k+1}, \cS)^2}{\gamma_k} &\le& 
        \frac{{\rm dist}(x^k,\cS)^2}{\gamma_k} - \left(2\alpha - 2\beta - \nicefrac{1}{c_k}\right)(f_{i_k}(x^k) - f_{i_k}(x_p^k))\\
        && \;+\; (2\beta + \nicefrac{1}{c_k})(f_{i_k}(x_p^k) - f_{i_k}^*).
    \end{eqnarray*}
    Since by hypothesis $c_k \ge \frac{1}{\alpha-\beta}$, we get $2\alpha-2\beta -\nicefrac{1}{c_k} \ge \alpha - \beta$. Therefore, we get
    \begin{eqnarray*}
        f_{i_k}(x^k) - f_{i_k}(x_p^k) &\le& 
        \frac{{\rm dist}(x^k,\cS)^2}{(\alpha-\beta)\gamma_k} -  \frac{{\rm dist}(x^{k+1}, \cS)^2}{(\alpha-\beta)\gamma_k}
        + \frac{(2\beta + \nicefrac{1}{c_k})}{\alpha-\beta}(f_{i_k}(x_p^k) - f_{i_k}^*).
    \end{eqnarray*}
    Summing from $k=0$ to $K-1$ we get
    \begin{eqnarray*}
        \sum_{k=0}^{K-1} f_{i_k}(x^k) - f_{i_k}(x_p^k) &\le& \sum_{k=0}^{K-1} \frac{{\rm dist}(x^k,\cS)^2}{(\alpha-\beta)\gamma_k} 
        - \sum_{k=0}^{K-1}\frac{{\rm dist}(x^{k+1}, \cS)^2}{(\alpha-\beta)\gamma_k}\\
        && \;+\; 
        \sum_{k=0}^{K-1}\frac{2\beta}{\alpha-\beta}(f_{i_k}(x_p^k) - f_{i_k}^*)
        + \sum_{k=0}^{K-1}\frac{1}{(\alpha-\beta)c_k}(f_{i_k}(x_p^k) - f_{i_k}^*)\\
        &\le& \frac{{\rm dist}(x^0,\cS)^2}{(\alpha-\beta)\gamma_0} + \sum_{k=1}^{K-1} \frac{{\rm dist}(x^k,\cS)^2}{(\alpha-\beta)\gamma_k} 
        - \sum_{k=0}^{K-2}\frac{{\rm dist}(x^{k+1}, \cS)^2}{(\alpha-\beta)\gamma_k}\\
        && \;-\; \frac{{\rm dist}(x^K, \cS)^2}{(\alpha-\beta)\gamma_{K-2}}
        + \sum_{k=0}^{K-1}\frac{2\beta}{\alpha-\beta}(f_{i_k}(x_p^k) - f_{i_k}^*)
        \\
        && \;+\; 
        \sum_{k=0}^{K-1}\frac{1}{(\alpha-\beta)c_k}(f_{i_k}(x_p^k) - f_{i_k}^*).
    \end{eqnarray*}
    Here we use the fact that $\gamma_k$ is a non-increasing sequence with $k.$ We continue as follows
    \begin{eqnarray*}
        \sum_{k=0}^{K-1} f_{i_k}(x^k) - f_{i_k}(x_p^k)  &\le& \frac{{\rm dist}(x^0,\cS)^2}{(\alpha-\beta)\gamma_0} + \sum_{k=0}^{K-2} \left(\frac{1}{\gamma_{k+1}} - \frac{1}{\gamma_k}\right)\frac{{\rm dist}(x^{k+1},\cS)^2}{\alpha-\beta} 
        \\
        &&
        \;+\; \sum_{k=0}^{K-1}\frac{2\beta}{\alpha-\beta}(f_{i_k}(x_p^k) - f_{i_k}^*)
        + \sum_{k=0}^{K-1}\frac{1}{(\alpha-\beta)c_k}(f_{i_k}(x_p^k) - f_{i_k}^*)\\
        &\le& \frac{D^2}{\alpha-\beta}\left(\frac{1}{\gamma_0} + \sum_{k=0}^{K-2} \left(\frac{1}{\gamma_{k+1}} - \frac{1}{\gamma_k}\right)\right)\\
        &&
        \;+\; \sum_{k=0}^{K-1}\frac{2\beta}{\alpha-\beta}(f_{i_k}(x_p^k) - f_{i_k}^*)
        + \sum_{k=0}^{K-1}\frac{1}{(\alpha-\beta)c_k}(f_{i_k}(x_p^k) - f_{i_k}^*)\\
        &\le& \frac{D^2}{(\alpha-\beta)\gamma_{K-1}}
        + \sum_{k=0}^{K-1}\frac{2\beta}{\alpha-\beta}(f_{i_k}(x_p^k) - f_{i_k}^*)\\
        &&
        \;+\; 
        \sum_{k=0}^{K-1}\frac{1}{(\alpha-\beta)c_k}(f_{i_k}(x_p^k) - f_{i_k}^*).
    \end{eqnarray*}
    Here we use the fact that $\nicefrac{1}{\gamma_{k+1}} - \nicefrac{1}{\gamma_k} \ge 0.$ From \Cref{lem:lemma1_orvieto} we have 
    \begin{eqnarray*}
        \frac{1}{\gamma_k} \le c_k \underbrace{\max\left\{2L, \frac{1}{c_0\gamma_b}\right\}}_{\eqdef \wtilde L}.
    \end{eqnarray*}
    Therefore, we continue as follows 
    \begin{eqnarray*}
        \frac{1}{K}\sum_{k=0}^{K-1} f_{i_k}(x^k) - f_{i_k}(x_p^k)  &\le& \frac{2\wtilde{L}D^2c_{K-1}}{K(\alpha-\beta)}
        + \frac{1}{K}\sum_{k=0}^{K-1}\frac{2\beta}{\alpha-\beta}(f_{i_k}(x_p^k) - f_{i_k}^*)\\
        &&
        \;+\; 
        \frac{1}{K}\sum_{k=0}^{K-1}\frac{1}{(\alpha-\beta)c_k}(f_{i_k}(x_p^k) - f_{i_k}^*).
    \end{eqnarray*}
    Taking expectation we get
    \begin{eqnarray*}
        \min_{0\le k < K}\E{f(x^k) - f^*}  &\le& \frac{2\wtilde{L}D^2c_{K-1}}{K(\alpha-\beta)}
        + \frac{2\beta}{\alpha-\beta}\sigma^2_{\rm int}
        +
        \frac{1}{K}\sum_{k=0}^{K-1}\frac{\sigma^2_{\rm int}}{(\alpha-\beta)c_k}.
    \end{eqnarray*}
\end{proof}

\begin{corollary} Let $c_k = \sqrt{k+1}$ with $c_{-1}=c_0,$ then iterates of \algname{DecSPS} satisfy
\begin{eqnarray}
        \min_{0\le  k < K}\E{f(x^k) - f^*}  &\le& \frac{2\wtilde{L}D^2+2\sigma^2_{\rm int}}{\sqrt{K}(\alpha-\beta)}
        + \frac{2\beta}{\alpha-\beta}\sigma^2_{\rm int}\\
        &=&  \wtilde\cO\left(\frac{1}{\sqrt{K}} + \beta\sigma^2_{\rm int}\right).
    \end{eqnarray}
\end{corollary}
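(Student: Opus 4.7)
The plan is to derive the corollary as a direct specialization of the preceding theorem to the choice $c_k = \sqrt{k+1}$. The proof is essentially a plug-in followed by a standard harmonic-type sum estimate, so there is no new analytic idea — only routine verification and simplification.

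First, I would check that the sequence $c_k = \sqrt{k+1}$ satisfies the hypotheses of the theorem. It is positive and non-decreasing in $k$, and one has $c_k \ge 1 \ge \frac{1}{\alpha-\beta}$ provided $\alpha - \beta \ge 1$ (which is assumed throughout this regime, e.g., as in \Cref{ex:example_matrixfactor} and \Cref{ex:neural_net}; otherwise one works with $c_k = \max\{\sqrt{k+1},1/(\alpha-\beta)\}$, which differs from $\sqrt{k+1}$ only for finitely many $k$ and absorbs into constants). This legitimizes substituting into the bound
\begin{equation*}
    \min_{0\le k < K}\E{f(x^k) - f^*}  \le \frac{2\wtilde{L}D^2c_{K-1}}{K(\alpha-\beta)}
    + \frac{2\beta}{\alpha-\beta}\sigma^2_{\rm int}
    + \frac{1}{K}\sum_{k=0}^{K-1}\frac{\sigma^2_{\rm int}}{(\alpha-\beta)c_k}.
\end{equation*}

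Next, I would simplify the two $K$-dependent terms. For the first term, $c_{K-1} = \sqrt{K}$ gives
$\frac{2\wtilde L D^2 c_{K-1}}{K(\alpha-\beta)} = \frac{2\wtilde L D^2}{\sqrt{K}(\alpha-\beta)}$.
For the third term, I would use the standard integral bound
\begin{equation*}
    \sum_{k=0}^{K-1}\frac{1}{\sqrt{k+1}} \;\le\; \int_0^{K}\frac{dt}{\sqrt{t}} \;=\; 2\sqrt{K},
\end{equation*}
so that $\frac{1}{K}\sum_{k=0}^{K-1}\frac{\sigma^2_{\rm int}}{(\alpha-\beta)\sqrt{k+1}} \le \frac{2\sigma^2_{\rm int}}{\sqrt{K}(\alpha-\beta)}$. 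Combining the two pieces yields the stated bound
\begin{equation*}
    \min_{0\le k < K}\E{f(x^k) - f^*} \le \frac{2\wtilde L D^2 + 2\sigma^2_{\rm int}}{\sqrt{K}(\alpha-\beta)} + \frac{2\beta}{\alpha-\beta}\sigma^2_{\rm int},
\end{equation*}
and rewriting the leading term as $\wtilde\cO(1/\sqrt{K})$ gives the advertised asymptotic form.

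There is no real obstacle here: the corollary is purely a matter of plugging the specific $c_k$ into the theorem and estimating $\sum 1/\sqrt{k+1}$. The only thing to be mindful of is the hypothesis $c_k \ge 1/(\alpha-\beta)$ at small $k$; this either requires an implicit condition on $\alpha-\beta$ or a trivial truncation of the sequence at its lower values, neither of which affects the final rate.
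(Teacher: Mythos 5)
Your proposal is correct and follows essentially the same route as the paper: substitute $c_{K-1}=\sqrt{K}$ into the first term of the preceding theorem and bound the remaining sum via $\sum_{k=0}^{K-1}(k+1)^{-1/2}\le 2\sqrt{K}$. Your additional remark about verifying the hypothesis $c_k\ge 1/(\alpha-\beta)$ at small $k$ is a point the paper leaves implicit, but it does not alter the argument.
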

\begin{proof}
    The proof directly follows from \Cref{th:theorem_sgd_decreasing} using the choice $c_k  =\sqrt{k+1}$ and the fact that $\sum_{k=0}^{K-1}k^{-1/2} \le 2\sqrt{K}.$
\end{proof}

\begin{remark}
    It turned out that removing the bounded iterates assumption for \algname{DecSPS} is a challenging task. Nevertheless, we believe that this is rather the technicalities of Polyak stepsize, but not of our assumption. Moreover, we highlight that \citep{orvieto2022dynamics} removed bounded iterates assumption in restrictive strongly convex setting.
\end{remark}

\subsubsection{Convergence of \algname{NGN}}

\paragraph{Constant stepsize parameter.} In this section we present the proof of convergence of \algname{NGN} with constant stepsize parameter $\gamma$ under \abccond for completeness of the presentation.

\begin{algorithm}[!t]
    \caption{\algname{NGN}: Non-negative Gauss Newton}
    \label{alg:ngn}
    \begin{algorithmic}[1]
        \State \textbf{Input:} Stepsize parameter $\gamma$
        \For{$k = 0,1,2,\dots, K-1$}
        \State Sample $i_k \sim {\rm Unif}[n]$ and compute $\nabla f_{i_k}(x^k)$
        \State Compute \algname{NGN} stepsize $$\gamma_k \eqdef \frac{\gamma}{1+\frac{\gamma}{2f_{i_k}(x^k)}\|\nabla f_{i_k}(x^k)\|^2}$$
        \State Update model
        $$x^{k+1} = x^k - \gamma_k \nabla f_{i_k}(x^k)$$
        \EndFor
    \end{algorithmic}	
\end{algorithm}

\begin{lemma}[Lemma from \citep{orvieto2024adaptive}]\label{lem:lemmaD22}
Let $f_i$ be $L$-smooth for all $i$, then the stepsize of \algname{NGN} satisfies
\begin{eqnarray}
    \gamma_k \in \left[\frac{\gamma}{1+\gamma L}, \gamma\right].
\end{eqnarray}
    
\end{lemma}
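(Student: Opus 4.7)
The plan is to verify each of the two inequalities $\gamma_k \le \gamma$ and $\gamma_k \ge \gamma/(1+\gamma L)$ by directly inspecting the formula
\[
\gamma_k = \frac{\gamma}{1 + \tfrac{\gamma}{2 f_{i_k}(x^k)} \|\nabla f_{i_k}(x^k)\|^2}.
\]
The upper bound is essentially definitional: the denominator of $\gamma_k$ is $1$ plus a non-negative quantity (here I use that the \algname{NGN} setting assumes $f_i$ strictly positive, so that $\tfrac{\gamma}{2f_{i_k}(x^k)}\|\nabla f_{i_k}(x^k)\|^2 \ge 0$ is well defined), hence the denominator is at least $1$ and $\gamma_k \le \gamma$.

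For the lower bound, the key step is to show the denominator is at most $1 + \gamma L$, which is equivalent to the self-bounding inequality
\[
\|\nabla f_{i_k}(x^k)\|^2 \;\le\; 2 L\, f_{i_k}(x^k).
\]
This is the standard consequence of $L$-smoothness combined with non-negativity of $f_i$: applying the descent lemma from $x^k$ to the point $x^k - \tfrac{1}{L}\nabla f_{i_k}(x^k)$ yields
\[
f_{i_k}^* \;\le\; f_{i_k}\!\left(x^k - \tfrac{1}{L}\nabla f_{i_k}(x^k)\right) \;\le\; f_{i_k}(x^k) - \tfrac{1}{2L}\|\nabla f_{i_k}(x^k)\|^2,
\]
so $\|\nabla f_{i_k}(x^k)\|^2 \le 2L\bigl(f_{i_k}(x^k) - f_{i_k}^*\bigr) \le 2L f_{i_k}(x^k)$, since $f_{i_k}^* \ge 0$ by positivity of $f_i$. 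Plugging this bound into the denominator gives $1 + \tfrac{\gamma}{2f_{i_k}(x^k)}\|\nabla f_{i_k}(x^k)\|^2 \le 1 + \gamma L$, which yields $\gamma_k \ge \gamma/(1+\gamma L)$.

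There is no real obstacle here; the only mild subtlety is ensuring that the positivity of $f_i$ is invoked in two places (once to make the stepsize formula well-defined, and once to conclude $f_{i_k}^* \ge 0$ so that the smoothness-based self-bounding inequality passes from $f_{i_k}(x^k) - f_{i_k}^*$ to $f_{i_k}(x^k)$). Both of these are already part of the standing assumptions in the \algname{NGN} analysis.
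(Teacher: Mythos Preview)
Your argument is correct and is exactly the standard proof of this bound: the upper bound is immediate from non-negativity of the extra term in the denominator, and the lower bound follows from the self-bounding inequality $\|\nabla f_{i_k}(x^k)\|^2 \le 2L(f_{i_k}(x^k)-f_{i_k}^*) \le 2Lf_{i_k}(x^k)$, which is the descent-lemma consequence of $L$-smoothness together with positivity of $f_i$. The paper itself does not give a proof of this lemma at all---it simply quotes it from \cite{orvieto2024adaptive}---so there is nothing to compare against, but your derivation is precisely the argument used in that reference.
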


\begin{lemma}[Lemma from \citep{orvieto2024adaptive}]\label{lem:lemmaD23}
Let $f_i$ be $L$-smooth for all $i$, then the iterates of \algname{NGN} satisfy
\begin{eqnarray}
    \gamma_k^2\|\nabla f_{i_k}(x^k)\|^2 \le \frac{4\gamma L}{1+2\gamma L}\gamma_k(f_{i_k}(x^k)-f_{i_k}^*)
    + \frac{2\gamma^2L}{1+\gamma L}\max\left\{\frac{2\gamma L - 1}{2\gamma L+1}, 0\right\}f_{i_k}^*.
\end{eqnarray}
\end{lemma}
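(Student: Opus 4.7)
Let $f \eqdef f_{i_k}(x^k)$, $f^* \eqdef f_{i_k}^*$, and $g^2 \eqdef \|\nabla f_{i_k}(x^k)\|^2$. My first step is to unpack the \algname{NGN} stepsize into working identities. Inverting its definition yields
\begin{equation*}
    \gamma_k = \frac{2\gamma f}{2f + \gamma g^2},
    \qquad 1 - \frac{\gamma_k}{\gamma} = \frac{\gamma g^2}{2f + \gamma g^2},
    \qquad \gamma_k g^2 = 2f\!\left(1 - \frac{\gamma_k}{\gamma}\right);
\end{equation*}
multiplying the last identity by $\gamma_k$ gives the crucial closed form $\gamma_k^2 g^2 = 2f\gamma_k(1 - \gamma_k/\gamma)$. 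This is the algebraic object to control, using two auxiliary tools: $L$-smoothness of $f_{i_k}$, which yields $g^2 \le 2L(f-f^*)$, together with the envelope bounds $\gamma/(1+\gamma L) \le \gamma_k \le \gamma$ from \Cref{lem:lemmaD22}.

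Applying smoothness directly to $\gamma_k g^2$ gives the rough bound $\gamma_k^2 g^2 \le 2L\gamma_k^2(f-f^*)$. To sharpen it, I would split on whether $2\gamma L \le 1$ or $2\gamma L > 1$. In the first regime, simply replacing one factor of $\gamma_k$ by $\gamma$ produces $\gamma_k^2 g^2 \le 2L\gamma\gamma_k(f-f^*)$, and the elementary equivalence $2L\gamma \le 4\gamma L/(1+2\gamma L) \Leftrightarrow 2\gamma L \le 1$ delivers the first announced term; since the $\max\{\cdot, 0\}$ factor equals $0$ here, the second term vanishes as advertised.

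In the opposite regime $2\gamma L > 1$, the coefficient $2L\gamma$ exceeds the target $4\gamma L/(1+2\gamma L)$, and the excess must be absorbed into an $f^*$-contribution. The mechanism is the decomposition $f = f^* + (f-f^*)$ inside the closed form,
\begin{equation*}
    \gamma_k^2 g^2 \,=\, 2(f-f^*)\gamma_k\!\left(1-\tfrac{\gamma_k}{\gamma}\right) + 2f^*\gamma_k\!\left(1-\tfrac{\gamma_k}{\gamma}\right).
\end{equation*}
I would bound $(1-\gamma_k/\gamma)$ in each piece differently: in the first piece via the smoothness-derived estimate $1-\gamma_k/\gamma \le L\gamma_k(f-f^*)/f$ (which keeps the term inside $\gamma_k(f-f^*)$), and in the second via the stepsize estimate $1-\gamma_k/\gamma \le \gamma L/(1+\gamma L)$ from \Cref{lem:lemmaD22} (which opens it up to a constant multiple of $f^*$ after using $\gamma_k \le \gamma$). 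The precise prefactor $(2\gamma L - 1)/(2\gamma L + 1)$ should then emerge from choosing the convex combination of these two estimates tuned so that the $\gamma_k(f-f^*)$-coefficient hits exactly $4\gamma L/(1+2\gamma L)$.

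\textbf{Main obstacle.} I expect the tight bookkeeping in the $2\gamma L > 1$ case to be the main technical hurdle: one must pick the correct weights in the convex combination of the smoothness and stepsize bounds so that both announced constants surface simultaneously, rather than settling for a weaker pair such as $\bigl(2\gamma L/(1+\gamma L),\ 2\gamma^2 L/(1+\gamma L)\bigr)$ that the naive split produces. The characteristic factor $(2\gamma L - 1)/(2\gamma L + 1)$ is the fingerprint of this balancing, and I expect it comes out of solving a small linear system in the weights that controls the allocation between $\gamma_k(f-f^*)$ and $f^*$.
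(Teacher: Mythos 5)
First, a remark on scope: the paper does not prove this lemma at all — it is imported verbatim from \citet{orvieto2024adaptive} — so there is no in-paper argument to compare against and your attempt has to stand on its own. Your setup is correct and is the natural starting point: the identity $\gamma_k^2\|\nabla f_{i_k}(x^k)\|^2=2f_{i_k}(x^k)\,\gamma_k(1-\gamma_k/\gamma)$ is right, as are the two auxiliary bounds $1-\gamma_k/\gamma\le L\gamma_k(f_{i_k}(x^k)-f_{i_k}^*)/f_{i_k}(x^k)$ (smoothness) and $1-\gamma_k/\gamma\le \gamma L/(1+\gamma L)$ (from \Cref{lem:lemmaD22}), and your treatment of the regime $2\gamma L\le 1$ is complete and correct.

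The gap is the regime $2\gamma L>1$, which you leave as a plan, and the plan as stated provably cannot close. Write $f=f_{i_k}(x^k)$, $h=f-f_{i_k}^*$, $x=\gamma L$. If you bound $1-\gamma_k/\gamma$ by the convex combination $\lambda\cdot L\gamma_k h/f+(1-\lambda)\cdot x/(1+x)$ and then split $f=h+f_{i_k}^*$, the coefficient of $\gamma_k h$ becomes (after using $\gamma_k\le\gamma$) $2\lambda x+2(1-\lambda)x/(1+x)$, and forcing it below $4x/(1+2x)$ requires $\lambda\le 1/\bigl(x(1+2x)\bigr)$; forcing the $f_{i_k}^*$-coefficient $2(1-\lambda)\gamma^2L/(1+\gamma L)$ below the target requires $\lambda\ge 2/(2x+1)$. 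These two constraints are compatible if and only if $2x\le 1$, i.e., the small linear system you propose to solve has \emph{no} solution exactly in the regime where you invoke it. The structural reason is that the envelope $1-\gamma_k/\gamma\le x/(1+x)$ is tight only when the stochastic gradient is large, whereas the $f_{i_k}^*$-term is problematic precisely when the gradient (and hence $1-\gamma_k/\gamma$) is small; any fixed convex combination of the two worst-case bounds discards this coupling. The factor $(2\gamma L-1)/(2\gamma L+1)$ instead comes from keeping the exact closed form: setting $u\eqdef\gamma\|\nabla f_{i_k}(x^k)\|^2/(2f)$ one has $\gamma_k^2\|\nabla f_{i_k}(x^k)\|^2=2\gamma fu/(1+u)^2$ subject to the single constraint $u\le \gamma Lh/f$, and the claimed constants emerge from analyzing this expression at the boundary $u=\gamma Lh/f$, where the deficit against the first term is proportional to $2\gamma L\,(f_{i_k}^*/f)-1$ and is absorbed into the $f_{i_k}^*$-term via $2\gamma Ls-1\le s(2\gamma L-1)$ for $s=f_{i_k}^*/f\le 1$. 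So the identity and the easy case are fine, but the mechanism you name for the hard case is not the one that produces the stated constants, and as described it fails.
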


\theoremngn*
\begin{proof}
    Let $x_p^k\in \proj(x^k,\cS)$ satisfying \Cref{asmp:abc}. Then we have
    \begin{eqnarray}
        \Ek{{\rm dist}(x^{k+1},\cS)^2} &\le& \Ek{\|x^{k+1} - x_p^k\|^2}\notag\\
        &=& \|x^k-x_p^k\|^2 - 2\Ek{\gamma_k\<\nabla f_{i_k}(x^k), x^k-x_p^k>} + \Ek{\gamma_k^2\|\nabla f_{i_k}(x^k)\|^2}\notag\\
        &\le& {\rm dist}(x^k, \cS)^2 
        - 2\alpha\Ek{\gamma_k(f_{i_k}(x^k) - f_{i_k}(x_p^k))} \notag\\
        && \;+\; 
        2\beta\Ek{\gamma_k(f_{i_k}(x^k)-f_{i_k}^*)}
        + \Ek{\gamma_k^2\|\nabla f_{i_k}(x^k)\|^2}.\label{eq:38}
    \end{eqnarray}
    From \Cref{lem:lemmaD23}
    \begin{eqnarray}\label{eq:39}
        \gamma_k^2\|\nabla f_{i_k}(x^k)\|^2 \le \frac{4\gamma L}{1+2\gamma L}\gamma_k(f_{i_k}(x^k)-f_{i_k}^*)
        + \frac{2\gamma^2L}{1+\gamma L}\max\left\{\frac{2\gamma L - 1}{2\gamma L+1}, 0\right\}f_{i_k}^*.
    \end{eqnarray}
    Plugging \eqref{eq:39} in \eqref{eq:38} we get
    \begin{eqnarray}
        \Ek{{\rm dist}(x^{k+1},\cS)^2} &\le& {\rm dist}(x^k, \cS)^2
        - 2\alpha\Ek{\gamma_k(f_{i_k}(x^k) - f_{i_k}(x_p^k))}\notag\\
        && \;+\; 2\beta\Ek{\gamma_k(f_{i_k}(x^k)-f_{i_k}^*)}\frac{4\gamma L}{1+2\gamma L}
        + \Ek{\gamma_k(f_{i_k}(x^k)-f_{i_k}^*)}\notag\\
        && \;+\; \frac{2\gamma^2L}{1+\gamma L}\max\left\{\frac{2\gamma L - 1}{2\gamma L+1}, 0\right\}\Ek{f_{i_k}^*}.\notag
    \end{eqnarray}
    We have $f_{i_k}(x^k) - f_{i_k}(x_p^k) = (f_{i_k}(x^k) - f_{i_k}^*) - (f_{i_k}(x_p^k) - f_{i_k}^*)$. Now we write $\gamma_k = \rho + \epsilon_k$ where $\rho$ is a constant independent of $i_k.$ Therefore, we have
    \begin{eqnarray}
        \Ek{{\rm dist}(x^{k+1},\cS)^2} &\le& {\rm dist}(x^k, \cS)^2
        - 2\alpha\rho\Ek{f_{i_k}(x^k) - f_{i_k}(x_p^k)}\notag \\
        && \;-\; 
        2\alpha\Ek{\epsilon_k(f_{i_k}(x^k) - f_{i_k}^*)}
        + 2\alpha\Ek{\epsilon_k(f_{i_k}(x_p^k) - f_{i_k}^*)}\notag\\
        &&\;+\; 
        2\beta\Ek{\gamma_k(f_{i_k}(x^k)-f_{i_k}^*)}
        + \frac{4\gamma L}{1+2\gamma L}\Ek{\gamma_k(f_{i_k}(x^k)-f_{i_k}^*)}\notag\\
        && \;+\; \frac{2\gamma^2L}{1+\gamma L}\max\left\{\frac{2\gamma L - 1}{2\gamma L+1}, 0\right\}\Ek{f_{i_k}^*}\notag \\
        &=& {\rm dist}(x^k, \cS)^2
        - 2\alpha\rho\Ek{f_{i_k}(x^k) - f_{i_k}(x_p^k)}\notag \\
        && \;-\: 2\Ek{\left(\alpha \epsilon_k - \left(\beta + \frac{2\gamma L}{1 + 2\gamma L}\right)\gamma_k\right)(f_{i_k}(x^k)-f_{i_k}^*)}\notag \\
        &&  \;+\;
        2\alpha\Ek{\epsilon_k(f_{i_k}(x_p^k) - f_{i_k}^*)}\notag\\
        && \;+\; \frac{2\gamma^2L}{1+\gamma L}\max\left\{\frac{2\gamma L - 1}{2\gamma L+1}, 0\right\}\Ek{f_{i_k}^*}.
    \end{eqnarray}
    We need to find $\rho$ such that 
    \begin{eqnarray}
        &&\alpha \epsilon_k - \left(\beta + \frac{2\gamma L}{1 + 2\gamma L}\right)\gamma_k \ge 0\notag\\
        && \alpha (\gamma_k - \rho)- \left(\beta + \frac{2\gamma L}{1 +2\gamma L}\right)\gamma_k \ge 0\notag\\
        && \gamma_k\left(\alpha - \beta - \frac{2\gamma L}{1+ 2\gamma L}\right) \ge \alpha \rho.\notag
    \end{eqnarray}
    Note that since $\gamma_k \ge \frac{\gamma}{1+\gamma L}$ from \Cref{lem:lemmaD22}, then it is enough if $\rho$ satisfies 
    \begin{eqnarray}
        \frac{\gamma}{1+\gamma L} \left(\alpha - \beta - \frac{2\gamma L}{1+ 2\gamma L}\right) \ge \alpha \rho\notag\\
         \frac{\gamma (2\gamma L(\alpha-\beta - 1) + \alpha-\beta)}{\alpha(1+\gamma L)(1+2\gamma L)} \ge \rho.\notag
    \end{eqnarray}
    Let us take this bound as a value of $\rho.$ Note that since $\alpha \ge \beta + 1$, then $\rho \ge 0.$ Therefore, the bound for $\epsilon_k$ is the following
    \begin{eqnarray}
        \epsilon_k &=& \gamma_k - \rho\notag\\
        &\le& \gamma - \frac{\gamma (2\gamma L(\alpha-\beta - 1) + \alpha-\beta)}{\alpha(1+\gamma L)(1+2\gamma L)}\notag\\
        &=& \gamma\left(\frac{\alpha(1+\gamma L)(1+2\gamma L) - 2\gamma L(\alpha-\beta - 1) - (\alpha-\beta)}{\alpha(1+\gamma L)(1+2\gamma L)}\right)\notag\\
        &=& L\gamma^2\frac{\alpha+2\alpha\gamma L + 2(\beta+1)}{\alpha(1+\gamma L)(1+2\gamma L)} +\gamma\frac{\beta}{\alpha(1+\gamma L)(1+2\gamma L)}\notag\\
        &\le& \underbrace{\frac{3L\gamma^2}{1+2\gamma L}}_{\eqdef \frac{1}{2}T_2(\gamma^2)}+\underbrace{\frac{\beta\gamma}{\alpha(1+\gamma L)(1+2\gamma L)}}_{\eqdef \frac{1}{2}\beta T_1(\gamma)}.\notag
    \end{eqnarray}
    Therefore, we get the following descent inequality
    \begin{eqnarray}
        \Ek{{\rm dist}(x^{k+1},\cS)^2} &\le& 
        {\rm dist}(x^k, \cS)^2
        - \underbrace{2\alpha\rho}_{\eqdef T_0(\gamma)}\Ek{f_{i_k}(x^k) - f_{i_k}(x_p^k)}\notag \\
        && \;+\; 2\alpha\Ek{\epsilon_k(f_{i_k}(x_p^k) - f_{i_k}^*)}
        + \underbrace{\frac{2\gamma^2L}{1+\gamma L}\max\left\{\frac{2\gamma L - 1}{2\gamma L+1}, 0\right\}}_{\eqdef T_3(\gamma^2)}\Ek{f_{i_k}^*}\notag\\
        &\le&  {\rm dist}(x^k, \cS)^2
        - T_0(\gamma)\Ek{f_{i_k}(x^k) - f_{i_k}(x_p^k)}\notag\\
        && \;+\; T_2(\gamma^2)\alpha\Ek{f_{i_k}(x_p^k)-f_{i_k}^*}
        + \beta T_1(\gamma)\alpha\Ek{f_{i_k}(x_p^k)-f_{i_k}^*}\notag\\
        && \;+\; T_3(\gamma^2)\Ek{f_{i_k}^*}\notag\\
        &=& {\rm dist}(x^k, \cS)^2
        - T_0(\gamma)(f(x^k) - f^*)
        + T_2(\gamma^2)\alpha \Ek{f^*-f_{i_k}^*}\notag\\
        && \;+\;  \beta T_1(\gamma)\alpha \Ek{f^*-f_{i_k}^*}
        + T_3(\gamma^2)\Ek{f_{i_k}^*}\label{eq:ssdivonmvweor}.
    \end{eqnarray}
    Here we use the fact that $x_p^k$ is a minimizer of $f$ and independent of $i_k.$ Unrolling the recursion, we get
    \begin{eqnarray}
        \frac{1}{K}\left(\E{{\rm dist}(x^K,\cS)^2} - \E{{\rm dist}(x^0,\cS)^2}\right) &\le& -\frac{T_0(\gamma)}{K}\sum_{k=0}^{K-1}\E{f(x_k) - f^*}\notag\\
        && \;+\; E_1(\gamma) + E_2(\gamma^2),
    \end{eqnarray}
    where 
    \begin{eqnarray}
        E_1(\gamma) \eqdef \beta T_1(\gamma)\alpha \underbrace{\E{f^*-f_i^*}}_{ \sigma^2_{\rm int}}, \quad E_2(\gamma^2) \eqdef T_2(\gamma^2)\alpha \underbrace{\E{f^*-f_i^*}}_{\sigma^2_{\rm int}} + T_3(\gamma^2)\underbrace{\E{f_i^*}}_{\sigma^2_{\rm pos}}.
    \end{eqnarray}
    Therefore, we get
    \begin{eqnarray}
        \min_{0 \le k \le K-1}\E{f(x^k)-f^*} &\le& \frac{\E{{\rm dist}(x^0,\cS)^2}}{KT_0(\gamma)} 
        + \frac{E_1(\gamma)}{T_0(\gamma)} 
        + \frac{E_2(\gamma^2)}{T_0(\gamma)}\notag\\
        &\le& \frac{\E{{\rm dist}(x^0,\cS)^2}}{2\gamma K}\frac{(1+2\gamma L)^2}{2\gamma L(\alpha - \beta - 1) + \alpha - \beta} \notag\\
        && \;+\; \frac{3L\gamma\alpha}{2\gamma L(\alpha-\beta -1) + \alpha-\beta}(1+\gamma L)\sigma^2_{\rm int}\notag \\
        && \;+\: \frac{\gamma L}{2\gamma L(\alpha -\beta-1) + \alpha-\beta}\max\left\{2\gamma L-1,0\right\}\sigma^2_{\rm pos}\notag\\
        && \;+\; \frac{\beta\sigma^2_{\rm int}}{2\gamma L(\alpha-\beta-1) + \alpha-\beta}.
    \end{eqnarray}
    This finished the proof.
    \end{proof}
    \begin{remark}
    If all $f_i$ are convex, i.e. we can take $\alpha=1, \beta=0$ in $\alpha$-$\beta$-condition, then we get 
    \begin{eqnarray}
        \min_{0 \le k \le K-1}\E{f(x^k)-f^*} 
        &\le& \frac{\E{{\rm dist}(x^0,\cS)^2}}{2\gamma K}(1+2\gamma L)^2
        + 3L\gamma(1+\gamma L)\sigma^2_{\rm int}\notag\\
        && \;+\; 
        \gamma L\max\left\{2\gamma L-1,0\right\}\sigma^2_{\rm pos}\notag,
    \end{eqnarray}
    that coincides with the results in \citep{orvieto2024adaptive}.
    \end{remark}

\paragraph{Decreasing stepsize parameter.} Now we present the results with decreasing stepsize parameter.

\begin{theorem}\label{th:theorem_ngn_decreasing}
     Assume that Assumptions \ref{asmp:abc} with $\alpha\ge \beta+1$ and \ref{asmp:smoothness}-\ref{asmp:inter_error} hold. Assume that each function $f_i$ is positive and $\sigma^2_{\rm pos} < \infty$. Then the iterates of \algname{NGN} with decreasing stepsize of the form 
     $$\gamma_k = \frac{\wtilde{\gamma}_k}{1+\frac{\wtilde{\gamma}_k}{2f_{i_k}(x^k)}\|\nabla f_{i_k}(x^k)\|^2}, \quad \wtilde{\gamma}_k \eqdef \frac{\gamma}{\sqrt{k+1}}$$
     satisfy 
    \begin{eqnarray} \label{eq:theorem_ngn_decreasing}
        \min_{0\le k < K}\E{f(x^k) - f^*} &\le& 
        \frac{C_1}{\sqrt{K}}\E{{\rm dist}(x^{0},\cS)^2}
        + \frac{C_2\log(K+1)}{\sqrt{K}}\alpha\sigma^2_{\rm int}
        + \frac{C_3}{\alpha-\beta}\beta\sigma^2_{\rm int}
        \notag \\
        && \;+\; \frac{C_4\log(K+1)}{\sqrt{K}}\sigma^2_{\rm pos}\\
        &=& \wtilde\cO\left(\frac{1}{\sqrt{K}} + \beta\sigma^2_{\rm int}\right).
    \end{eqnarray}
    where $C_1, C_2, C_3$, and $C_4$ are defined in \eqref{eq:dsfanfeonf}.
\end{theorem}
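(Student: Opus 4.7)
The plan is to mirror the constant-stepsize proof of \Cref{th:theorem_ngn}, treating each iteration as if it used the (time-varying) parameter $\wtilde\gamma_k=\gamma/\sqrt{k+1}$, and then to sum over $k$ using the classical bounds
\[
\sum_{k=0}^{K-1}\wtilde\gamma_k \ge \tfrac{4\gamma}{5}\sqrt{K},\qquad \sum_{k=0}^{K-1}\wtilde\gamma_k^2 \le 2\gamma^2\log(K+1),
\]
already invoked in the proof of \Cref{th:theorem_sgd_decreasing}. The crucial observation is that \Cref{lem:lemmaD22,lem:lemmaD23} are entirely per-step statements: they only require $f_i$ to be $L$-smooth and use the actual \algname{NGN} update at that single step. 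Applied with the instantaneous parameter $\wtilde\gamma_k$ they produce exactly the descent inequality \eqref{eq:ssdivonmvweor} but with $\gamma$ replaced by $\wtilde\gamma_k$, i.e.
\[
\Ek{{\rm dist}(x^{k+1},\cS)^2} \le {\rm dist}(x^k,\cS)^2 - T_0(\wtilde\gamma_k)(f(x^k)-f^*) + \bigl[T_2(\wtilde\gamma_k^2)+\beta T_1(\wtilde\gamma_k)\bigr]\alpha\sigma^2_{\rm int} + T_3(\wtilde\gamma_k^2)\sigma^2_{\rm pos},
\]
where $T_0,T_1,T_2,T_3$ are the same functions identified in the proof of \Cref{th:theorem_ngn}.

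Next I would derive clean, uniform scaling bounds on the $T_j$'s using $\wtilde\gamma_k\le \gamma$: lower-bound $T_0(\wtilde\gamma_k)\ge c_0\,\wtilde\gamma_k$ (using $\alpha\ge \beta+1$, so that $2\wtilde\gamma_k L(\alpha-\beta-1)+(\alpha-\beta)\ge \alpha-\beta$), upper-bound $T_1(\wtilde\gamma_k)\le c_1\,\wtilde\gamma_k$, and upper-bound $T_2(\wtilde\gamma_k^2),T_3(\wtilde\gamma_k^2)\le c_2\,\wtilde\gamma_k^2$, for explicit constants depending only on $\alpha,\beta,L,\gamma$. Taking full expectations, summing from $0$ to $K-1$, telescoping the distance term, and dividing by $\sum_{k=0}^{K-1}T_0(\wtilde\gamma_k)$ (which is $\Theta(\gamma\sqrt{K})$) yields, via the standard min-vs-weighted-average argument,
\[
\min_{0\le k<K}\E{f(x^k)-f^*}\le \frac{\E{{\rm dist}(x^0,\cS)^2}}{\sum_k T_0(\wtilde\gamma_k)} + \frac{\sum_k T_2(\wtilde\gamma_k^2)}{\sum_k T_0(\wtilde\gamma_k)}\alpha\sigma^2_{\rm int} + \frac{\beta\sum_k T_1(\wtilde\gamma_k)}{\sum_k T_0(\wtilde\gamma_k)}\alpha\sigma^2_{\rm int} + \frac{\sum_k T_3(\wtilde\gamma_k^2)}{\sum_k T_0(\wtilde\gamma_k)}\sigma^2_{\rm pos}.
\]
Plugging in the scalings above gives the four terms in \eqref{eq:theorem_ngn_decreasing}: the first is $\cO(1/\sqrt{K})$; the $T_2/T_0$ and $T_3/T_0$ terms come out as $\cO(\gamma\log(K+1)/\sqrt{K})$; and, critically, because both $T_0$ and $T_1$ scale linearly in $\wtilde\gamma_k$, the ratio $\sum T_1/\sum T_0$ is a constant in $K$, yielding the non-vanishing $C_3\beta\sigma^2_{\rm int}/(\alpha-\beta)$ term exactly as in the constant-stepsize case.

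The main obstacles are therefore bookkeeping rather than conceptual: one must (i) verify that the per-step bounds from \Cref{lem:lemmaD22,lem:lemmaD23} are genuinely step-wise, so that substituting $\wtilde\gamma_k$ for $\gamma$ is legitimate; (ii) obtain clean two-sided linear bounds on $T_0,T_1$ and quadratic upper bounds on $T_2,T_3$ that are uniform in $k$, which requires using $\wtilde\gamma_k\le\gamma$ to absorb the $(1+\wtilde\gamma_k L)$ and $(1+2\wtilde\gamma_k L)$ factors into constants depending only on $\gamma L$; and (iii) assemble the four ratios into explicit $C_1,\dots,C_4$. The hypothesis $\alpha\ge\beta+1$ plays the same role as in \Cref{th:theorem_ngn}: it is what keeps $\rho(\wtilde\gamma_k)\ge0$, so that $T_0$ is positive for every $k$, allowing the weighted-average trick that produces the final $\min_k$ bound.
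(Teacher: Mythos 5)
Your proposal is correct and follows essentially the same route as the paper's proof: re-derive the per-step descent inequality \eqref{eq:ssdivonmvweor} with $\wtilde\gamma_k$ in place of $\gamma$ (the paper likewise relies on the fact that Lemmas~\ref{lem:lemmaD22} and~\ref{lem:lemmaD23} are step-wise), replace $T_0,\dots,T_3$ by uniform linear/quadratic bounds in $\wtilde\gamma_k$ using $\wtilde\gamma_k\le\gamma$ and $\alpha\ge\beta+1$, telescope, divide by $\sum_k \wtilde T_0(\wtilde\gamma_k)=\Theta(\gamma\sqrt{K})$, and apply the standard sums $\sum\wtilde\gamma_k\ge\tfrac{4\gamma}{5}\sqrt{K}$ and $\sum\wtilde\gamma_k^2\le 2\gamma^2\log(K+1)$. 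Your observation that $\sum T_1/\sum T_0$ is constant in $K$, producing the non-vanishing $\beta\sigma^2_{\rm int}$ term, is exactly how the paper obtains $C_3$.
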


\begin{proof}
Similarly to the proof with constant stepsize parameter we can obtain (similar to \eqref{eq:ssdivonmvweor})
    \begin{eqnarray*}
        \Ek{{\rm dist}(x^{k+1},\cS)^2} &\le&
        {\rm dist}(x^k, \cS)^2
        - T_0(\wtilde{\gamma}_k)(f(x^k) - f^*)
        + T_2(\wtilde{\gamma}_k^2)\alpha \sigma^2_{\rm int}
        \\
        && \;+\;  
        \beta T_1(\wtilde{\gamma}_k)\alpha \sigma^2_{\rm int}
        + T_3(\wtilde{\gamma}_k^2)\sigma^2_{\rm pos}.
    \end{eqnarray*}
    Note that we have 
    \begin{eqnarray*}
        T_0(\wtilde{\gamma}_k) &=&
        \frac{2\wtilde{\gamma}_k(2\wtilde{\gamma}_k L(\alpha-\beta-1) + \alpha-\beta)}{(1+\wtilde{\gamma}_kL)(1+2\wtilde{\gamma}_kL)}\\
        &\ge& \frac{2\wtilde{\gamma}_k(\alpha-\beta)}{(1+\wtilde{\gamma}_0L)(1+2\wtilde{\gamma}_0L)} =\colon \wtilde T_0(\wtilde\gamma_k),\\
        T_1(\wtilde\gamma_k) &=& \frac{2\wtilde\gamma_k}{\alpha(1+\wtilde\gamma_k L)(1+2\wtilde\gamma_k L)}\\
        &\le& \frac{2\wtilde\gamma_k}{\alpha} =\colon \wtilde T_1(\wtilde\gamma_k),\\
        T_2(\wtilde\gamma_k^2) &=& \frac{6L\wtilde\gamma_k^2}{1+2\wtilde\gamma_kL}\\
        &\le& 6L\wtilde\gamma_k^2 =\colon \wtilde T_2(\wtilde\gamma_k^2),\\
        T_3(\wtilde\gamma_k^2) &=& \frac{2\wtilde\gamma_k^2L}{1+\wtilde\gamma_k L}\max\left\{\frac{2\wtilde\gamma_kL-1}{2\wtilde\gamma_kL+1}, 0\right\}\\
        &\le& 2\wtilde\gamma_k^2L\max\{2\gamma L -1,0\} =\colon \wtilde T_3(\wtilde\gamma_k^2).
    \end{eqnarray*}
    Therefore, we can continue as follows
    \begin{eqnarray*}
        \E{{\rm dist}(x^{k+1},\cS)^2} &\le&
        \E{{\rm dist}(x^k, \cS)^2}
        - \wtilde T_0(\wtilde\gamma_k)\E{f(x^k) - f^*}
        + \wtilde T_2(\wtilde\gamma_k^2)\alpha\sigma^2_{\rm int}
        \\
        && \;+\;  
        \beta\wtilde T_1(\wtilde\gamma_k)\alpha\sigma^2_{\rm int}
        + \wtilde T_3(\wtilde\gamma_k^2)\sigma^2_{\rm pos}.
    \end{eqnarray*}
    This leads to
    \begin{eqnarray*}
        \sum_{k=0}^{K-1}\wtilde T_0(\wtilde\gamma_k)\E{f(x^k) - f^*} &\le& \E{{\rm dist}(x^{0},\cS)^2} + \sum_{k=0}^{K-1}\wtilde T_2(\wtilde\gamma_k^2)\alpha\sigma^2_{\rm int}
        + \sum_{k=0}^{K-1}\beta\wtilde T_1(\wtilde\gamma_k)\alpha\sigma^2_{\rm int}\\
        &&\;+\; \sum_{k=0}^{K-1}\wtilde T_3(\wtilde\gamma_k^2)\sigma^2_{\rm pos}.
    \end{eqnarray*}
    Therefore, we have
    \begin{eqnarray*}
        \min_{0\le k < K}\E{f(x^k) - f^*} &\le& 
        \frac{1}{\sum_{k=0}^{K-1}\wtilde T_0(\wtilde\gamma_k)}\E{{\rm dist}(x^{0},\cS)^2} 
        + \frac{\sum_{k=0}^{K-1}\wtilde T_2(\wtilde\gamma_k^2)}{\sum_{k=0}^{K-1}\wtilde T_0(\wtilde\gamma_k)}\alpha\sigma^2_{\rm int}\\
        &&\;+\;
        \frac{\beta\sum_{k=0}^{K-1}\wtilde T_1(\wtilde\gamma_k)}{\sum_{k=0}^{K-1}\wtilde T_0(\wtilde\gamma_k)}\alpha\sigma^2_{\rm int}
        + \frac{\sum_{k=0}^{K-1}\wtilde T_3(\wtilde\gamma_k^2)}{\sum_{k=0}^{K-1}\wtilde T_0(\wtilde\gamma_k)}\sigma^2_{\rm pos}.
    \end{eqnarray*}
    Following the results of \citep{orvieto2022dynamics} and \citep{garrigos2024handbook} we get
    \begin{eqnarray*}
        \sum_{k=0}^{K-1}\wtilde T_0(\wtilde\gamma_k) &=& \sum_{k=0}^{K-1} \frac{2\wtilde\gamma_k}{(1+\gamma L)(1+2\gamma L)}\\
        &\ge& \frac{8\gamma \sqrt{K}}{5(1+\gamma L)(1+2\gamma L)},\\
        \sum_{k=0}^{K-1}\wtilde T_2(\wtilde\gamma_k^2) &=& \sum_{k=0}^{K-1} 6L\wtilde\gamma_k^2\\
        &\le& 12L\gamma^2\log(K+1),\\
        \sum_{k=0}^{K-1}\wtilde T_3(\wtilde\gamma_k^2) &=& \sum_{k=0}^{K-1} 2\wtilde\gamma_k^2L\max\{2\gamma L-1,0\}\\
        &\le& 4\gamma^2 L\max\{2\gamma L-1,0\}\log(K+1),\\
        \frac{\sum_{k=0}^{K-1}\wtilde T_1(\wtilde\gamma_k)}{\sum_{k=0}^{K-1}\wtilde T_0(\wtilde\gamma_k)} &=& 
        \frac{\sum_{k=0}^{K-1}\frac{2\wtilde\gamma_k}{\alpha}}{\sum_{k=0}^{K-1}\frac{2\wtilde\gamma_k(\alpha-\beta)}{(1+\gamma L)(1+2\gamma L)}}\\
        &=& \frac{(1+\gamma L)(1+2\gamma L)}{\alpha(\alpha-\beta)}.
    \end{eqnarray*}
    Thus, the final result can be written as follows
    \begin{eqnarray*}
        \min_{0\le k < K}\E{f(x^k) - f^*} &\le& 
        \frac{5(1+\gamma L)(1+2\gamma L)}{8\gamma \sqrt{K}}\E{{\rm dist}(x^{0},\cS)^2} 
        + \frac{12L\gamma^2\log(K+1)}{\frac{8\gamma\sqrt{K}}{5(1+\gamma L)(1+2\gamma L)}}\alpha\sigma^2_{\rm int}\\
        &&\;+\;
        \frac{\sum_{k=0}^{K-1}\wtilde T_1(\wtilde\gamma_k)}{\sum_{k=0}^{K-1}\wtilde T_0(\wtilde\gamma_k)}\alpha\sigma^2_{\rm int}
        + \frac{\sum_{k=0}^{K-1}\wtilde T_3(\wtilde\gamma_k^2)}{\sum_{k=0}^{K-1}\wtilde T_0(\wtilde\gamma_k)}\sigma^2_{\rm pos}\\
        &=& 
        \frac{5(1+\gamma L)(1+2\gamma L)}{8\gamma \sqrt{K}}\E{{\rm dist}(x^{0},\cS)^2}\\ 
        && \;+\; \frac{15L\gamma(1+\gamma L)(1+2\gamma L)\log(K+1)}{2\sqrt{K}}\alpha\sigma^2_{\rm int}\\
        &&\;+\;
        \frac{(1+\gamma L)(1+2\gamma L)}{(\alpha-\beta)}\beta\sigma^2_{\rm int}\\
        && \;+\; \frac{5(1+\gamma L)(1+2\gamma L)\gamma L\max\{2\gamma L-1,0\}\log(K+1)}{2\sqrt{K}}\sigma^2_{\rm pos}.
    \end{eqnarray*}
    Now it is left to use the definitions of constants $C_1, C_2,$ and $C_3$
    \begin{eqnarray}\label{eq:dsfanfeonf}
        C_1 \eqdef \frac{5(1+\gamma L)(1+2\gamma L)}{8\gamma},\\
        C_2 \eqdef \frac{15L\gamma(1+\gamma L)(1+2\gamma L)}{2},\notag\\
        C_3 \eqdef (1+\gamma L)(1+2\gamma L),\notag\\
        C_4 \eqdef \frac{5(1+\gamma L)(1+2\gamma L)\gamma L\max\{2\gamma L-1,0\}}{2}.\notag
    \end{eqnarray}
\end{proof}

\subsubsection{Convergence of \algname{AdaGrad-norm-max}}\label{sec:adagrad_norm_max}

\begin{restatable}[]{theorem}{theoremadagradnorm}\label{th:theorem_adagrad_norm}
%\begin{theorem}
     Assume that Assumptions \ref{asmp:abc}-\ref{asmp:inter_error}  hold. Assume that for all $k\ge 0$ stochastic gradients satisfy $\|\nabla f_{i_k}(x^k)\|^2 \le G^2$ for some $G > 0$ and $b_{-1} \ge \frac{2L\gamma}{\alpha-\beta}$. Let $D^2 = \max_i \|\nabla f_i(x^0)\|^2$. Then the iterates of \algname{AdaGrad-norm-max} (Alg. \ref{alg:adagrad_norm}) satisfy 
    \begin{eqnarray}
        \min\limits_{0\le k < K}\E{f(x^k) - f^*} &\le& \frac{ {\rm dist}(x^k, \cS)^2}{\gamma K(\alpha-\beta)}\sqrt{b_{-1}^2 + G^2K}\notag \\
        && \;+\; \frac{2\alpha}{(\alpha-\beta)KD^2}\sigma^2_{\rm int}\sqrt{b_{-1}^2 + G^2K}\sqrt{b_{-1}^2+D^2(K+1)}\notag \\
        &=&\cO\left(\frac{1}{\sqrt{K}} + \alpha\sigma^2_{\rm int}\right).
    \end{eqnarray}
%\end{theorem} 
\end{restatable}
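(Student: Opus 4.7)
The plan is to adapt the standard AdaGrad-norm analysis to the $\alpha$-$\beta$-condition. I would start by expanding $\|x^{k+1}-x_p^k\|^2$ where $x_p^k\in\proj(x^k,\cS)$ realizes \Cref{asmp:abc}. Substituting the update rule, applying \Cref{asmp:abc} to the inner-product term, $L$-smoothness ($\|\nabla f_{i_k}(x^k)\|^2\le 2L(f_{i_k}(x^k)-f_{i_k}^*)$) to the squared-gradient term, and splitting $f_{i_k}(x^k)-f_{i_k}^*=(f_{i_k}(x^k)-f_{i_k}(x_p^k))+(f_{i_k}(x_p^k)-f_{i_k}^*)$ yields the one-step recursion
\begin{equation*}
\|x^{k+1}-x_p^k\|^2 \le {\rm dist}(x^k,\cS)^2 - 2\gamma_k(\alpha-\beta-L\gamma_k)(f_{i_k}(x^k)-f_{i_k}(x_p^k)) + 2(\gamma_k\beta+L\gamma_k^2)(f_{i_k}(x_p^k)-f_{i_k}^*).
\end{equation*}

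The main obstacle is that $\gamma_k=\gamma/b_k$ depends on the current random index $i_k$ through $\|\nabla f_{i_k}(x^k)\|$, so a naive conditional-expectation step fails. The remedy is to exploit deterministic two-sided bounds on $b_k$. Under the uniform gradient bound $G$, we have $b_k\le b_{\max}\eqdef\sqrt{b_{-1}^2+G^2K}$, hence $\gamma_k\ge \gamma/b_{\max}$ deterministically. Symmetrically, $b_k\ge b_{-1}$ forces $\gamma_k\le\gamma/b_{-1}$, and the assumption $b_{-1}\ge 2L\gamma/(\alpha-\beta)$ then yields $L\gamma_k\le(\alpha-\beta)/2$, so the coefficient $\alpha-\beta-L\gamma_k$ is at least $(\alpha-\beta)/2>0$. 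Because the cross-term in the display above has strictly negative sign, replacing $\gamma_k$ on that term with its deterministic lower bound $\gamma/b_{\max}$ only weakens the inequality, leaving a bound whose right-hand side is amenable to direct summation.

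Next I would take full expectation, sum from $k=0$ to $K-1$, and telescope to obtain
\begin{equation*}
\frac{\gamma(\alpha-\beta)}{b_{\max}}\sum_{k=0}^{K-1}\E{f(x^k)-f^*} \le \E{{\rm dist}(x^0,\cS)^2} + 2\E{\sum_{k=0}^{K-1}(\gamma_k\beta+L\gamma_k^2)(f_{i_k}(x_p^k)-f_{i_k}^*)},
\end{equation*}
where the cross-term simplifies to a multiple of $f(x^k)-f^*$ after conditioning on $\mathcal{F}_{k-1}$ and using that $x_p^k$ is independent of $i_k$. The residual sum is handled by pushing $(f_{i_k}(x_p^k)-f_{i_k}^*)$ through conditional expectation (yielding $\sigma_{\rm int}^2$) and by invoking the standard AdaGrad-type partial-sum estimate $\sum_{k=0}^{K-1}\gamma_k\le(\gamma/D^2)\sqrt{b_{-1}^2+D^2(K+1)}$, which for the ``-max'' variant rests on a built-in deterministic lower bound $b_k^2\ge b_{-1}^2+D^2(k+1)$ driven by $D^2=\max_i\|\nabla f_i(x^0)\|^2$.

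Finally, dividing both sides by $\gamma(\alpha-\beta)K/b_{\max}$ and taking the minimum over $k$ produces the two claimed terms, with $\beta$ absorbed into $\alpha$ via $\beta<\alpha$ in the final prefactor. I expect the subtlest bookkeeping step to be the partial-sum bound producing the $\sqrt{b_{-1}^2+D^2(K+1)}$ factor, since it depends on the specific deterministic lower bound on $b_k$ that distinguishes \algname{AdaGrad-norm-max} from vanilla \algname{AdaGrad-norm}; all other ingredients are routine given the $\alpha$-$\beta$-descent recursion and \Cref{asmp:inter_error}.
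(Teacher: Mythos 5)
Your strategy coincides with the paper's at every structural point: the same one-step expansion of $\|x^{k+1}-x_p^k\|^2$, the same use of the \abccond and of $L$-smoothness via $\|\nabla f_{i_k}(x^k)\|^2\le 2L(f_{i_k}(x^k)-f_{i_k}^*)$, the same deterministic two-sided control $b_{-1}\le b_k\le\sqrt{b_{-1}^2+G^2K}$ to decouple the data-dependent stepsize, the same use of $b_{-1}\ge \frac{2L\gamma}{\alpha-\beta}$ to keep the progress coefficient at least $\frac{\alpha-\beta}{2}$, the same partial-sum estimate built on $b_k^2\ge b_{-1}^2+D^2(k+1)$, and the same telescoping with $\Ek{f_{i_k}(x_p^k)}=f^*$.

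There is, however, one step that does not go through as written. You justify substituting the deterministic lower bound $\gamma/\sqrt{b_{-1}^2+G^2K}$ for $\gamma_k$ in the progress term by asserting that $-2\gamma_k(\alpha-\beta-L\gamma_k)\bigl(f_{i_k}(x^k)-f_{i_k}(x_p^k)\bigr)$ ``has strictly negative sign.'' But $x_p^k$ minimizes $f$, not $f_{i_k}$, so $f_{i_k}(x^k)-f_{i_k}(x_p^k)$ can be negative for individual indices (this is exactly the heterogeneity the \abccond is designed to accommodate, since the minimizers of the individual $f_i$ need not lie in $\cS$). Whenever that factor is negative the term is positive, and replacing $\gamma_k$ by a smaller deterministic quantity flips the direction of the inequality. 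The paper sidesteps this by first writing the progress term against the multiplier $f_{i_k}(x^k)-f_{i_k}^*$, which is non-negative by definition of $f_{i_k}^*$, performing the stepsize replacement there, and only afterwards passing back to $f_{i_k}(x^k)-f_{i_k}(x_p^k)$ by discarding the non-positive remainder $-\frac{\gamma(\alpha-\beta)}{\sqrt{b_{-1}^2+G^2K}}(f_{i_k}(x_p^k)-f_{i_k}^*)$. Your own splitting through $f_{i_k}^*$ already contains all the ingredients for this fix, so the gap is local and easily repaired, but the sign argument as stated is incorrect and the per-realization inequality you invoke would fail for those indices.
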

We see that if $K$ is large enough, then we recover the standard convex rate of Adagrad of order $\wtilde\cO(K^{-1/2})$ \citep{levy2017online}.

\begin{algorithm}[!t]
    \caption{\algname{AdaGrad-norm-max}}
    \label{alg:adagrad_norm}
    \begin{algorithmic}[1]
        \State \textbf{Input:} Stepsize parameter $\gamma > 0, c_{-1} = 0, b_{-1} \ge \frac{2L\gamma}{\alpha-\beta}$
        \For{$k = 0,1,2,\dots, K-1$}
        \State Sample $i_k \sim {\rm Unif}[n]$ and compute $\nabla f_{i_k}(x^k)$
        \State Compute \algname{AdaGrad-norm-max} stepsize
        \begin{eqnarray*}
            c_k^2 &\eqdef & \max\left\{c_{k-1}^2, \|\nabla f_{i_k}(x^k)\|^2\right\}\\
            b_k^2 &\eqdef&  b_{k-1}^2 + c_k^2, \quad \gamma_k = \frac{\gamma}{b_k}
        \end{eqnarray*}
        \State Update model
        $$x^{k+1} = x^k - \gamma_k \nabla f_{i_k}(x^k)$$
        \EndFor
    \end{algorithmic}	
\end{algorithm}

\begin{proof}
    Let $x_p^k = \proj(x^k, \cS)$ satysfying \Cref{asmp:abc}. Then we have 
    \begin{eqnarray}
        {\rm dist}(x^{k+1},\cS)^2 &\le& \|x^{k+1} - x_p^k\|^2\notag\\
        &=& \|x^k-x_p^k\|^2 - 2\gamma_k\<\nabla f_{i_k}(x^k), x^k-x_p^k> + \gamma_k^2\|\nabla f_{i_k}(x^k)\|^2\notag\\
        &\le& {\rm dist}(x^k, \cS)^2 
        - 2\alpha\gamma_k(f_{i_k}(x^k) - f_{i_k}(x_p^k)) + 2\beta\gamma_k(f_{i_k}(x^k)-f_{i_k}^*)\notag\\
        && \;+\; 
        \gamma_k^2\|\nabla f_{i_k}(x^k)\|^2\notag\\
        &\le& {\rm dist}(x^k, \cS)^2 
        - 2\alpha\gamma_k(f_{i_k}(x^k) - f_{i_k}(x_p^k))
        + 2\beta\gamma_k(f_{i_k}(x^k)-f_{i_k}^*)\notag\\
        && \;+\; 
        2\gamma_k^2L(f_{i_k}(x^k)-f_{i_k}^*)\notag\\
        &=& {\rm dist}(x^k, \cS)^2 
        - 2\alpha\gamma_k(f_{i_k}(x^k) - f_{i_k}^* + f_{i_k}^* - f_{i_k}(x_p^k))
        + 2\beta\gamma_k(f_{i_k}(x^k)-f_{i_k}^*)\notag\\
        && \;+\; 
        2\gamma_k^2L(f_{i_k}(x^k)-f_{i_k}^*)\notag\\
        &=& {\rm dist}(x^k, \cS)^2 
        - 2\gamma_k(\alpha-\beta - L\gamma_k)(f_{i_k}(x^k) - f_{i_k}^*)\notag\\
        && \;+\; 
        2\alpha\gamma_k(f_{i_k}(x^k_p)-f_{i_k}^*).\label{eq:sdvsjdnelq}
    \end{eqnarray}
    Note that we choose $b_{-1} \ge \frac{2L\gamma}{\alpha-\beta}.$ Since $b_k$ is inreasing sequence, then we have for any $k$ that $b_k \ge \frac{2L\gamma}{\alpha-\beta}$. Therefore, 
    \[
    L\gamma_k = L\frac{\gamma}{b_k} \le L\frac{\gamma}{b_{-1}} \le L\frac{\gamma}{\nicefrac{2L\gamma}{\alpha-\beta}} = \frac{\alpha-\beta}{2}.
    \]
    This means that 
    \[
    -2\gamma_k(\alpha-\beta - L\gamma_k) \le -2\gamma_k(\alpha-\beta-\nicefrac{\alpha-\beta}{2}) = -\gamma_k(\alpha-\beta).
    \]
    Thus, we can continue \eqref{eq:sdvsjdnelq} as follows
    \begin{eqnarray*}
        {\rm dist}(x^{k+1},\cS)^2 &\le& {\rm dist}(x^k, \cS)^2 
        - \frac{\gamma}{b_k}(\alpha-\beta)(f_{i_k}(x^k) - f_{i_k}^*) 
        + 2\alpha\frac{\gamma}{b_k}(f_{i_k}(x^k_p)-f_{i_k}^*).
    \end{eqnarray*}
    We know that $b_k$ is increasing sequence that satisfy
    \[
        \sqrt{b_{-1}^2 + D^2(k+1)} \le b_{k} \le b_{K-1} \le \sqrt{b_{-1}^2 + G^2K}.
    \]
    This leads together with the fact that both $f_{i_k}(x^k) - f_{i_k}^*$ and $f_{i_k}(x_p^k) - f_{i_k}^*$ are non-negative to 
    \begin{eqnarray*}
        {\rm dist}(x^{k+1},\cS)^2 &\le& {\rm dist}(x^k, \cS)^2 
        - \frac{\gamma}{\sqrt{b_{-1}^2 + G^2K}}(\alpha-\beta)(f_{i_k}(x^k) - f_{i_k}^*) \\
        && \;+\; 2\alpha\frac{\gamma}{\sqrt{b_{-1}^2 + D^2(k+1)}}(f_{i_k}(x^k_p)-f_{i_k}^*)\\
        &=& {\rm dist}(x^k, \cS)^2 
        - \frac{\gamma(\alpha-\beta)}{\sqrt{b_{-1}^2 + G^2K}}(f_{i_k}(x^k) - f_{i_k}(x_p^k)) \\
        && 
        \;+\; \frac{2\alpha\gamma}{\sqrt{b_{-1}^2 + D^2(k+1)}}(f_{i_k}(x^k_p)-f_{i_k}^*).
    \end{eqnarray*}
    Taking the conditional expectation we get
    \begin{eqnarray*}
        \frac{\Ek{f(x^k) - f^*}}{\sqrt{b_{-1}^2 + G^2K}} &\le& \frac{ {\rm dist}(x^k, \cS)^2}{\gamma(\alpha-\beta)}
        -  \frac{ \Ek{{\rm dist}(x^{k+1}, \cS)^2}}{\gamma(\alpha-\beta)}
        + \frac{2\alpha}{\sqrt{b_{-1}^2+D^2(k+1)}(\alpha-\beta)}\sigma^2_{\rm int},
    \end{eqnarray*}
    which leads to the following bound
    \begin{eqnarray*}
        \Ek{f(x^k) - f^*} &\le& \left(\frac{ {\rm dist}(x^k, \cS)^2}{\gamma(\alpha-\beta)}
        -  \frac{ \Ek{{\rm dist}(x^{k+1}, \cS)^2}}{\gamma(\alpha-\beta)}\right)\sqrt{b_{-1}^2 + G^2K}\\
        && \;+\; \frac{2\alpha}{\sqrt{b_{-1}^2+D^2(k+1)}(\alpha-\beta)}\sigma^2_{\rm int}\sqrt{b_{-1}^2 + G^2K}.
    \end{eqnarray*}
    Averaging over $K$ iterations we get
    \begin{eqnarray*}
        \min\limits_{0\le k < K}\E{f(x^k)-f^*} &\le& \frac{1}{K}\sum_{k=0}^{K-1}\E{f(x^k) - f^*}\\
        &\le &\frac{ {\rm dist}(x^k, \cS)^2}{\gamma K(\alpha-\beta)}\sqrt{b_{-1}^2 + G^2K}\\
        && \;+\; \frac{2\alpha}{(\alpha-\beta)K}\sigma^2_{\rm int}\sqrt{b_{-1}^2 + G^2K}\sum_{k=0}^{K-1}\frac{1}{\sqrt{b_{-1}^2+D^2(k+1)}}\\
        &\le& \frac{ {\rm dist}(x^k, \cS)^2}{\gamma K(\alpha-\beta)}\sqrt{b_{-1}^2 + G^2K}\\
        && \;+\; \frac{2\alpha}{(\alpha-\beta)KD^2}\sigma^2_{\rm int}\sqrt{b_{-1}^2 + G^2K}\sqrt{b_{-1}^2+D^2(K+1)}\\
        &=&\cO\left(\frac{1}{\sqrt{K}} + \alpha\sigma^2_{\rm int}\right).
    \end{eqnarray*}
\end{proof}

\section{Additional experiments}\label{sec:additional_experiments}

For all the experiments, we make use of PyTorch \citep{paszke2019pytorch} package. LSTM, MLP, CNN and Resnet experiments are performed using one NVIDIA GeForce RTX $3090$ GPU with a memory of $24$ GB. For training Algoperf and Pythia language models, we resort instead to 4xA100-SXM4 GPUs, with a memory of $40$ GB each, and employ data parallelism for efficient distributed training. When necessary, we disable CUDA non-deterministic operations, to allow consistency between runs with the same random seed.

\subsection{Half space learning}

Half Space Learning problem corresponds to the following optimization problem

\begin{eqnarray*}
    f(x) = \frac{1}{n}\sum_{i=1}^n\sigma(-b_i x^\top a_i) + \frac{\lambda}{2}\|x\|^2,
\end{eqnarray*}
where $\{a_i,b_i\}_{i=1}^n, a_i\in\R^d, y_i\in\{0,1\}$ is a given dataset, $\lambda=10^{-5},$ and $\sigma$ is a sigmoid function. For the test, we create a synthetic dataset that contains $20$ samples for both classes. We sample data points from normal distribution where each class has its own mean and variance value of $2.$ We use \algname{SGD} with learning rate $\frac{1}{4}$ and batch size $1$ for minimization task.\footnote{We use the implementation from \citep{daneshmand2018escaping} that can be found \url{https://github.com/archana95in/Escaping-saddles-using-Stochastic-Gradient-Descent}.}

We observe that the gradient norm becomes zero quite which means that \algname{SGD} trajectory goes through saddle point. This leads to possible negative angle between full gradient and direction to minimizer. Nevertheless, we demonstrate that even for such highly non-convex landscape with many saddle points our \abccond holds for large enough values of $\alpha$ and $\beta.$

\begin{figure}[t]
    \centering
    \begin{tabular}{cccc}
        \includegraphics[width=0.22\textwidth]{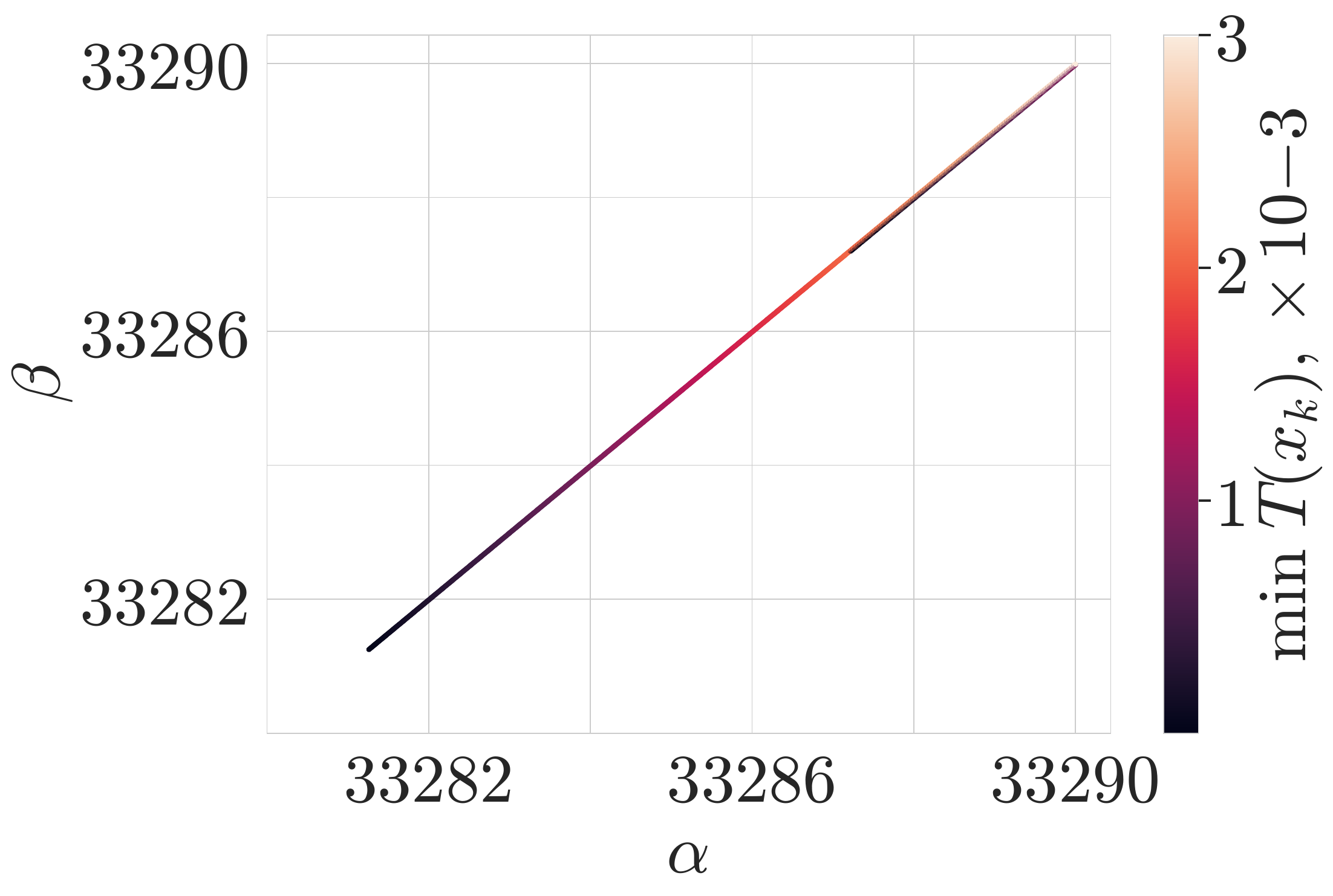} &
        \includegraphics[width=0.235\textwidth]{Plots/halfspace_learning_angle.pdf} &
        \includegraphics[width=0.22\textwidth]{Plots/halfspace_learning_full_loss.pdf} &
        \includegraphics[width=0.225\textwidth]{Plots/halfspace_learning_grad_norm.pdf}\\
        %{\tiny (a) Surface}  &
        %{\tiny (b) } &
        %{\tiny (c)} & 
        %{\tiny (d)}
    \end{tabular}
    \caption{Training for half-space learning problem with \algname{SGD}. Here $T(x_k) = \<\nabla f_{i_k}(x^k), x^k-x^K> - \alpha(f_{i_k}(x^k) - f_{i_k}(x^K)) - \beta f_{i_k}(x^k)$ assuming that $f_i^*\approx 0.000523853$; angle denotes $\angle(\nabla f(x^k), x^k-x^K).$} 
    \label{fig:halfspace_learning_additional}
\end{figure}

\subsection{Experiment setup from \cref{sec:motivation}}\label{sec:check_other_conditions}

We use 3 layer LSTM based model from \citet{hübler2023parameteragnostic}\footnote{The implementation can be found following the link \url{https://github.com/fhueb/parameter-agnostic-lzlo}}. The model for PTB dataset has $35441600$ parameters while for Wikitext-2 dataset the model has $44798534$ parameters. To train the model, we choose NSGD with momentum \citep{cutkosky2020nsgdm} with decaying stepsize and momentum parameters according to the experiment section from \citep{hübler2023parameteragnostic}. We train the model for $1000$ epochs with initial stepsize values $158$ and $900$ for PTB and Wikitext-2 datasets respectively. We switch off dropout during measuring stochastic gradients and losses for \abccond. We run the experiments for $7$ different random seeds and plot the mean with maximum and minimum fluctuations. 

In \Cref{fig:lstm_abc_additional}, we plot empirical aiming coefficient $\frac{\<\nabla f(x^k), x^k-x^K>}{f(x^k)}$ for full loss $f$; mean of stochastic losses across $7$ runs along with maximum and minimum fluctuations from the mean;  pairs of $(\alpha,\beta)$ that satisfy \abccond across all $7$ runs. We observe that for both datasets, there is a plateau at the beginning of the training. This part of the training corresponds to possible negative values of the empirical coefficient Aiming condition. After this, it becomes stable and positive.

Besides, we demonstrate that possible values of pairs of $(\alpha,\beta)$ that satisfy \abccond are large. We believe, this happens because the full loss $f$ has a minimum value of around $3.5$ while individual losses have $f_i^*$, i.e. the model is far from the interpolation regime (when $f^* = f_i^*)$. 

\begin{figure}[t]
    \centering
    \begin{tabular}{cccc}
        \includegraphics[width=0.22\textwidth]{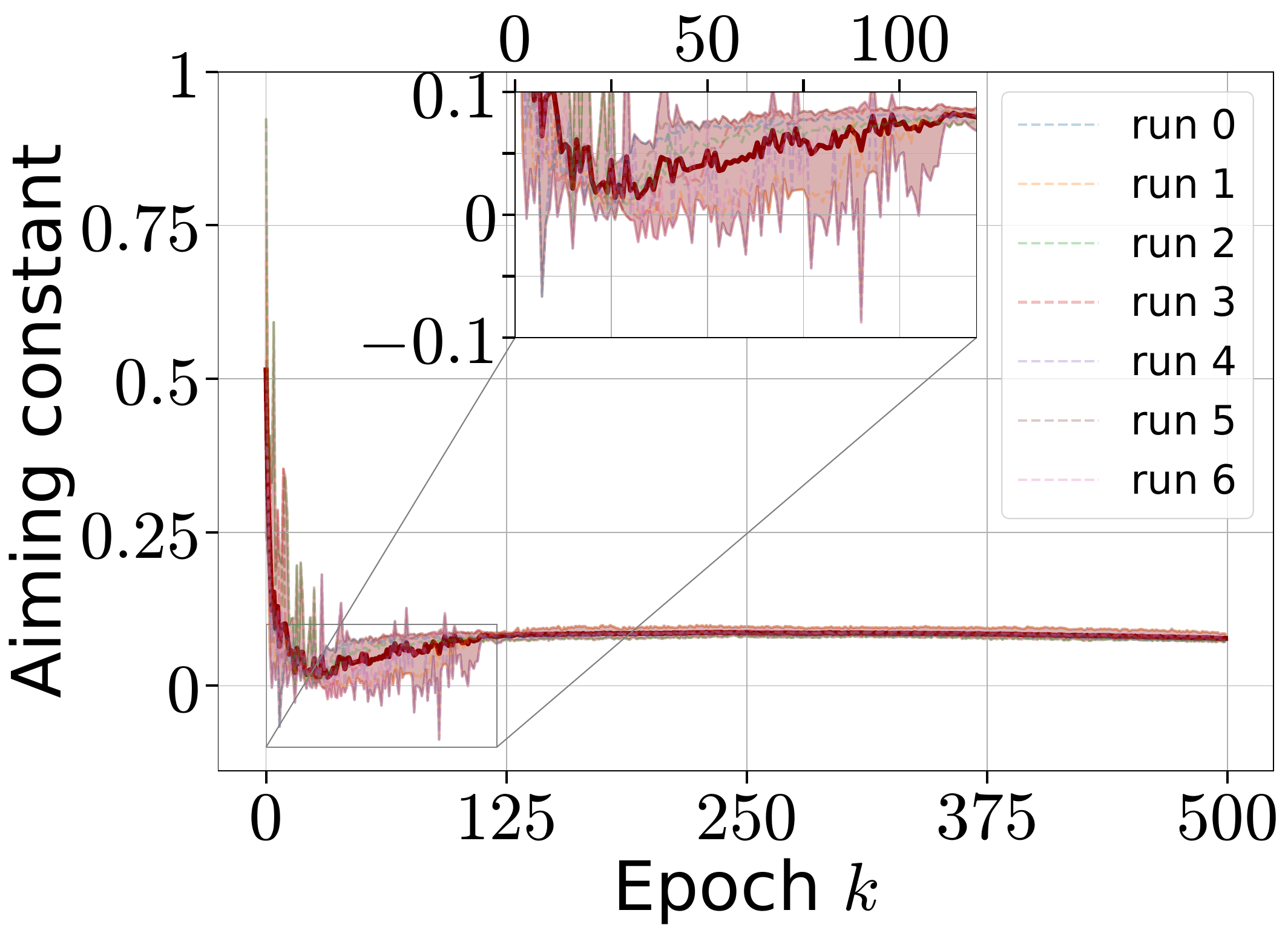} &
        \includegraphics[width=0.22\textwidth]{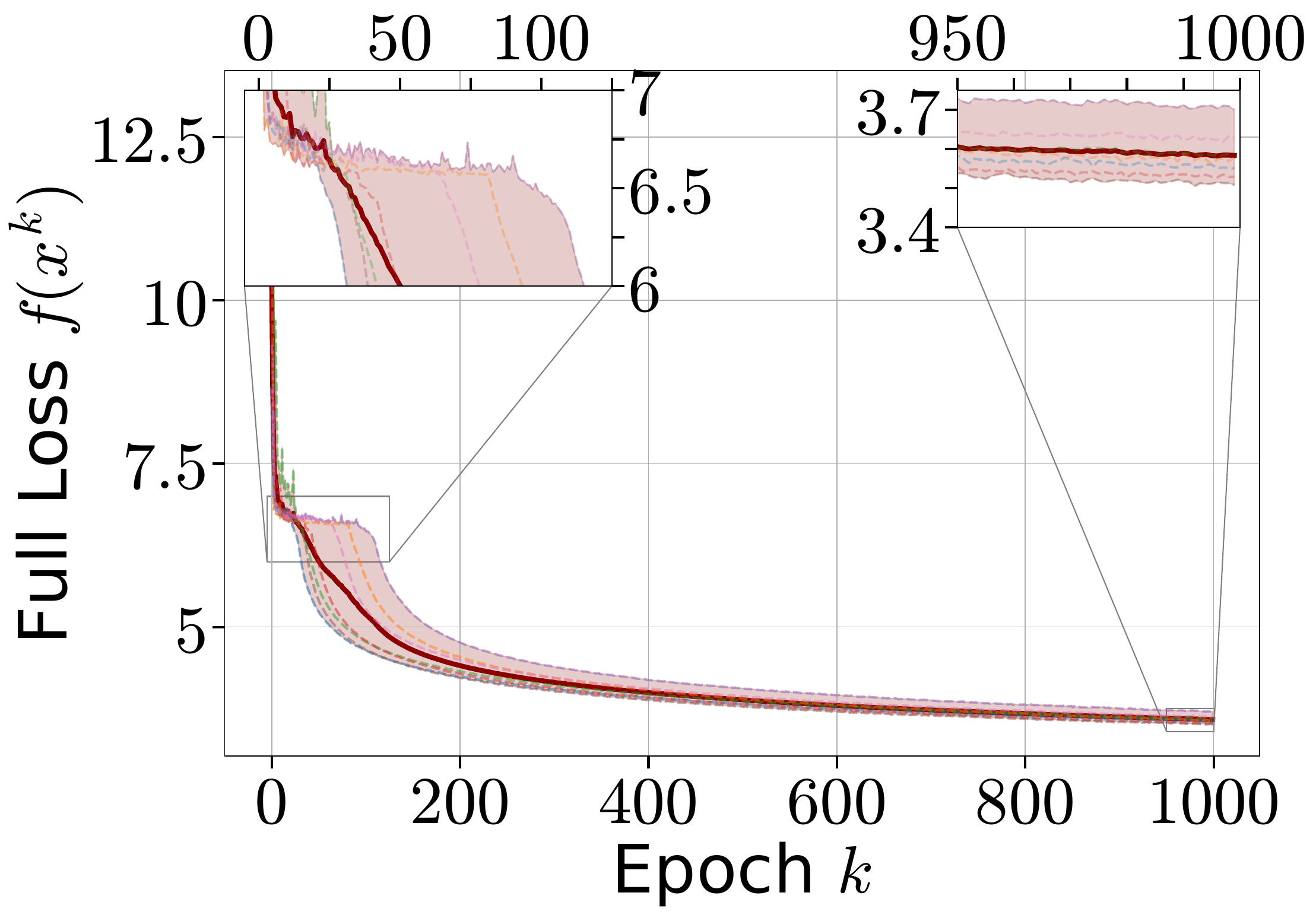} &
        \includegraphics[width=0.22\textwidth]{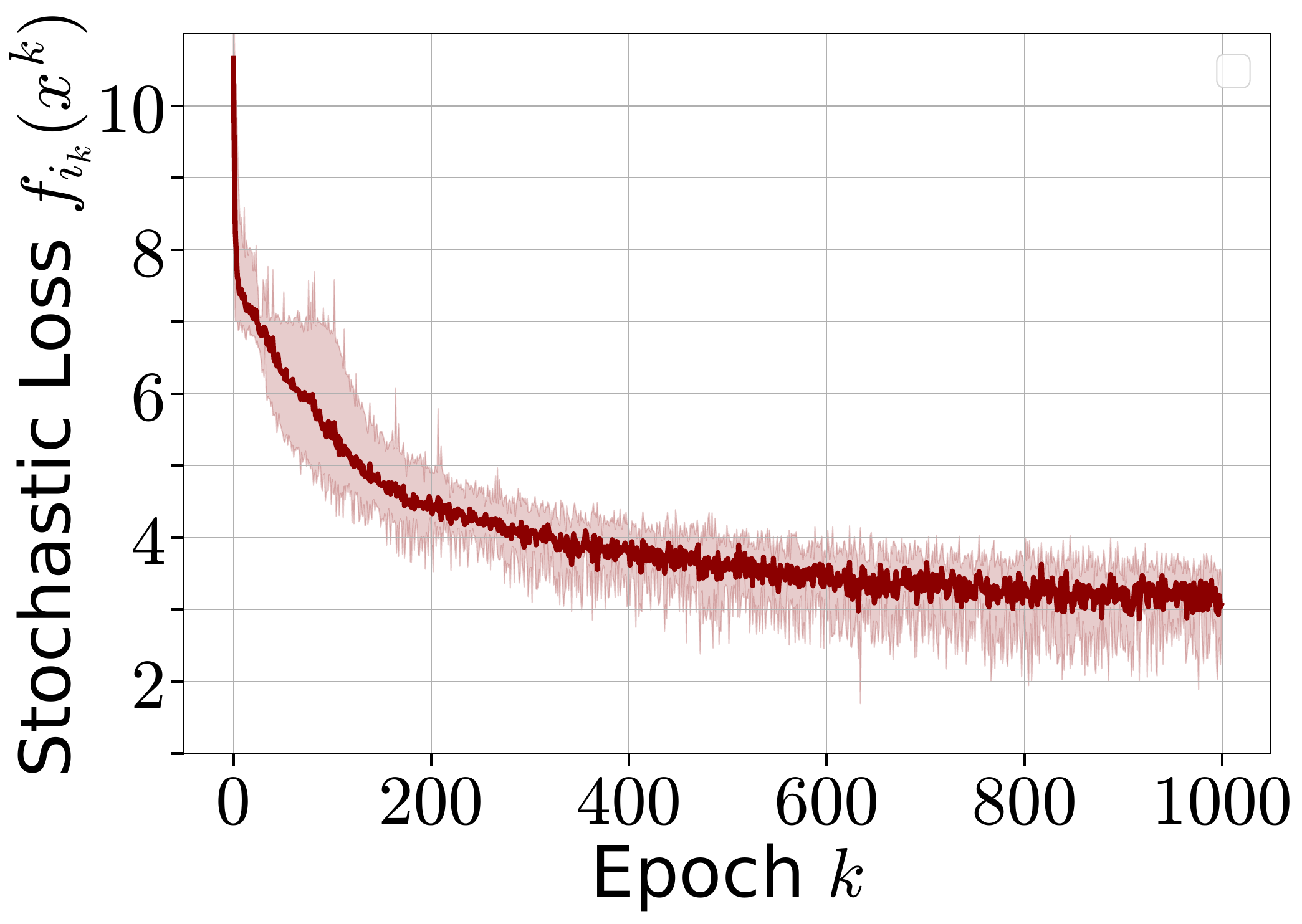} &
        \includegraphics[width=0.22\textwidth]{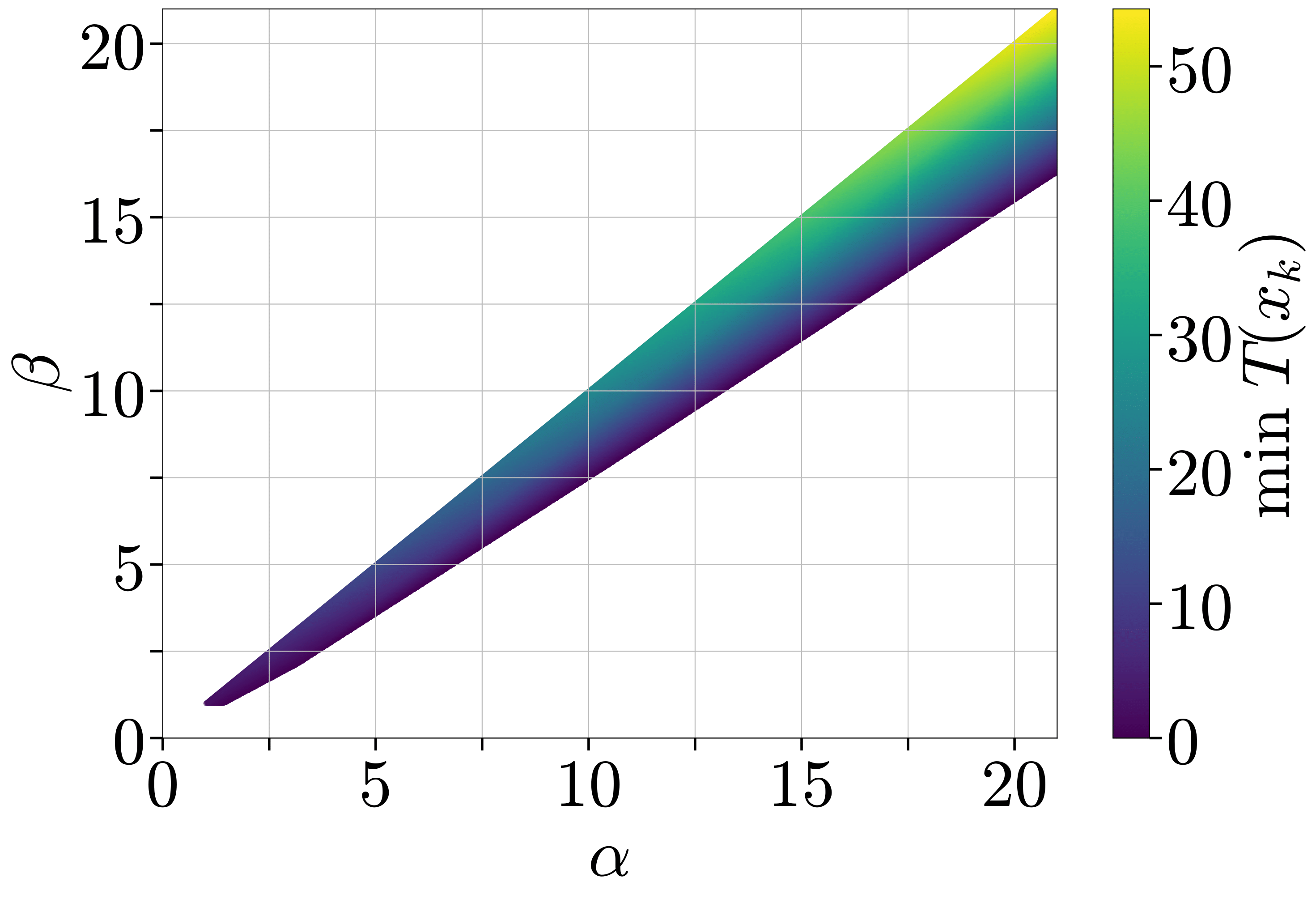}\\
        \includegraphics[width=0.22\textwidth]{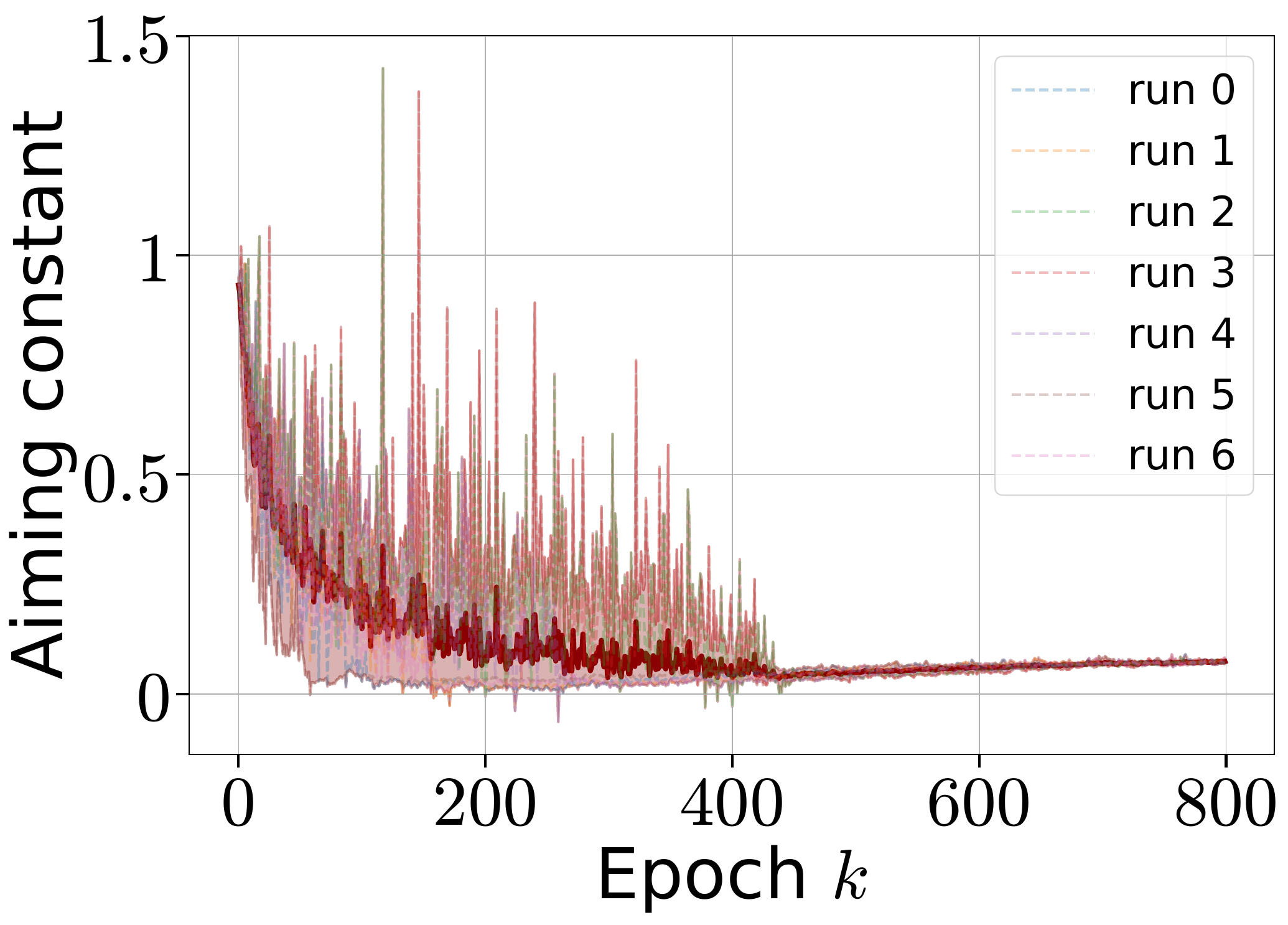} &
        \includegraphics[width=0.22\textwidth]{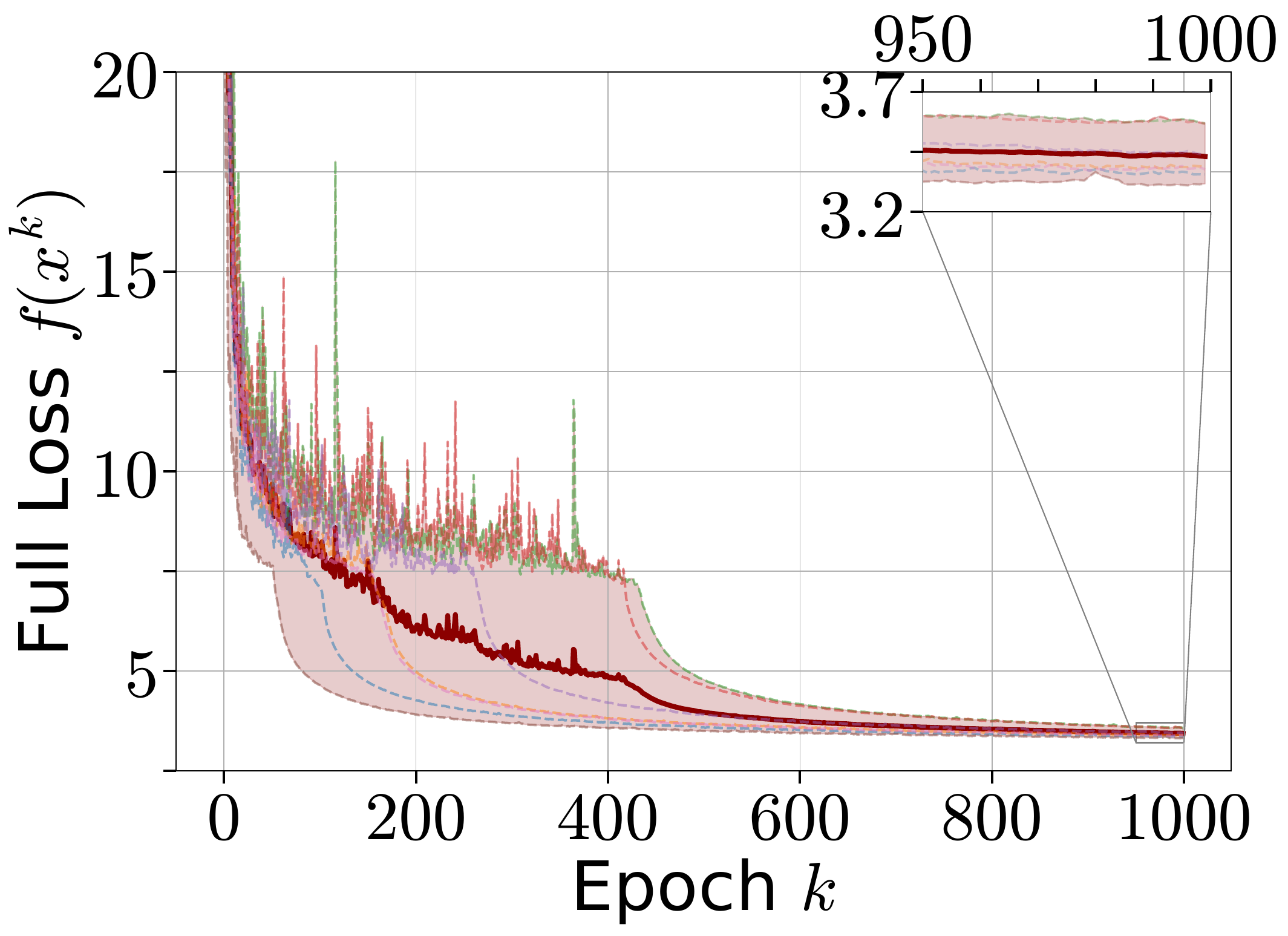} &
        \includegraphics[width=0.22\textwidth]{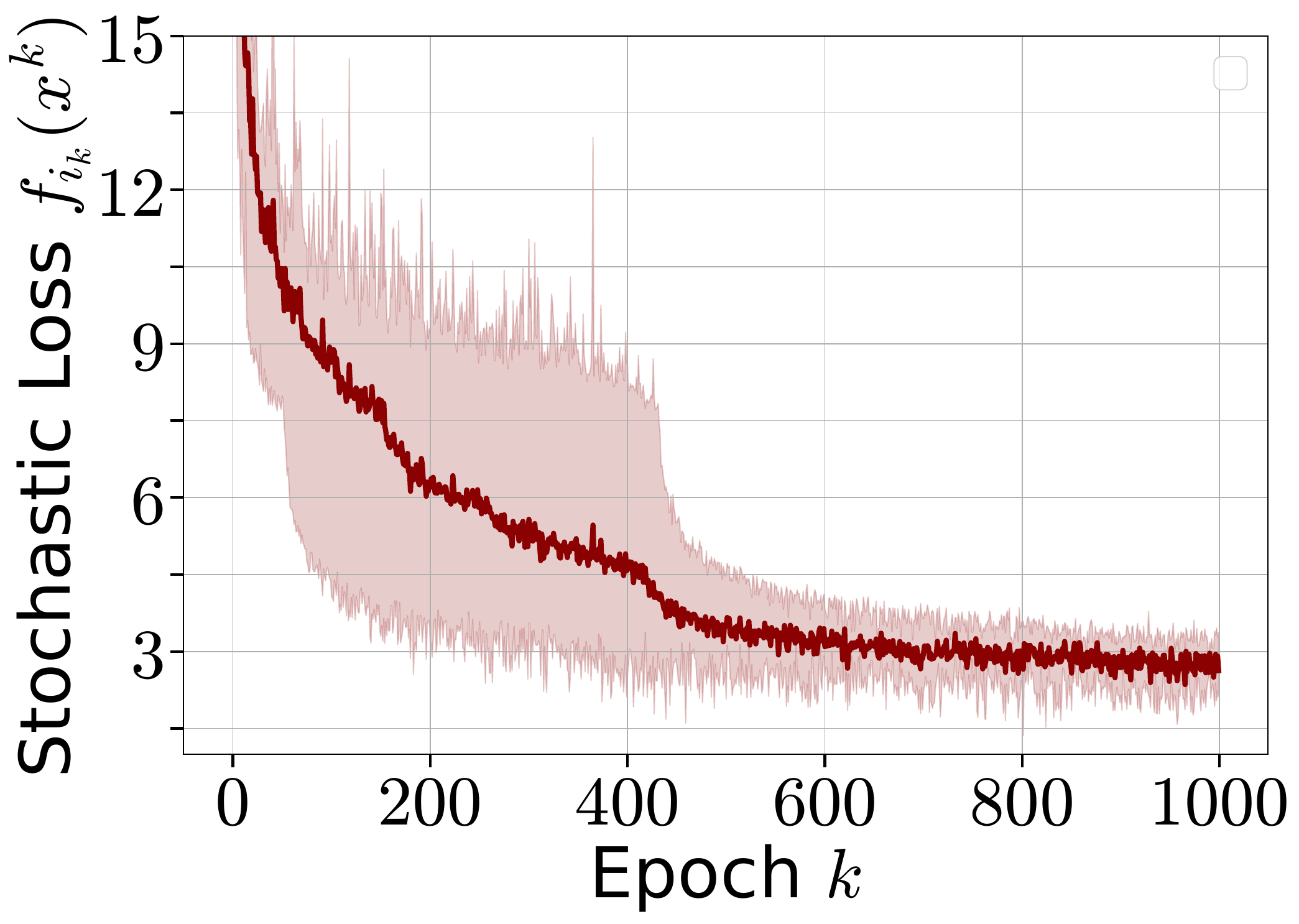} &
        \includegraphics[width=0.22\textwidth]{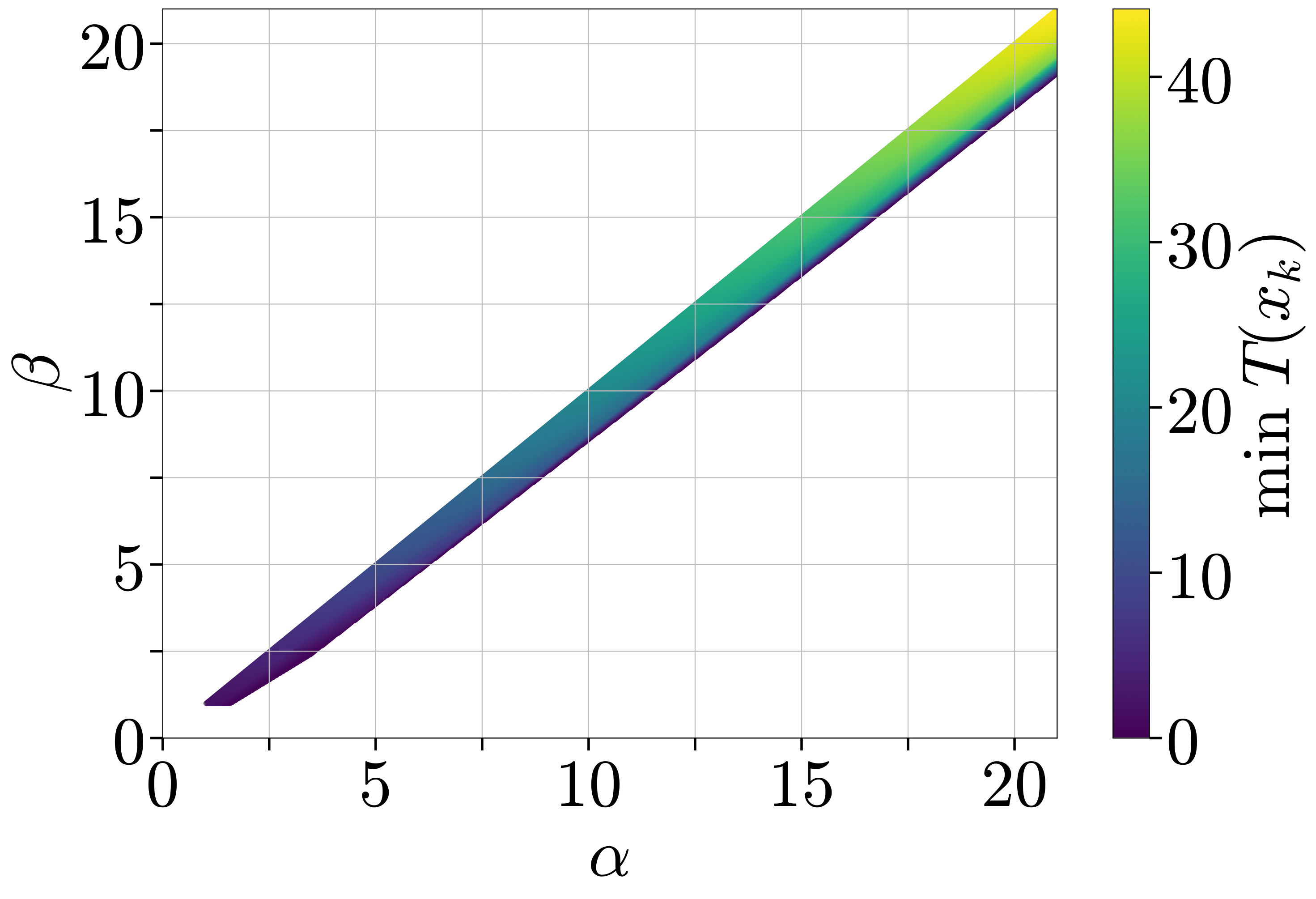}
    \end{tabular}
    \caption{Training of 3 layer LSTM model on PTB (first row) and Wikitext-2 (second row) datasets. Here $T(x_k) = \<\nabla f_{i_k}(x^k), x^k-x^K> - \alpha(f_{i_k}(x^k) - f_{i_k}(x^K)) - \beta f_{i_k}(x^k)$ assuming that $f_i^*=0.$}
    \label{fig:lstm_abc_additional}
\end{figure}

\subsection{MLP architecture}\label{sec:mlp_additional}

We use MLP model with $3$ fully connected layers. We fix the dimensions of the second layer to be equal, i.e. the parameter matrix of this layer is square. After the first and second fully connected layers we use ReLU activation function. We train the model in all cases with fixed learning rate $0.09$ for $1500$ epochs and batch size $64$ on Fashion-MNIST \citep{xiao2017fashionmnist} dataset. In \Cref{fig:mlp_stoch_loss}, we plot the mean stochastic loss across $4$ runs along with maximum and minimum fluctuations, and in \Cref{fig:mlp_abccondition} possible values of $\alpha$ and $\beta$ that work over all random seeds and iterations satisfying $\alpha \ge \beta+0.1.$

We observe that the magnitude of the smallest possible values of $\alpha$ and $\beta$ increase till up to the second layer size $512$, but then it starts decreasing as the model becomes more over-parameterized. This leads to smaller values of neighborhood $\cO(\beta\sigma_{\rm int}^2)$ as we expect in this setting.

\begin{figure}[t]
    \centering
    \begin{tabular}{ccc}
        \includegraphics[width=0.22\textwidth]{Plots/FashionMNIST_fixed_abccondition_32_0.09_1500epochs_150.png} &
        \includegraphics[width=0.22\textwidth]{Plots/FashionMNIST_fixed_abccondition_128_0.09_1500epochs_150.png} &
        \includegraphics[width=0.22\textwidth]{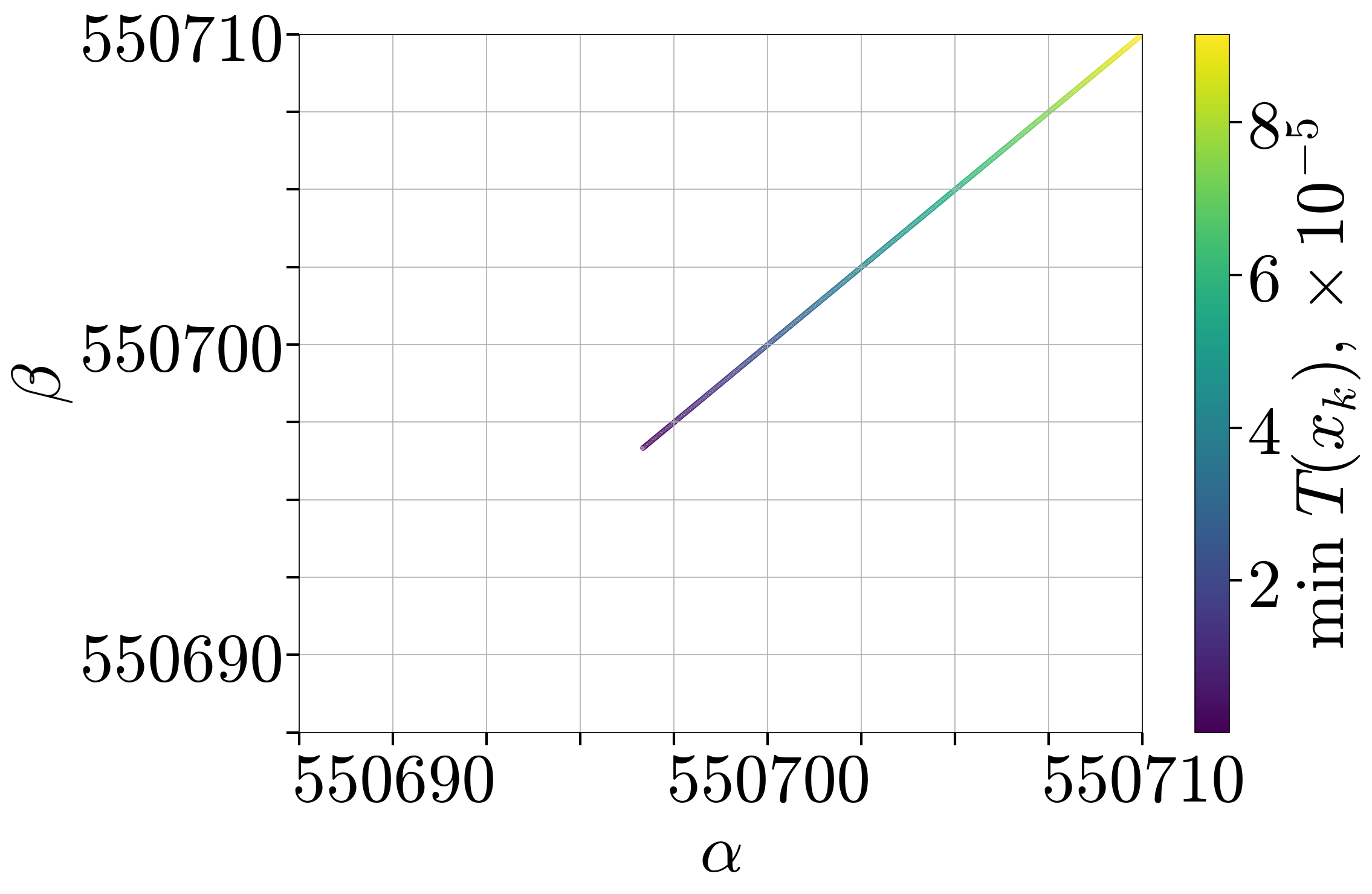} \\
        {\tiny (a) $2^{\rm nd}$ layer size $32$}  &
        {\tiny (b) $2^{\rm nd}$ layer size $128$} &
        {\tiny (c) $2^{\rm nd}$ layer size $512$} \\
        \includegraphics[width=0.22\textwidth]{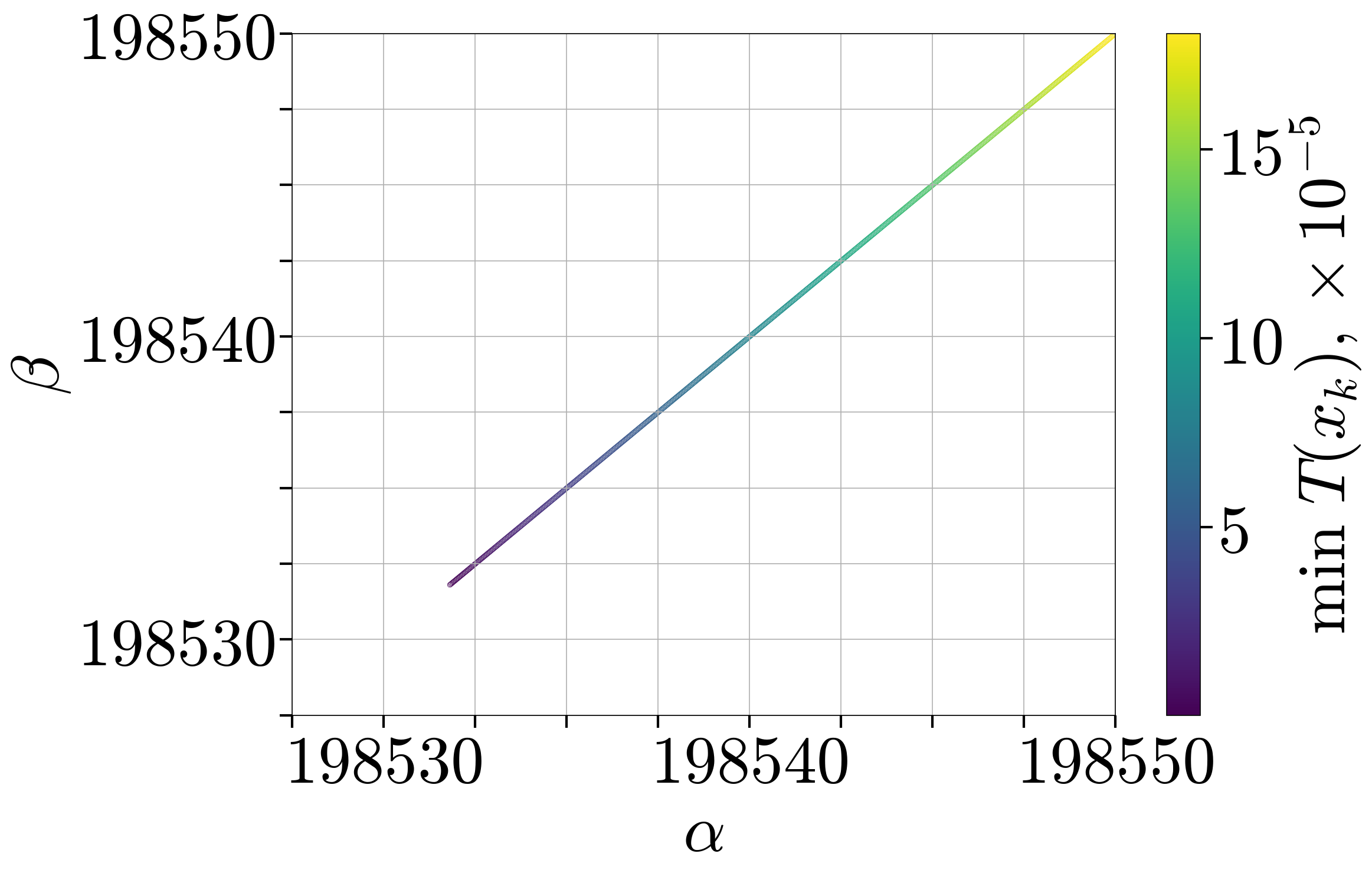}  &
        \includegraphics[width=0.22\textwidth]{Plots/FashionMNIST_fixed_abccondition_2048_0.09_1500epochs_150.png}  &
        \includegraphics[width=0.22\textwidth]{Plots/FashionMNIST_fixed_abccondition_4096_0.09_1500epochs_150.png} 
        \\
        {\tiny (d) $2^{\rm nd}$ layer size $1024$} & 
        {\tiny (e) $2^{\rm nd}$ layer size $2048$}  &
        {\tiny (f) $2^{\rm nd}$ layer size $4096$}\\
    \end{tabular}
    \caption{Values of $\alpha$ and $\beta$ during the training of $3$ layer MLP model on Fashion-MNIST dataset varying the dimension of the second layer. Here $T(x_k) = \<\nabla f_{i_k}(x^k), x^k-x^K> - \alpha(f_{i_k}(x^k) - f_{i_k}(x^K)) - \beta f_{i_k}(x^k)$ assuming that $f_i^*=0.$}
    \label{fig:mlp_abccondition}
\end{figure}

\begin{figure}[t]
    \centering
    \begin{tabular}{ccc}
        \includegraphics[width=0.22\textwidth]{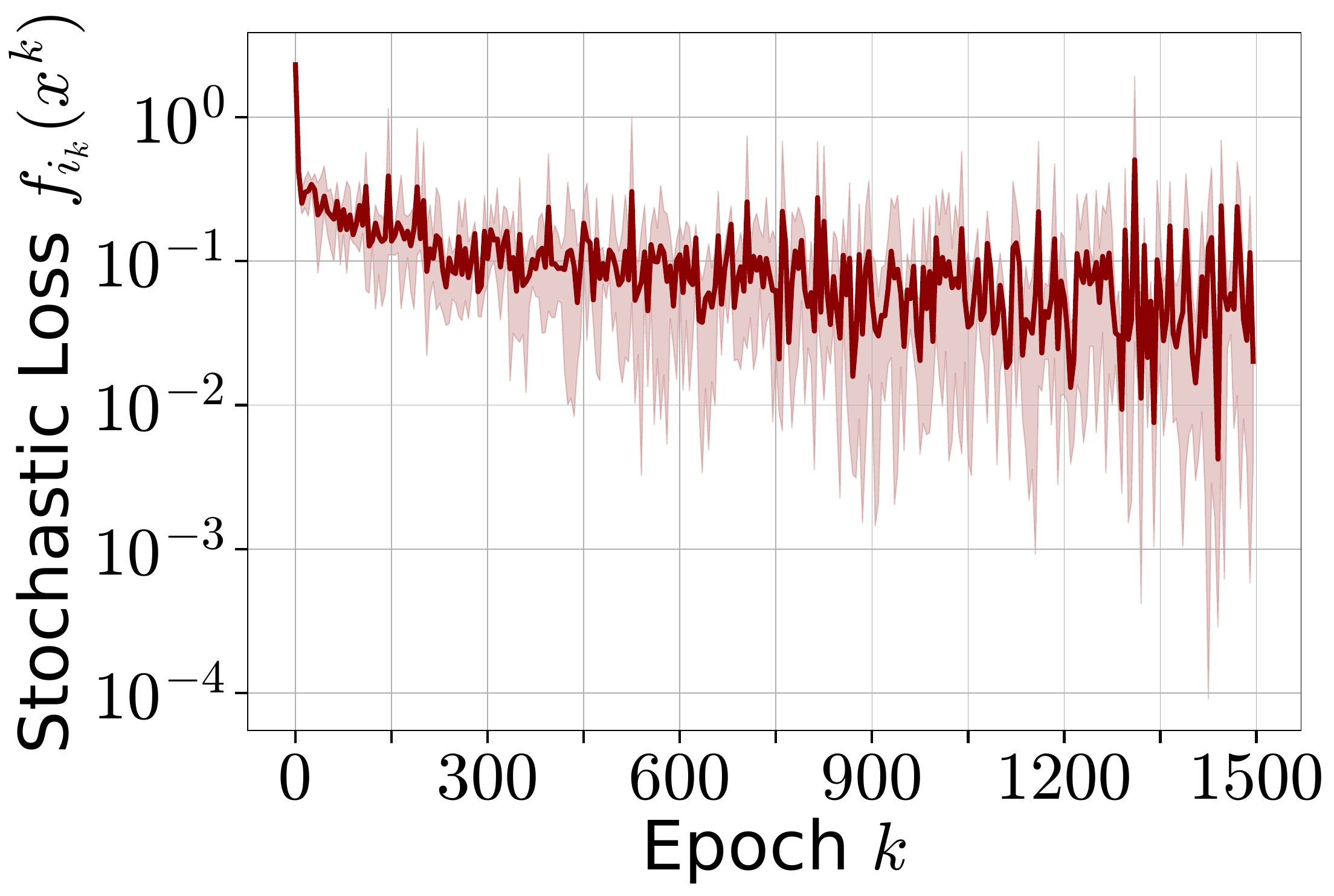} &
        \includegraphics[width=0.22\textwidth]{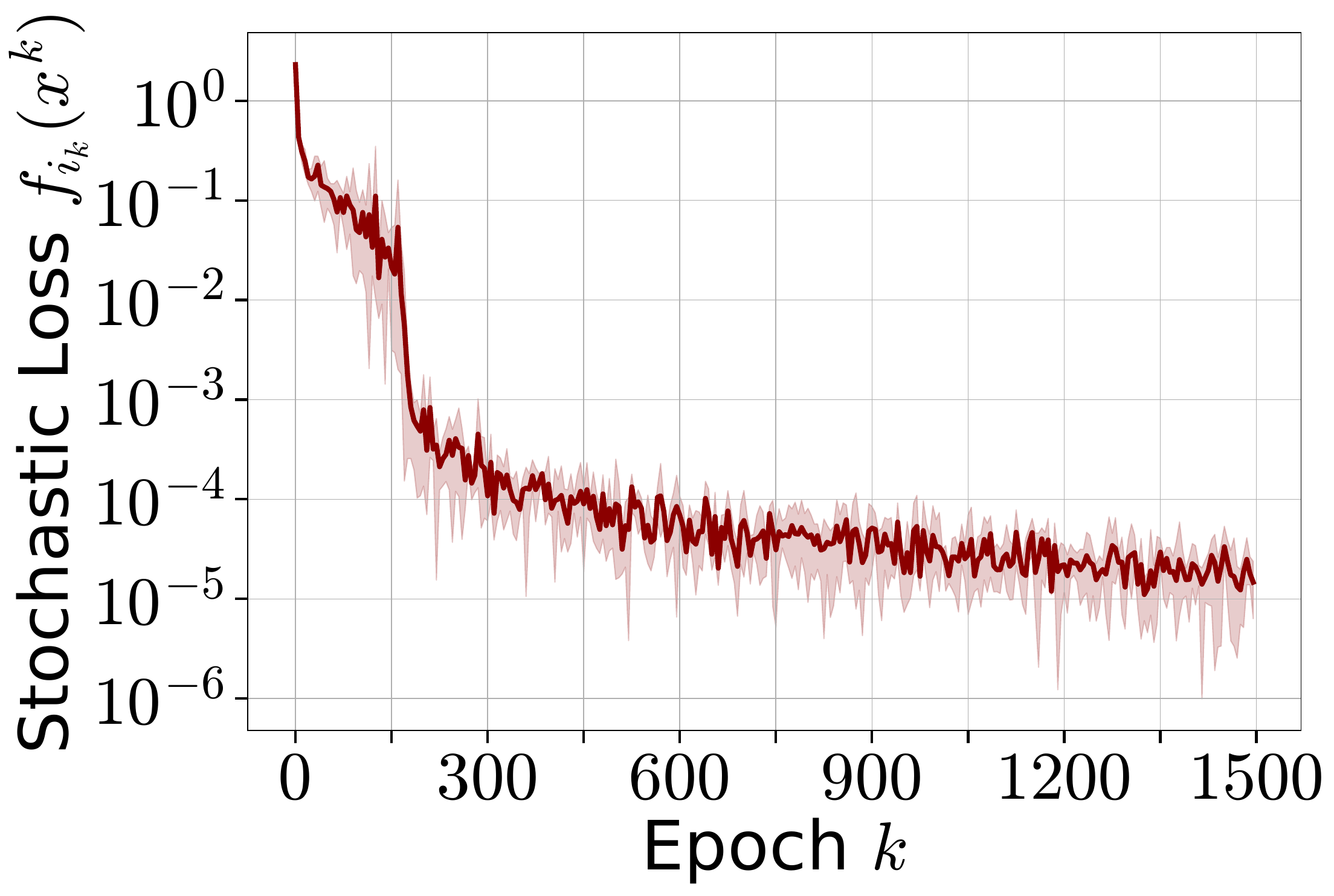} &
        \includegraphics[width=0.22\textwidth]{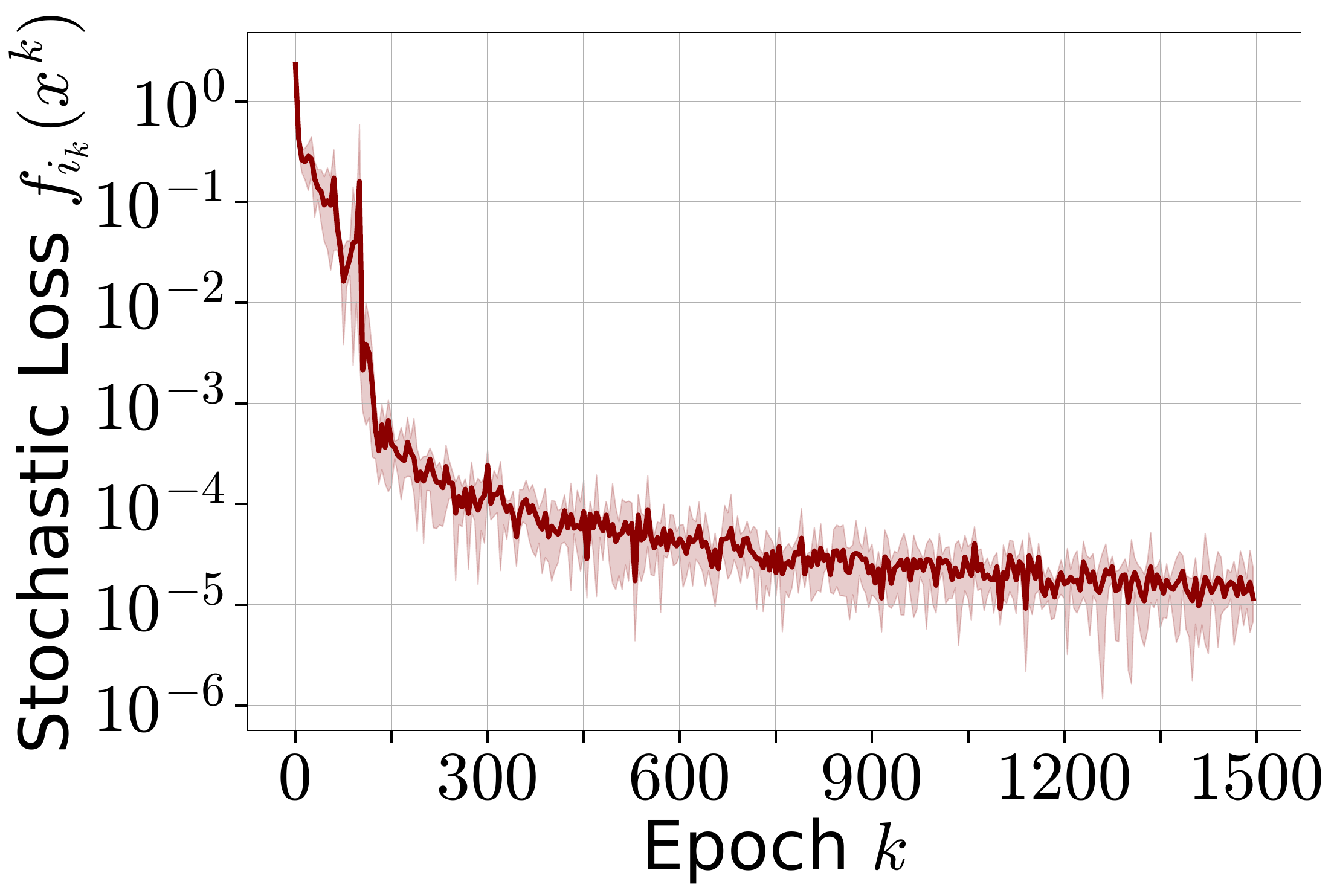} \\
        {\tiny (a) $2^{\rm nd}$ layer size $32$}  &
        {\tiny (b) $2^{\rm nd}$ layer size $128$} &
        {\tiny (c) $2^{\rm nd}$ layer size $512$}\\ 
        \includegraphics[width=0.22\textwidth]{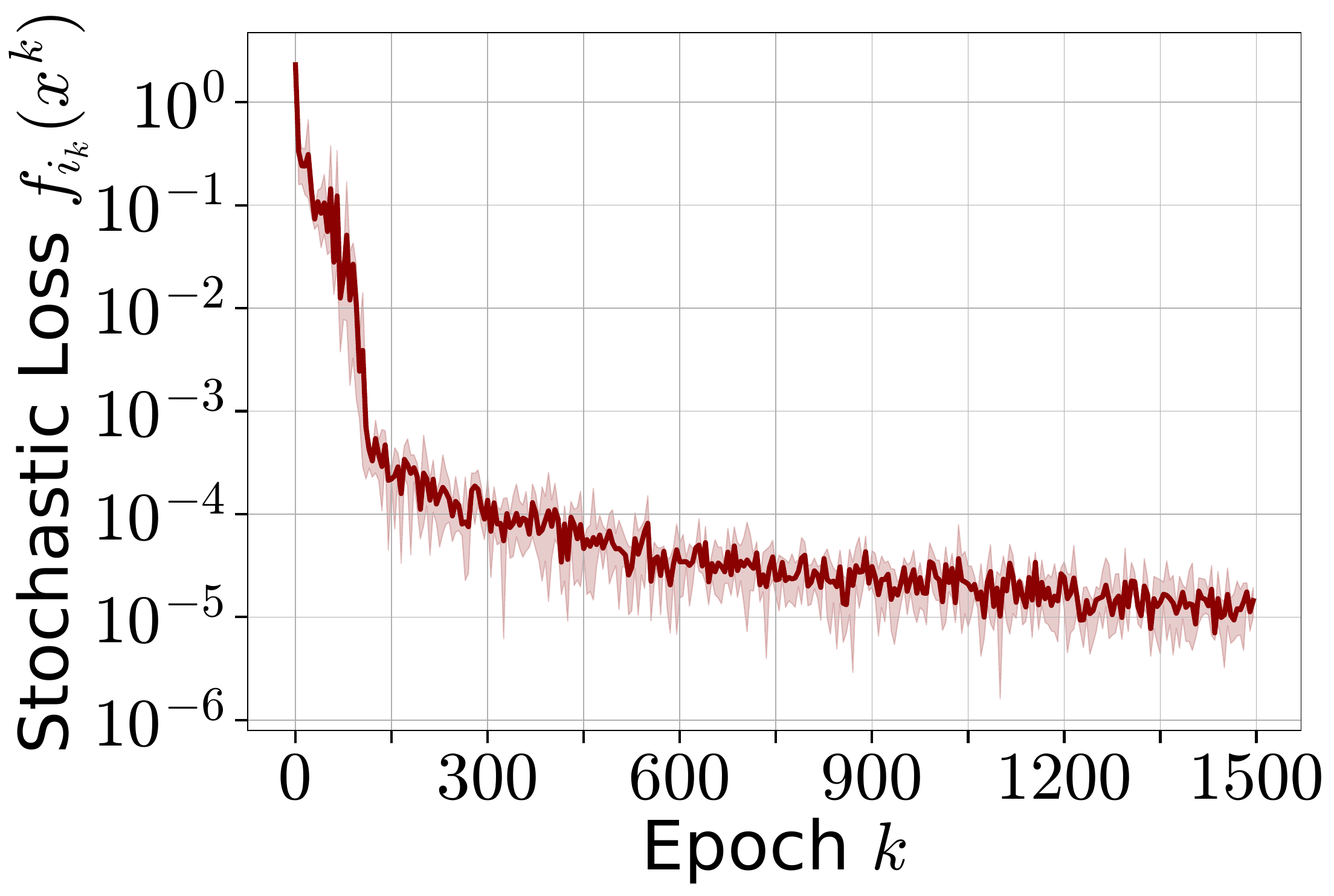}  &
        \includegraphics[width=0.22\textwidth]{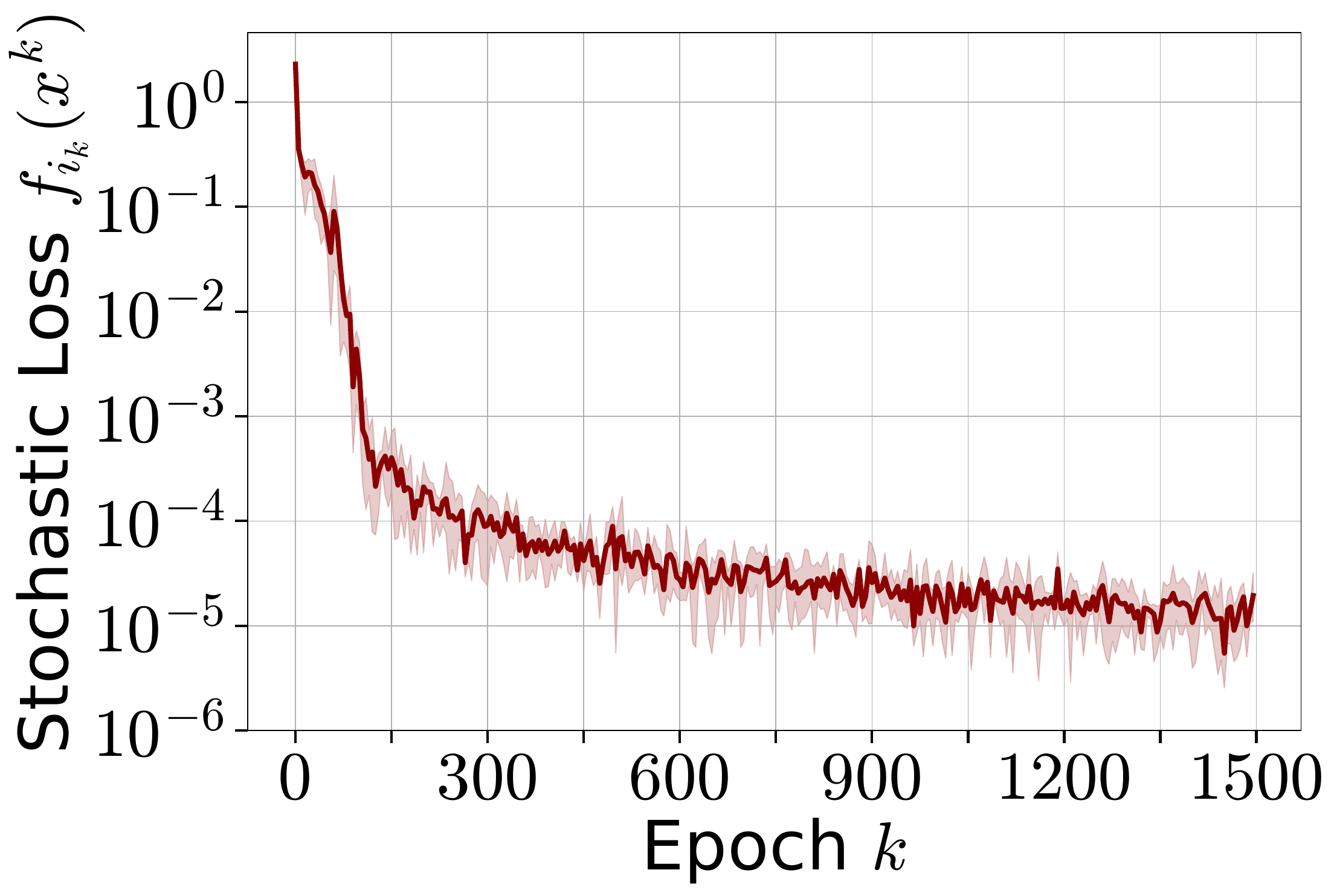} & 
        \includegraphics[width=0.22\textwidth]{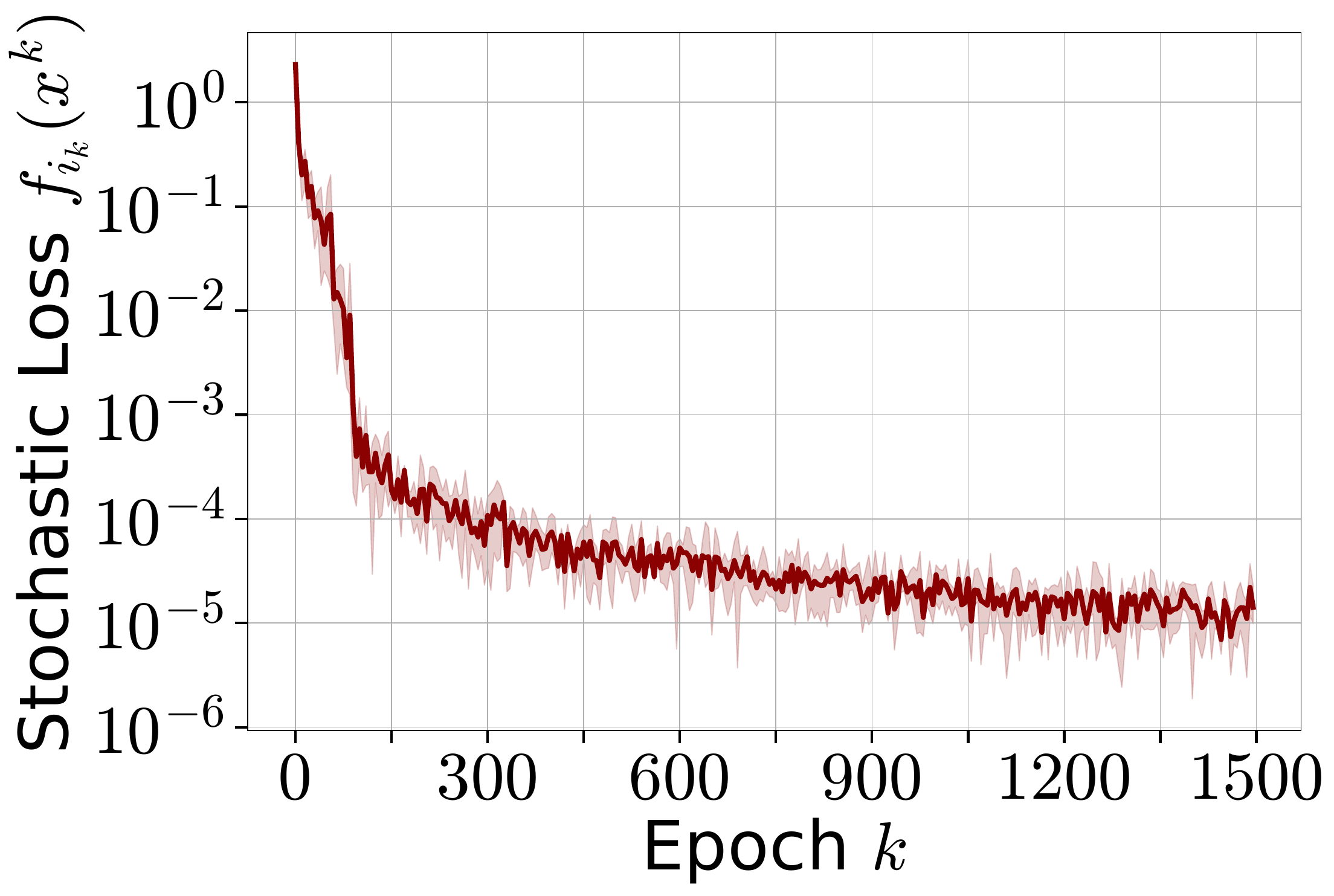} 
        \\
        {\tiny (d) $2^{\rm nd}$ layer size $1024$}  &
        {\tiny (e) $2^{\rm nd}$ layer size $2048$} &
        {\tiny (f) $2^{\rm nd}$ layer size $4096$}
    \end{tabular}
    \caption{Values of stochastic loss during the training of $3$ layer MLP model on Fashion-MNIST dataset varying the dimension of the second layer.}
    \label{fig:mlp_stoch_loss}
\end{figure}

\subsection{CNN architecture}\label{sec:cnn_additional}

We use CNN model with $2$ convolution layers followed by a fully connected one. After each convolution layer, we use max-pooling and ReLU activation functions. We train the model with a cosine annealing learning rate scheduler with a maximum value $0.01$ and batch size $64$. We train the model on CIFAR10 dataset \citep{krizhevsky2009cifardataset} for $1500$ epochs. We run the experiments for $4$ different random seeds. In \Cref{fig:cnn_abccondition} we plot possible values of $\alpha$ and $\beta$ satisfying $\alpha \ge \beta+0.1$ that work across all runs and iterations.

We observe that increasing the dimension of the second layer of the model makes the model closer to over-parameterization: values of stochastic losses decrease. We observe the same phenomenon as in \Cref{sec:mlp_additional}: minimum possible values of $\alpha$ and $\beta$ increase up to $128$ number of convolutions, but then it decreases for a larger number of convolutions. This happens because the model becomes more over-parameterized. Moreover, the possible difference between $\alpha$ and $\beta$ tends to increase with number of convolutions starting from $128$ convolutions.

\begin{figure}[t]
    \centering
    \begin{tabular}{ccc}
        \includegraphics[width=0.22\textwidth]{Plots/CNN_fixed_CIFAR10_abccondition_32_0.01_1_1000epochs_150.png} &
        \includegraphics[width=0.22\textwidth]{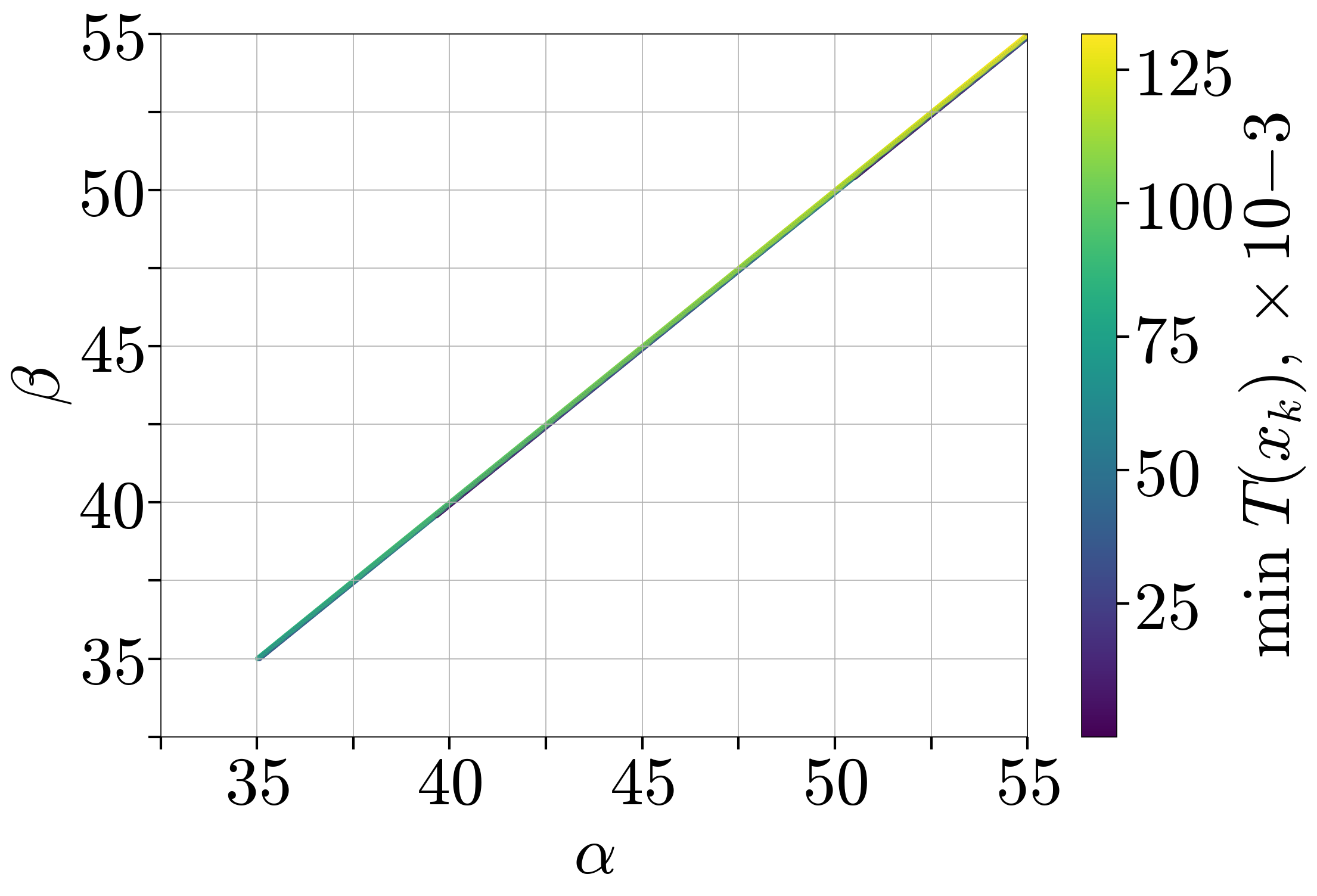} &
        \includegraphics[width=0.22\textwidth]{Plots/CNN_fixed_CIFAR10_abccondition_128_0.01_1_1000epochs_150.png}\\
        {\tiny (a) $\#$ Convolutions $32$}  &
        {\tiny (b) $\#$ Convolutions $64$} &
        {\tiny (c) $\#$ Convolutions $128$} \\
        \includegraphics[width=0.22\textwidth]{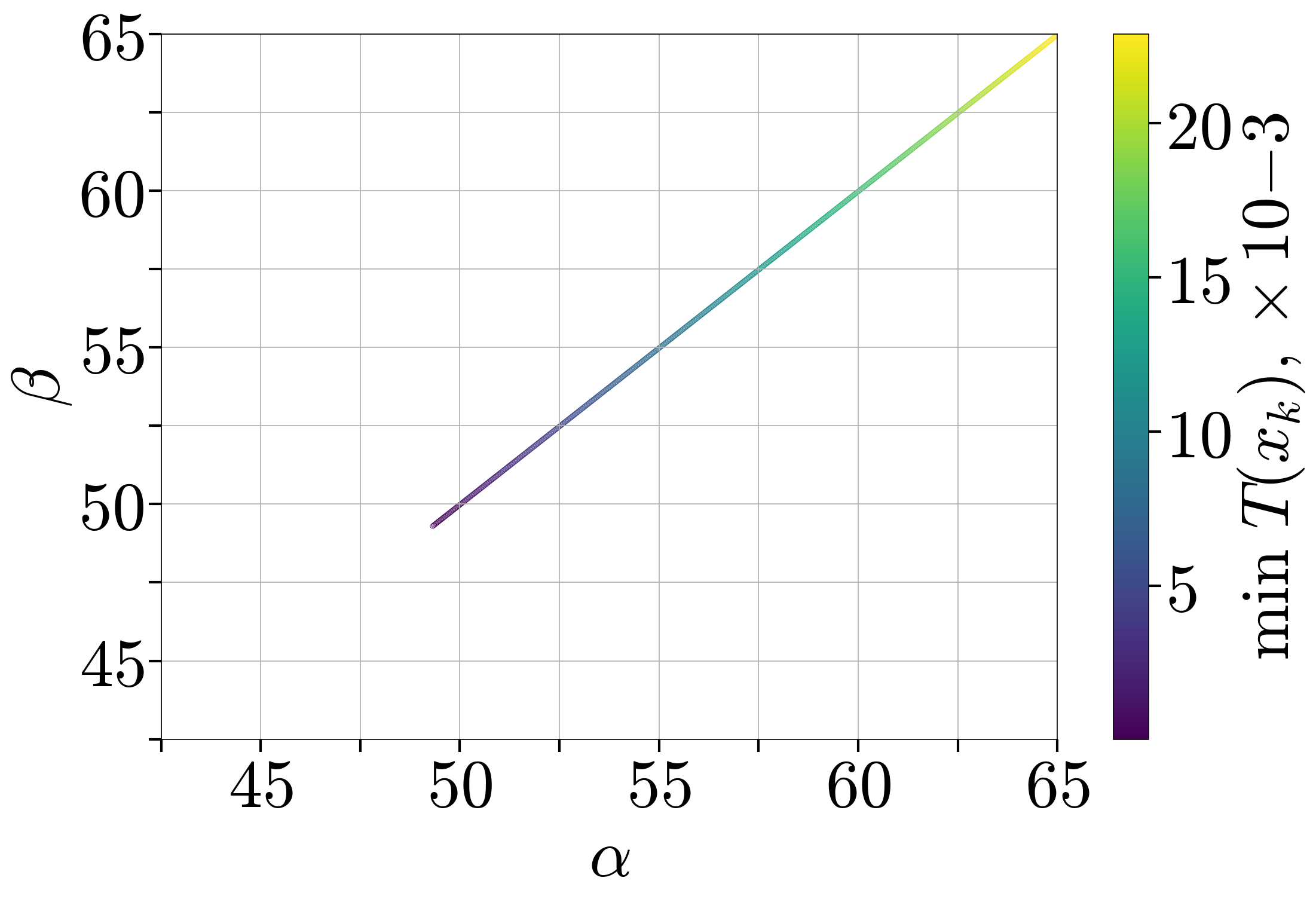} &
        \includegraphics[width=0.22\textwidth]{Plots/CNN_fixed_CIFAR10_abccondition_512_0.01_1_1000epochs_150.png} &
        \includegraphics[width=0.22\textwidth]{Plots/CNN_fixed_CIFAR10_abccondition_2048_0.01_1_1000epochs_150.png}\\
        {\tiny (d) $\#$ Convolutions $256$}  &
        {\tiny (e) $\#$ Convolutions $512$} &
        {\tiny (f) $\#$ Convolutions $2048$} \\
    \end{tabular}
    \caption{Values of $\alpha$ and $\beta$ during the training of CNN model on CIFAR10 dataset varying the number of convolutions in the second layer. Here $T(x_k) = \<\nabla f_{i_k}(x^k), x^k-x^K> - \alpha(f_{i_k}(x^k) - f_{i_k}(x^K)) - \beta f_{i_k}(x^k)$ assuming that $f_i^*=0.$}
    \label{fig:cnn_abccondition}
\end{figure}

\begin{figure}[t]
    \centering
    \begin{tabular}{ccc}
        \includegraphics[width=0.22\textwidth]{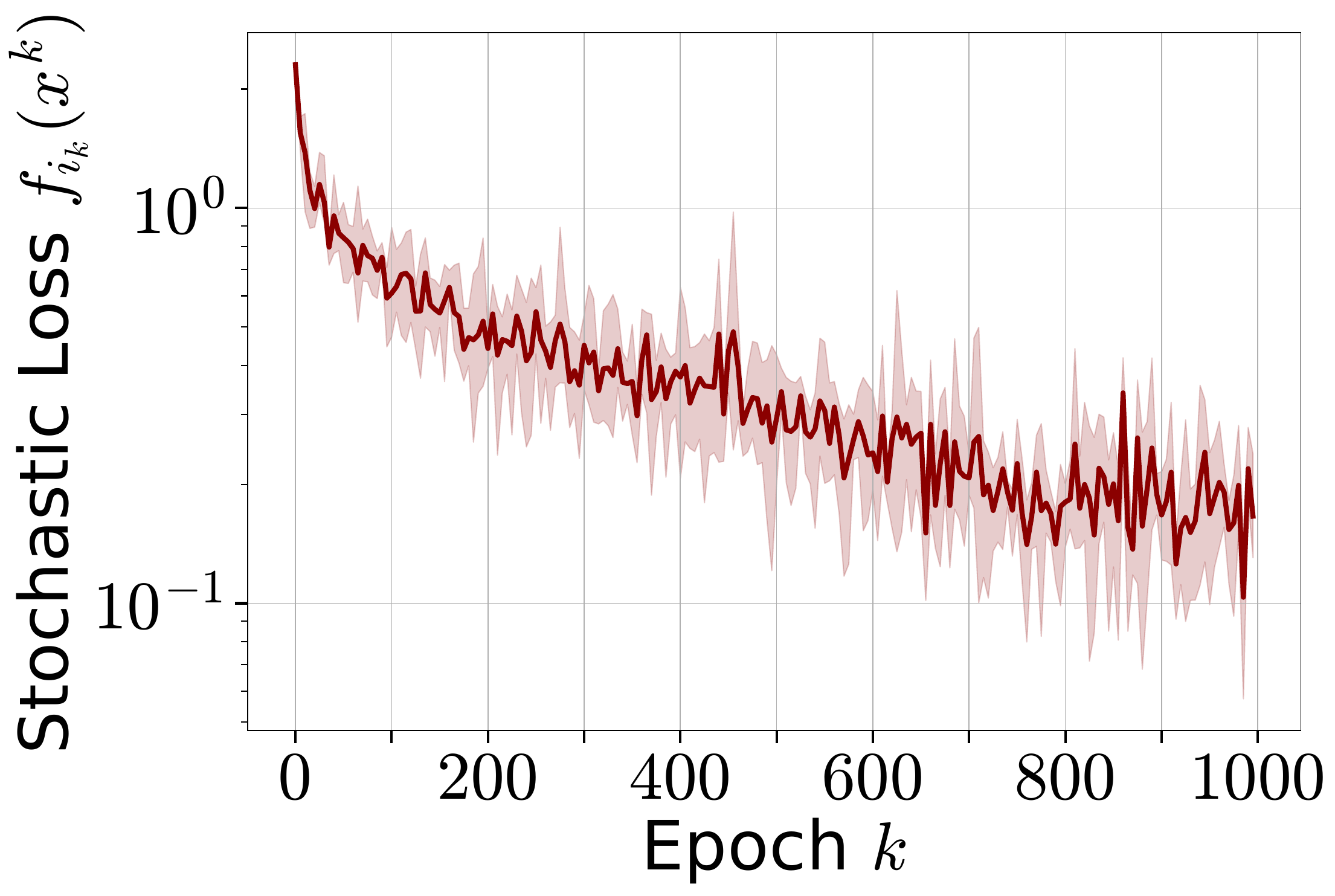} &
        \includegraphics[width=0.22\textwidth]{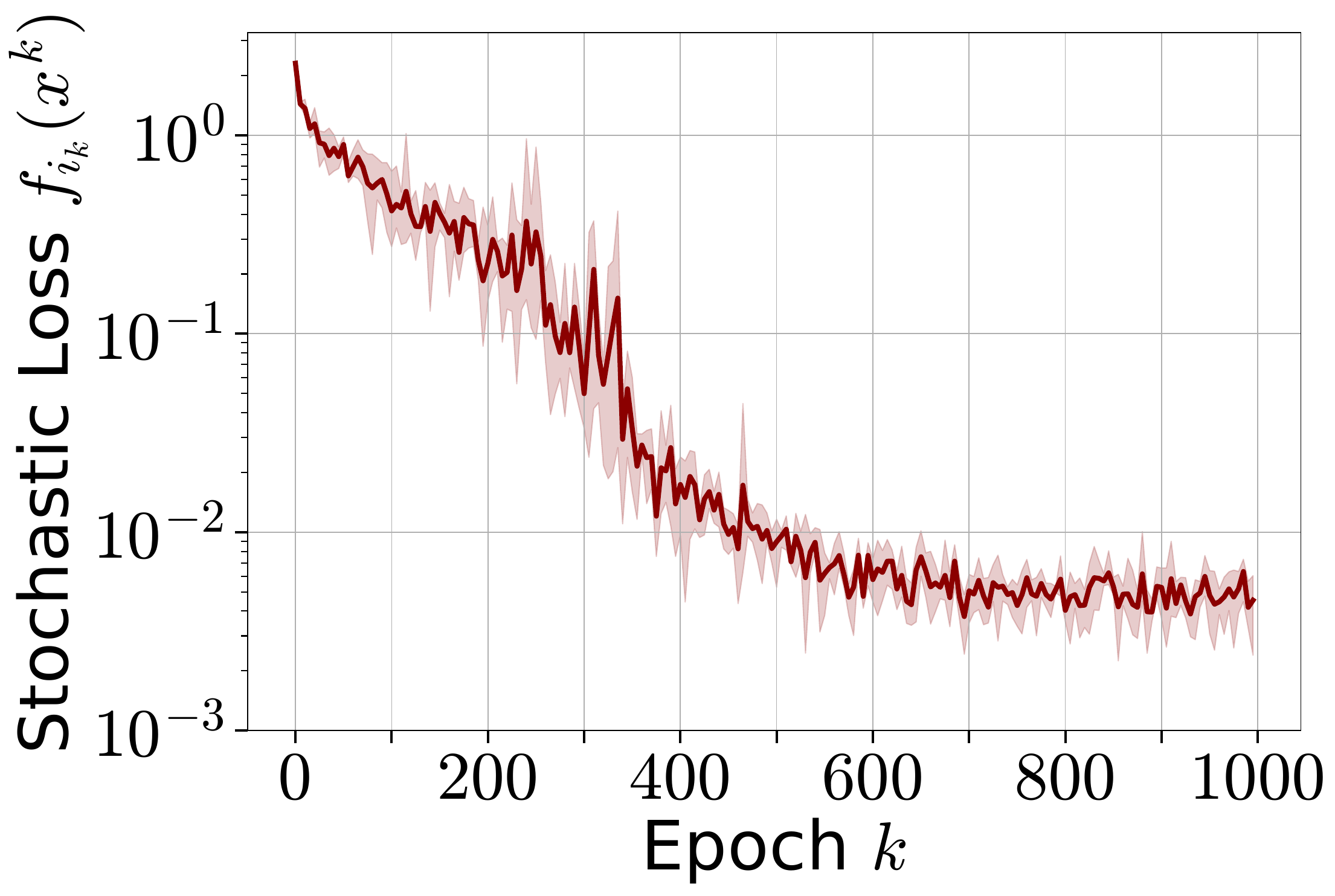} &
        \includegraphics[width=0.22\textwidth]{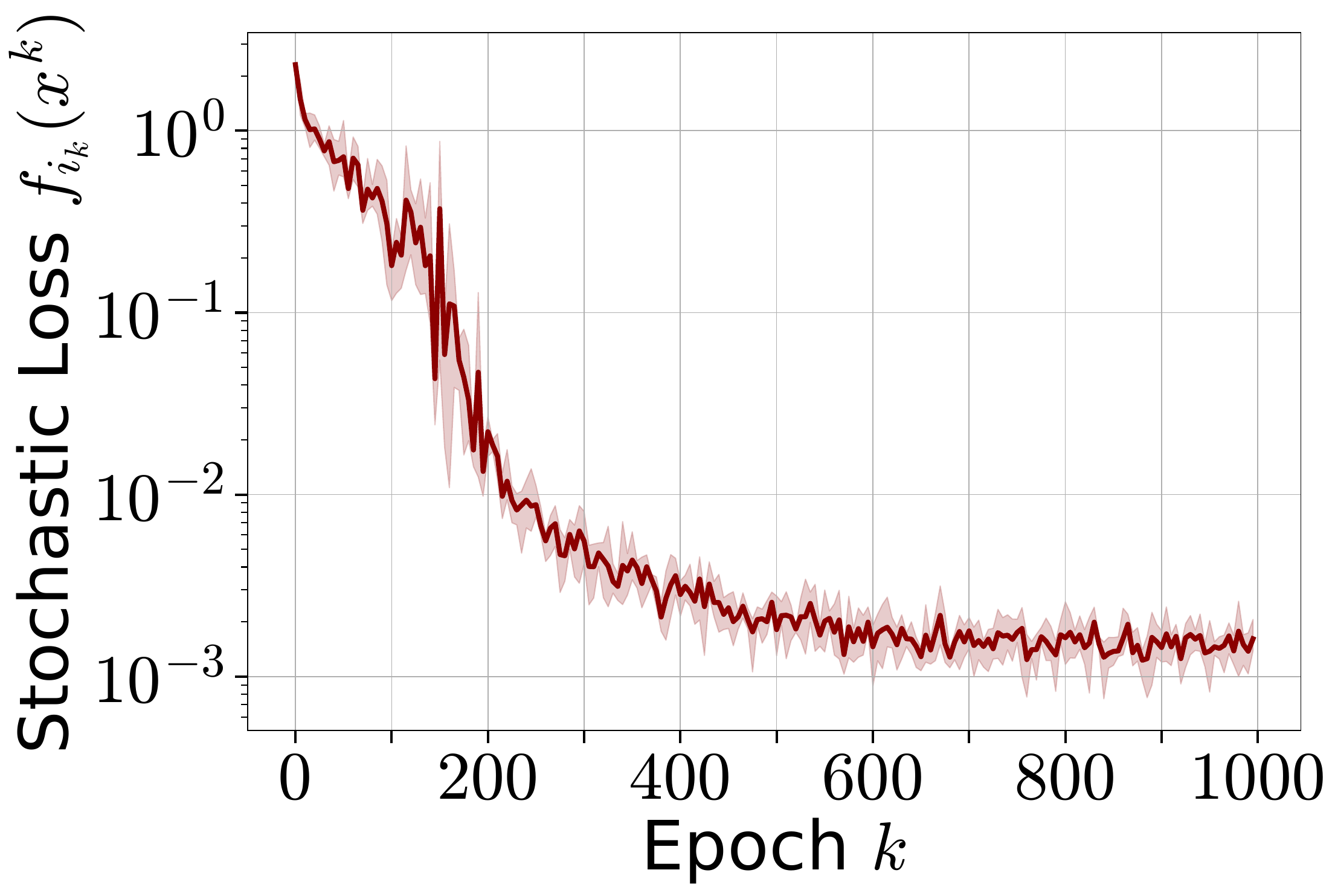}\\
        {\tiny (a) $\#$ Convolutions $32$}  &
        {\tiny (b) $\#$ Convolutions $64$} &
        {\tiny (c) $\#$ Convolutions $128$} \\
        \includegraphics[width=0.22\textwidth]{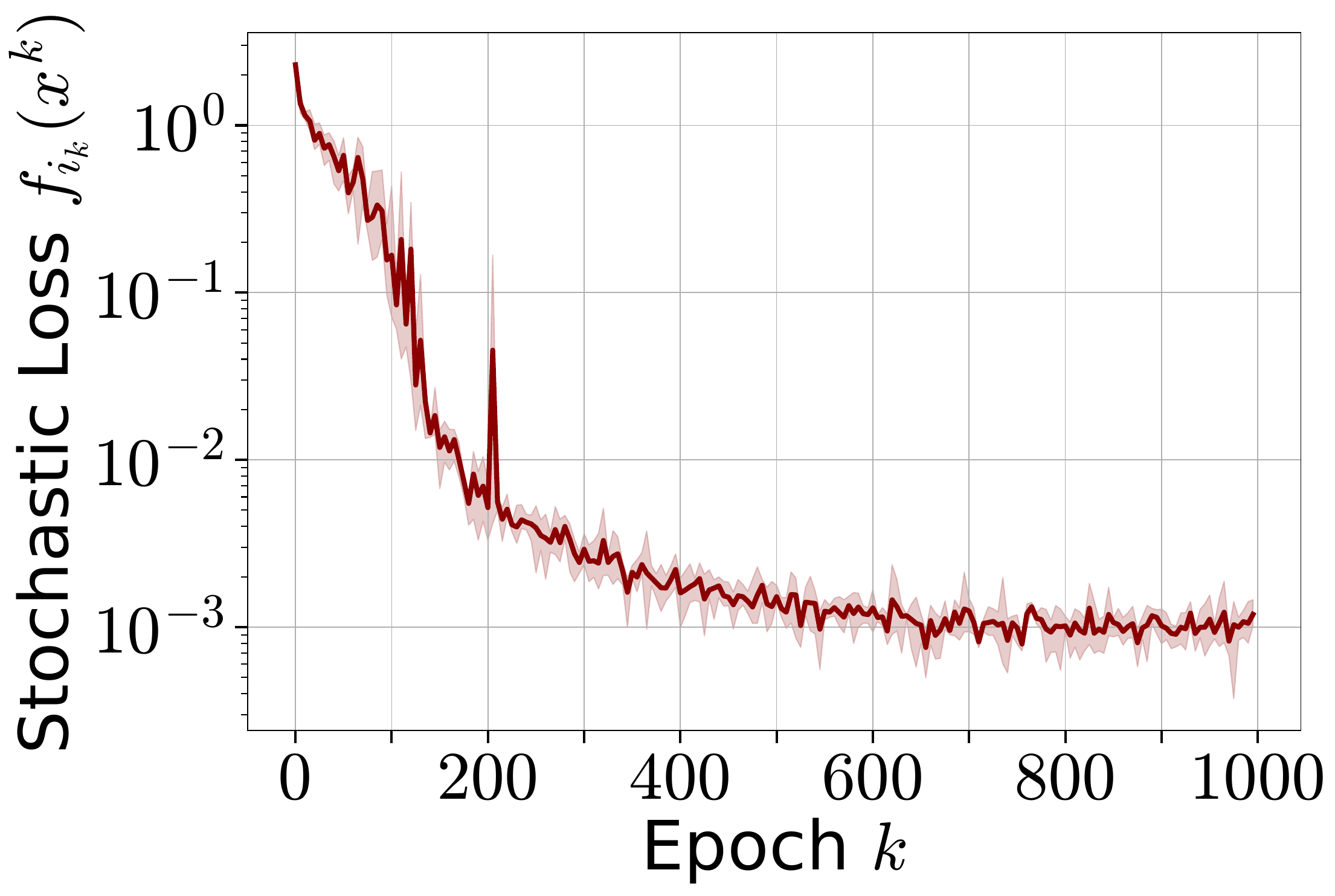} &
        \includegraphics[width=0.22\textwidth]{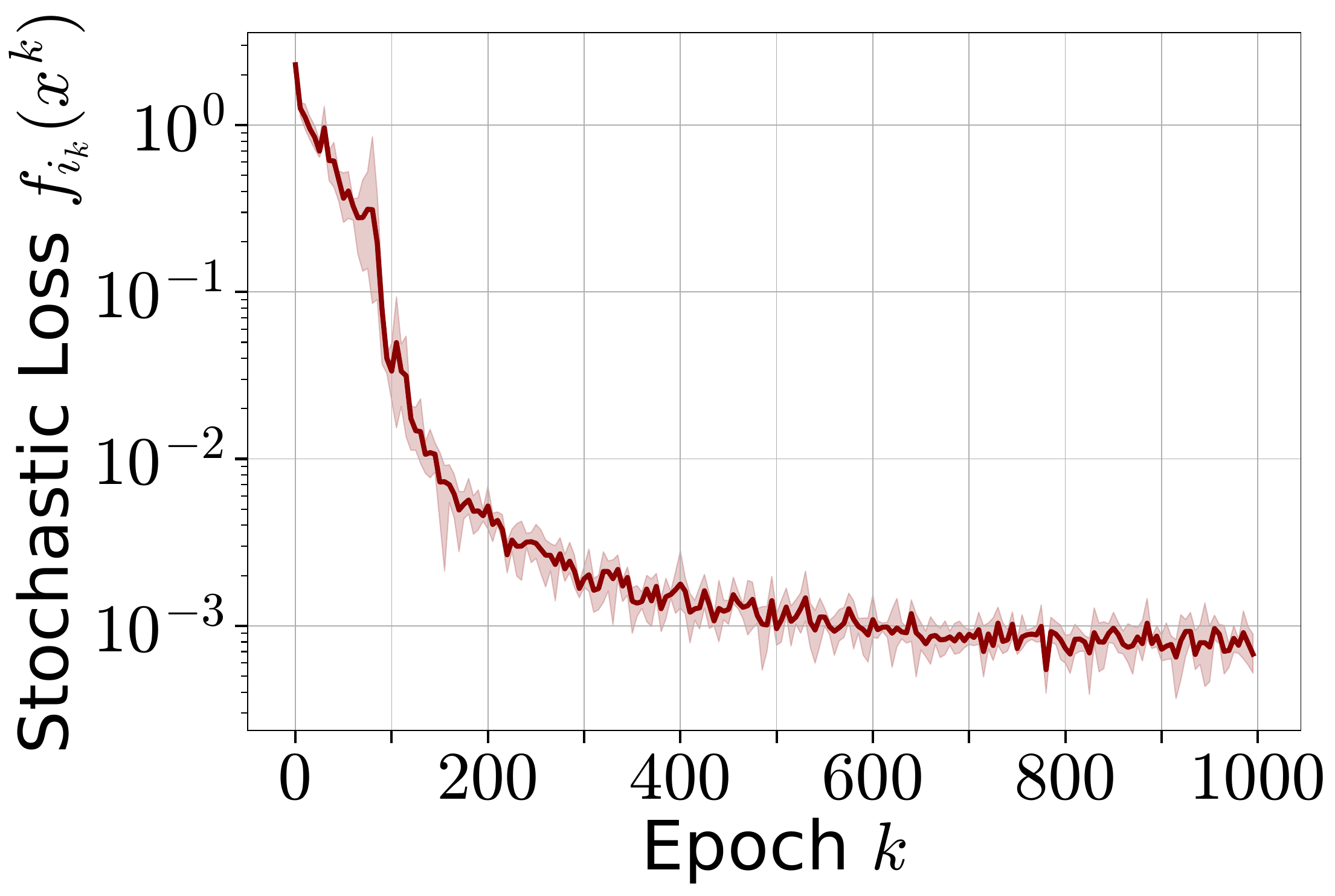} &
        \includegraphics[width=0.22\textwidth]{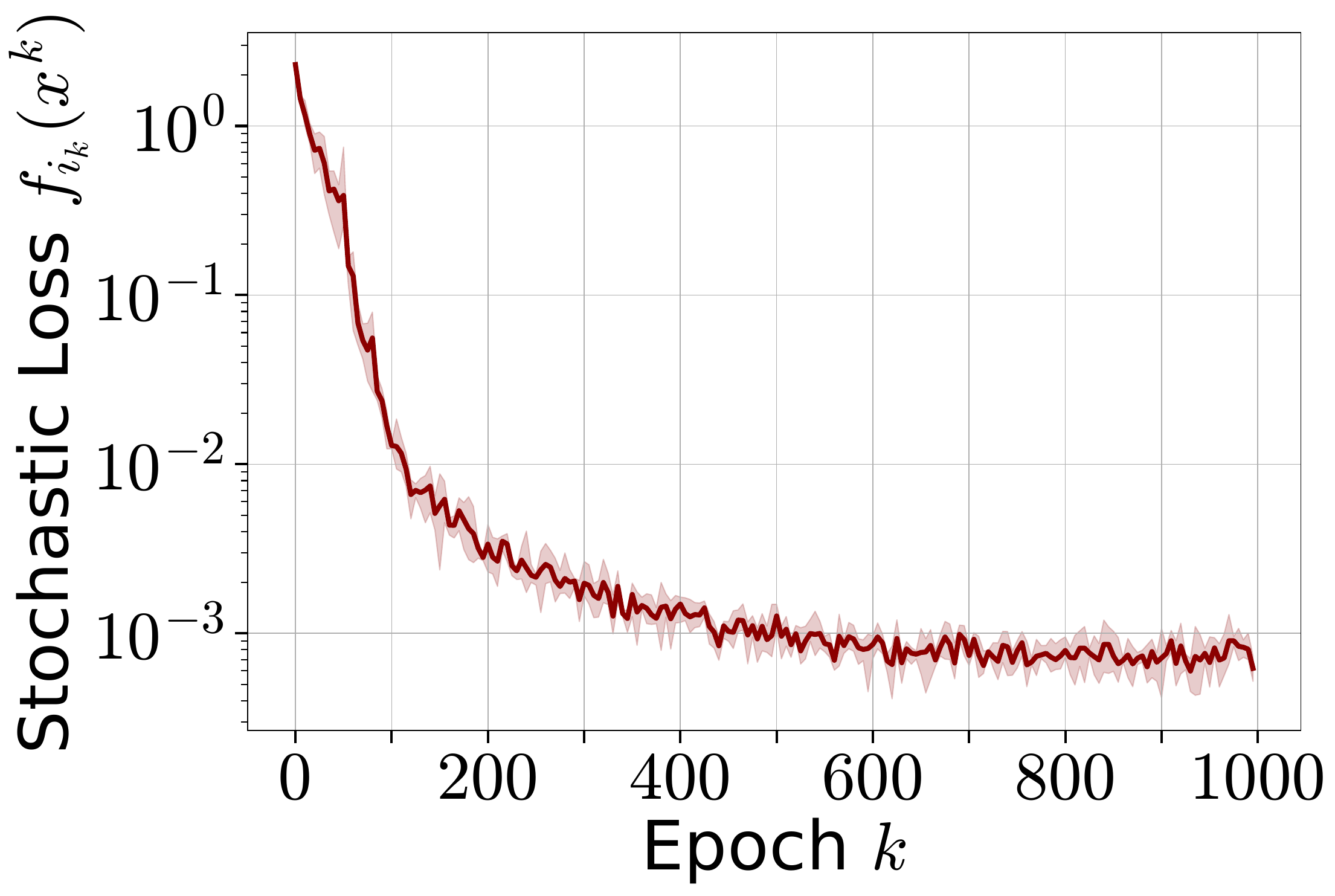}\\
        {\tiny (d) $\#$ Convolutions $256$}  &
        {\tiny (e) $\#$ Convolutions $512$} &
        {\tiny (f) $\#$ Convolutions $2048$} \\
    \end{tabular}
    \caption{Values of stochastic loss during the training of CNN model on CIFAR10 dataset varying the number of convolutions in the second layer.}
    \label{fig:cnn_stoch_loss}
\end{figure}

\subsection{Resnet architecture}\label{sec:resnet_additional}

% In this section, we use PyTorch \citep{paszke2019pytorch} package. The experiments are performed using one GPU NVIDIA GeForce RTX $3090$ with a memory of $24$ Gb.

We use the implementation from \citet{kumar2023resnet}. We train the model on CIFAR100 dataset \citep{krizhevsky2009cifardataset} for $1000$ epochs. We use one-cycle scheduler with a maximum learning rate $0.01.$ To compute dense stochastic gradients, we switch off dropout during evaluations of stochastic gradients and losses for $\alpha$-$\beta$-condition. We run the experiments for $4$ different random seeds and plot the mean along with maximum and minimum fluctuations.  

In \Cref{fig:resnet_stoch_loss} we observe that the minimum value of stochastic loss increases with batch size which means that the model becomes further from over-parameterization.

\begin{figure}[t]
    \centering
    \begin{tabular}{cccc}
        \includegraphics[width=0.21\textwidth]{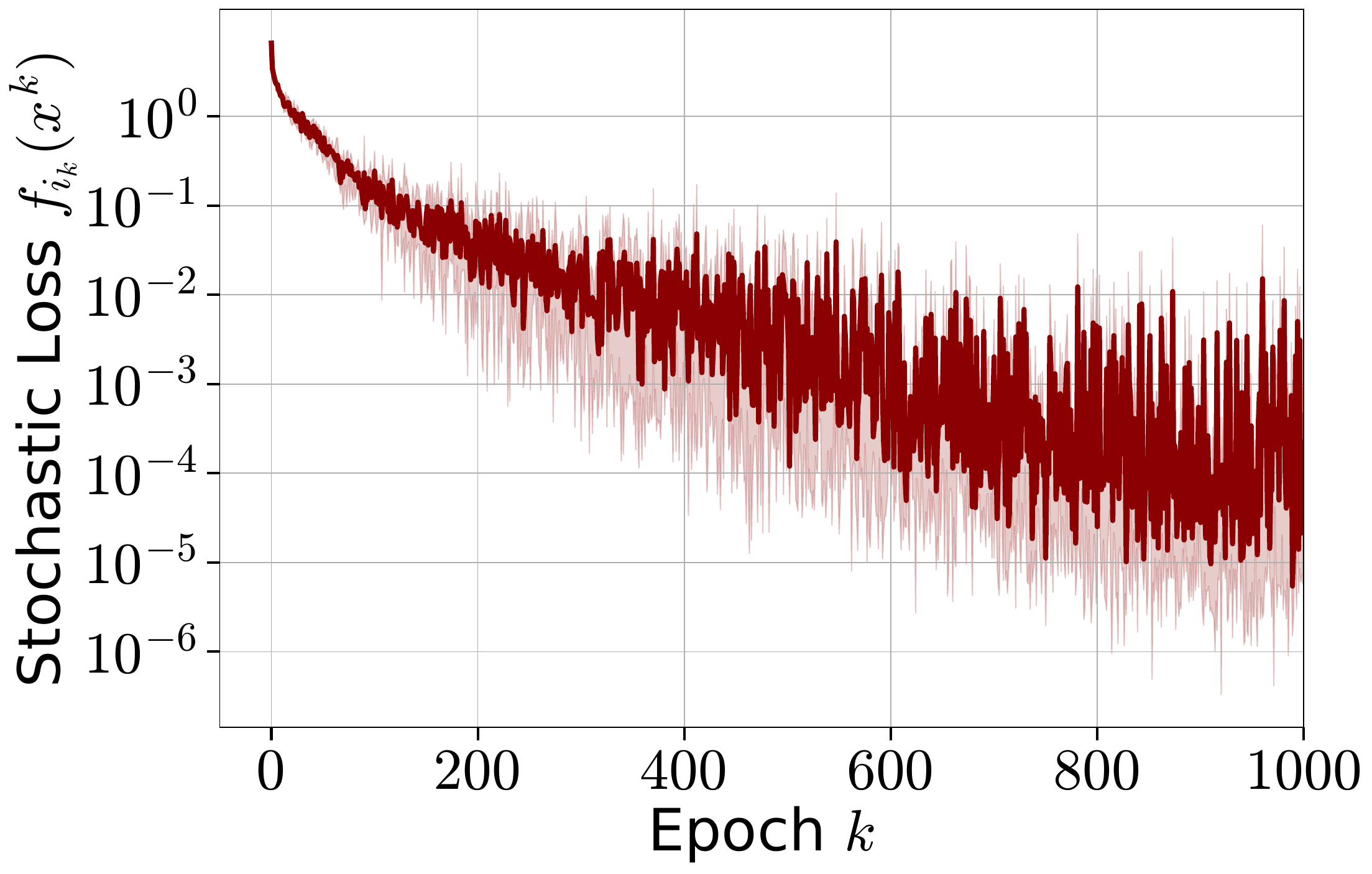} &
        \includegraphics[width=0.21\textwidth]{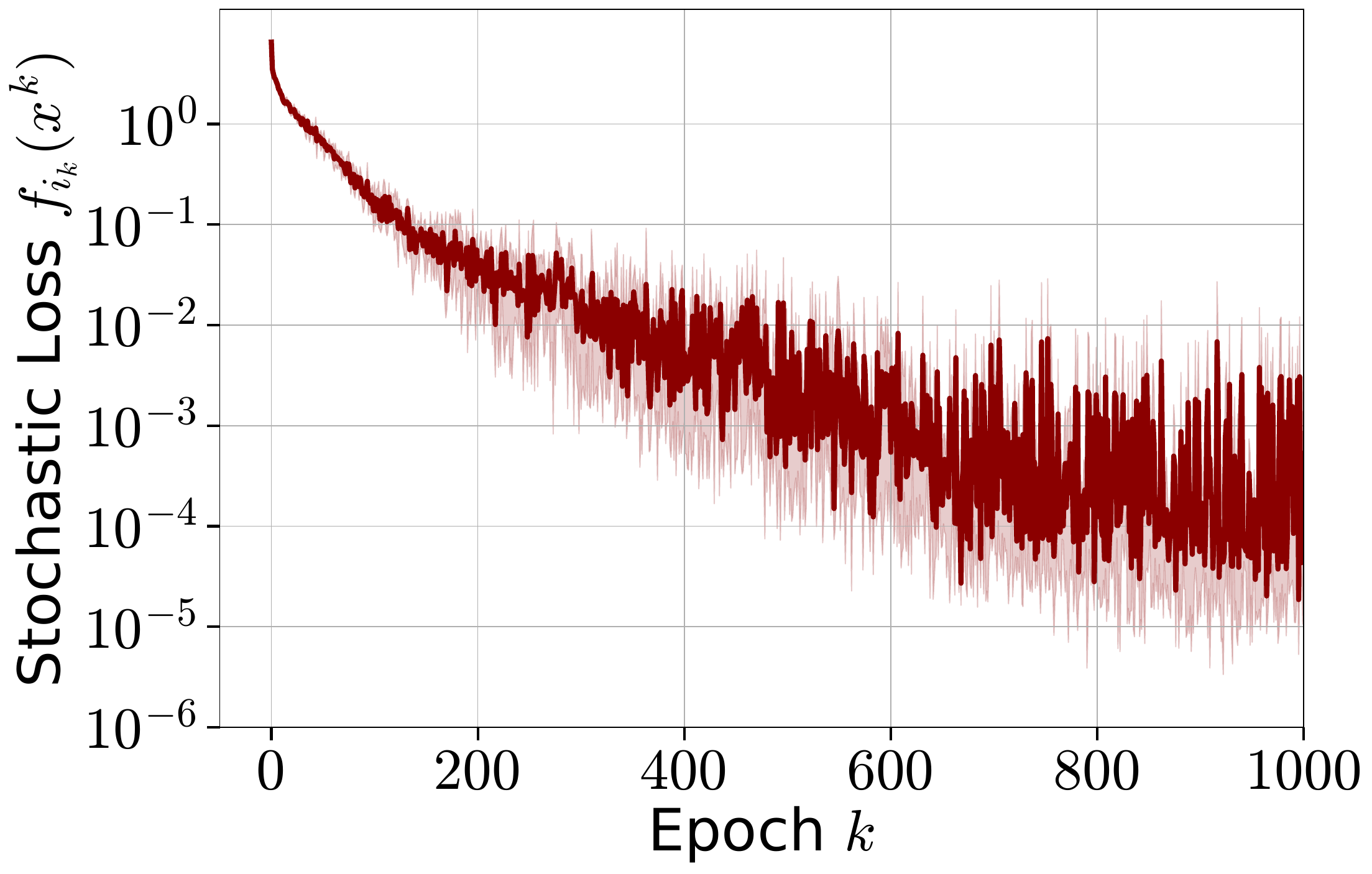} &
        \includegraphics[width=0.21\textwidth]{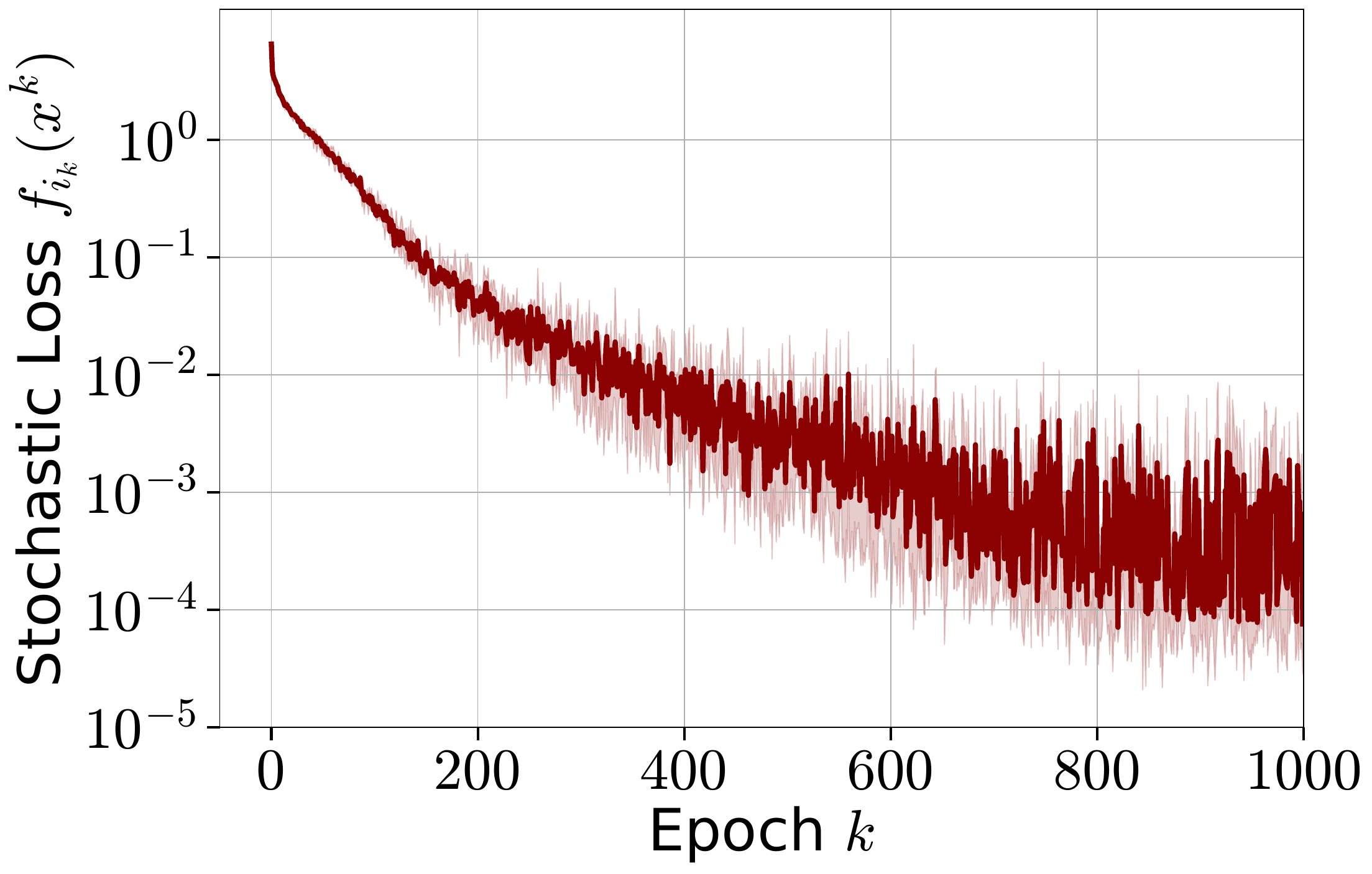} &
        \includegraphics[width=0.21\textwidth]{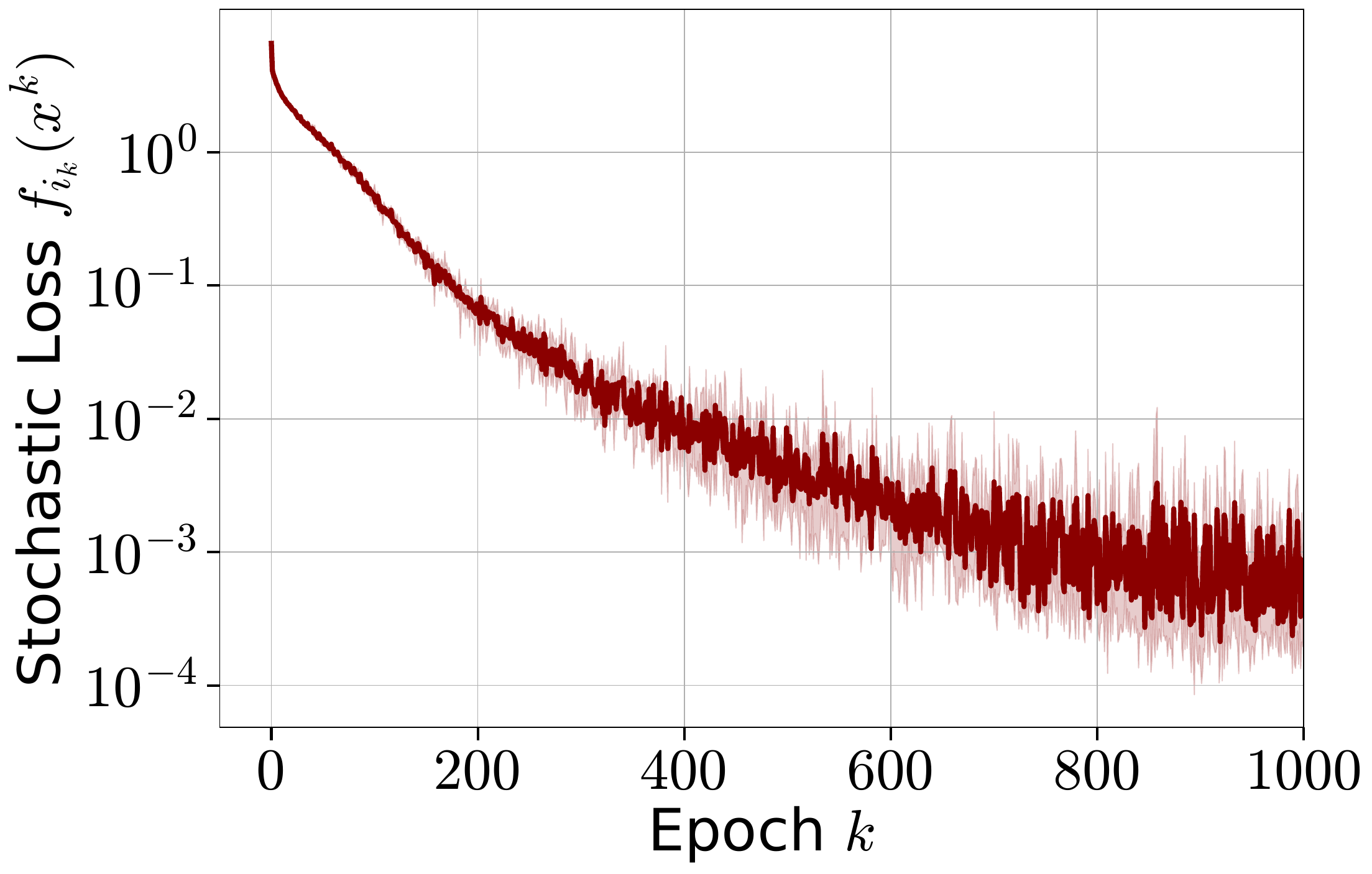}\\
        {\tiny (a) Batch size $64$}  &
        {\tiny (b) Batch size $128$} &
        {\tiny (c) Batch size $256$} & 
        {\tiny (d) Batch size $512$}
    \end{tabular}
    \caption{Training of Resnet9 model on CIFAR100 dataset varying the batch size.}
    \label{fig:resnet_stoch_loss}
\end{figure}

\subsection{AlgoPerf experiments}\label{sec:algoperf_additional}

For each of all aforementioned tasks, we repeat the training with $3$ random seeds to create more stable results. The detailed model architectures are given in \citep{dahl2023benchmarking}. In \Cref{tab:training_details} we provide the parameters of optimizers we use for each task. The loss curves are presented in \Cref{fig:algoperf_stoch_loss}. For each workload, we run the experiments for $3$ different random seeds to obtain more stable results. The hyperparameters of optimizer \algname{NadamW} are chosen such that we can reach the validation threshold set by the organizers\footnote{The quality of performance is measured differently from one task to another; we defer to the \citep{dahl2023benchmarking} for a more detailed description of the competition.}. We employ a cosine annealing learning rate schedule that reduces the learning to $1e-10$, with an initial linear warm-up. For each workload, we run the experiments sufficiently enough so that we reach the validation target threshold and the stochastic loss becomes sufficiently stable.

\begin{figure}[t]
    \centering
    \begin{tabular}{cccc}
        \includegraphics[width=0.21\textwidth]{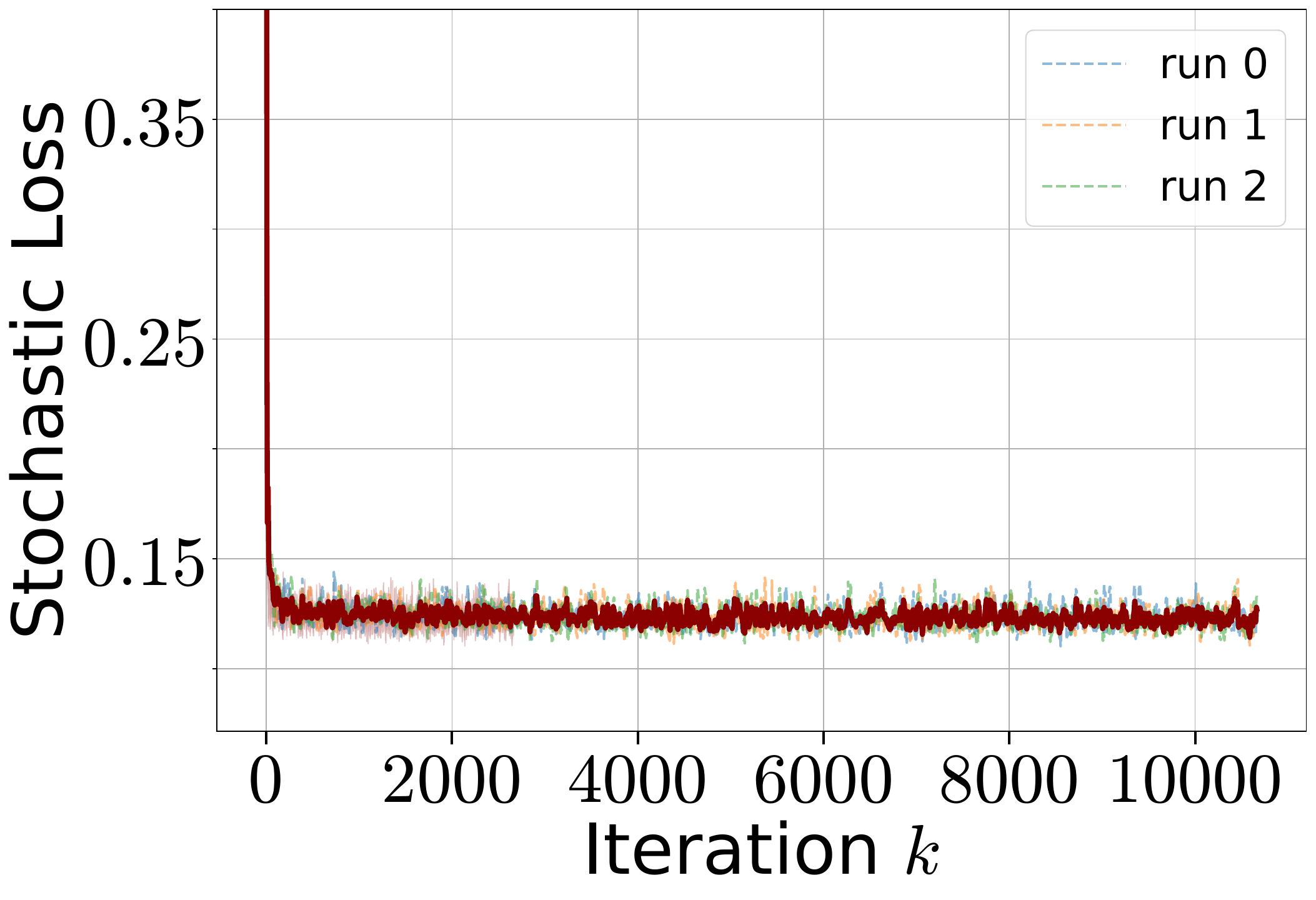} & 
        \includegraphics[width=0.21\textwidth]{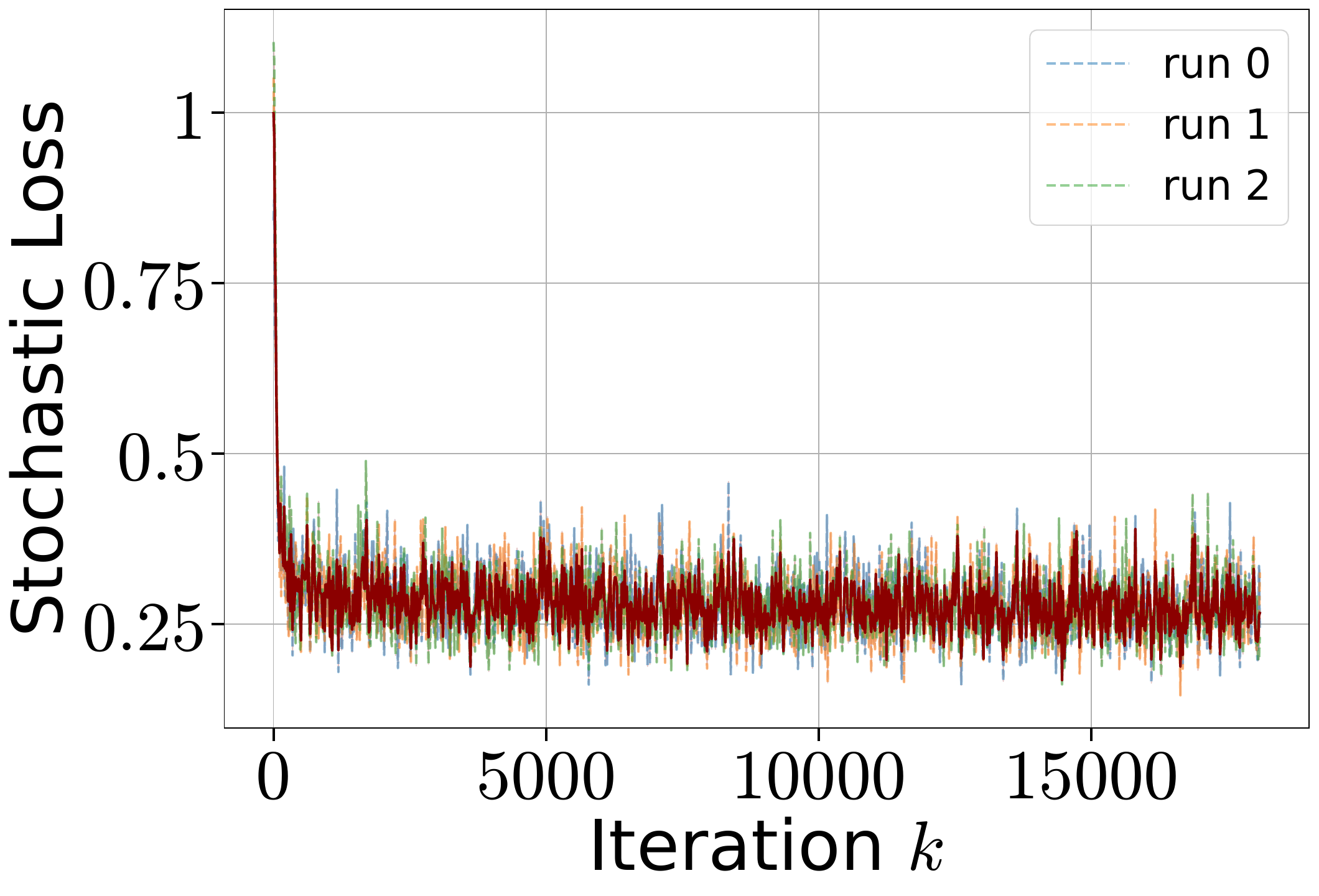} &
        \includegraphics[width=0.21\textwidth]{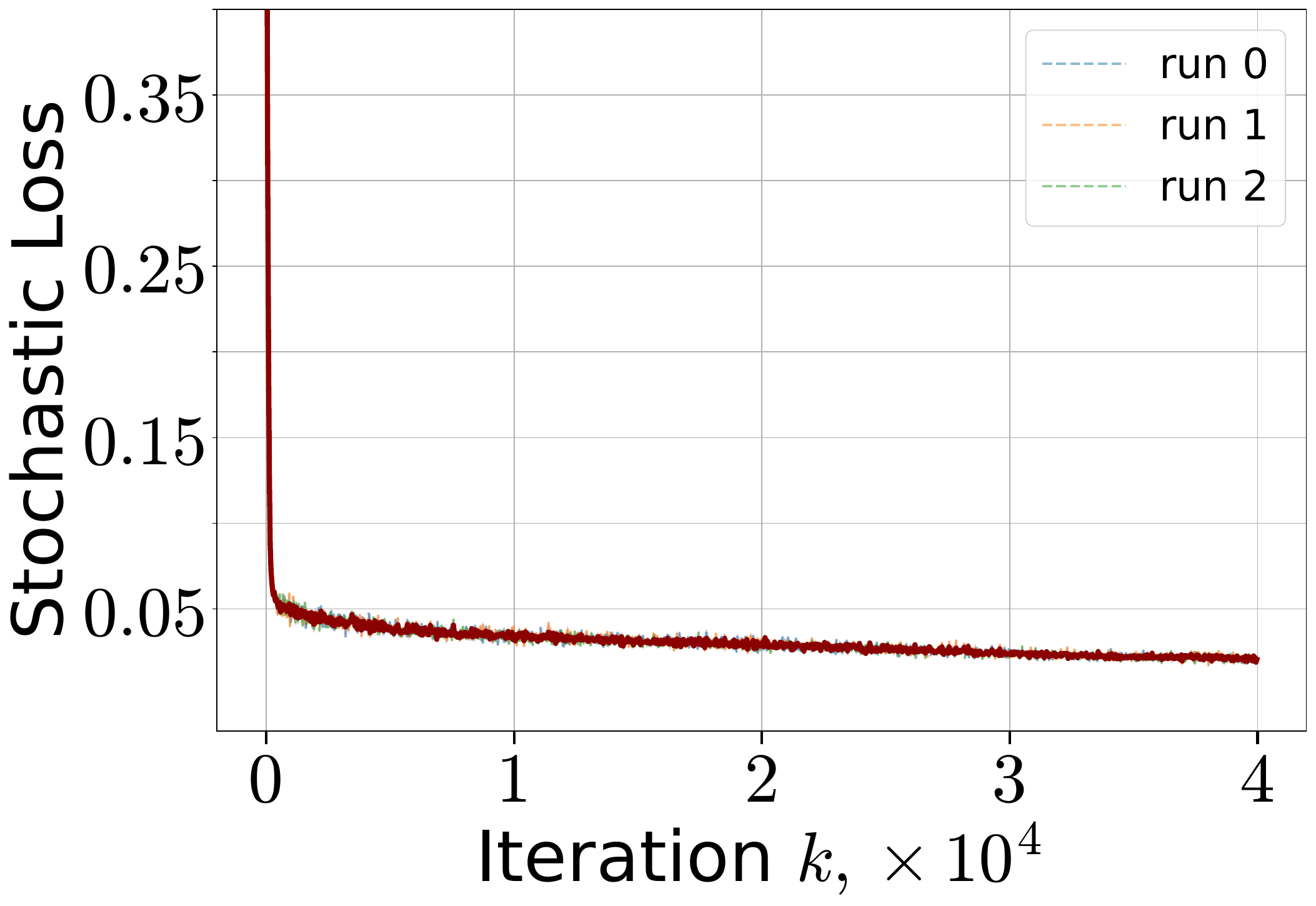} & 
        \includegraphics[width=0.21\textwidth]{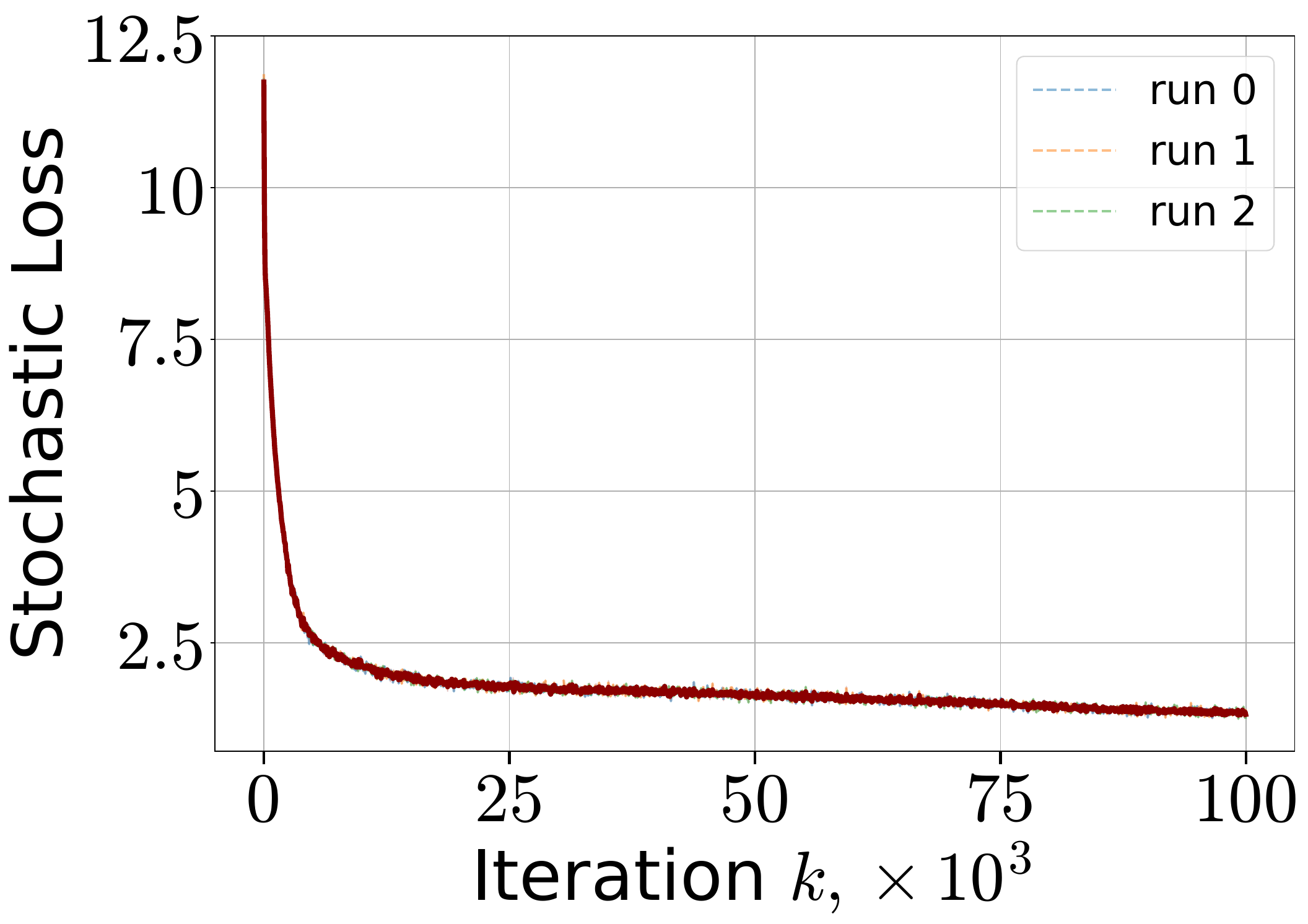}
        \\
        \\
        {\tiny \makecellnew{(a) Criteo 1TB \\ DLRMsmall}} &
        {\tiny \makecellnew{(c) Fastmri \\ U-Net}}  &
        {\tiny \makecellnew{(b) OGBG \\ GNN}} &
        {\tiny \makecellnew{(c) WMT \\ Transformer}}
    \end{tabular}
    \caption{Training of large models from AlgoPerf benchmark.}
    \label{fig:algoperf_stoch_loss}
\end{figure}

\begin{table*}[!t]
    \centering
    \caption{Training details of large models from \Cref{sec:algoperf_additional} and \Cref{sec:more_experiments}}
    \label{tab:training_details}
    \resizebox{\textwidth}{!}{
        \begin{tabular}{cccccccc}
            \toprule
            \textbf{Dataset} & 
            \textbf{Model} & 
            \textbf{Batch Size} &
            \textbf{LR} &
            \textbf{$\mathbb{\beta}_1$} &
            \textbf{$\mathbb{\beta}_2$} &
            \textbf{Weight Decay} & 
            \textbf{Warmup}
            \\ \toprule

            \makecellnew{Criteo 1TB} &
            DLRMsmall &
            262144 &
            0.0017 &
            0.93 &
            0.995 &
            0.08 &
            0.02 \\
            
            \midrule

            \makecellnew{Fastmri} &
            U-Net &
            32 &
            0.001 &
            0.9 &
            0.998 &
            0.15 &
            0.1 \\

           \midrule

           \makecellnew{OBGB} &
            GNN &
            512 &
            0.0017 &
            0.93 &
            0.995 &
            0.08 &
            0.02 \\
            
            \midrule

            \makecellnew{WMT} &
            Transformer &
            128 &
            0.001 &
            0.97 &
            0.999 &
            0.15 &
            0.1 \\

            \midrule

            \makecellnew{Slim-Pajama-627B} &
            Pythia-70M &
            256 &
            0.01 &
            0.9 &
            0.95 &
            0.1 &
            0.1 \\

            \midrule

            \makecellnew{Slim-Pajama-627B} &
            Pythia-160M &
            256 &
            0.006 &
            0.9 &
            0.95 &
            0.1 &
            0.1 \\
            
            \bottomrule 
        
        \end{tabular}
        }
%\begin{tablenotes}
%      {\scriptsize 
%        \item 
%        }
%    \end{tablenotes}  
\end{table*}

\subsection{Pythia experiments}\label{sec:more_experiments}
For each of all aforementioned tasks, we repeat the training with $3$ random seeds to create more stable results. We train Pythia $70$M and Pythia $160$M \citep{biderman2023pythia} on publicly available Slim-Pajama-627B dataset \citep{shen2023slimpajama}. 
Both models are trained on sequences of length 2048, and makes use of a batch size of 0.5M tokens, which amounts to a batch size of 256 samples.
We use AdamW optimizer and a cosine annealing with linear warmup, with hyperparameters specified in \Cref{tab:training_details}. The stochastic loss and training perplexity are reported in \Cref{fig:language_models_additional}.

\begin{figure}[t]
    \centering
    \begin{tabular}{cccc}
        \includegraphics[width=0.21\textwidth]{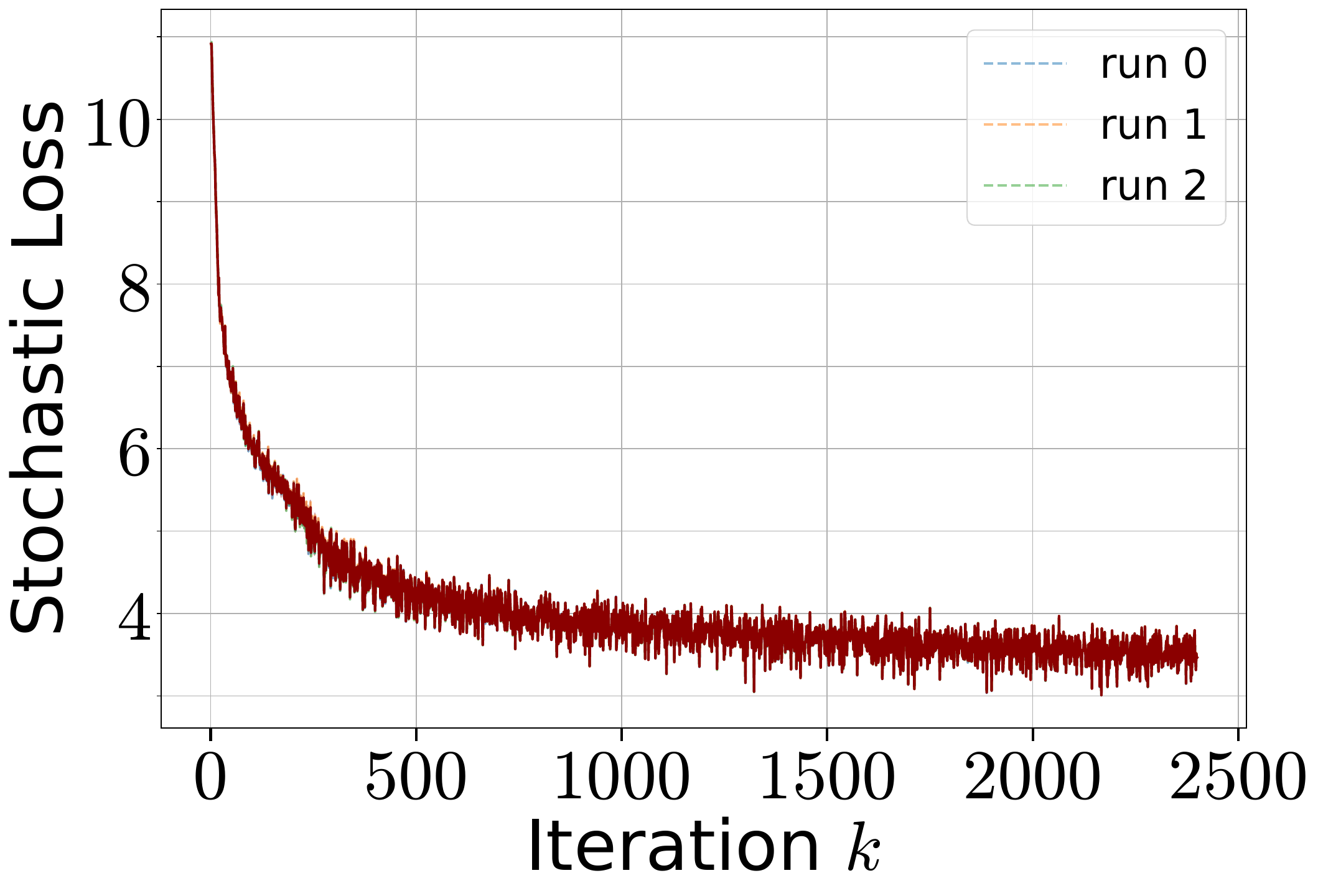} &
        \includegraphics[width=0.21\textwidth]{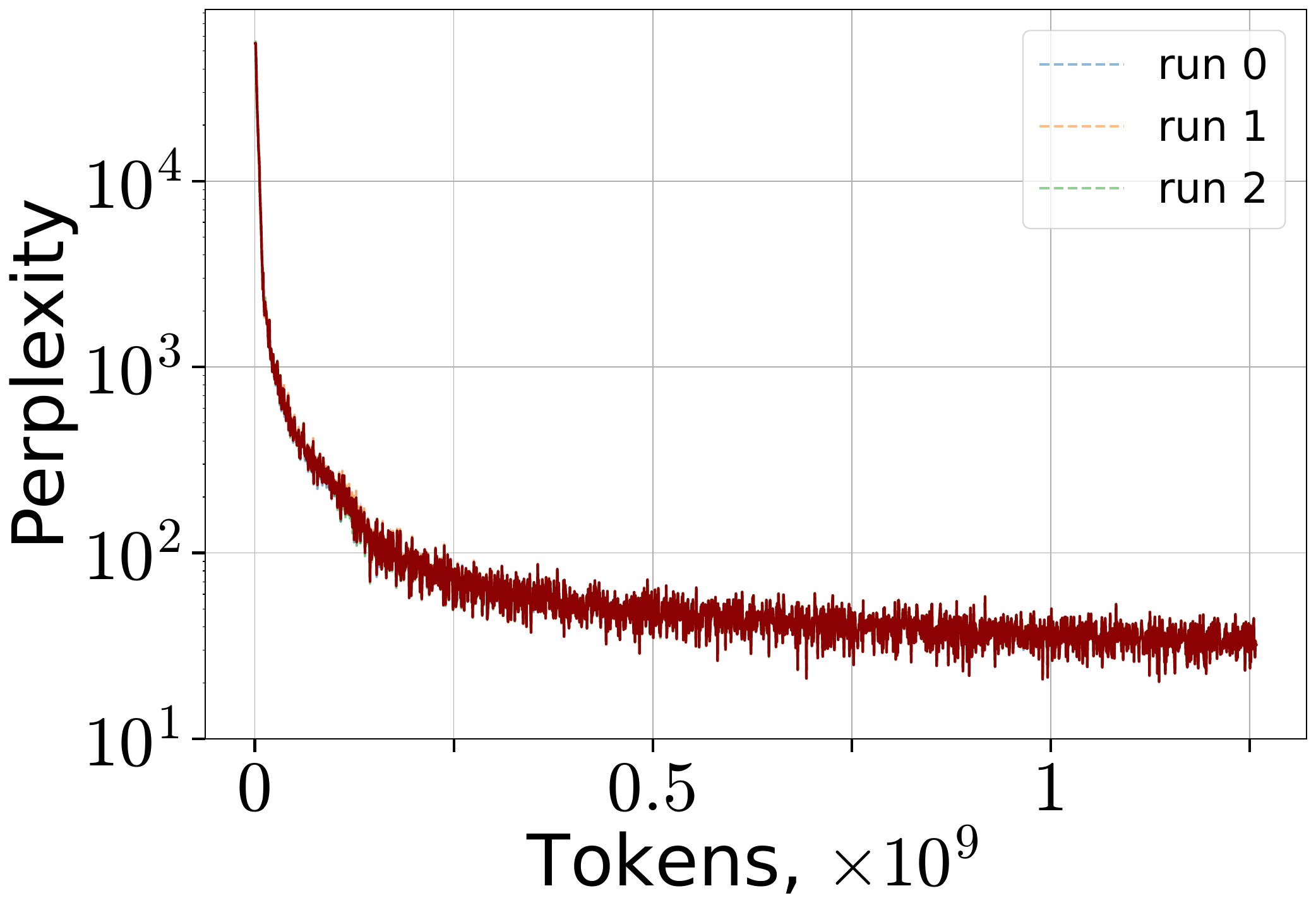} &
        \includegraphics[width=0.21\textwidth]{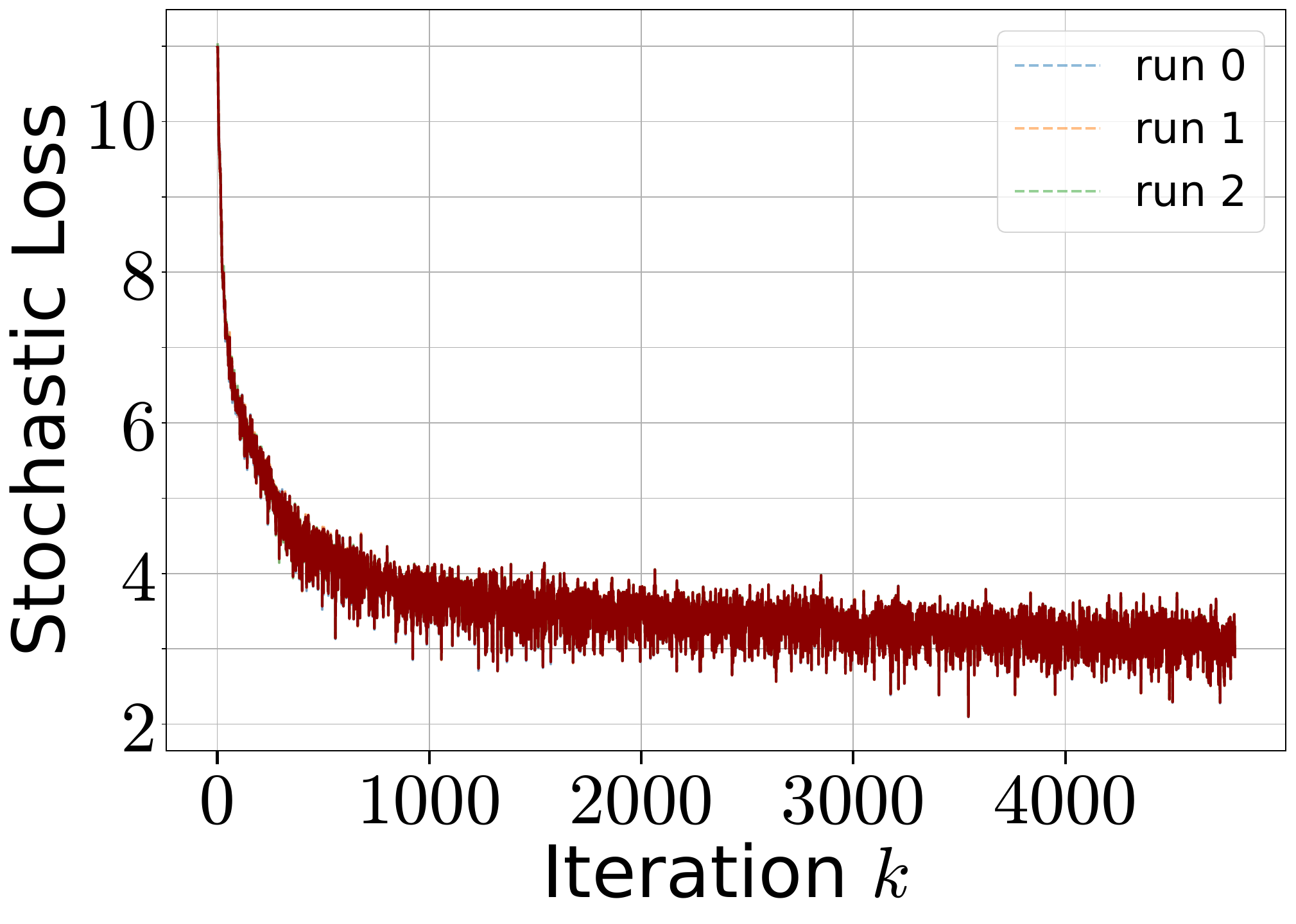} &
        \includegraphics[width=0.21\textwidth]{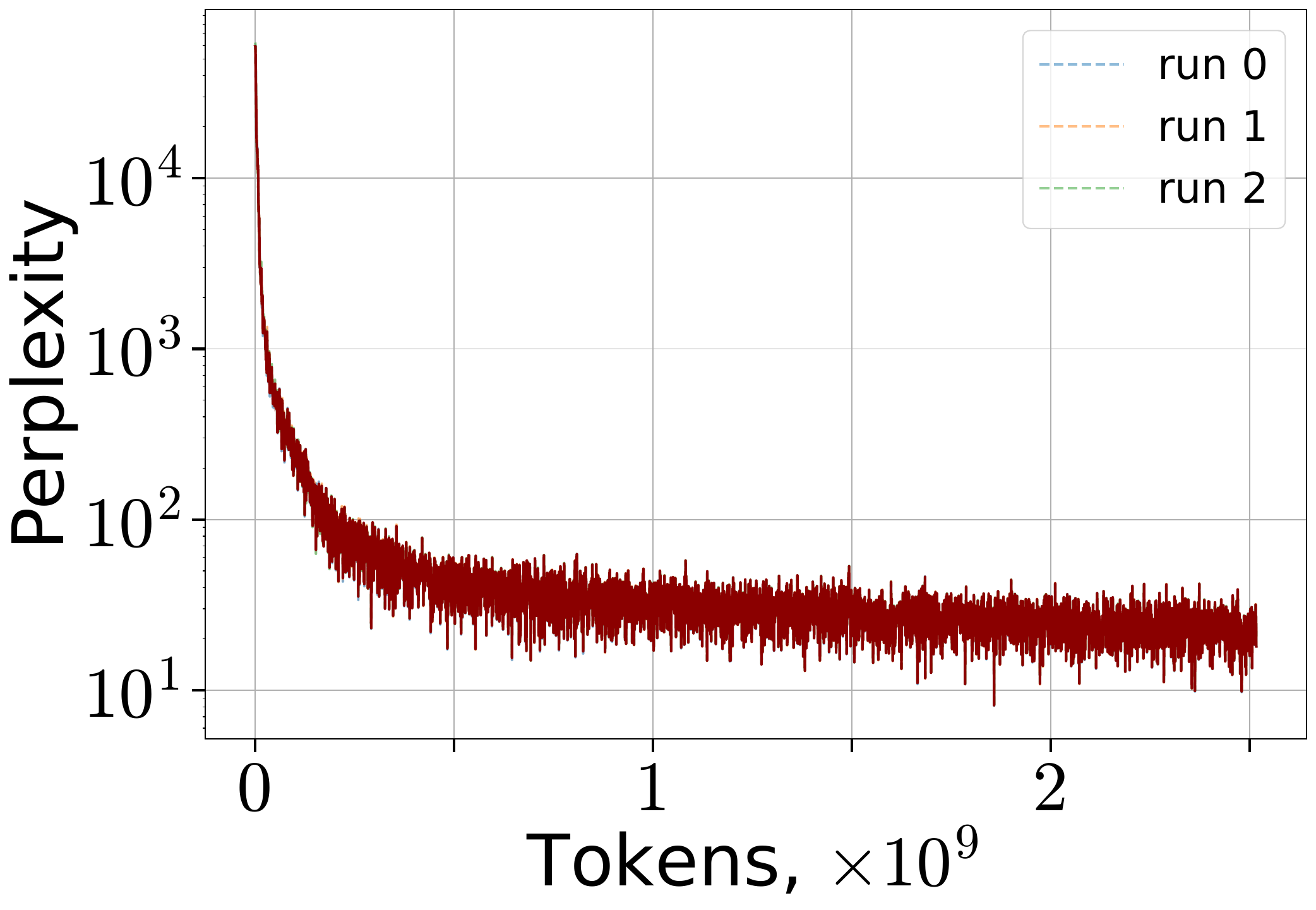} \\
        {\tiny \makecellnew{(a) Slim-Pajama-627B \\ Pythia-$70$M}}  &
        {\tiny \makecellnew{(b) Slim-Pajama-627B \\ Pythia-$70$M}} & 
        {\tiny \makecellnew{(c) Slim-Pajama-627B \\ Pythia-$160$M}} &
       {\tiny \makecellnew{(d) Slim-Pajama-627B \\ Pythia-$160$M}} 
    \end{tabular}
    \caption{Training statistics for Pythia language models.}
    \label{fig:language_models_additional}
\end{figure}

\end{document}